\newtheorem{theorem}{Theorem}
\algrenewcommand\algorithmicrequire{\textbf{Input:}}
\algrenewcommand\algorithmicensure{\textbf{Output:}}
\begin{document}

%

%

\twocolumn[

\aistatstitle{GmGM: a Fast Multi-Axis Gaussian Graphical Model}

\aistatsauthor{ Bailey Andrew \And David R. Westhead \And  Luisa Cutillo }

\aistatsaddress{ University of Leeds \And  University of Leeds \And University of Leeds } ]

\begin{abstract}
  This paper introduces the Gaussian multi-Graphical Model, a model to construct sparse graph representations of matrix- and tensor-variate data.
  We generalize prior work in this area by simultaneously learning this representation across several tensors that share axes, which is necessary to allow the analysis of multimodal datasets such as those encountered in multi-omics. Our algorithm uses only a single eigendecomposition per axis, achieving an order of magnitude speedup over prior work in the ungeneralized case.
  This allows the use of our methodology on large multi-modal datasets such as single-cell multi-omics data, which was challenging with previous approaches.
  We validate our model on synthetic data and five real-world datasets.
\end{abstract}

\section{INTRODUCTION}

A number of modern applications require the estimation of networks (graphs) exploring the dependency structures underlying the data.  In this paper, we propose a new approach for estimating conditional dependency graphs.  Two datapoints $x, y$ are \textit{conditionally independent} (with respect to a dataset $\mathcal{D}$) if knowing one provides no information about the other that is not already contained in the rest of the dataset: $\mathbb{P}(x | y, \mathcal{D}_{\backslash xy}) = \mathbb{P}(x | \mathcal{D}_{\backslash xy})$.  For normally distributed data, conditional dependencies are encoded in the inverse of the covariance matrix (the `precision' matrix).  Two datapoints are conditionally dependent on each other if and only if their corresponding element in the precision matrix is not zero.  If our dataset was in the form of a vector $\mathbf{d}$, we could then model it as $\mathbf{d} \sim \mathcal{N}(\mathbf{0}, \mathbf{\Psi}^{-1})$ for precision matrix $\mathbf{\Psi}$.  This is a Gaussian Graphical Model (GGM); $\mathbf{\Psi}$ encodes the adjacency matrix of the graph.

However, datasets are often more structured than vectors.  For example, single-cell RNA sequencing datasets (scRNA-seq) come in the form of a matrix of gene expression counts whose rows are cells and columns are genes.  Video data naturally requires a third-order tensor of pixels to represent it - rows, columns, and frames.  Furthermore, multi-omics datasets, such as those including both scRNA-seq and scATAC-seq, may require two or more matrices to be properly represented; one for each modality.

We could assume that each row of our matrix is an i.i.d. sample drawn from our model.  However, independence is a strong and often incorrect assumption.  If we wanted to make no independence assumptions, we could vectorize the dataset $\mathcal{D}$ and estimate $\mathbf{\Psi}$ in $\mathrm{vec}[\mathcal{D}] \sim \mathcal{N}(\mathbf{0}, \mathbf{\Psi}^{-1})$.  Unfortunately, this produces intractably large $\mathbf{\Psi}$, whose number of elements is quadratic in the product of the lengths of our dataset's axes: the size of $\mathbf{\Psi}$ is $O\left(\prod_\ell d_\ell^2\right)$ (throughout this paper, $d_\ell$ will denote the size of the $\ell$th axis of a tensor).

Thankfully, tensors are highly structured, and we are often interested in the dependency structure of each axis individually - i.e. the dependencies between cells and the dependencies between genes, in the case of scRNA-seq - rather than the dependencies between the elements of the tensor themselves.  To model this, we can represent $\mathbf{\Psi}$ as some deterministic combination of the axis-wise dependencies: $\mathrm{vec}[\mathbf{D}] \sim \mathcal{N}(\mathbf{0}, \zeta(\mathbf{\Psi}_\mathrm{row}, \mathbf{\Psi}_\mathrm{col})^{-1})$, for some function $\zeta$.  The strategy is to estimate $\mathbf{\Psi}_\mathrm{row}$ and $\mathbf{\Psi}_\mathrm{col}$ directly, without computing the intractable $\zeta(\mathbf{\Psi}_\mathrm{row}, \mathbf{\Psi}_\mathrm{col})^{-1}$.  While there are multiple choices for $\zeta$, this paper considers only the Kronecker sum.  Choices of $\zeta$, and their definitions, will be given in Section \ref{sec:prior-work}.

\subsection{Notation}

When possible, this paper follows the notation of prior work in multi-axis methods, such as \cite{kalaitzis_bigraphical_2013} and \cite{greenewald_tensor_2019}.  For tensor-specific notation, we typically adhere to \cite{kolda_tensor_2009}.  Some changes have been made due to inconsistencies or conflicts.

The subscript $\ell$ will always be used to denote an arbitrary axis.  $d_\ell$ and $\mathbf{\Psi}_\ell$ represent the size and precision matrix for the axis $\ell$, respectively - as in prior work.  It will be helpful to define the symbols $d_{<\ell}$, $d_{>\ell}$, $d_{\backslash\ell}$, and $d_{\forall}$ to represent the product of the lengths of all axes before $\ell$, after $\ell$, excluding $\ell$, and overall, respectively.

Lowercase bold represents vectors ($\mathbf{d}$), uppercase bold represents matrices ($\mathbf{D}$), and uppercase calligraphic represents tensors ($\mathcal{D}$).  We will also be dealing with sets of tensors, $\left\{\mathcal{D}^\gamma\right\}_\gamma$; each individual tensor is called a modality, and the superscript $\gamma$ will always represent indexing a modality.  The number of axes of a tensor $\mathcal{D}^\gamma$ is denoted $K^\gamma$, and to describe that an axis $\ell$ is one of the axes of a tensor $\mathcal{D}^\gamma$, we will write $\ell\in\gamma$.  Axes can be part of several modalities.  An important tensor-variate operation is matricization, which takes a $d_1 \times \ldots \times d_K$ tensor $\mathcal{D}$ and outputs a $d_\ell \times d_{\backslash\ell}$ matrix.  It is denoted $\mathrm{mat}_\ell\left[\mathcal{D}\right]$, and can be viewed as vectorizing each slice of $\mathcal{D}$ along the $\ell$th axis.

We will make frequent use of the Kronecker sum, defined in terms of the Kronecker product.  The Kronecker product of two matrices $\mathbf{\Psi}_{\ell_1}, \mathbf{\Psi}_{\ell_2}$ is denoted $\mathbf{\Psi}_{\ell_1} \otimes \mathbf{\Psi}_{\ell_2}$, and is best defined by example;

\begin{align*}
    \begin{bmatrix}
        1 & 2 \\
        3 & 4
    \end{bmatrix} \otimes \begin{bmatrix}
        5 & 6
    \end{bmatrix} &= \begin{bmatrix}
        1 \times \begin{bmatrix}5 & 6\end{bmatrix} & 2 \times \begin{bmatrix}5 & 6\end{bmatrix} \\
        3 \times \begin{bmatrix}5 & 6\end{bmatrix} & 4 \times \begin{bmatrix}5 & 6\end{bmatrix}
    \end{bmatrix}\\
    &=\begin{bmatrix}
        5 & 6 & 10 & 12 \\
        15 & 18 & 20 & 24
    \end{bmatrix}
\end{align*}

The Kronecker sum is $\mathbf{\Psi}_{\ell_1} \oplus \mathbf{\Psi}_{\ell_2} = \mathbf{\Psi}_{\ell_1} \otimes \mathbf{I}_{d_{\ell_2}\times d_{\ell_2}} + \mathbf{I}_{d_{\ell_1}\times d_{\ell_1}} \otimes \mathbf{\Psi}_{\ell_2}$.  When the matrices $\mathbf{A}, \mathbf{B}$ are adjacency matrices of graphs, the Kronecker sum has the interpretation as the Cartesian product of those graphs.

The ``blockwise trace" operation defined by \textcite{kalaitzis_bigraphical_2013} appears when calculating the gradient of the log-likelihood of KS-structured normal distributions (and is strongly related to the $\mathrm{proj}_{KS}$ operator in \cite{greenewald_tensor_2019}).  Let $\mathbf{I}_a$ be the $a\times a$ identity matrix, and let $\mathbf{J}^{ij}$ be a matrix of zeros except at $(i, j)$, where it equals 1.  Then, we denote the blockwise trace of a matrix $\mathrm{tr}^{a}_{b}$ and define it according to Line \ref{eq:blockwise-trace}.  

\begin{align}
    \mathrm{tr}^{a}_{b}\left[\mathbf{M}\right] &= \left[\mathrm{tr}\left[\mathbf{M}\left(\mathbf{I}_{a} \otimes \mathbf{J}^{ij}\otimes \mathbf{I}_{b}\right)\right]\right]_{ij} \label{eq:blockwise-trace}
\end{align}

The final needed concept is that of the Gram matrix $\mathbf{S}_{\ell}^\gamma$, defined as $\mathbf{S}^{\gamma}_{\ell} = \mathrm{mat}_\ell\left[\mathcal{D}^\gamma\right]\mathrm{mat}_\ell\left[\mathcal{D}^\gamma\right]^T$.  In the single-tensor (multi-axis) case, this is a sufficient statistic; all prior work first computes these matrices as the initial step in their algorithm, as do we.  When there are multiple modalities, we consider the `effective Gram matrices' $\mathbf{S}_\ell = \sum_{\gamma | \ell\in\gamma} \mathbf{S}_{\ell}^\gamma$, as these fulfill the role of sufficient statistic in the multi-tensor case.

\section{PRIOR WORK}
\label{sec:prior-work}

The Graphical Lasso \parencite{friedman_sparse_2008} is the standard single-axis GGM.  The model assumes one has $n$ independent samples of $d_1$-length vectors, and seeks to estimate the $d_1 \times d_1$ precision matrix $\mathbf{\Psi}$.  $\mathbf{\Psi}$ is estimated via Equation \ref{eq:1}.

\begin{align}
    \mathbf{\Psi} &= \mathrm{argmax}_{\mathbf{\Psi} \succ 0} \log \det \mathbf{\Psi} - \mathrm{tr}\left[\mathbf{S}\mathbf{\Psi}\right] + \rho \norm{\mathbf{\Psi}}_1 \label{eq:1}
\end{align}

Without the regularizing $\rho \norm{\mathbf{\Psi}}_1$ term, this represents the maximum likelihood estimate for $\mathbf{\Psi}$; $\mathbf{\Psi}\succ0$ ensures positive definiteness.  The regularization penalty is an L1 penalty over the elements of $\mathbf{\Psi}$ - typically, the diagonal elements are not included in this penalty, as they do not represent edges on the conditional dependency graph.

Our work considers the Kronecker-sum (multi-axis) Graphical Lasso, which was introduced by \textcite{kalaitzis_bigraphical_2013}.  This sum is one choice of $\zeta$ to combine the per-axis precision matrices into the precision matrix of the vectorized dataset, $\mathrm{vec}[\mathbf{D}] \sim \mathcal{N}(\mathbf{0}, (\mathbf{\Psi}_\mathrm{row} \oplus \mathbf{\Psi}_\mathrm{col})^{-1})$.  Other choices for $\zeta$ have been considered, such as using the Kronecker product \parencite{tsiligkaridis_convergence_2013, dahl_network_2013}, or the square of the Kronecker sum \parencite{wang_sg-palm_2021, wang_sylvester_2020}.  Each method has its strengths; the benefits of a Kronecker sum structure are its interpretability as a graph product, stronger sparsity, convexity of the maximum likelihood, and its allowance of inter-task transfer.

\textcite{kalaitzis_bigraphical_2013}'s BiGLasso model only worked for matrix-variate data, and was very slow to converge to a solution, in large part due to its non-optimal space complexity of $O(d_1^2d_2^2)$.  This prohibited its use on moderately sized datasets.  Numerous modifications have been made to the algorithm to improve its speed and to achieve an optimal space complexity of $O(d_1^2+d_2^2)$, such as scBiGLasso \parencite{li_scalable_2022}, TeraLasso \parencite{greenewald_tensor_2019}, and EiGLasso \parencite{yoon_eiglasso_2020}.  Of these TeraLasso is notable in that it allows an arbitrary number of axes, i.e. $\zeta(\mathbf{\Psi}_1, ..., \mathbf{\Psi}_k) = \mathbf{\Psi}_1 \oplus ... \oplus \mathbf{\Psi}_k$, like our proposed method.  TeraLasso and EiGLasso, the fastest prior algorithms, both rely on computing an eigendecomposition every iteration.

All of these models, including our own, rely on a normality assumption.  We are most interested in the case of omics data, which is typically non-Gaussian; an overview of the use of GGMs in omics data is given by \textcite{altenbuchinger_gaussian_2020}.  For single-axis methods, the nonparanormal skeptic \parencite{liu_nonparanormal_2012} can replace the normality assumption with a weaker nonparanormality assumption; \textcite{li_scalable_2022} proposed a generalization of this approach to multi-axis methods, which is compatible with our approach.  The nonparanormal skeptic often leads to better results on real datasets, for example see Figure \ref{fig:mouse-assortativity}.  For a thorough comparison of the nonparanormal skeptic, see the supplementary material.

In the single-axis case, several techniques exist for combining datasets with shared axes  \parencite{cai_joint_2016, danaher_joint_2014, mccarter_sparse_2014}.  However, none of these have been applied to the multi-axis case.

\subsection{Unmet need}

Many datasets, especially those in multi-omics, are representable as a collection of matrices or tensors.  As a case study, we consider (a subset of) the Lifelines-DEEP dataset from \textcite{tigchelaar_cohort_2015}, which is summarized graphically in Figure \ref{fig:lifelines-deep}.

\begin{figure}[t]
  \centering
  \includegraphics[width=0.48\textwidth]{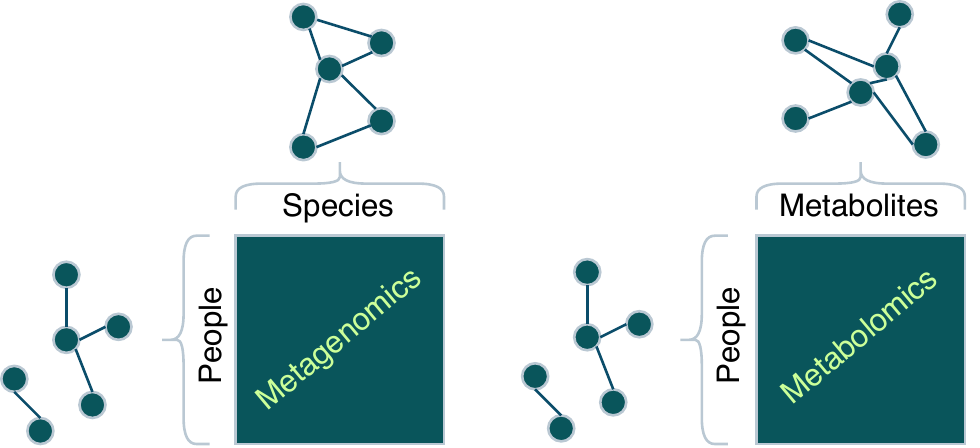}
  \caption{The two matrices of the LifeLines-DEEP dataset.  As both matrices include data for the same people, the learned graph between people should be the same.}
  \label{fig:lifelines-deep}
\end{figure}

In this dataset, two different modalities of data were gathered from the same people: counts of microbial species found in their stools (metagenomics) and counts of metabolites found in their blood plasma (metabolomics).  While the metagenomics and metabolomics are different matrices, each modality shares an axis (`people').  If we were to estimate a graph of people on each modality independently, they would likely yield different graphs.  This is not ideal; if our aim is to estimate the true graph of conditional dependencies, there should be only one resultant graph.  To estimate it, we should be considering both modalities simultaneously to take advantage of the shared axis.

One way to do this would be to concatenate the modalities, producing a matrix of people by `species+metabolites'.  This could yield interesting results, if one is interested in connections between individual species and a metabolite.  However, it would vastly increase the size of the output graph, which grows quadratically in the length of the axis.  Furthermore, it is not always possible; some datasets may not be concatenatable.


\section{OUR CONTRIBUTIONS}



We make three contributions to the field of multi-axis graphical models.  Firstly, we produce an algorithm to estimate multi-axis graphical models an order of magnitude faster than existing work.  This allows us to apply our algorithm on large datasets, such as single-cell RNA-sequencing datasets which include roughly 20,000 genes and tens of thousands of cells.  Prior work could not handle these datasets in reasonable time - ours is the only method able to produce multi-axis estimates for data of this size, and runs comparably quickly to single-axis methods.

Secondly, we generalize the multi-axis model from prior work, which was only able to consider unimodal data, into a model that can handle multiple tensors with shared axes.  This generalization is essential to the use of multi-axis methods on multi-modal data such as multi-omics; no multi-axis method had been made for this type of data before our work.

Finally, we produce a few helpful theorems to improve accuracy and further improve speed.  A covariance-thresholding technique by \textcite{mazumder_exact_2012} allowed the partitioning of data into guaranteed-independent chunks of data in the single-axis case; we show how to lift this result to the multi-axis case, and demonstrate its efficacy (Figure \ref{fig:cov-thresh}).  We also show how to efficiently include some natural prior distributions into our model, which allows us to demonstrate an extraordinary increase in performance on real-world data (Figure \ref{fig:lifelines-prior}).

\subsection{The model}
\label{sec:model}

To properly handle sets of tensors, we propose modelling each tensor as being drawn independently from a Kronecker-sum normal distribution.  If the tensors share an axis $\ell$, then they will still be drawn independently - but their distributions will be parameterized by the same $\mathbf{\Psi}_\ell$.  For an arbitrary set of tensors, the model is:

\begin{align*}
    \mathcal{D}^\gamma \sim& \mathcal{N}_{KS}\left(\left\{\mathbf{\Psi}_\ell\right\}_{\ell\in\gamma}\right) \\
    &\text{for }\mathcal{D}^\gamma\in\left\{\mathcal{D}^\gamma\right\}_\gamma
\end{align*}

We call this model the ``Gaussian multi-Graphical Model'' (GmGM) as it extends Gaussian Graphical Models to estimate multiple graphs from a set of tensors.  In this paper, we will make the assumption that no tensor in our set contains the same axis twice.  Any tensor with a repeated axis would naturally be interpretable as a graph - such datasets are rare, and if one already has a graph then the need for an algorithm such as this is diminished.

As an example, we model the LifeLines-DEEP dataset $\mathbf{D}^\mathrm{metagenomics}$ and $ \mathbf{D}^\mathrm{metabolomics}$ independently as:

\begin{align*}
    \mathbf{D}^\mathrm{metagenomics} \sim& \mathcal{N}_{KS}\left(\mathbf{\Psi}^\mathrm{people}, \mathbf{\Psi}^\mathrm{species}\right) \\
    \mathbf{D}^\mathrm{metabolomics} \sim& \mathcal{N}_{KS}\left(\mathbf{\Psi}^\mathrm{people}, \mathbf{\Psi}^\mathrm{metabolites}\right)
\end{align*}

\subsection{The algorithm}

\begin{table*}[!htbp]
\caption{A comparison of the asymptotic complexity of TeraLasso and GmGM.  EiGLasso is omitted because it has the same complexity as TeraLasso.  $N$ represents the number of iterations before convergence.  ``Simplified Time Complexity'' represents the case when all axes have the same size $d$, there is only one modality, and the dataset only has two axes.}
\label{tab:asymptotic-performance}
\vskip 0.15in
\begin{center}
\begin{small}
\begin{tabular}{c|ccc}
    Algorithm & Simplified Time Complexity & Full Time Complexity & Memory Complexity \\
    \hline
    TeraLasso & $O(Nd^3)$ & $O(\sum_\ell d_\ell^2d_{\backslash\ell} + Nd_\ell^3 + N\prod_\ell d_\ell)$ & $O(\sum_\ell d_\ell^2)$ \\
    GmGM & $O(d^3 + Nd^2)$ & $O(\sum_\ell d_\ell^2d_{\backslash\ell} + d_\ell^3 + \sum_\gamma N\prod_{\ell\in\gamma}d_\ell)$ & $O(\sum_\ell d_\ell^2)$ 
\end{tabular}
\end{small}
\end{center}
\end{table*}

Here, we present an algorithm to compute the maximum likelihood estimate (MLE) jointly for all parameters $\mathbf{\Psi}_\ell$ of the GmGM.  The general idea is to produce an analytic estimate for the eigenvectors of $\mathbf{\Psi}_\ell$, and then iterate to solve for the eigenvalues; this is summed up graphically in Figure \ref{fig:GmGM-flowchart}.

\begin{figure}[t]
  \centering
  \includegraphics[width=0.98\linewidth]{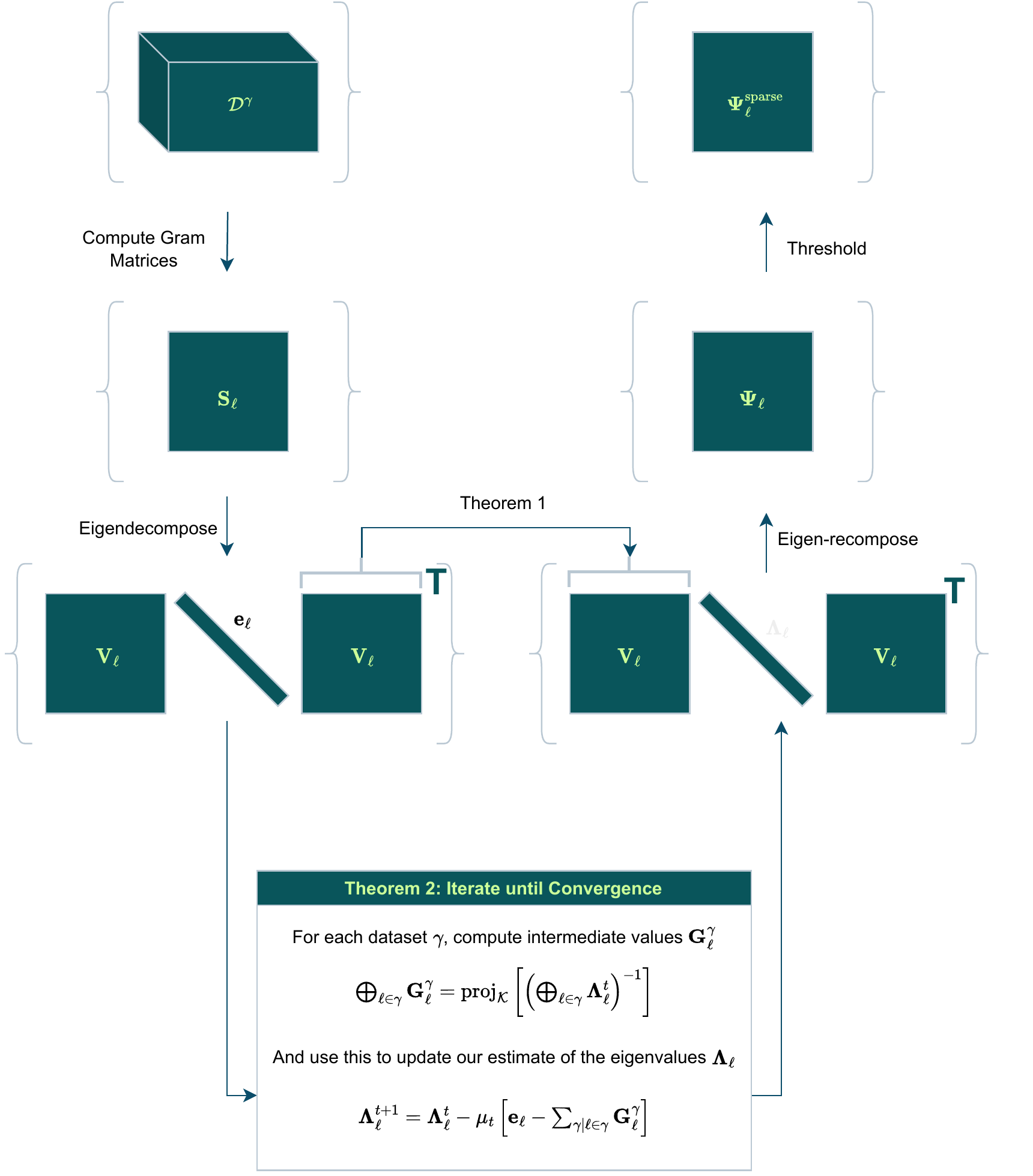}
  \caption{A graphical overview of how the GmGM algorithm works.  We use $\gamma$ to represent an arbitrary modality, and $\ell$ to represent an arbitrary axis.  Proofs are given in the supplementary material.}
  \label{fig:GmGM-flowchart}
\end{figure}




\begin{theorem}
    \label{thm:evec}
    Let $\mathbf{V}_\ell\mathrm{diag}\left[\mathbf{e}_\ell\right]\mathbf{V}_\ell^T$ be the eigendecomposition of $\mathbf{S}_\ell$ (where $\mathbf{V}_\ell \in \mathbb{R}^{d_\ell \times d_\ell}$ and $\mathrm{diag}\left[\mathbf{e}_\ell\right] \in \mathbb{R}^{d_\ell \times d_\ell}$ is a diagonal matrix with diagonal $\mathbf{e}_\ell$).  Then $\mathbf{V}_\ell$ are the eigenvectors of the maximum likelihood estimate of $\mathbf{\Psi}_\ell$.
\label{thm:evec}
\end{theorem}

Theorem \ref{thm:evec} is critical to allowing efficient estimation of $\mathbf{\Psi}_\ell$, as it not only allows us to extract the computationally intensive eigendecomposition operation from the iterative portion of the algorithm, but also reduces the remaining number of parameters to be linear in the length of an axis.

It is now necessary to find estimate the eigenvalues $\mathbf{\Lambda}_\ell = \mathrm{diag}\left[\boldsymbol{\lambda}_\ell\right]$.  We will do this iteratively, which requires the gradient of the negative log-likelihood with respect to the eigenvalues.  A sketch of the derivation of this is given.

\begin{align}
    \frac{\partial\mathrm{NLL}}{\partial \lambda_{\ell}^{(i)}} &= \frac{1}{2} \frac{\partial}{\partial\lambda_{\ell}^{(i)}}\left(\mathrm{tr}\left[\mathbf{S}_{\ell}\mathbf{V}_{\ell}\mathbf{\Lambda}_{\ell}\mathbf{V}_{\ell}^T\right] - \sum_\gamma \log\left|\bigoplus_{\ell'\in\gamma}\mathbf{\Lambda}_{\ell'}\right|\right) \\
    &= \frac{1}{2}\frac{\partial}{\partial\lambda_{\ell}^{(i)}} \mathbf{e}_\ell \boldsymbol{\lambda}_\ell - \frac{1}{2} \frac{\partial}{\partial \lambda_{\ell, (i)}} \sum_{\gamma, j} \log\left[\bigoplus_{\ell'\in\gamma}\mathbf{\Lambda}_{\ell'}\right]_{jj} \label{eq:convexity}\\
    &= \frac{1}{2}\mathbf{e}_{\ell, (i)} - \frac{1}{2}\sum_{\gamma, j} \frac{\left[\mathbf{I}_{d_{<\ell}} \otimes \mathbf{J}_{ii} \otimes \mathbf{I}_{d_{>\ell}}\right]_{jj}}{\left[\bigoplus_{\ell'\in\gamma}\mathbf{\Lambda}_{\ell'}\right]_{jj}} \\
    \frac{\partial \mathrm{NLL}}{\partial \boldsymbol{\lambda}_\ell}
    &= \frac{1}{2}\mathbf{e}_\ell - \frac{1}{2}\sum_\gamma \mathrm{tr}^{d_{<\ell}}_{d_{>\ell}}\left[\left(\bigoplus_{\ell'\in\gamma}\mathbf{\Lambda}_{\ell'}\right)^{-1}\right] \label{eq:vectorize}
\end{align}

Line \ref{eq:vectorize} follows from the definition of the blockwise trace; the rest use Theorem \ref{thm:evec} and standard algebra.

An L1 penalty is often included in graphical models to enforce sparsity.  To enforce sparsity in our model, there are a few options; one can choose to either keep the top $p$\% of edges, or keep the top $k$ edges per vertex.  In Section \ref{sec:results}, we see that this technique performs well.  Including regularization in our algorithm is nontrivial, as we rely on analytic estimates of the eigenvectors.  However, by utilizing a restricted form of the L1 penalty, we can bypass this limitation.  For details on how to do so, see Section 5 in the supplementary material - we will refer to GmGM equipped with this penalty as `GmGM L1'.  In short, the restricted L1 penalty is an L1 penalty on the off-diagonals of a matrix (i.e. the Graphical Lasso) that has given (fixed) eigenvectors (since the iterative portion of our algorithm only iterates over the eigenvalues).  We incorporate it with subgradient descent.

\subsection{Additional Theorems}

We can incorporate priors into our algorithm as well.  This is nontrivial, as we want it to be in a manner compatible with Theorem \ref{thm:evec}.

\begin{theorem}[GmGM Estimator with Priors]
    \label{thm:evec-prior}
    Suppose we have the same setup as in Theorem \ref{thm:evec}, and that we have a prior of the form:
    
    \begin{align}
        \prod_\ell
        g_\ell(\mathbf{\Theta}_\ell)
        e^{\mathrm{tr}\left[\eta_\ell(\mathbf{\Theta}_\ell)^T
        \mathbf{\Psi}_\ell\right]
        + h_\ell(\mathbf{\Psi_\ell})}
    \end{align}

    In other words, our prior is an exponential distribution for $\mathbf{\Psi}_\ell$, in which $\mathbf{\Psi}_\ell$ is the sufficient statistic.

    Then, if $h_\ell$ depends only on the eigenvalues (i.e. it is `unitarily invariant'), the eigenvectors of $\frac{1}{2}\mathbf{S}_\ell - \eta_\ell(\mathbf{\Theta}_\ell)$ are the eigenvectors $\mathbf{V}_\ell$ of the MAP estimate for $\mathbf{\Psi}_\ell$.  Furthermore, if $h_\ell$ is convex, then a result by \textcite{lewis_derivatives_1996} allows us to use an analog of Theorem \ref{thm:eval} (see supplementary material).
\end{theorem}

$\frac{1}{2}\mathbf{S}_\ell - \eta_\ell(\mathbf{\Theta}_\ell)$ can be thought of as our ``priorized'' effective Gram matrix; its eigenvalues and eigenvectors play the same role as those of $\mathbf{S}_\ell$ in the original algorithm.  The required property of unitary invariance is very common - both the Wishart (arguably the most natural prior to use in this context) and matrix gamma distributions satisfy Theorem \ref{thm:evec-prior}.  The benefits of including priors are demonstrated in Figure \ref{fig:lifelines-prior}.

We also show that the covariance thresholding trick \parencite{mazumder_exact_2012} works in the multi-axis case.  This allows us to partition the dataset into guaranteed-to-be-disconnected subsets before using our algorithm (although we do not make use of this theorem when comparing performance to other work, as it equally benefits all methods).  The benefits of using this trick are shown in Figure \ref{fig:cov-thresh}.

\begin{theorem}
    \label{thm:cov-thresh}
    Set all elements of $\mathbf{S}_\ell$ whose absolute value is less than $\rho$ to 0.  This encodes a potentially disconnected graph.  Likewise, consider $\mathbf{\Psi}_\ell$ to be the estimated precision matrix for our model equipped with an L1 penalty of strength $\rho$.  This also encodes a potentially disconnected graph.  If we label the vertices by which disconnected component they are part of, then this labeling is the same in both procedures (the procedure with $\mathbf{S}_\ell$ and the procedure with $\mathbf{\Psi}_\ell$).
\end{theorem}

\subsection{Convexity and Uniqueness}
\label{sec:identifiability}

As can be observed in Line \ref{eq:convexity}, the negative log-likelihood, as a function of the eigenvalues, can be expressed as the sum of a dot product (an affine, and hence convex, function) and negative logarithms, which are also convex.  Thus, the objective is convex.

The Kronecker-sum model, without additional restrictions, has non-identifiable diagonals.  \textcite{greenewald_tensor_2019} show that this can be solved by projecting the gradients at each iteration.  Letting $\mathbf{A}_\ell = \mathrm{tr}\left[\bigoplus_{\ell\in\gamma} \mathbf{\Lambda}_\ell\right]$, their projection is defined according to Line \ref{eq:proj}.  See Sections D and I.3 in their paper for thorough information.  We apply this projection to our algorithm as well.

\begin{align}
    \mathrm{proj}_\mathcal{K} &= \mathbf{A}_\ell - \frac{K_\gamma - 1}{K_\gamma} \frac{\mathrm{tr}\left[\mathbf{A}_\ell\right]}{d_\ell}\mathbf{I}_{d_\ell} \label{eq:proj}
\end{align}

\section{RESULTS}
\label{sec:results}

We tested our algorithm on synthetic data and five real-world datasets.  For synthetic data, we generated Erdos-Renyi graphs as our ground truth precision matrices for each axis (except for Figure \ref{fig:pr-curves-matrix-ar-1} which uses an AR(1) process).  Datasets were then generated from the Kronecker-sum normal distribution using these precision matrices.  In all cases, we only generated 1 sample from this distribution.  For comparison, we set EiGLasso's K parameter to 1; this parameter corresponds to the fastest version of EiGLasso.  We used the same convergence tolerance and max iterations for all algorithms.  Time and memory complexity for GmGM, TeraLasso, and EiGLasso are given in Table \ref{tab:asymptotic-performance}.  When measuring runtimes on synthetic data, we let the regularization parameters be zero for prior work.


\subsection{Synthetic Data}
\label{sec:synth}

We verified that our algorithm was indeed faster on matrix-variate data compared to prior work (Figure \ref{fig:axis-runtimes}) on our computer (2020 MacBook Pro with an M1 chip and 8GB RAM).  Our results on matrix data are encouraging - datasets of size 4,000 by 4,000 could have their graphs estimated in a minute, and, extrapolating the runtimes, datasets up to size 16,000 by 16,000 could have their graphs estimated in about an hour.  This is significantly faster than our baselines; neither EiGLasso nor TeraLasso could compute on 1,500 by 1,500 datasets in a minute.  Larger datasets would require more than 6GB of memory for our algorithm to run, pushing the limits of RAM.  On higher-order tensor data, our algorithm continues to outperform prior work.  For 4-axis and higher data, our algorithm could handle every dataset that could fit in RAM in less than a minute (Fig \ref{fig:4-axis-runtimes} and supplementary material).

In addition to these speed improvements, we show that we perform similarly to state-of-the-art (Figure \ref{fig:pr-curves}).  We demonstrate that taking into account shared axes does indeed improve performance (Figure \ref{fig:pr-curves-matrix-erdos-shared}), and furthermore that our restricted L1 penalty leads to near-perfect performance on tensor data (Figure \ref{fig:pr-curves-tensor-erdos}).

\begin{figure*}
    \centering
    \begin{subfigure}[b]{0.32\textwidth}
         \centering
         \includegraphics[width=\textwidth]{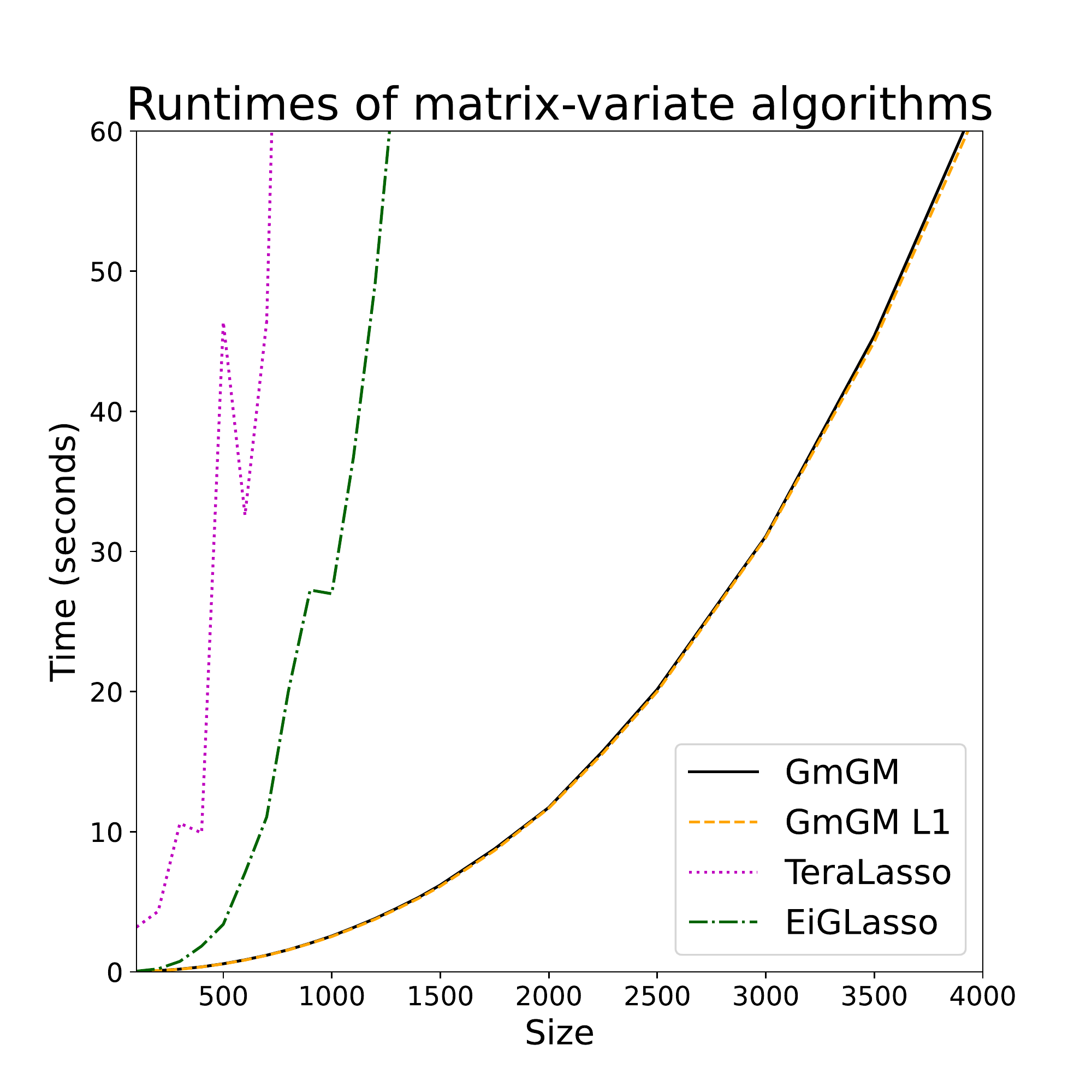}
         \caption{}
         \label{fig:2-axis-runtimes}
     \end{subfigure}
     \hfill
     \begin{subfigure}[b]{0.32\textwidth}
         \centering
         \includegraphics[width=\textwidth]{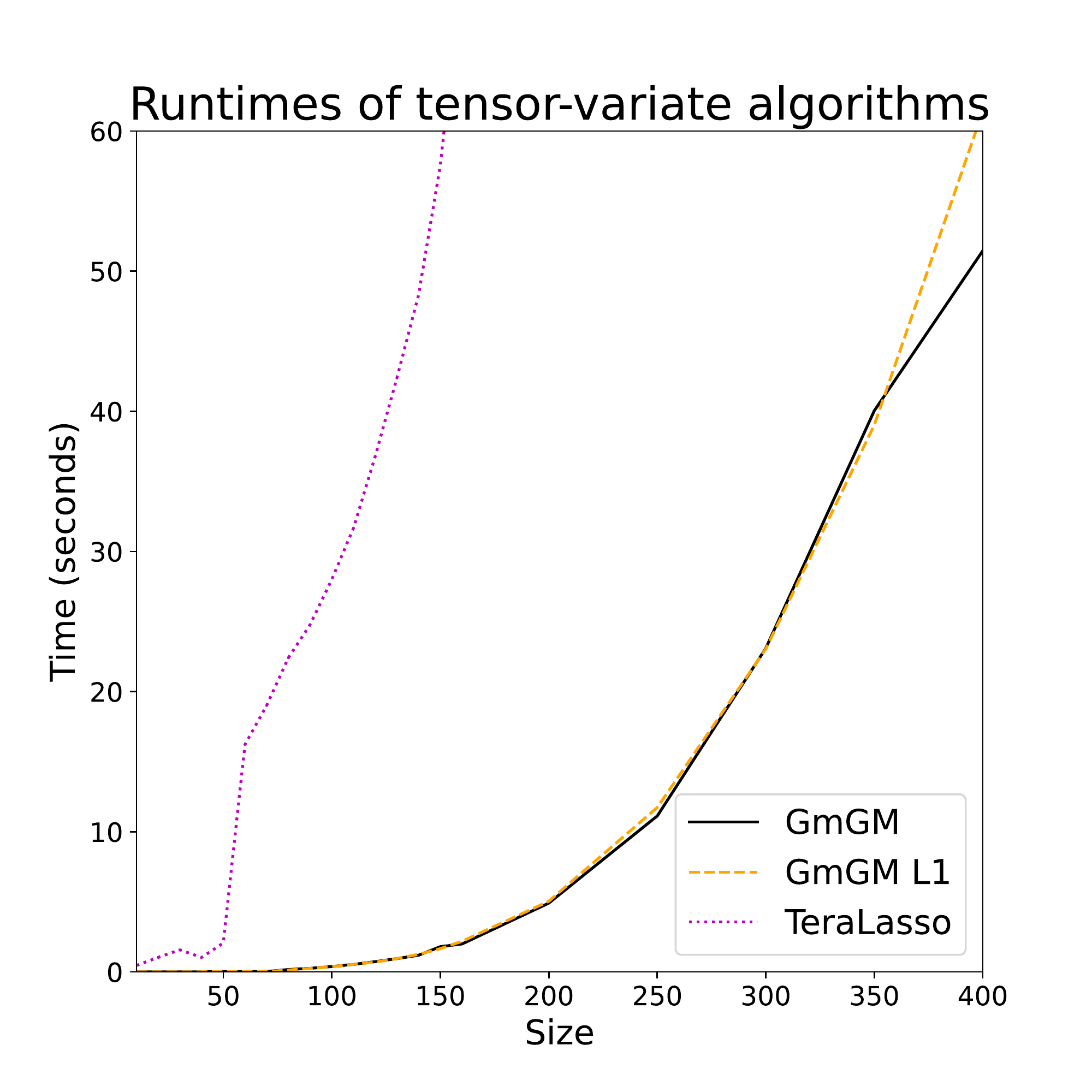}
         \caption{}
         \label{fig:3-axis-runtimes}
     \end{subfigure}
     \begin{subfigure}[b]{0.32\textwidth}
         \centering
         \includegraphics[width=\textwidth]{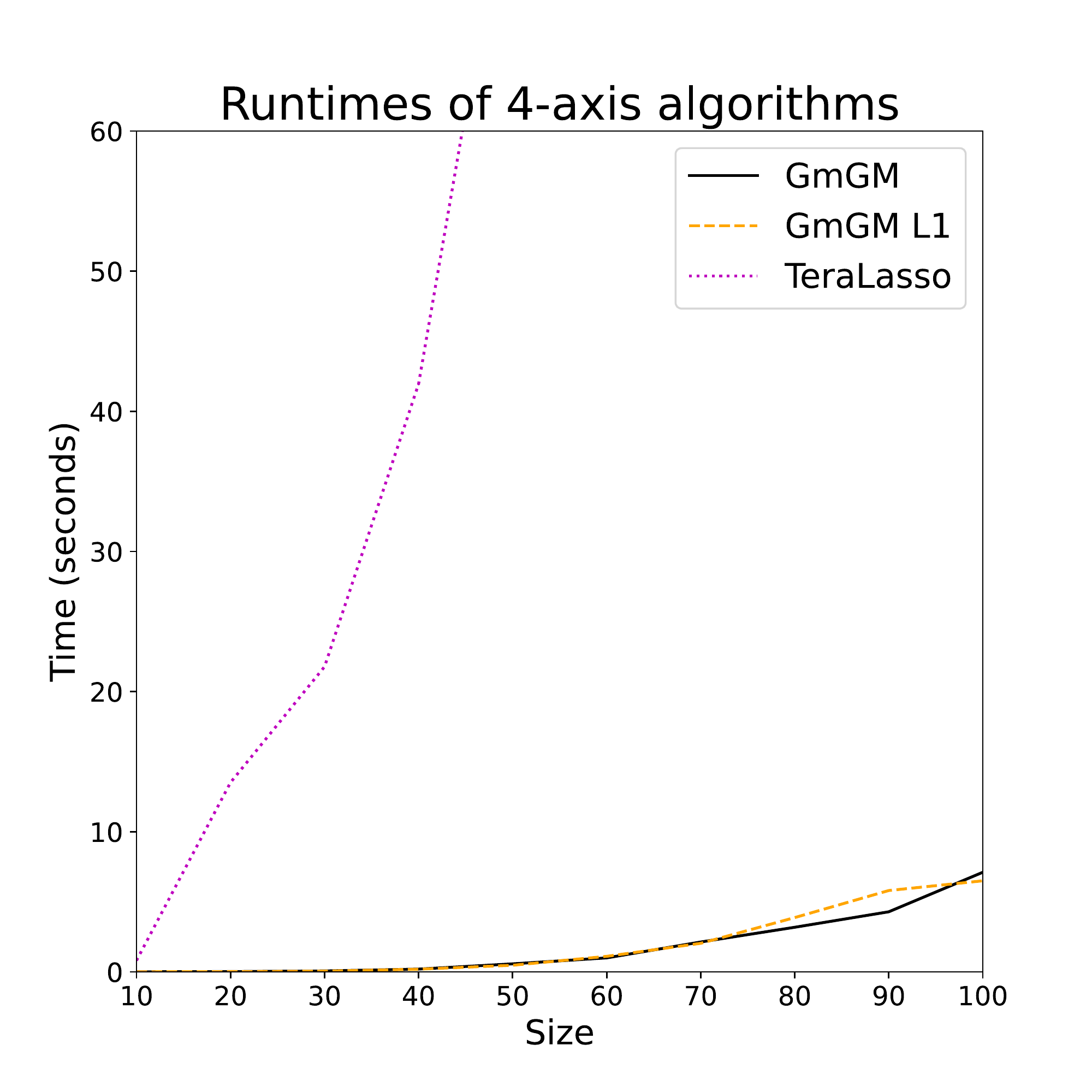}
         \caption{}
         \label{fig:4-axis-runtimes}
     \end{subfigure}
     \caption{A comparison of the runtimes of our algorithm against (a) bi-graphical, (b) 3-axis, and (c) 4-axis prior work.}
     \label{fig:axis-runtimes}
\end{figure*}

\begin{figure*}
    \centering
    \begin{subfigure}[b]{0.33\textwidth}
        \centering
        \includegraphics[width=\textwidth]{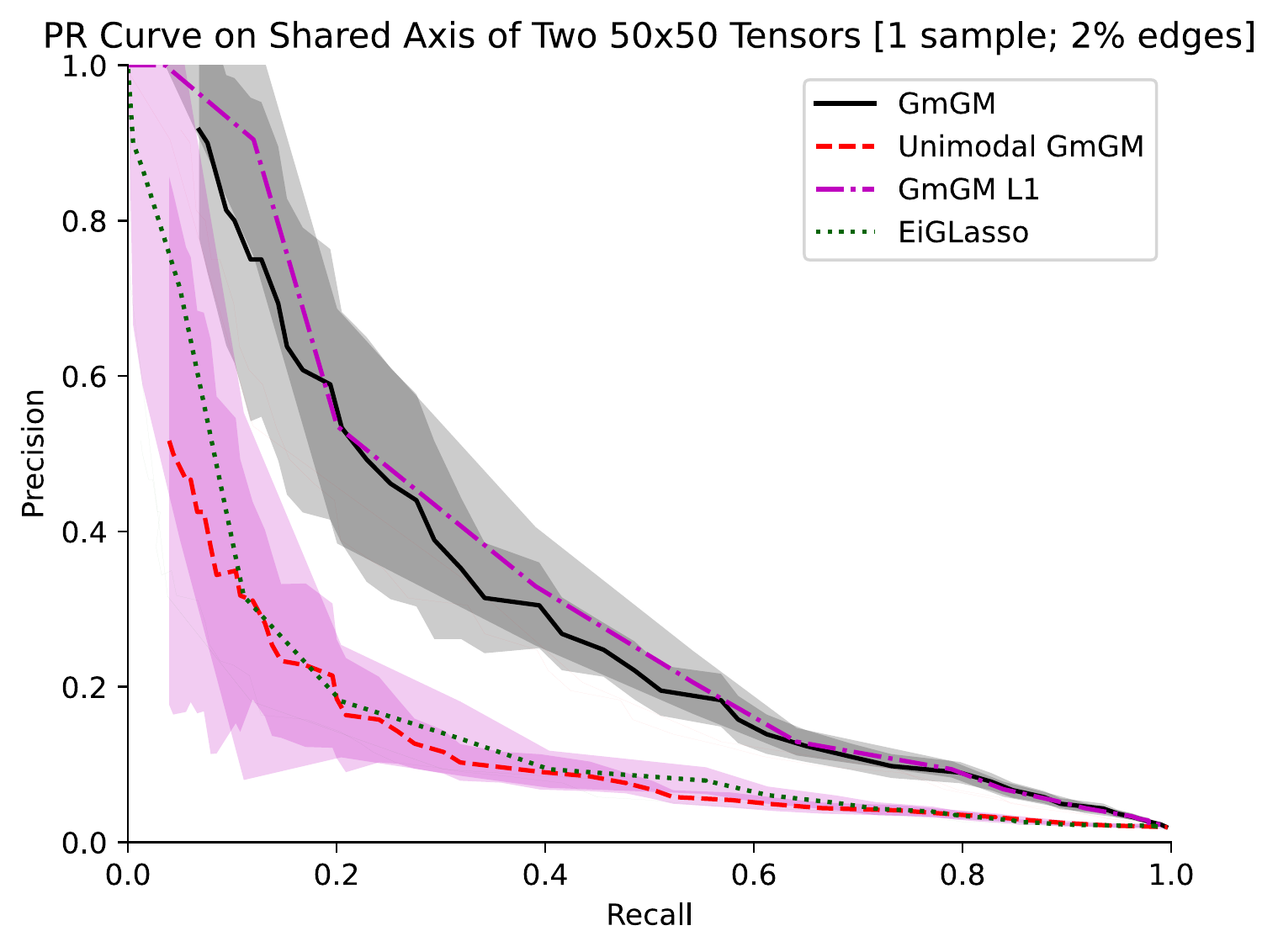}
        \caption{}
        \label{fig:pr-curves-matrix-erdos-shared}
    \end{subfigure}
    \begin{subfigure}[b]{0.32\textwidth}
        \includegraphics[width=\textwidth]{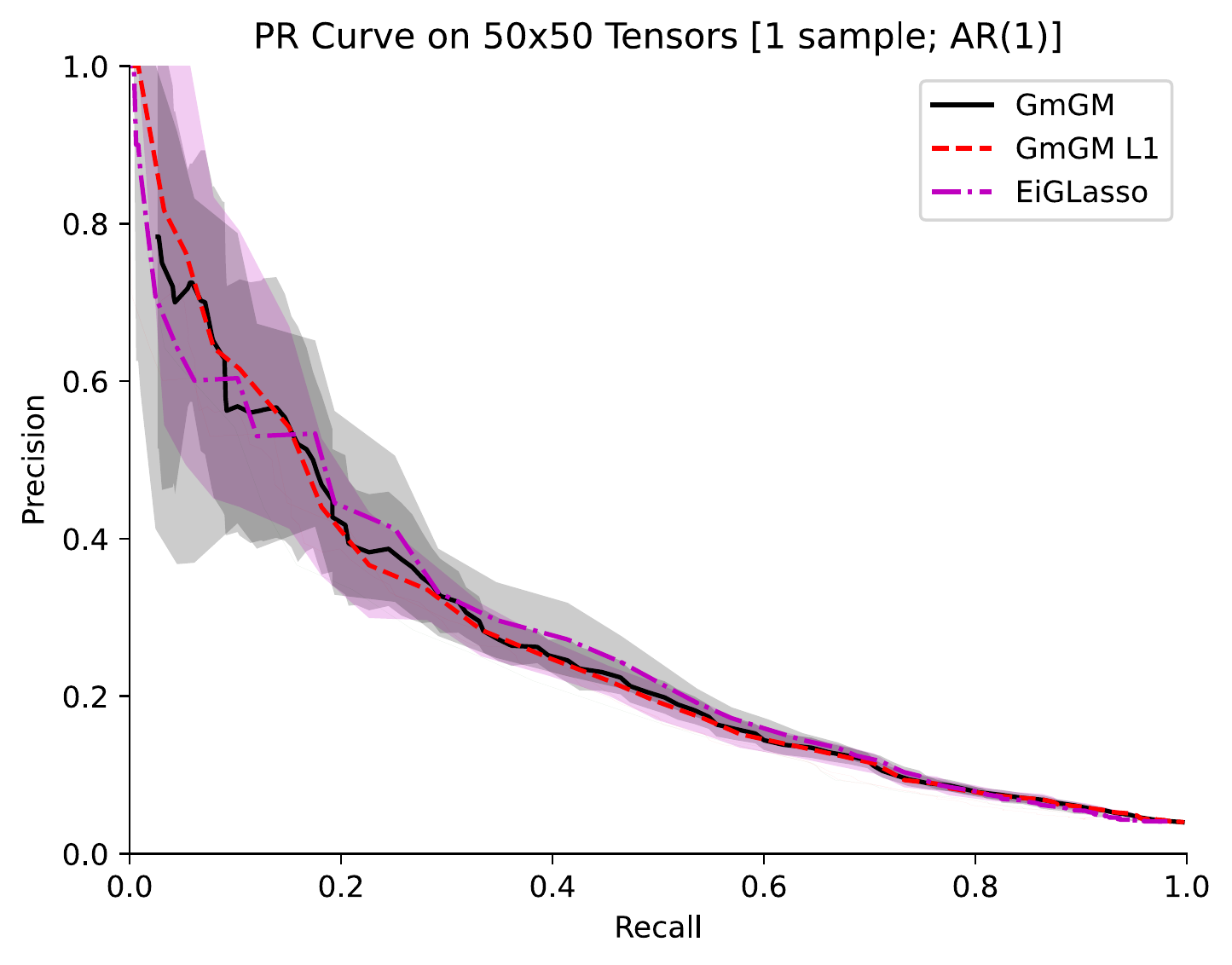}
        \caption{}
        \label{fig:pr-curves-matrix-ar-1}
    \end{subfigure}
    \begin{subfigure}[b]{0.32\textwidth}
        \includegraphics[width=\textwidth]{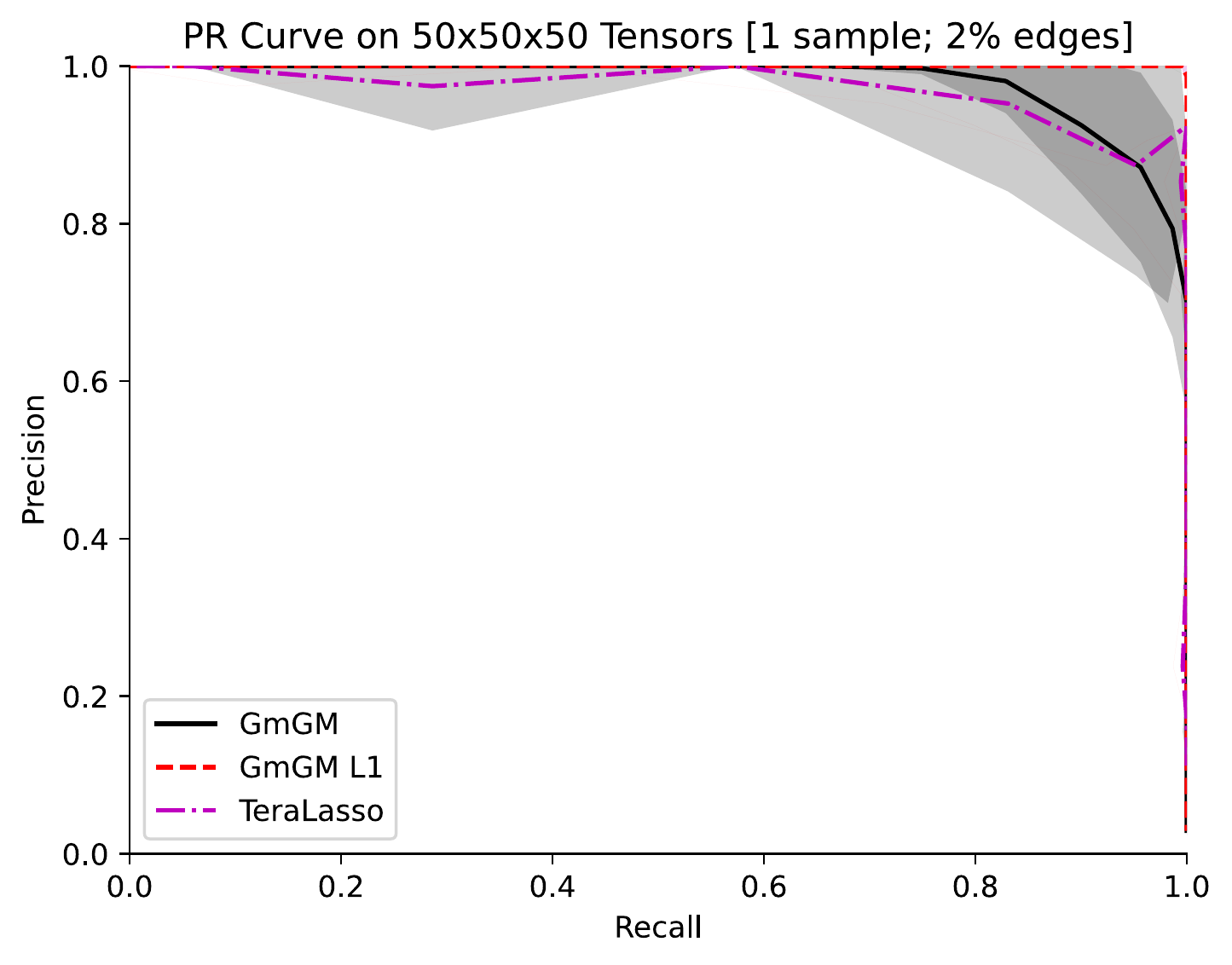}
        \caption{}
        \label{fig:pr-curves-tensor-erdos}
    \end{subfigure}
    \caption{Precision-recall curves comparing various algorithms on synthetic 50x50 data, averaged over multiple runs.  For (a) and (c), true graphs have each edge independently having a 2\% chance of existing.  For (b), true graphs are generated from an AR(1) process.  Shaded background represents standard deviation.  (a) Two 50x50 matrices with a shared axis.  EiGLasso and `Unimodal GmGM' only consider one of the matrices.  (b) A single 50x50 matrix.  (c) A single 50x50x50 tensor.  With the restricted L1 penalty, our algorithm performs nearly perfectly in the tensor-variate case.}
    \label{fig:pr-curves}
    
    
\end{figure*}

\begin{figure*}
    \centering
    \begin{subfigure}[b]{0.29\textwidth}
        \centering
        \includegraphics[width=\textwidth]{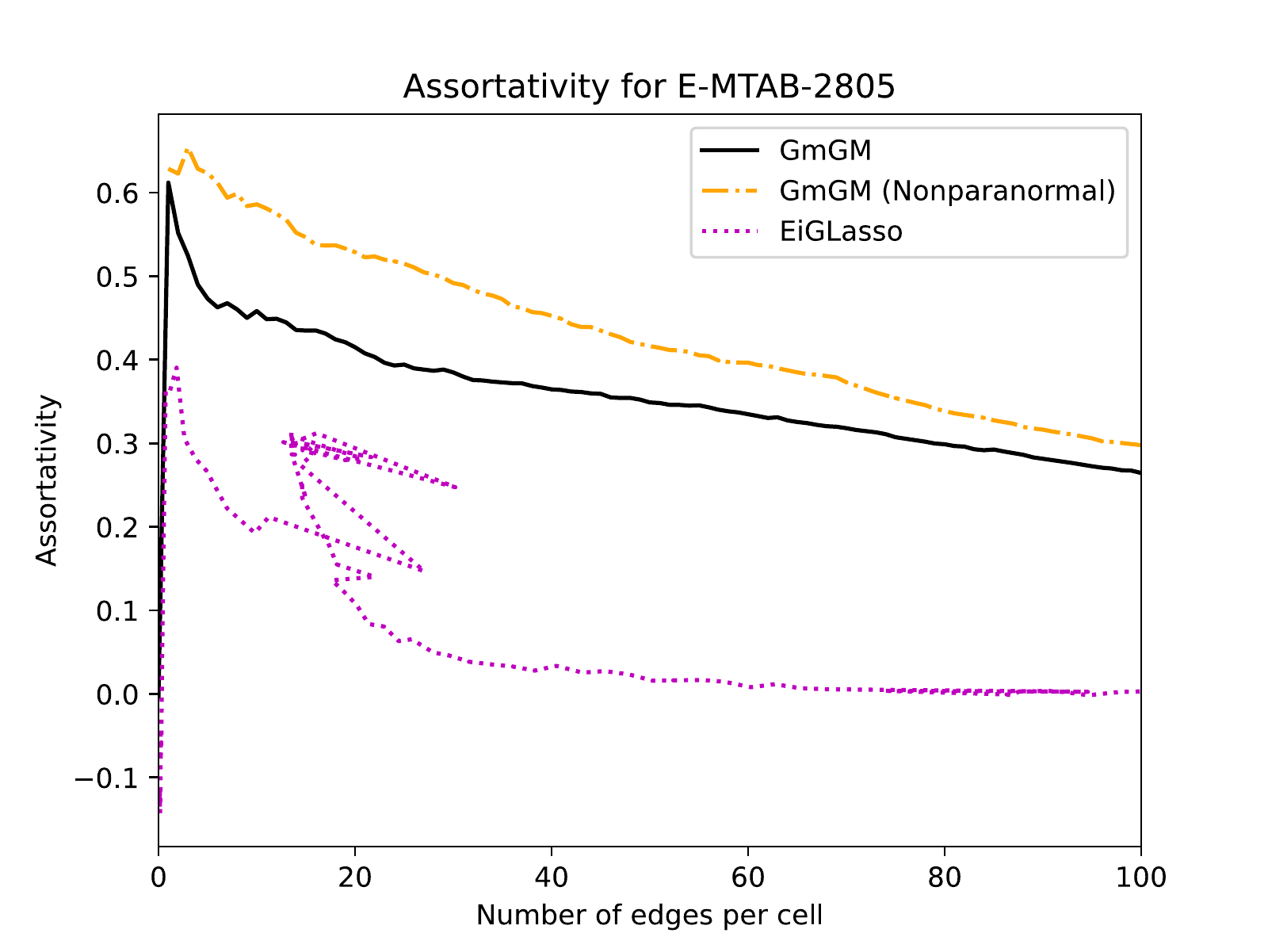}
        \caption{}
        \label{fig:mouse-assortativity}
    \end{subfigure}
    \begin{subfigure}[b]{0.20\textwidth}
        \centering
        \includegraphics[width=\textwidth]{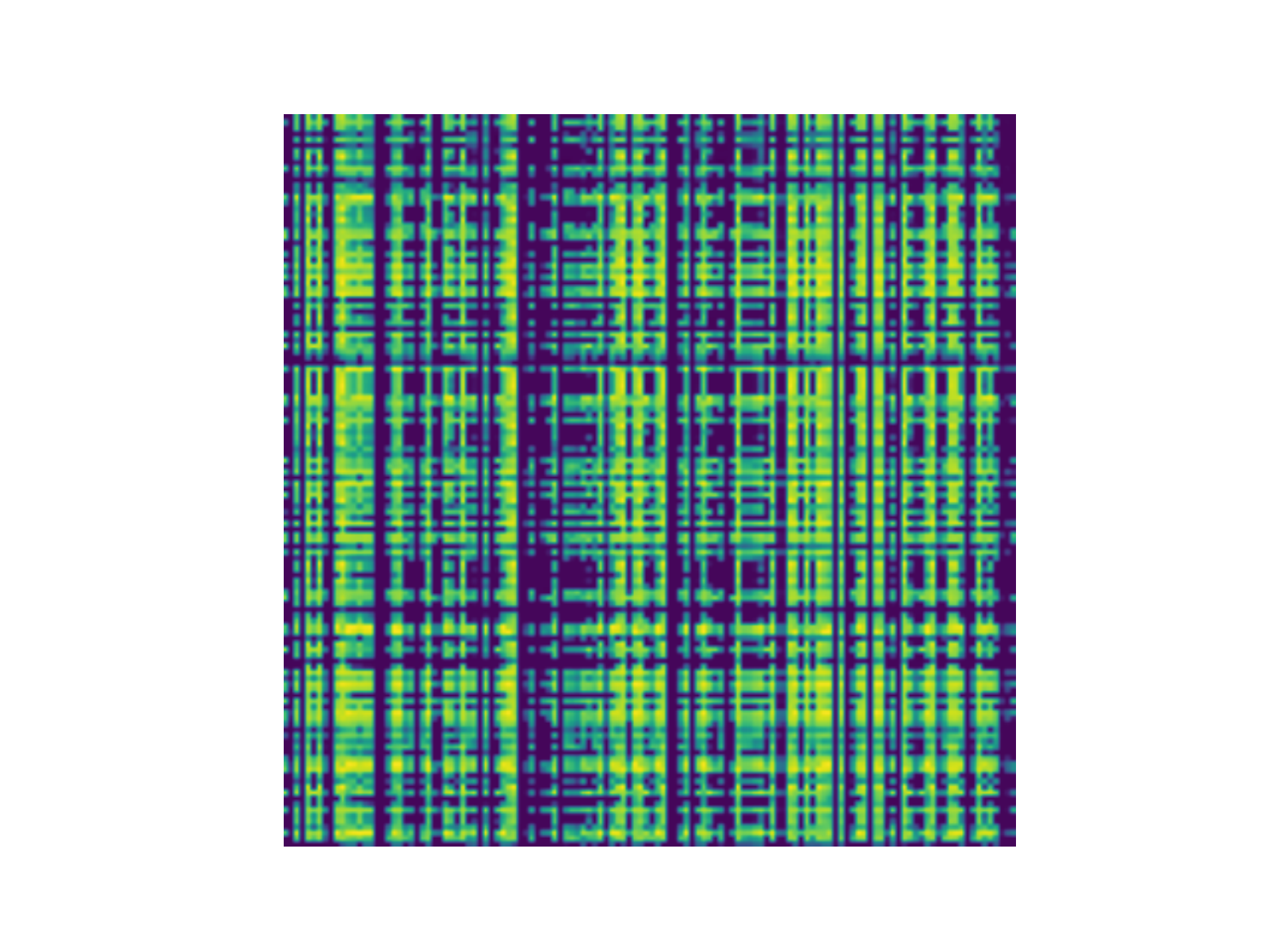}
        \caption{}
        \label{fig:shuffled-duck}
    \end{subfigure}
    \begin{subfigure}[b]{0.20\textwidth}
        \centering
        \includegraphics[width=\textwidth]{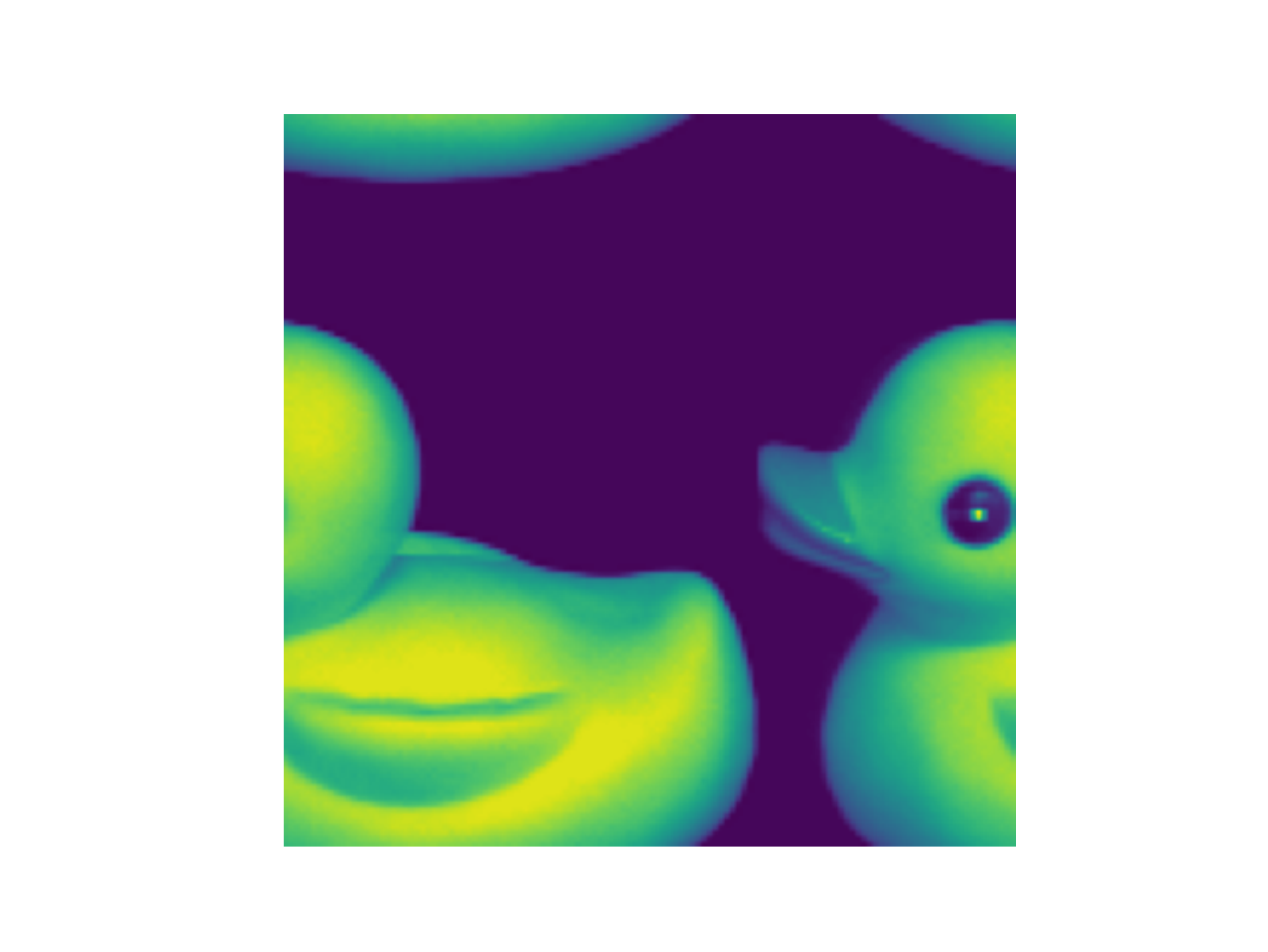}
        \caption{}
        \label{fig:duck}
    \end{subfigure}
    \begin{subfigure}[b]{0.29\textwidth}
        \centering
        \includegraphics[width=0.75\textwidth]{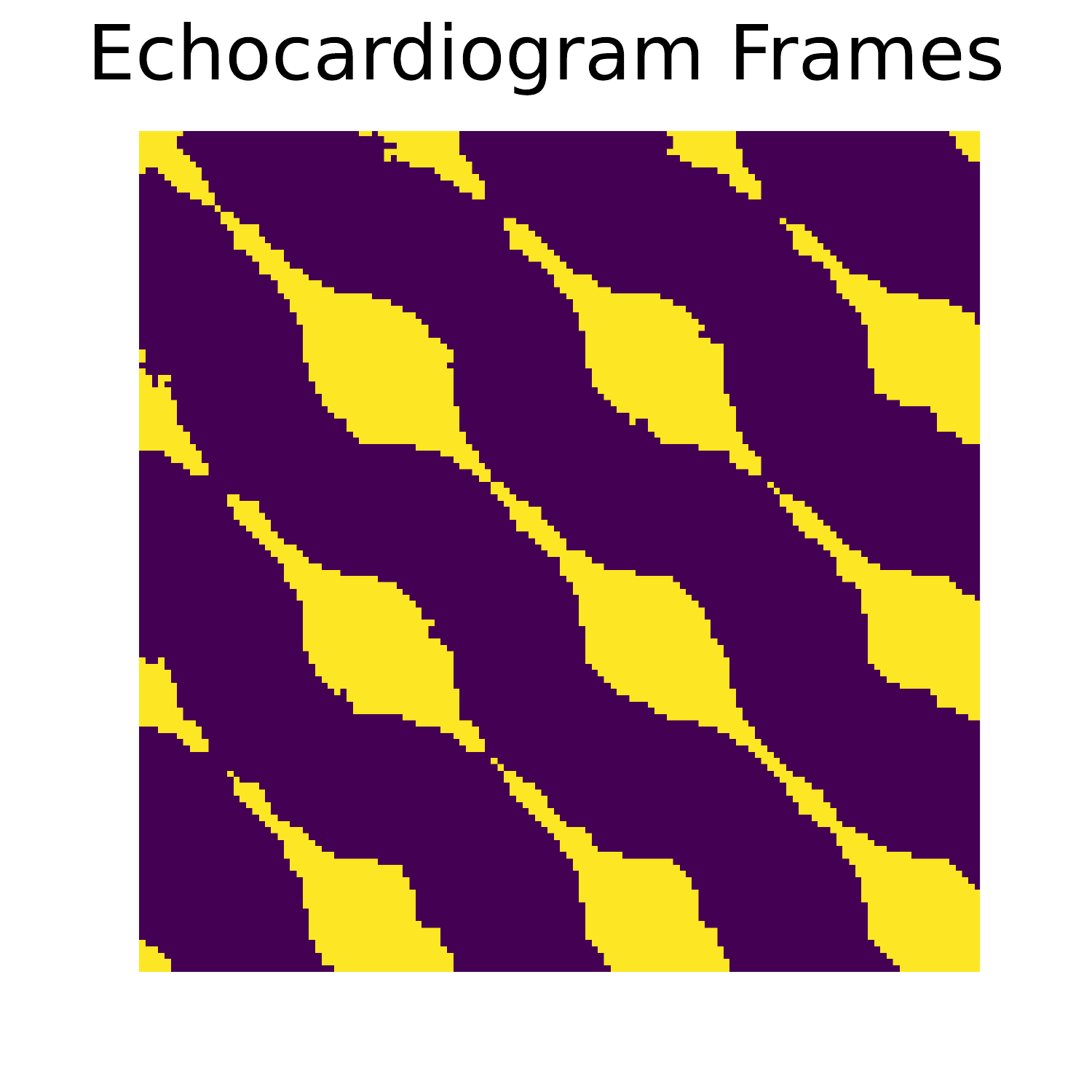}
        \caption{}
        \label{fig:heart}
    \end{subfigure}
    \caption{(a) Assortativity as we increase the number of edges in the final graph.  EiGLasso data was generated by calculating the assortativity-per-regularization penalty, and calculating the edges-per-cell from this data afterwards; this is the cause for the curve's strange behavior around 20 edges per cell. (b) The COIL-20 duck, in which the rows, columns and frames are shuffled.  (c) GmGM does an almost perfect job at recovering the duck; unfortunately, it gets cut in half.  (d) Predicted precision matrix for echocardiogram frames (yellow=connected, blue=disconnected).  The periodic structure of the heartbeat is evident in the diagonals.}
\end{figure*}

\begin{figure*}
    \centering
    \begin{subfigure}[b]{0.32\textwidth}
        \centering
        \includegraphics[width=\textwidth]{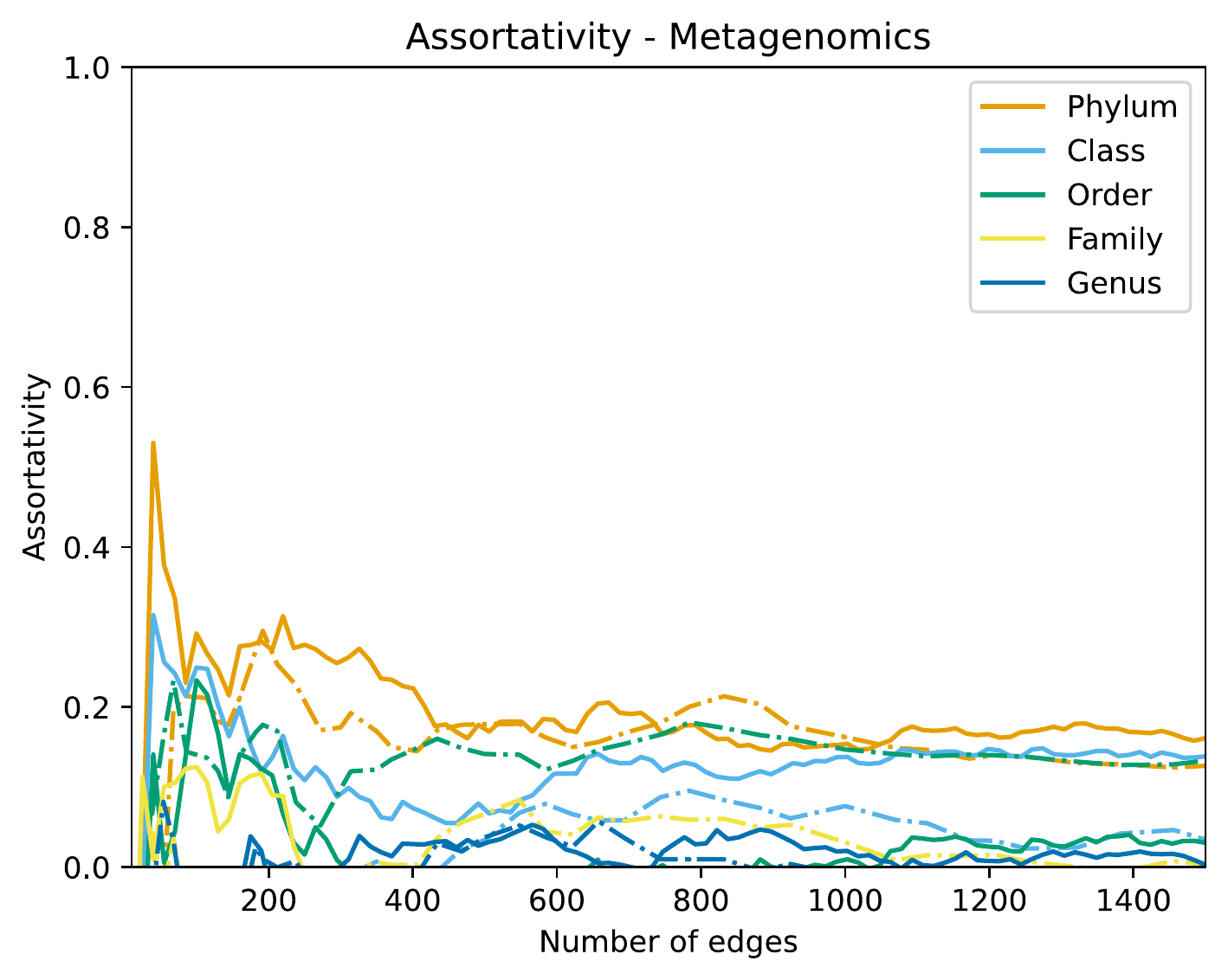}
        \caption{}
        \label{fig:assort-without}
    \end{subfigure}
    \hfill
    \begin{subfigure}[b]{0.32\textwidth}
        \centering
        \includegraphics[width=\textwidth]{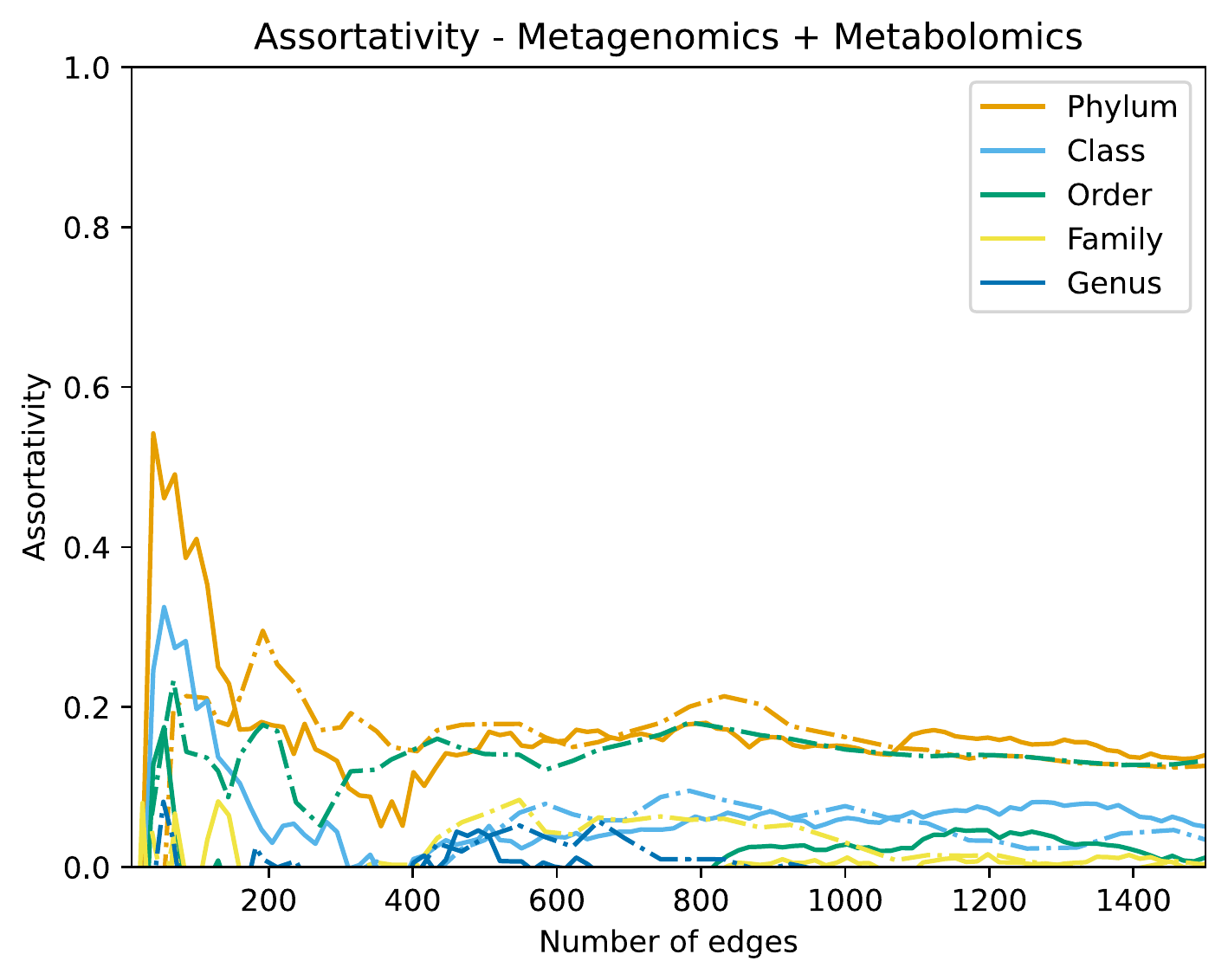}
        \caption{}
        \label{fig:assort-with}
    \end{subfigure}
    \begin{subfigure}[b]{0.32\textwidth}
        \centering
        \includegraphics[width=\textwidth]{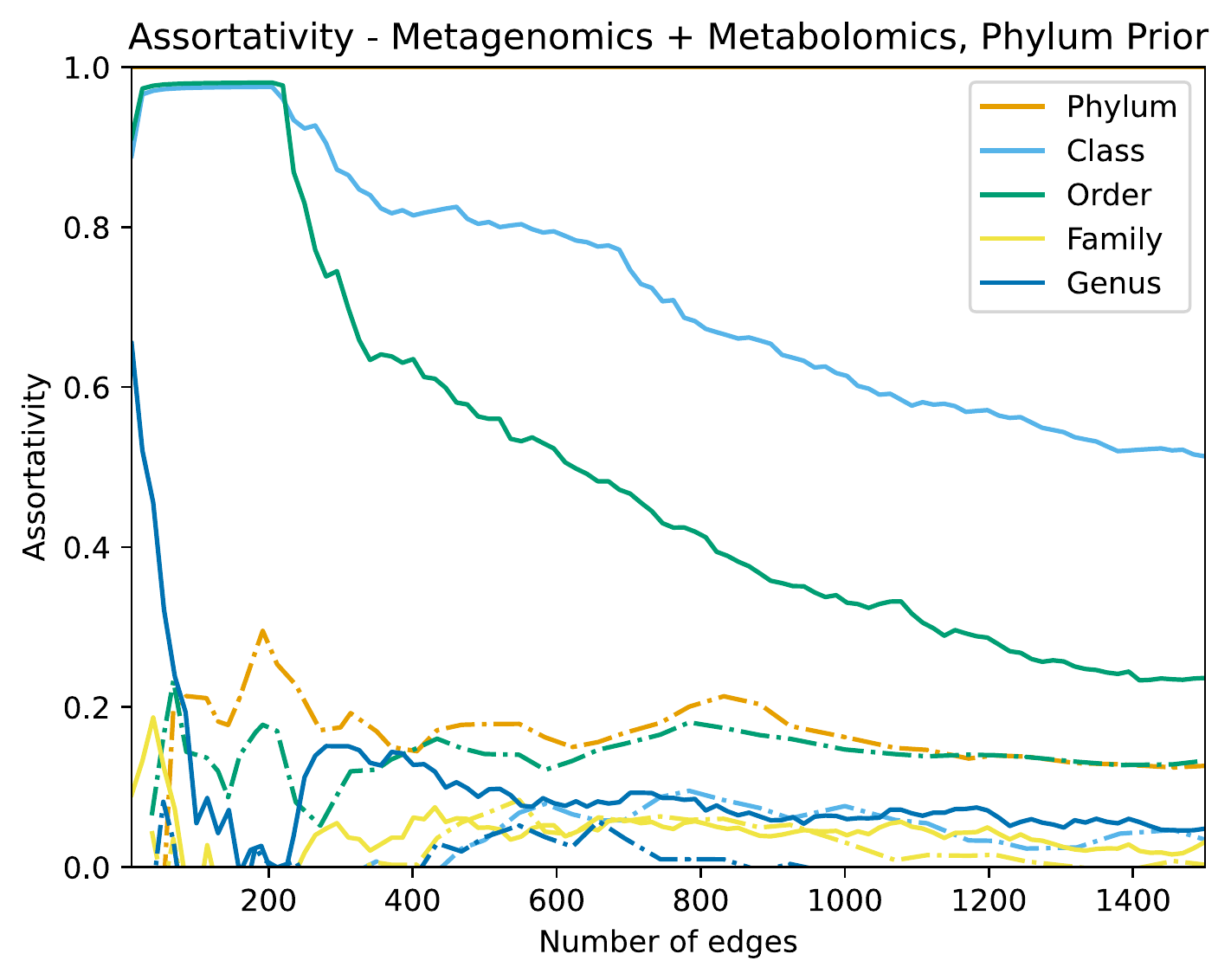}
        \caption{}
        \label{fig:lifelines-prior}
    \end{subfigure}
    \caption{Assortativity in the LifeLines-DEEP dataset, comparing our method with the Zero-inflated Log-Normal (ZiLN) model.  Solid lines represent our algorithm, dashed represent ZiLN.  In one case we show the performance of our algorithm restricted to the metagenomics dataset (a) and when augmented with the metabolomics dataset (b).  In both cases, ZiLN only has access to the metagenomics dataset, as it is a single-axis model. (c) The same test as (b), with the incorporation of phylum information as a prior (Theorem \ref{thm:evec-prior}).  Specifically, we used a Wishart prior whose parameter encoded a graph connecting two species if and only if they are in the same phylum.}
    \label{fig:assort}
\end{figure*}

\begin{figure*}
    \centering
    \begin{subfigure}[b]{0.35\textwidth}
        \centering
        \includegraphics[width=\textwidth]{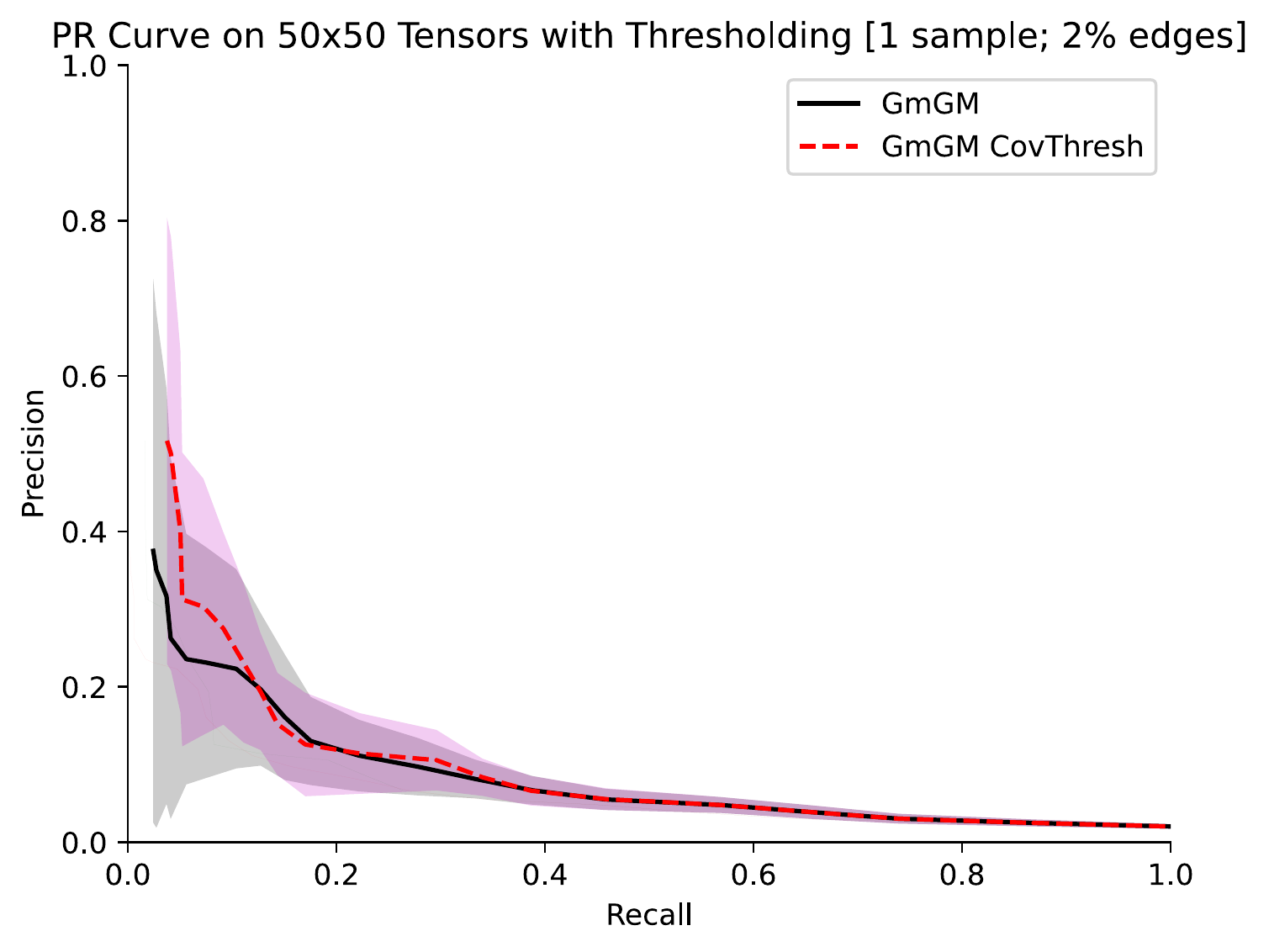}
        \caption{}
        \label{fig:cov-thresh-pr}
    \end{subfigure}
    \hfill
    \begin{subfigure}[b]{0.35\textwidth}
        \centering
        \includegraphics[width=0.75\textwidth]{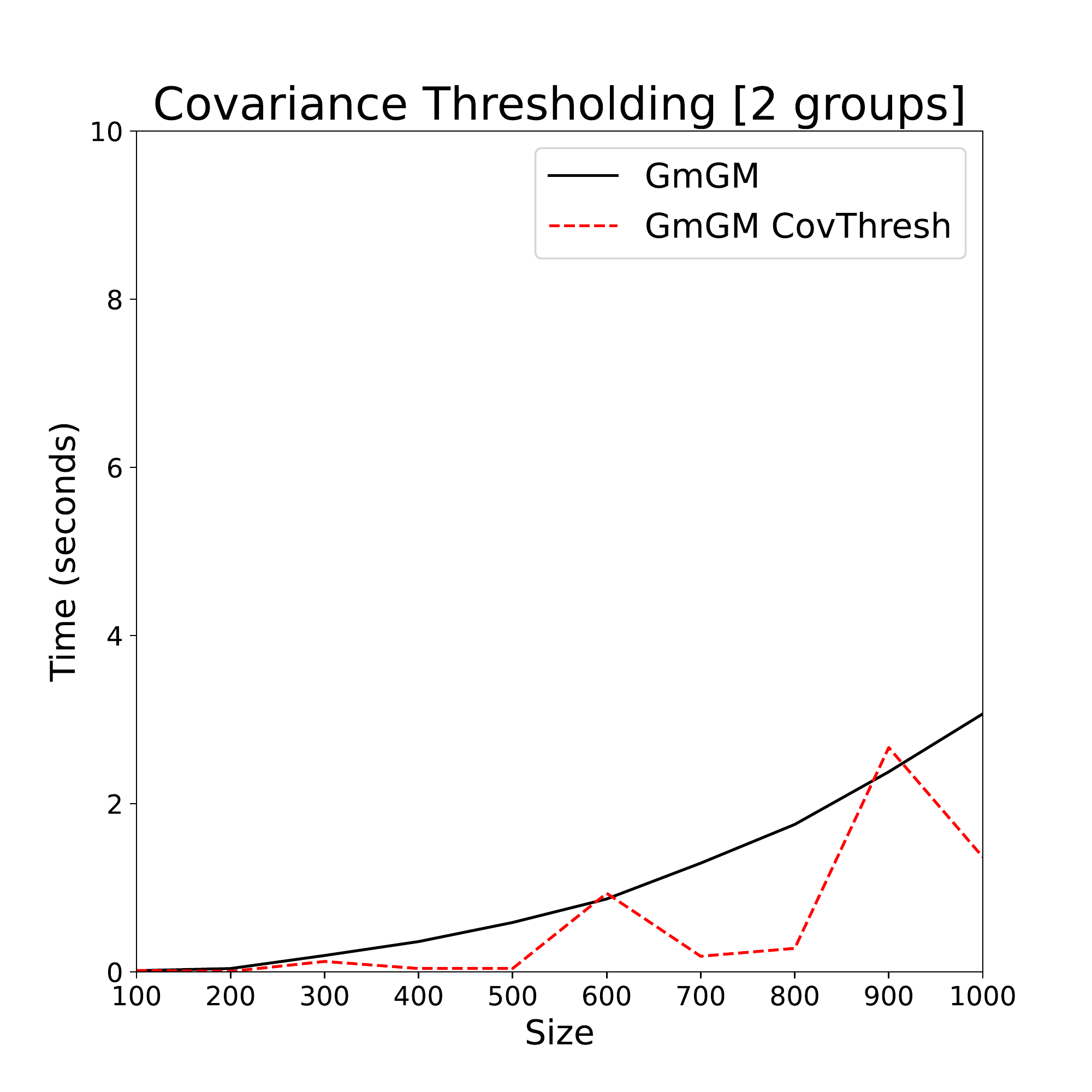}
        \caption{}
        \label{fig:cov-thresh-runtime}
    \end{subfigure}
    \caption{(a) The precision-recall curves comparing vanilla GmGM with its performance using Theorem \ref{thm:cov-thresh}'s covariance thresholding trick. (b) Runtime comparison of vanilla GmGM and the covariance thresholding trick.  Times reported were averaged over 50 runs.  Data was drawn from two disjoint versions of the same distribution as from Figure \ref{fig:pr-curves-matrix-erdos-shared}, to simulate there being multiple groups.}
    \label{fig:cov-thresh}
\end{figure*}

\begin{figure*}
    \centering
    \begin{subfigure}[b]{0.4\textwidth}
        \centering
        \includegraphics[width=\textwidth]{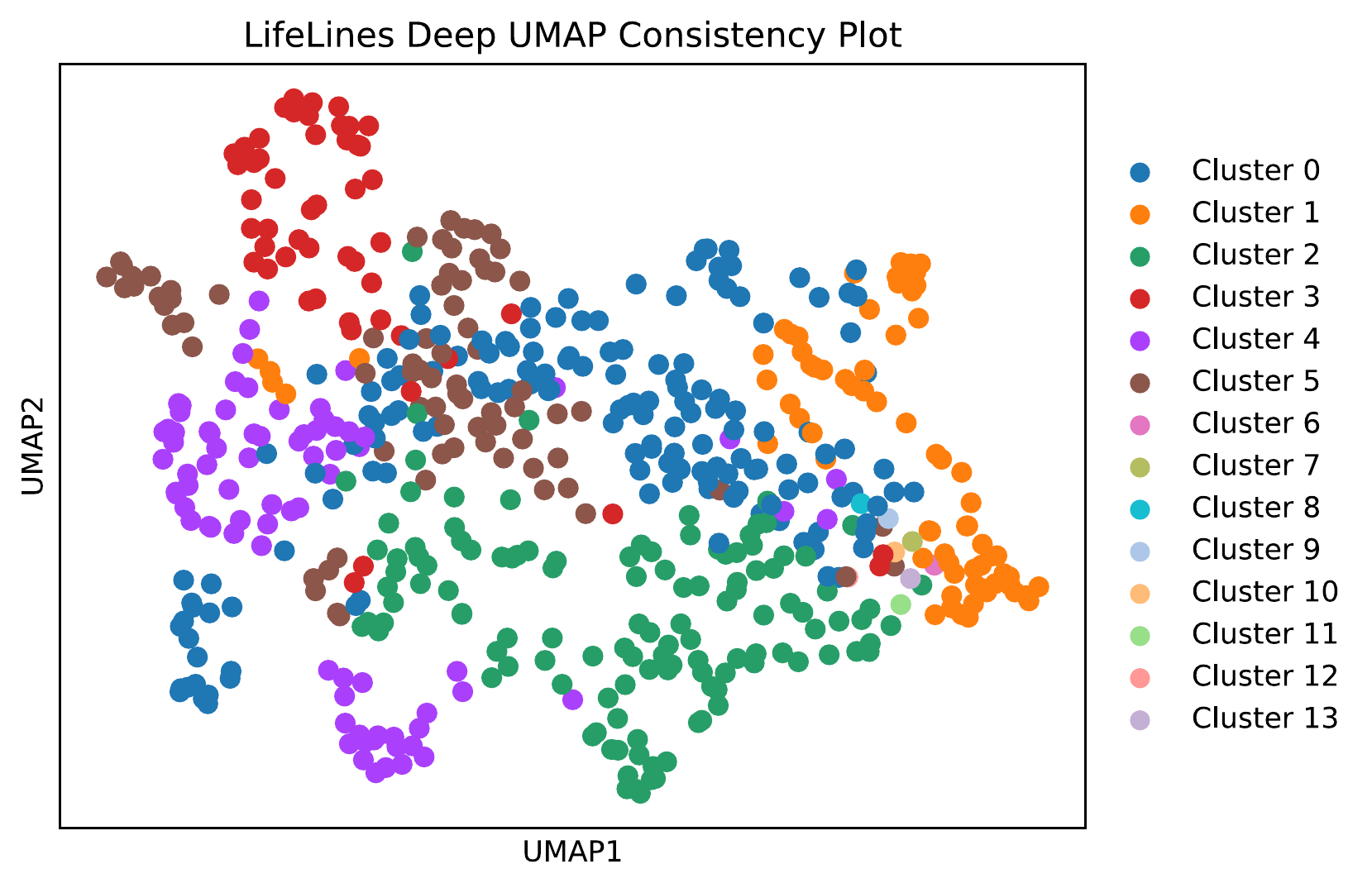}
        \caption{}
        \label{fig:lifelines-umap}
    \end{subfigure}
    \hfill
    \begin{subfigure}[b]{0.32\textwidth}
        \centering
        \includegraphics[width=\textwidth]{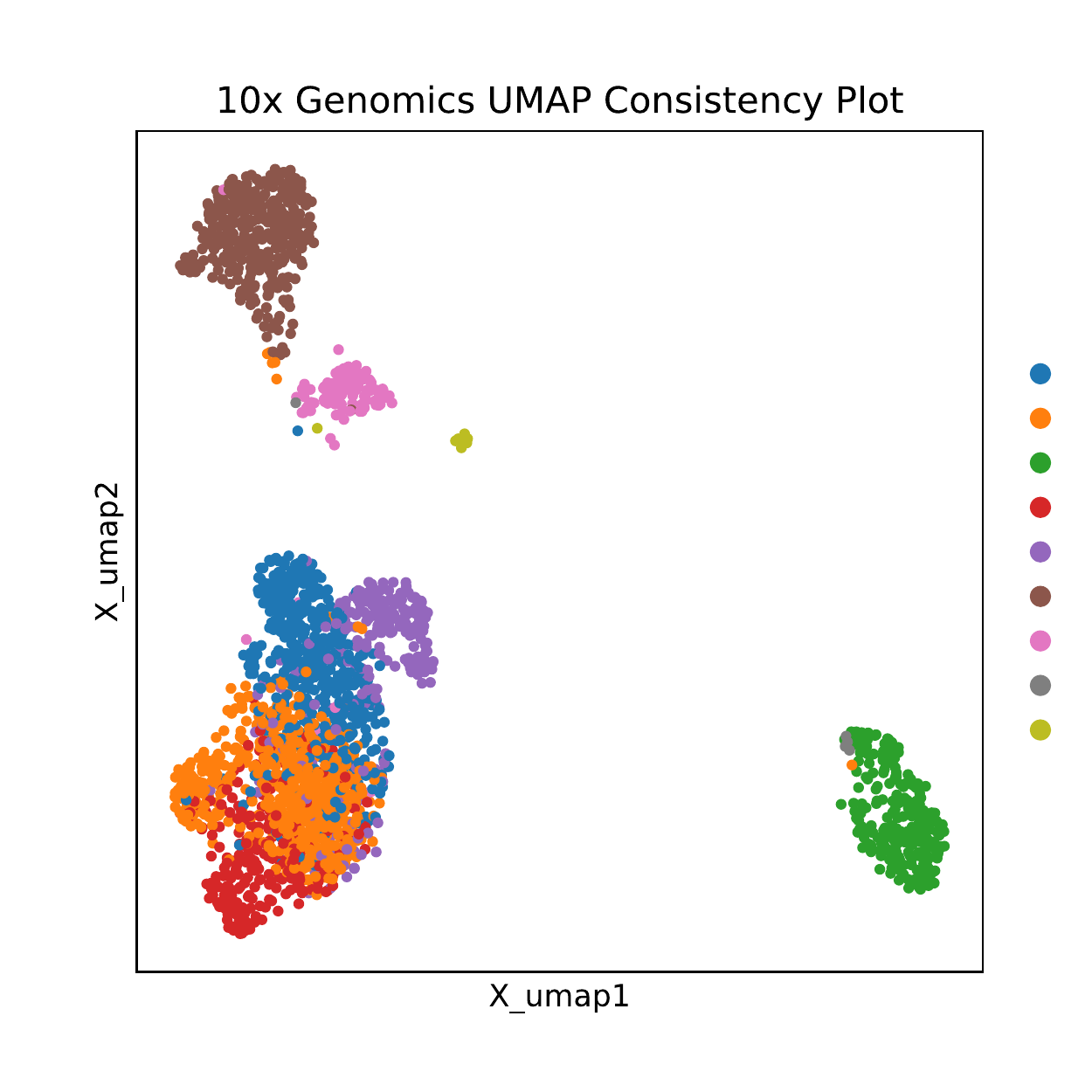}
        \caption{}
        \label{fig:10x-umap}
    \end{subfigure}
    \caption{Both (a) and (b) are UMAP plots, colored by Louvain clustering of GmGM's resultant graphs.  (a) Species from the LifeLines-DEEP dataset.  (b) Cells from the 10x Genomics Dataset.  In both cases, we can see that clusters on our graph correspond to contiguous regions of UMAP-space.}
    \label{fig:10x}
\end{figure*}

\subsection{Real Data}

We tested our method on various real datasets.  These include two video datasets (COIL-20 \parencite{nene_columbia_nodate} and EchoNet-Dynamic \parencite{ouyang_video-based_2020}), a transcriptomics dataset (E-MTAB-2805 \parencite{buettner_computational_2015}), and two multi-omics datasets (LifeLines-DEEP \parencite{tigchelaar_cohort_2015} and a 10x Genomics dataset \parencite{10x_genomics_flash-frozen_2021}).  Due to space concerns, we only briefly describe each experiment and its results here.  Full details are available in the supplementary material.

For the COIL-20 dataset, we tested whether our algorithm could be used to reconstruct the frames of a shuffled video.  We chose this test as it was the test used in the first Kronecker sum multi-axis model, BiGLasso.  BiGLasso had to heavily downsample the image (to a 9x9 image with half the frames), and flatten the rows and frames into a single axis.  Due to the speed improvements of our algorithm, and its ability to handle tensor-variate data, we were able to run our algorithm on the full-sized dataset and achieve a similar result in negligible time.  Specifically, the reconstruction of the rows, columns, and frames all had an accuracy of 99\%.  In Figure \ref{fig:shuffled-duck} we show the shuffled data, with Figure \ref{fig:duck} being the reconstruction using our algorithm.

Due to the simplicity of COIL-20, we chose to do a similar test on the more complicated EchoNet-Dynamic dataset.  We expected there to be structure in our predicted graphs, which would allow us to detect the pattern of heartbeats.  This pattern can be seen in Figure \ref{fig:heart}.  We were largely successful in this endeavor, with only one of the five videos we tested having poor results.  For full details on the experiment, and how the heartbeat was extracted, see the supplementary material.

To test GmGM on our intended use case, `omics data, we started with the E-MTAB-2805 transcriptomics dataset, as this dataset was used in the original scBiGLasso paper for exploratory analysis.  In this dataset, each cell belongs to one of three classes (G, S, or G2M) depending on its stage in the cell cycle.  To evaluate our algorithm's performance, we use assortativity.  Assortativity measures the tendency of cells in the same class to cluster together in the estimated graph; an assortativity of 0 represents no tendency for connection, with 1 being the maximum.  Negative assortativites are possible, if there is a tendency to connect to different classes.  We report the assortativities in Figure \ref{fig:mouse-assortativity}, comparing our results to EiGLasso and with the nonparanormal skeptic applied to the input.

Our fourth dataset, the LifeLines-DEEP dataset, was chosen because it was multi-modal, and because a single-axis graphical model, ZiLN \parencite{prost_zero_2021}, had already been tested on one of its modalities.  We use assortativity as a metric, as that is what was used in prior work.  Our results are comparable to their work (Figure \ref{fig:assort}).  Our algorithm's runtime of 4.7 seconds was similar to ZiLN's 3.6 seconds, showing that our multi-axis method was capable of similar runtimes as single-axis methods for the first time.  ZiLN was specifically built for metagenomics data - the compositionality assumption it makes would not be correct for most other datasets considered here, and hence we do not evaluate its performance on other datasets.  Likewise, TeraLasso and EiGLasso were too slow to evaluate assortativity curves as in Figure \ref{fig:assort}.

Finally, we tested our approach on a 10x Genomics single-cell (RNA+ATAC) dataset taken from a B cell lymphoma tumour.  We chose this dataset because it is multi-modal and quite large - 14,000 cells, 20,000 genes, and 100,000 peaks (although after preprocessing it became 2359 cells, 5350 genes, and 12485 peaks).  This dataset is unlabelled, prohibiting more quantitative analyses; our main aim in this test was to prove that the algorithm could produce results in reasonable time.  Our algorithm took 590 seconds to consider both modalities jointly and 52 seconds on just the scRNA-seq data.  In contrast, EiGLasso took more than 60,000 seconds on the scRNA-seq data - due to its excessive runtime, we did not let EiGLasso finish.

To ensure our algorithm was producing sensible results on the 10x Genomics dataset, we performed a `UMAP consistency analysis'.  Specifically, we ran Louvain clustering \parencite{blondel_fast_2008} on our graph, and viewed the clusters in UMAP-space \parencite{mcinnes_umap_2020}.  We believe this is a good test in this case, as nonlinear dimensionality reduction is used in almost every scRNA-seq study \parencite{svensson_curated_2020}.  As we can see in Figure \ref{fig:10x-umap}, the clusters on our graph correspond to sensible regions in UMAP-space as well.  In the supplementary material, we provide interpretations for some of these clusters.  Figure \ref{fig:lifelines-umap} shows another UMAP consistency plot, for the LifeLines-DEEP dataset.

In the supplementary material, we give variants of the UMAP consistency plot for both the LifeLines-DEEP and 10x Genomics datasets. These variants use different clustering methods (Leiden \parencite{traag_louvain_2019} and Ensemble Clustering \parencite{poulin_ensemble_2019}), or tSNE instead of UMAP \parencite{maaten_visualizing_2008}.  In all cases the results are broadly similar, except that Leiden on the LifeLines-Deep dataset tends to put everything in one cluster.

All of the code to run the algorithm and recreate the experiments has been made publicly available on GitHub: \href{https://github.com/AIStats-GmGM/GmGM/}{https://github.com/AIStats-GmGM/GmGM}, and we have a package ``GmGM'' on PyPI.

\section{LIMITATIONS AND FUTURE WORK}



One limitation of our algorithm is its lack of compatibility with the standard L1 penalty.  We do show that our restricted L1 penalty leads to better performance in synthetic data, even decisively outperforming L1-penalized prior work in some cases (Figure \ref{fig:pr-curves-tensor-erdos}).  In the supplementary material, we also show that our restricted penalty leads to much better performance on the E-MTAB-2805 transcriptomics dataset.  However, it would be nice to incorporate the standard L1 penalty, as it is a well-studied and well-accepted technique.  This would be a rewarding avenue for future work.

Another limitation of our algorithm is that it does not handle the case in which two axes partially overlap.  For example, in the LifeLines-DEEP dataset, not every person is in both the metabolomics and metagenomics datasets.  For now, our solution to this is to restrict ourselves to the subset of indices that are available in all modalities sharing that axis (this was about 90\% of people in the LifeLines-DEEP dataset).  Prior multi-axis methods could only handle a single modality, whereas this partial overlap problem only arises when considering multiple modalities - a case which we are the first to consider.  We intend to relax this restriction in future work.

One notable effect of our algorithm's speed improvements is that we are now limited by RAM rather than runtime; this affected our experiments, wherein we were more agressive with our quality control filtering of the 100,000 ATAC peaks in the 10x dataset.  Before our work, algorithms were limited by how long it would take to run rather than how much space was needed - our algorithm is the first to be fast enough to hit memory limits.  Our algorithm has a theoretically optimal space usage of $O(\sum_\ell d_\ell^2)$, since the output is a set of $d_\ell \times d_\ell$ precision matrices.  However, given that we are often interested in \textit{sparse} matrices, an avenue for future work would be to create variants of the algorithm that are capable of leveraging this sparsity.  As our algorithm (along with the fastest prior work, EiGLasso and TeraLasso) depends on eigendecompositions, this would be a nontrivial task.

\section{CONCLUSION}

We have created a novel model, GmGM, which successfully generalizes Gaussian graphical models to the common scenario of multi-modal datasets.  Furthermore, we demonstrated that our algorithm is significantly faster than prior work focusing on Gaussian multi-graphical models while still preserving state-of-the-art performance.  These improvements allow multi-graphical models to be applied to datasets with axes of length in the tens of thousands.  Additionally, we showed how to integrate a couple natural priors into our model, as well as provided a technique to partition datasets into smaller portions before analysis.  The latter is particularly useful as runtime scales cubically after the computation of the empirical Gram matrices; partitioning a dataset in half provides an eightfold speedup.  Finally, we demonstrated the application of our algorithm on five real-world datasets to prove its efficacy.

\subsubsection*{Acknowledgements}

We thank the participants and the staff of LifeLines-DEEP for their collaboration. Funding for the project was provided by the Top Institute Food	and Nutrition Wageningen grant GH001. The sequencing was carried out in collaboration with the Broad Institute.

We also thank the reviewers for their comments - this paper has been much improved thanks to their feedback.


\printbibliography

\section*{Checklist}



 \begin{enumerate}

 \item For all models and algorithms presented, check if you include:
 \begin{enumerate}
   \item A clear description of the mathematical setting, assumptions, algorithm, and/or model. \textbf{Yes}; all proofs, and the pseudocode, are available in the supplementary material.
   \item An analysis of the properties and complexity (time, space, sample size) of any algorithm. \textbf{Yes}; the time and space complexity are given in the supplementary material.  All experiments, synthetic and real, were done with sample size=1 as that is the case in the real world when using this type of algorithm.
   \item (Optional) Anonymized source code, with specification of all dependencies, including external libraries. \textbf{Yes}; the source code has been put onto GitHub under a custom-made anonymized account for this submission.  All libraries, including their version numbers, are given in the README of that repository.
 \end{enumerate}

 \item For any theoretical claim, check if you include:
 \begin{enumerate}
   \item Statements of the full set of assumptions of all theoretical results. \textbf{Yes}
   \item Complete proofs of all theoretical results. \textbf{Yes}; all proofs, and the psuedocode, are available in the supplementary material.
   \item Clear explanations of any assumptions. \textbf{Yes}; all theorems give their specific assumptions within them, and our model is stated clearly in Section \ref{sec:model}.  
 \end{enumerate}

 \item For all figures and tables that present empirical results, check if you include:
 \begin{enumerate}
   \item The code, data, and instructions needed to reproduce the main experimental results (either in the supplemental material or as a URL). \textbf{Yes}; in the GitHub repo linked in supplementary material
   \item All the training details (e.g., data splits, hyperparameters, how they were chosen). \textbf{Yes}; in the supplementary material, we give details on thresholding techniques.
         \item A clear definition of the specific measure or statistics and error bars (e.g., with respect to the random seed after running experiments multiple times). \textbf{Yes}; for each real dataset and the synthetic data, details of the experiment not contained in the main paper are contained in the supplementary material.
         \item A description of the computing infrastructure used. (e.g., type of GPUs, internal cluster, or cloud provider). \textbf{Yes}; we stated our computer's specs in Section \ref{sec:synth}.
 \end{enumerate}

 \item If you are using existing assets (e.g., code, data, models) or curating/releasing new assets, check if you include:
 \begin{enumerate}
   \item Citations of the creator If your work uses existing assets. \textbf{Yes}; we cite the datasets used
   \item The license information of the assets, if applicable. \textbf{Yes}; All details about the datasets are available in the supplementary material 
   \item New assets either in the supplemental material or as a URL, if applicable. \textbf{Not applicable}
   \item Information about consent from data providers/curators. \textbf{Yes}; All details about datasets are available in the supplementary material
   \item Discussion of sensible content if applicable, e.g., personally identifiable information or offensive content. \textbf{Not applicable}
 \end{enumerate}

 \item If you used crowdsourcing or conducted research with human subjects, check if you include:
 \begin{enumerate}
   \item The full text of instructions given to participants and screenshots. \textbf{Not applicable}
   \item Descriptions of potential participant risks, with links to Institutional Review Board (IRB) approvals if applicable. \textbf{Not applicable}
   \item The estimated hourly wage paid to participants and the total amount spent on participant compensation. \textbf{Not applicable}
 \end{enumerate}

 \end{enumerate}

\end{document}


%

%


\onecolumn
\aistatstitle{Supplementary Materials for: \\ GmGM: a Fast Multi-Axis Gaussian Graphical Model}

\section*{OUTLINE OF THE SUPPLEMENTARY MATERIAL}

\begin{enumerate}
    \item \hyperref[sec:notation]{Notation}
    \item \hyperref[sec:proofs]{Proofs}
    \begin{enumerate}
        \item \hyperref[sec:permutations]{Permutations}
        \item \hyperref[sec:deriv-prob-dense]{Derivation of the probability density function}
        \item \hyperref[sec:gradient]{Gradient}
        \item \hyperref[sec:mle-eigenvectors]{Maximum likelihood estimate for the eigenvectors}
        \item \hyperref[sec:mle-eigenvalues]{Maximum likelihood estimate for the eigenvalues}
        \item \hyperref[sec:incorp-priors]{Incorporation of priors}
        \item \hyperref[sec:covariance-thresholding]{Covariance thresholding}
    \end{enumerate}
    \item \hyperref[sec:dependencies]{Dependencies}
    \item \hyperref[sec:experiments]{Experiments}
    \begin{enumerate}
        \item \hyperref[sec:synthetic-data]{Synthetic data}
        \item \hyperref[sec:coil]{COIL video}
        \item \hyperref[sec:echonet]{EchoNet-Dynamic ECGs}
        \item \hyperref[sec:mouse]{Mouse embryo stem cell transcriptomics}
        \item \hyperref[sec:lifelines]{LifeLines-DEEP metagenomics + metabolomics}
        \item \hyperref[sec:10x]{10x Genomics flash frozen lymph nodes}
    \end{enumerate}
    \item \hyperref[sec:regularization]{Regularization}
    \item \hyperref[sec:complexity]{Asymptotic complexity}
    \item \hyperref[sec:centering]{Centering the data}
\end{enumerate}

\newpage

In the supplementary material, we give proofs of all theorems (Section \ref{sec:proofs}), provide code, pseudocode, and computer specs (Section \ref{sec:dependencies}), give more informations on our experiments (Section \ref{sec:experiments}), describe how we incorporated regularization (Section \ref{sec:regularization}), and finally give an overview of the asymptotic complexity of ours and other algorithms (Section \ref{sec:complexity}).

Lemmas \ref{lem:rearrangement} and \ref{lem:ks-vec} are used to derive the Kronecker sum normal distributions's pdf, which has been known in prior work; we have included it for completeness.  All other lemmas and theorems are novel.

\section{NOTATION}
\label{sec:notation}

In addition to the notation used in the main paper, we also introduce further notation to aid in the proofs.  For working with tensors, \textcite{kolda_tensor_2009} proved to be an invaluable resource; we have borrowed their notation in most cases.  The only exception is that we have chosen to denote the $\ell$-mode matricization of a tensor $\mathcal{T}$ as $\mathrm{mat}_\ell\left[\mathcal{T}\right]$ rather than $\mathcal{T}_{\left(\ell\right)}$, to highlight its similarity to $\mathrm{vec}\left[\mathcal{T}\right]$ and free up the subscript for other purposes.

To keep track of lengths of axes, we define the following notation:

\begin{itemize}
    \item $d_\ell^\gamma$ is the length of axis $\ell$
    \item $d_{>\ell}^\gamma$ is the product of lengths of all axes after $\ell$
    \item $d_{<\ell}^\gamma$ is the product of lengths of all axes before $\ell$
    \item $d_{\backslash\ell}^\gamma$ is the product of lengths of all axes except for $\ell$
    \item $d_{\forall}^\gamma$ is the product of lengths of all axes (i.e. the number of elements in $\mathcal{D}^\gamma$)
    \item $d_\forall = \sum_\gamma d_\forall^\gamma$ is the total number of elements across all datasets
\end{itemize}

In prior work, $d_\ell$ has been used to represent the lengths of axes but $m_\ell$ was used where we write $d_{\backslash\ell}$ (such as in \textcite{greenewald_tensor_2019}).  As prior work also used $\backslash\ell$ to represent leaving out the $\ell$th axis in other contexts (such as in \textcite{kalaitzis_bigraphical_2013}), and the analogous definitions of $d_{>\ell}$ and $d_{<\ell}$ were convenient for use in proofs, we chose to introduce $d_{\backslash\ell}$ as the variable to represent leave-one-out length products.  By representing all of these related concepts with similar symbols, we hope the maths will be easier to parse.

We will let $\mathbf{I}_a$ be the $a\times a$ identity matrix, which allows a concise definition of the Kronecker sum: $\bigoplus_\ell \mathbf{\Psi}_\ell = \sum_\ell \mathbf{I}_{d_{<\ell}} \otimes \mathbf{\Psi}_\ell \otimes \mathbf{I}_{d_{>\ell}}$.

We make frequent use of the vectorization $\mathrm{vec}\left[\mathbf{M}\right]$ of a matrix $\mathbf{M}$, and more generally of a tensor $\mathrm{vec}\left[\mathcal{T}\right]$.  We adopt the rows-first convention of vectorization, such that:

\begin{align}
    \mathrm{vec}\begin{bmatrix}
    1 & 2 \\
    3 & 4
    \end{bmatrix} &= \begin{bmatrix}
        1 & 2 & 3 & 4
    \end{bmatrix}
\end{align}

While columns-first is more common, rows-first is more natural when we adopt the convention that rows are the first axis of tensor; this is the convention that matricization uses, and matricization is much more important for our work due to its role in defining the Gram matrices.  Note that, for matrices, a rows-first vectorization of $\mathbf{M}$ is equivalent to a columns-first vectorization of $\mathbf{M}^T$, so there is no fundamental difference between the two.  For vectorizing a tensor, we proceed by stacking the rest of the axes in order, such that an element $(i_1, ..., i_K)$ in $\mathcal{T}$ gets mapped to the element $\sum_\ell i_\ell d_{<\ell}$ in $\mathrm{vec}\left[\mathcal{T}\right]$.

We define the Gram matrices as $\mathbf{S}^{\gamma}_{\ell} = \mathrm{mat}_\ell\left[\mathcal{D}^\gamma\right]\mathrm{mat}_\ell\left[\mathcal{D}^\gamma\right]^T$.  Typically we consider only the one-sample case.  If you have multiple samples, indexed by a subscript $i$, then the Gram matrix becomes an average: $\mathbf{S}^{\gamma}_{\ell} = \frac{1}{n}\sum_i^n\mathrm{mat}_\ell\left[\mathcal{D}_i^\gamma\right]\mathrm{mat}_\ell\left[\mathcal{D}_i^\gamma\right]^T$.

An essential concept is that of the `stridewise-blockwise trace', defined as:

\begin{align}
    \mathrm{tr}^{a}_{b}\left[\mathbf{M}\right] &= \left[\mathrm{tr}\left[\mathbf{M}\left(\mathbf{I}_{a} \otimes \mathbf{J}^{ij}\otimes \mathbf{I}_{b}\right)\right]\right]_{ij}
\end{align}

Where $\mathbf{J}^{ij}$ is the matrix of zeros except at $(i, j)$ where it has a 1.  It is a generalization of the blockwise trace considered by \textcite{kalaitzis_bigraphical_2013}, and is related to the $\mathrm{proj}_\mathcal{K}$ operation defined by \textcite{greenewald_tensor_2019}.  Specifically, $\mathrm{proj}_\mathcal{K}\left[\mathbf{M}\right]$ is equivalent to $\bigoplus_\ell \mathrm{tr}^{d_{<\ell}}_{d_{>\ell}}\left[\mathbf{M}\right]$ up to an additive diagonal factor (Lemma 33 from \textcite{greenewald_tensor_2019}).  $\mathrm{proj}_\mathcal{K}\left[\mathbf{M}\right]$ was defined to be the matrix that best approximates $\mathbf{M}$ (in terms of the Frobenius norm) while being Kronecker-sum-decomposable.  This matrix is not unique; the choice by \textcite{greenewald_tensor_2019} to include an additive factor was to enforce $\mathrm{tr}\left[\mathrm{proj}_\mathcal{K}\left[\mathbf{M}\right]\right] = 0$.  We do not wish to enforce this constraint as it would be difficult to preserve in the multi-tensor case.

The parameter $b$ of the stridewise-blockwise trace partitions the $m\times m$ matrix $\mathbf{M}$ into a block matrix with $b\times b$ blocks of size ($\frac{m}{b}\times\frac{m}{b}$).  The parameter $a$ then partitions these blocks into a `strided' matrix with $a\times a$ strides containing $\frac{m}{ab}\times\frac{m}{ab}$ blocks.  We take the trace of each stride, and the final matrix is the matrix of these traces.  As this is conceptually complicated, we provide an example.

\begin{align}
    &\mathrm{tr}^2_2\begin{bmatrix}
        1 & 2 & 3 & 4 & 5 & 6 & 7 & 8 \\
        1 & 2 & 3 & 4 & 5 & 6 & 7 & 8 \\
        1 & 2 & 3 & 4 & 5 & 6 & 7 & 8 \\
        1 & 2 & 3 & 4 & 5 & 6 & 7 & 8 \\
        1 & 2 & 3 & 4 & 5 & 6 & 7 & 8 \\
        1 & 2 & 3 & 4 & 5 & 6 & 7 & 8 \\
        1 & 2 & 3 & 4 & 5 & 6 & 7 & 8 \\
        1 & 2 & 3 & 4 & 5 & 6 & 7 & 8
    \end{bmatrix} \\=& \mathrm{tr}^2\begin{bmatrix}
        \mathrm{tr}\begin{bmatrix} 1 & 2 \\ 1 & 2 \end{bmatrix} &  \mathrm{tr}\begin{bmatrix} 3 & 4 \\ 3 & 4 \end{bmatrix} &  \mathrm{tr}\begin{bmatrix} 5 & 6 \\ 5 & 6 \end{bmatrix} &  \mathrm{tr}\begin{bmatrix} 7 & 8 \\ 7 & 8 \end{bmatrix} \\
        \mathrm{tr}\begin{bmatrix} 1 & 2 \\ 1 & 2 \end{bmatrix} &  \mathrm{tr}\begin{bmatrix} 3 & 4 \\ 3 & 4 \end{bmatrix} &  \mathrm{tr}\begin{bmatrix} 5 & 6 \\ 5 & 6 \end{bmatrix} &  \mathrm{tr}\begin{bmatrix} 7 & 8 \\ 7 & 8 \end{bmatrix} \\
        \mathrm{tr}\begin{bmatrix} 1 & 2 \\ 1 & 2 \end{bmatrix} &  \mathrm{tr}\begin{bmatrix} 3 & 4 \\ 3 & 4 \end{bmatrix} &  \mathrm{tr}\begin{bmatrix} 5 & 6 \\ 5 & 6 \end{bmatrix} &  \mathrm{tr}\begin{bmatrix} 7 & 8 \\ 7 & 8 \end{bmatrix} \\
        \mathrm{tr}\begin{bmatrix} 1 & 2 \\ 1 & 2 \end{bmatrix} &  \mathrm{tr}\begin{bmatrix} 3 & 4 \\ 3 & 4 \end{bmatrix} &  \mathrm{tr}\begin{bmatrix} 5 & 6 \\ 5 & 6 \end{bmatrix} &  \mathrm{tr}\begin{bmatrix} 7 & 8 \\ 7 & 8 \end{bmatrix} \\
    \end{bmatrix}
    \\=& \mathrm{tr}^2\begin{bmatrix}
        3 & 7 & 11 & 15 \\
        3 & 7 & 11 & 15 \\
        3 & 7 & 11 & 15 \\
        3 & 7 & 11 & 15 \\
    \end{bmatrix}
    \\=& \begin{bmatrix}
        \mathrm{tr}\begin{bmatrix}3 & 11 \\ 3 & 11\end{bmatrix} & \mathrm{tr}\begin{bmatrix}7 & 15 \\ 7 & 15\end{bmatrix} \\
        \mathrm{tr}\begin{bmatrix}3 & 11 \\ 3 & 11\end{bmatrix} & \mathrm{tr}\begin{bmatrix}7 & 15 \\ 7 & 15\end{bmatrix}
    \end{bmatrix} \label{eq:strblck}
    \\=& \begin{bmatrix}
        14 & 22 \\ 14 & 22
    \end{bmatrix}
\end{align}

Notice the construction of the `strides' in Line \ref{eq:strblck} - the parameter of $2$ told us to take the trace of the most `spread out' evenly spaced $2\times 2$ strided submatrix.

\section{PROOFS}
\label{sec:proofs}

We will assume that no dataset contains repeated axes (i.e. no single tensor has two axes represented by the same graph), as this greatly affects the derived gradients.  Shared axes - two tensors having one or more axes in common - are allowed.  The case of shared axes is, after all, the whole point of developing this extension to prior work.

\subsection{Permutations}
\label{sec:permutations}

Note that both $\mathrm{vec}\left[\mathrm{mat}_1 \left[\mathcal{D}^\gamma\right]\right]$ and $\mathrm{vec}\left[\mathrm{mat}_\ell \left[\mathcal{D}^\gamma\right]\right]$ are row vectors containing the same elements, just in a different order.  This means that there is a permutation matrix $\mathbf{P}_{\ell\rightarrow 1}$ such that $\mathrm{vec}\left[\mathrm{mat}_1 \left[\mathcal{D}^\gamma\right]\right]^T\mathbf{P}_{\ell\rightarrow 1} = \mathrm{vec}\left[\mathrm{mat}_\ell \left[\mathcal{D}^\gamma\right]\right]^T$.

\begin{lemma}[Rearrangement lemma]
$\mathbf{P}_{\ell\rightarrow 1}\left(\mathbf{I}_{d_{<\ell}}\otimes\mathbf{\Psi}_\ell\otimes\mathbf{I}_{d_{>\ell}}\right)\mathbf{P}_{\ell\rightarrow 1}^T = \mathbf{\Psi}_\ell\otimes\mathbf{I}_{d_{\backslash\ell}}$
\label{lem:rearrangement}
\end{lemma}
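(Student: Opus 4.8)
The plan is to understand the permutation $\mathbf{P}_{\ell\rightarrow 1}$ explicitly as a reshuffling of tensor indices, and then track what conjugation by it does to a Kronecker product of the form $\mathbf{I}_{d_{<\ell}}\otimes\mathbf{\Psi}_\ell\otimes\mathbf{I}_{d_{>\ell}}$. Recall that $\mathrm{mat}_1[\mathcal{D}^\gamma]$ lays out the tensor with axis $1$ as its rows and axes $2,\dots,K$ stacked (in order) along its columns, so $\mathrm{vec}[\mathrm{mat}_1[\mathcal{D}^\gamma]]$ sends the multi-index $(i_1,\dots,i_K)$ to the flat position $\sum_k i_k d_{<k}$. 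Similarly $\mathrm{vec}[\mathrm{mat}_\ell[\mathcal{D}^\gamma]]$ puts axis $\ell$ first and then axes $1,\dots,\ell-1,\ell+1,\dots,K$ in order. So $\mathbf{P}_{\ell\rightarrow 1}$ is precisely the permutation of flat indices induced by moving axis $\ell$ to the front, which (up to the inner grouping of all axes $\ne\ell$) is the perfect-shuffle permutation exchanging a ``$d_{<\ell}$-block'' structure with a ``$d_\ell$-block'' structure on the outside.

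From there I would argue in two layers. First, the $\mathbf{I}_{d_{>\ell}}$ factor is inert: both matricizations keep axes $\ell+1,\dots,K$ in the same relative order as the innermost (fastest-varying) block, so $\mathbf{P}_{\ell\rightarrow 1}$ can be written as $\mathbf{Q}\otimes\mathbf{I}_{d_{>\ell}}$ where $\mathbf{Q}$ is the corresponding permutation on the $d_{<\ell}d_\ell$ ``outer'' indices. This reduces the claim to showing $\mathbf{Q}(\mathbf{I}_{d_{<\ell}}\otimes\mathbf{\Psi}_\ell)\mathbf{Q}^T = \mathbf{\Psi}_\ell\otimes\mathbf{I}_{d_{<\ell}}$, i.e. that $\mathbf{Q}$ is exactly the commutation/perfect-shuffle matrix $\mathbf{K}_{d_{<\ell},d_\ell}$, which satisfies the standard identity $\mathbf{K}_{p,q}(\mathbf{A}\otimes\mathbf{B})\mathbf{K}_{q,p} = \mathbf{B}\otimes\mathbf{A}$ for compatibly-sized $\mathbf{A},\mathbf{B}$. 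Then peeling the $\mathbf{I}_{d_{>\ell}}$ back out gives $\mathbf{\Psi}_\ell\otimes\mathbf{I}_{d_{<\ell}}\otimes\mathbf{I}_{d_{>\ell}} = \mathbf{\Psi}_\ell\otimes\mathbf{I}_{d_{\backslash\ell}}$, since $d_{\backslash\ell} = d_{<\ell}d_{>\ell}$.

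Alternatively — and this may be cleaner to write — I would verify the identity by its action on basis vectors directly, avoiding any appeal to named shuffle matrices. Writing a standard basis vector of the big space as $e_{i_1}\otimes\cdots\otimes e_{i_K}$, the matrix $\mathbf{I}_{d_{<\ell}}\otimes\mathbf{\Psi}_\ell\otimes\mathbf{I}_{d_{>\ell}}$ acts only on the $i_\ell$ slot via $\mathbf{\Psi}_\ell$, leaving the others alone; $\mathbf{P}_{\ell\rightarrow 1}$ by definition moves the $\ell$-slot to the front; and $\mathbf{\Psi}_\ell\otimes\mathbf{I}_{d_{\backslash\ell}}$ acts on the front slot via $\mathbf{\Psi}_\ell$ and leaves the rest alone. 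Conjugating the first operator by $\mathbf{P}_{\ell\rightarrow 1}$ thus relabels ``the $\ell$-slot'' as ``the front slot,'' and matching both sides on each basis vector closes the argument. The main obstacle is bookkeeping: being careful that $\mathbf{P}_{\ell\rightarrow 1}$ genuinely factors through the tensor-index permutation described in the Permutations subsection (in particular that the rows-first/axis-ordering conventions make axes $\ne\ell$ retain their relative order), so that the $\mathbf{I}_{d_{<\ell}}$ and $\mathbf{I}_{d_{>\ell}}$ pieces really do recombine into $\mathbf{I}_{d_{\backslash\ell}}$ rather than getting interleaved; once the index map is pinned down precisely, the algebra is routine.
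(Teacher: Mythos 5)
Your proposal is correct, and your second route (tracking the action on basis vectors / multi-indices) is essentially the paper's own proof: the paper writes out the index maps for $\mathrm{vec}$, $\mathrm{mat}_\ell$, and $\bigotimes$, and checks that reordering the Kronecker factors via $\sigma_{\ell\rightarrow 1}$ produces exactly the flat index $i_\ell + \sum_{\ell'<\ell} i_{\ell'} d_{<\ell'} d_\ell + \sum_{\ell'>\ell} i_{\ell'} d_{<\ell'}$ that $\mathrm{vec}\left[\mathrm{mat}_\ell\left[\cdot\right]\right]$ produces, so the representing matrix is $\mathbf{P}_{\ell\rightarrow 1}$. Your first route — factoring $\mathbf{P}_{\ell\rightarrow 1} = \mathbf{Q}\otimes\mathbf{I}_{d_{>\ell}}$ with $\mathbf{Q}$ the commutation matrix $\mathbf{K}_{d_{<\ell},d_\ell}$ and invoking $\mathbf{K}_{p,q}(\mathbf{A}\otimes\mathbf{B})\mathbf{K}_{q,p}=\mathbf{B}\otimes\mathbf{A}$ — is a legitimate variant that trades the explicit index arithmetic for a citation to a standard identity; the factorization does hold because, for $\ell'>\ell$, the stride $d_{<\ell'}$ is unchanged by the permutation, so the axes after $\ell$ are genuinely inert. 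One bookkeeping slip to fix if you write this up: under the paper's rows-first convention (an element maps to $\sum_k i_k d_{<k}$, so the \emph{first} Kronecker factor is the fastest-varying), the axes $\ell+1,\dots,K$ form the \emph{slowest}-varying block, not the innermost one as you describe; this does not affect the conclusion, since $\mathbf{I}_{d_{<\ell}}\otimes\mathbf{I}_{d_{>\ell}}=\mathbf{I}_{d_{\backslash\ell}}$ either way, but it determines whether the inert identity sits on the left or the right of $\mathbf{Q}$.
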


\begin{proof}
    While $\mathrm{vec}$, $\mathrm{mat}_\ell$ and $\bigotimes$ are defined as operations on matrices, for the purposes of permutations we can consider them as operations on indices.  We can express them as follows:

    \begin{align}
        \mathrm{vec}: (i_1, ..., i_K) &\rightarrow \left(\sum_\ell i_\ell d_{<\ell}\right) \\
        \mathrm{mat}_\ell: (i_1, ..., i_K) &\rightarrow \left(i_\ell, \sum_{\ell'<\ell} i_{\ell'}d_{<\ell'} + \sum_{\ell'>\ell} i_{\ell'}\frac{d_{<\ell'}}{d_\ell}\right) \\
        \bigotimes: \left( (i_1^1, i_1^2), ..., (i_K^1, i_K^2) \right) &\rightarrow \left(\sum_\ell i^1_\ell d_{<\ell}, \sum_\ell i^2_\ell d_{<\ell}\right)
    \end{align}

We'll consider just the rows of $\bigotimes$, $\bigotimes_{rows}$ - although the same argument applies with columns:

\begin{align}
    \bigotimes_{rows}: \left(i_1^1, ..., i_K^1 \right) &\rightarrow \left(\sum_\ell i^1_\ell d_{<\ell}\right)
\end{align}

Finally, we'll introduce the permutation operation $\sigma_{\ell\rightarrow 1}$ that will change the order of our Kronecker product:

\begin{align}
    \sigma_{\ell\rightarrow 1}: \left( (i_1^1, i_1^2), ..., (i_K^1, i_K^2) \right) &\rightarrow \left(( (i_\ell^1, i_\ell^2), (i_1^1, i_1^2), ..., (i_{\ell-1}^1, i_{\ell-1}^2), (i_{\ell+1}^1, i_{\ell+1}^2), ..., (i_K^1, i_K^2) \right)
\end{align}

And again without loss of generality we restrict ourself to $\sigma_{\ell\rightarrow 1}^{rows}$:

\begin{align}
    \sigma_{\ell\rightarrow 1}^{rows}: \left( i_1^1, ..., i_K^1 \right) &\rightarrow \left( i_\ell^1, i_1^1, ..., i_{\ell-1}^1, i_{\ell+1}^1, ..., i_K^1 \right)
\end{align}

After a Kronecker product our indices are in the form $\sum_\ell i_\ell d_{<\ell}$, and if we were to reorder it with $\sigma_{\ell\rightarrow 1}$ they would be in the form $i_\ell + \sum_{\ell'<\ell} i_{\ell'} d_{<\ell'} d_\ell + \sum_{\ell'>\ell} i_{\ell'} d_{<\ell'}$.  Likewise, if we had matricized it we would have $\left(i_\ell, \sum_{\ell'<\ell} i_{\ell'}d_{<\ell'} + \sum_{\ell'>\ell} i_{\ell'}\frac{d_{<\ell'}}{d_\ell}\right)$, which is vectorized to $i_\ell + \sum_{\ell'<\ell} i_{\ell'} d_{<\ell'} d_\ell + \sum_{\ell'>\ell} i_{\ell'} d_{<\ell'}$.  These reorderings are the same, and hence the matrix that represents it is $\mathbf{P}_{\ell\rightarrow 1}$.

\end{proof}
\newpage
\subsection{Derivation of the probability density function}
\label{sec:deriv-prob-dense}

Recall that the Kronecker-sum-structured normal distribution for a single tensor is defined as follows:

\begin{align}
    \vecop{\mathcal{D}^\gamma} \sim \mathcal{N}\left(\mathbf{0}, \left(\bigoplus_{\ell\in\gamma}\mathbf{\Psi}_\ell\right)^{-1}\right) &\iff \mathcal{D}^\gamma \sim \mathcal{N}_{KS}\left(\left\{\Psi_\ell\right\}_{\ell\in\gamma}\right)
\end{align}

The log-likelihood for this distribution is given in \textcite{kalaitzis_bigraphical_2013} for the matrix case and \textcite{greenewald_tensor_2019} for the general tensor case.  However, neither of these papers provide a derivation.  As the full derivation will motivate the construction of lemmas useful for the proofs of Theorems \ref{thm:evec}, we will give it here. First, observe that the density function is that of a normal distribution.

\begin{align}
    p\left(\mathcal{D}^\gamma\right) &= \frac{\sqrt{\left|\bigoplus_{\ell\in\gamma} \mathbf{\Psi}_\ell\right|}}{\left(2\pi\right)^{\frac{d_\forall^\gamma}{2}}}e^{\frac{-1}{2} \mathrm{vec}\left[\mathcal{D}^\gamma\right]^T \left(\bigoplus_\ell \mathbf{\Psi}_\ell\right)\mathrm{vec}\left[\mathcal{D}^\gamma\right]}
\end{align}

\begin{lemma}[$\oplus$-$\mathrm{vec}$ lemma]
$\mathrm{vec}\left[\mathcal{D}^\gamma\right]^T \left(\bigoplus_\ell \mathbf{\Psi}_\ell\right)\mathrm{vec}\left[\mathcal{D}^\gamma\right] =  \sum_\ell\mathrm{tr}\left[\mathbf{S}^\gamma_\ell\mathbf{\Psi}_\ell\right]$
\label{lem:ks-vec}
\end{lemma}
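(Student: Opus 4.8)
The plan is to expand the Kronecker sum, reduce each summand to a Kronecker \emph{product} by a permutation, and then recognise the resulting quadratic form as a trace. First I would write $\bigoplus_\ell \mathbf{\Psi}_\ell = \sum_\ell \mathbf{I}_{d_{<\ell}}\otimes\mathbf{\Psi}_\ell\otimes\mathbf{I}_{d_{>\ell}}$, so that by linearity
\begin{align}
    \mathrm{vec}\left[\mathcal{D}^\gamma\right]^T\left(\bigoplus_\ell\mathbf{\Psi}_\ell\right)\mathrm{vec}\left[\mathcal{D}^\gamma\right] = \sum_\ell \mathrm{vec}\left[\mathcal{D}^\gamma\right]^T\left(\mathbf{I}_{d_{<\ell}}\otimes\mathbf{\Psi}_\ell\otimes\mathbf{I}_{d_{>\ell}}\right)\mathrm{vec}\left[\mathcal{D}^\gamma\right].
\end{align}
It therefore suffices to show that the $\ell$th summand equals $\mathrm{tr}\left[\mathbf{S}^\gamma_\ell\mathbf{\Psi}_\ell\right]$.

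Next I would fix $\ell$ and use the observation preceding Lemma~\ref{lem:rearrangement}: $\mathrm{vec}\left[\mathrm{mat}_1\left[\mathcal{D}^\gamma\right]\right] = \mathrm{vec}\left[\mathcal{D}^\gamma\right]$ (mode-1 matricization followed by the rows-first vectorization is exactly the tensor vectorization), and there is a permutation matrix $\mathbf{P}_{\ell\rightarrow 1}$ with $\mathrm{vec}\left[\mathrm{mat}_1\left[\mathcal{D}^\gamma\right]\right]^T\mathbf{P}_{\ell\rightarrow 1} = \mathrm{vec}\left[\mathrm{mat}_\ell\left[\mathcal{D}^\gamma\right]\right]^T$. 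Inserting $\mathbf{P}_{\ell\rightarrow 1}\mathbf{P}_{\ell\rightarrow 1}^T = \mathbf{I}$ on both sides of $\mathbf{I}_{d_{<\ell}}\otimes\mathbf{\Psi}_\ell\otimes\mathbf{I}_{d_{>\ell}}$ and applying the rearrangement lemma (Lemma~\ref{lem:rearrangement}) turns the middle factor into $\mathbf{\Psi}_\ell\otimes\mathbf{I}_{d_{\backslash\ell}}$, so the $\ell$th summand becomes $\mathrm{vec}\left[\mathrm{mat}_\ell\left[\mathcal{D}^\gamma\right]\right]^T\left(\mathbf{\Psi}_\ell\otimes\mathbf{I}_{d_{\backslash\ell}}\right)\mathrm{vec}\left[\mathrm{mat}_\ell\left[\mathcal{D}^\gamma\right]\right]$.

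Finally I would convert this to a trace. Because the rows-first vectorization of a matrix $\mathbf{M}$ equals the columns-first vectorization of $\mathbf{M}^T$, and $\left(\mathbf{A}\otimes\mathbf{B}\right)\mathrm{vec}_{\mathrm{col}}\left[\mathbf{X}\right] = \mathrm{vec}_{\mathrm{col}}\left[\mathbf{B}\mathbf{X}\mathbf{A}^T\right]$, setting $\mathbf{X} = \mathrm{mat}_\ell\left[\mathcal{D}^\gamma\right]^T$, $\mathbf{A}=\mathbf{\Psi}_\ell$, $\mathbf{B}=\mathbf{I}_{d_{\backslash\ell}}$ gives
\begin{align}
    \mathrm{vec}\left[\mathrm{mat}_\ell\left[\mathcal{D}^\gamma\right]\right]^T\left(\mathbf{\Psi}_\ell\otimes\mathbf{I}_{d_{\backslash\ell}}\right)\mathrm{vec}\left[\mathrm{mat}_\ell\left[\mathcal{D}^\gamma\right]\right] = \mathrm{tr}\left[\mathrm{mat}_\ell\left[\mathcal{D}^\gamma\right]\mathrm{mat}_\ell\left[\mathcal{D}^\gamma\right]^T\mathbf{\Psi}_\ell^T\right],
\end{align}
and since $\mathbf{S}^\gamma_\ell = \mathrm{mat}_\ell\left[\mathcal{D}^\gamma\right]\mathrm{mat}_\ell\left[\mathcal{D}^\gamma\right]^T$ and each $\mathbf{\Psi}_\ell$ is symmetric, this is $\mathrm{tr}\left[\mathbf{S}^\gamma_\ell\mathbf{\Psi}_\ell\right]$. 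Summing over $\ell$ finishes the proof. The main thing to be careful about — the only real obstacle — is bookkeeping of the vectorization convention (rows-first here, columns-first in the textbook Kronecker identity) and the shapes of $\mathrm{mat}_\ell\left[\mathcal{D}^\gamma\right]$ versus its transpose; everything else is a direct application of the two preceding lemmas.
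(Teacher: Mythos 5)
Your proposal is correct and follows essentially the same route as the paper's proof: expand the Kronecker sum by linearity, insert $\mathbf{P}_{\ell\rightarrow 1}\mathbf{P}_{\ell\rightarrow 1}^T$ and apply the rearrangement lemma to reduce each summand to a quadratic form in $\mathbf{\Psi}_\ell\otimes\mathbf{I}_{d_{\backslash\ell}}$, then convert to $\mathrm{tr}\left[\mathbf{S}^\gamma_\ell\mathbf{\Psi}_\ell\right]$ via the Kronecker-product--vectorization and trace identities. If anything, your explicit tracking of the rows-first versus columns-first conventions is slightly more careful than the paper's.
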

\begin{proof}

This proof relies on the following two properties of $\mathrm{vec}$: $\left(\mathbf{A}\otimes \mathbf{B}\right)\mathrm{vec}\left[\mathbf{C}\right] = \mathrm{vec}\left[\mathbf{B}\mathbf{C}^T\mathbf{A}^T\right]$ and $\mathrm{tr}\left[\mathbf{A}^T\mathbf{B}\right] = \mathrm{vec}\left[\mathbf{A}\right]^T\mathrm{vec}\left[\mathbf{B}\right]$.  The $\mathbf{C}$ term picks up a transpose due to our use of the rows-first vectorization; when using columns-first notation the right hand side becomes $\mathrm{vec}\left[\mathbf{B}\mathbf{C}\mathbf{A}^T\right]$.

\begin{align}
\mathrm{vec}\left[\mathcal{D}^\gamma\right] \left(\bigoplus_\ell \mathbf{\Psi}_\ell\right)\mathrm{vec}\left[\mathcal{D}^\gamma\right] &=  \sum_\ell \mathrm{vec}\left[\mathcal{D}^\gamma\right]^T \left(\mathbf{I}_{d_{<\ell}}\otimes\mathbf{\Psi}_\ell\otimes \mathbf{I}_{d_{>\ell}}\right)\mathrm{vec}\left[\mathcal{D}^\gamma\right] \tag{Definition of $\bigoplus$} \\
&=  \sum_\ell \mathrm{vec}\left[\mathrm{mat}_1\left[\mathcal{D}^\gamma\right]\right]^T \left(\mathbf{I}_{d_{<\ell}}\otimes\mathbf{\Psi}_\ell\otimes \mathbf{I}_{d_{>\ell}}\right)\mathrm{vec}\left[\mathrm{mat}_1\left[\mathcal{D}^\gamma\right]\right] \\
&=  \sum_\ell \mathrm{vec}\left[\mathrm{mat}_\ell\left[\mathcal{D}^\gamma\right]\right]^T \mathbf{P}_{\ell\rightarrow 1}^T \left(\mathbf{I}_{d_{<\ell}}\otimes\mathbf{\Psi}_\ell\otimes \mathbf{I}_{d_{>\ell}}\right)\mathbf{P}_{\ell\rightarrow 1}\mathrm{vec}\left[\mathrm{mat}_\ell\left[\mathcal{D}^\gamma\right]\right] \\
&= \sum_\ell \mathrm{vec}\left[\mathrm{mat}_\ell\left[\mathcal{D}^\gamma\right]\right]^T  \left(\mathbf{\Psi}_\ell\otimes \mathbf{I}_{d_{\backslash\ell}}\right)\mathrm{vec}\left[\mathrm{mat}_\ell\left[\mathcal{D}^\gamma\right]\right] \tag{Rearrangement Lemma} \\
&= \sum_\ell \mathrm{vec}\left[\mathrm{mat}_\ell\left[\mathcal{D}^\gamma\right]\right]^T\mathrm{vec}\left[\mathrm{mat}_\ell\left[\mathcal{D}^\gamma\right]\mathbf{\Psi}_\ell^T\right] \\
&= \sum_\ell \mathrm{tr}\left[\mathbf{S}^\gamma_\ell\mathbf{\Psi}_\ell\right]
\end{align}

\end{proof}

With this lemma, the probability density function in the single-tensor case can be expressed in the form:

\begin{align}
    p\left(\mathcal{D}^\gamma\right) &= \frac{\sqrt{\left|\bigoplus_{\ell\in\gamma} \mathbf{\Psi}_\ell\right|}}{\left(2\pi\right)^{\frac{d_\forall^\gamma}{2}}}e^{\frac{-1}{2} \sum_\ell \mathrm{tr}\left[\mathbf{S}^\gamma_\ell\mathbf{\Psi}_\ell\right]}
\end{align}

Leading to the probability density function for the multi-tensor case as:

\begin{align}
    p\left(\left\{\mathcal{D}^\gamma\right\}\right) &= \prod_{\gamma} \frac{\sqrt{\left|\bigoplus_{\ell\in\gamma} \mathbf{\Psi}_\ell\right|}}{\left(2\pi\right)^{\frac{d_\forall^\gamma}{2}}}e^{\frac{-1}{2} \sum_\ell \mathrm{tr}\left[\mathbf{S}^\gamma_\ell\mathbf{\Psi}_\ell\right]} \\
    &=  \frac{\prod_{\gamma}\sqrt{\left|\bigoplus_{\ell\in\gamma} \mathbf{\Psi}_\ell\right|}}{\left(2\pi\right)^{\frac{d_\forall}{2}}}e^{\frac{-1}{2} \sum_\gamma\sum_\ell \mathrm{tr}\left[\mathbf{S}^\gamma_\ell\mathbf{\Psi}_\ell\right]} \\
    &= \frac{\prod_{\gamma}\sqrt{\left|\bigoplus_{\ell\in\gamma} \mathbf{\Psi}_\ell\right|}}{\left(2\pi\right)^{\frac{d_\forall}{2}}}e^{\frac{-1}{2} \sum_\ell \mathrm{tr}\left[\mathbf{S}_\ell\mathbf{\Psi}_\ell\right]}
\end{align}

The negative log-likelihood is thus:

\begin{align}
    \mathrm{NLL}\left(\left\{\mathcal{D}^\gamma\right\}\right) &= \frac{d_\forall}{2}\log \left(2\pi\right) + \frac{1}{2}\sum_\ell \mathrm{tr}\left[\mathbf{S}_\ell\mathbf{\Psi}_\ell\right] - \frac{1}{2}\sum_\gamma \log \left|\bigoplus_{\ell\in\gamma} \mathbf{\Psi}_\ell \right| 
\end{align}

\subsection{Gradient}
\label{sec:gradient}

The derivation of the gradient of the negative log-likelihood is essentially the same as the derivation given by \textcite{kalaitzis_bigraphical_2013} for the original Bi-Graphical Lasso.  Our derivation is complicated by the fact that we are considering general tensors rather than matrices.  We'll let $\mathrm{sym}$ be the symmetricizing operator that must be applied as we are taking the derivative with respect to a symmetric matrix: $\mathrm{sym}\left[\mathbf{M}\right] = \mathbf{K} \circ \mathbf{M}$, where $\mathbf{K}$ is a matrix with 1s on the diagonal and 2s everywhere else.  We'll also define $\mathbf{J}^{ij}$ to be the matrix of zeros except for a 1 at position $(i, j)$.

\begin{align}
    \frac{d}{d\mathbf{\Psi}_\ell}\mathrm{NLL}\left(\left\{\mathcal{D}^\gamma\right\}\right) &= \frac{1}{2}\mathrm{sym}\left[\mathbf{S}_\ell\right] - \frac{1}{2}\sum_\gamma \frac{d}{d\mathbf{\Psi}_\ell}\log\left|\bigoplus_{\ell'\in\gamma}\mathbf{\Psi}_{\ell'}\right| \\
    &= \frac{1}{2}\mathrm{sym}\left[\mathbf{S}_\ell\right] - \frac{1}{2}\sum_\gamma \mathrm{tr}\left[\left(\bigoplus_{\ell'\in\gamma}\mathbf{\Psi}_{\ell'}\right)^{-1} \frac{d}{d\psi_\ell^{ij}} \bigoplus_{\ell'\in\gamma}\mathbf{\Psi}_{\ell'}\right]_{ij} \\
    &= \frac{1}{2}\mathrm{sym}\left[\mathbf{S}_\ell\right] - \frac{1}{2}\sum_\gamma \mathrm{tr}\left[\left(\bigoplus_{\ell'\in\gamma}\mathbf{\Psi}_{\ell'}\right)^{-1}  \left(\mathbf{I}_{d_{<\ell}} \otimes \frac{d}{d\psi_\ell^{ij}}\mathbf{\Psi}_{\ell} \otimes \mathbf{I}_{d_{>\ell}}\right)\right]_{ij} \\
    &= \frac{1}{2}\mathrm{sym}\left[\mathbf{S}_\ell\right] - \frac{1}{2}\sum_{\gamma|\ell\in\gamma} \mathrm{tr}\left[\left(\bigoplus_{\ell'\in\gamma}\mathbf{\Psi}_{\ell'}\right)^{-1}  \left(\mathbf{I}_{d_{<\ell}} \otimes \left(\mathbf{J}^{ij} + \mathbf{J}^{ji} - \delta_{ij}\mathbf{J}^{ij}\right)\otimes \mathbf{I}_{d_{>\ell}}\right)\right]_{ij} \\
    &= \frac{1}{2}\mathrm{sym}\left[\mathbf{S}_\ell\right] - \frac{1}{2}\sum_{\gamma|\ell\in\gamma} \left[(2-\delta_{ij})\mathrm{tr}\left[\left(\bigoplus_{\ell'\in\gamma}\mathbf{\Psi}_{\ell'}\right)^{-1}  \left(\mathbf{I}_{d_{<\ell}} \otimes \mathbf{J}^{ij}\otimes \mathbf{I}_{d_{>\ell}}\right)\right]\right]_{ij} \\
    &= \frac{1}{2}\mathrm{sym}\left[\mathbf{S}_\ell\right] - \frac{1}{2}\sum_{\gamma|\ell\in\gamma} \left(2\mathbf{J} - \mathbf{I}\right)\circ\mathrm{tr}\left[\left(\bigoplus_{\ell'\in\gamma}\mathbf{\Psi}_{\ell'}\right)^{-1}  \left(\mathbf{I}_{d_{<\ell}} \otimes \mathbf{J}^{ij}\otimes \mathbf{I}_{d_{>\ell}}\right)\right]_{ij} \\
    &= \frac{1}{2}\mathrm{sym}\left[\mathbf{S}_\ell\right] - \frac{1}{2}\sum_{\gamma|\ell\in\gamma} \mathrm{sym}\left[\mathrm{tr}^{d_{<\ell}^\gamma}_{d_{>\ell}^\gamma}\left[\left(\bigoplus_{\ell'\in\gamma}\mathbf{\Psi}_{\ell'}\right)^{-1}  \right]\right]
\end{align}

The MLE occurs when this gradient is zero, i.e. when the following equation is satisfied:

\begin{align}
    \mathbf{S}_\ell &= \sum_{\gamma|\ell\in\gamma} \mathrm{tr}^{d_{<\ell}^\gamma}_{d_{>\ell}^\gamma}\left[\left(\bigoplus_{\ell'\in\gamma}\mathbf{\Psi}_{\ell'}\right)^{-1}\right] \label{eq:grad}
\end{align}

In other words, our effective Gram matrices are the best Kronecker-sum decomposition of the covariance matrix of the maximum likelihood estimate.  Unfortunately, the Kronecker-sum decomposition does not interact well with matrix inverses, so this does not directly yield an analytic solution.  It does, however, yield a solution for the eigenvectors.

\subsection{Maximum Likelihood Estimate for the Eigenvectors}
\label{sec:mle-eigenvectors}

We first produce two lemmas to aid in the derivation.

\begin{lemma}[Cyclic property of the stridewise-blockwise trace]
    For any matrices $\mathbf{M}, \mathbf{A}_{a\times a}, \mathbf{B}_{b\times b}$, we have that $\mathbf{tr}^a_b\left[\left(\mathbf{A} \otimes \mathbf{I} \otimes \mathbf{B}\right) \mathbf{M} \right] = \mathbf{tr}^a_b\left[\mathbf{M} \left(\mathbf{A} \otimes \mathbf{I} \otimes \mathbf{B}\right)\right]$
    \label{lem:cyc}
\end{lemma}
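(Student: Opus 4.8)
The plan is to reduce the claimed identity to the ordinary cyclic property of the trace, computed one matrix entry at a time. By the definition of the stridewise-blockwise trace, the $(i,j)$ entry of $\mathrm{tr}^a_b\left[(\mathbf{A}\otimes\mathbf{I}\otimes\mathbf{B})\mathbf{M}\right]$ is $\mathrm{tr}\left[(\mathbf{A}\otimes\mathbf{I}\otimes\mathbf{B})\mathbf{M}(\mathbf{I}_a\otimes\mathbf{J}^{ij}\otimes\mathbf{I}_b)\right]$, where, writing $m$ for the size of $\mathbf{M}$, the matrix $\mathbf{J}^{ij}$ and the middle $\mathbf{I}$ in $\mathbf{A}\otimes\mathbf{I}\otimes\mathbf{B}$ are both $\frac{m}{ab}\times\frac{m}{ab}$ — these sizes are exactly the ones forced by the block/stride structure in the definition. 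Applying ordinary cyclicity of the trace, this entry equals $\mathrm{tr}\left[\mathbf{M}(\mathbf{I}_a\otimes\mathbf{J}^{ij}\otimes\mathbf{I}_b)(\mathbf{A}\otimes\mathbf{I}\otimes\mathbf{B})\right]$.

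Next I would collapse the product of the two Kronecker-structured factors using the mixed-product property $(\mathbf{X}_1\otimes\mathbf{X}_2\otimes\mathbf{X}_3)(\mathbf{Y}_1\otimes\mathbf{Y}_2\otimes\mathbf{Y}_3)=(\mathbf{X}_1\mathbf{Y}_1)\otimes(\mathbf{X}_2\mathbf{Y}_2)\otimes(\mathbf{X}_3\mathbf{Y}_3)$, which is applicable precisely because the three pairs of factors are conformable ($a\times a$ against $a\times a$, $\frac{m}{ab}\times\frac{m}{ab}$ against $\frac{m}{ab}\times\frac{m}{ab}$, and $b\times b$ against $b\times b$). This yields $(\mathbf{I}_a\otimes\mathbf{J}^{ij}\otimes\mathbf{I}_b)(\mathbf{A}\otimes\mathbf{I}\otimes\mathbf{B}) = \mathbf{A}\otimes\mathbf{J}^{ij}\otimes\mathbf{B}$, and the identical computation in the opposite order gives $(\mathbf{A}\otimes\mathbf{I}\otimes\mathbf{B})(\mathbf{I}_a\otimes\mathbf{J}^{ij}\otimes\mathbf{I}_b) = \mathbf{A}\otimes\mathbf{J}^{ij}\otimes\mathbf{B}$ as well.

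Combining these observations, the $(i,j)$ entry of $\mathrm{tr}^a_b\left[(\mathbf{A}\otimes\mathbf{I}\otimes\mathbf{B})\mathbf{M}\right]$ is $\mathrm{tr}\left[\mathbf{M}(\mathbf{A}\otimes\mathbf{J}^{ij}\otimes\mathbf{B})\right]$, while the $(i,j)$ entry of $\mathrm{tr}^a_b\left[\mathbf{M}(\mathbf{A}\otimes\mathbf{I}\otimes\mathbf{B})\right]$ is $\mathrm{tr}\left[\mathbf{M}(\mathbf{A}\otimes\mathbf{I}\otimes\mathbf{B})(\mathbf{I}_a\otimes\mathbf{J}^{ij}\otimes\mathbf{I}_b)\right]=\mathrm{tr}\left[\mathbf{M}(\mathbf{A}\otimes\mathbf{J}^{ij}\otimes\mathbf{B})\right]$; since these agree for every index pair $(i,j)$, the two matrices are equal. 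I do not expect a genuine obstacle here: the only point requiring care is tracking the sizes of the three identity blocks so that the mixed-product property is legitimately invoked, after which the result is just elementary cyclicity of $\mathrm{tr}$.
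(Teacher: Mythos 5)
Your proposal is correct and follows the same route as the paper, which dispatches this lemma in one line as ``follows directly from the cyclic property of the (normal) trace operator and the definition of the stridewise-blockwise trace.'' You have simply filled in the detail the paper leaves implicit---namely that after applying cyclicity entrywise, the factors $\left(\mathbf{I}_a \otimes \mathbf{J}^{ij} \otimes \mathbf{I}_b\right)$ and $\left(\mathbf{A} \otimes \mathbf{I} \otimes \mathbf{B}\right)$ commute by the mixed-product property, both products collapsing to $\mathbf{A} \otimes \mathbf{J}^{ij} \otimes \mathbf{B}$---which is exactly the right justification.
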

\begin{proof}
    This follows directly from the cyclic property of the (normal) trace operator and the definition of the stridewise-blockwise trace.
\end{proof}

\begin{lemma}[Conjugacy extraction of the stridewise-blockwise trace]
    For any matrices $\mathbf{M}$ and $\mathbf{V}$, we have that $\mathrm{tr}^a_b\left[\left(\mathbf{I}_a \otimes \mathbf{V} \otimes \mathbf{I}_b\right)\mathbf{M}\left(\mathbf{I}_a \otimes \mathbf{V} \otimes \mathbf{I}_b \right)^T\right] = \mathbf{V}\mathrm{tr}^a_b\left[\mathbf{M}\right]\mathbf{V}^T$.
    \label{lem:conj}
\end{lemma}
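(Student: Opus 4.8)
The plan is to verify the identity entrywise. Fix indices $i,j$ and consider the $(i,j)$ entry of the left-hand side, which by the definition of $\mathrm{tr}^a_b$ equals $\mathrm{tr}\big[(\mathbf{I}_a\otimes\mathbf{V}\otimes\mathbf{I}_b)\mathbf{M}(\mathbf{I}_a\otimes\mathbf{V}\otimes\mathbf{I}_b)^T(\mathbf{I}_a\otimes\mathbf{J}^{ij}\otimes\mathbf{I}_b)\big]$. Using $(\mathbf{I}_a\otimes\mathbf{V}\otimes\mathbf{I}_b)^T=\mathbf{I}_a\otimes\mathbf{V}^T\otimes\mathbf{I}_b$ together with the cyclic property of the ordinary trace, I would move the leftmost factor $\mathbf{I}_a\otimes\mathbf{V}\otimes\mathbf{I}_b$ to the far right, so that the three structured factors $\mathbf{I}_a\otimes\mathbf{V}^T\otimes\mathbf{I}_b$, $\mathbf{I}_a\otimes\mathbf{J}^{ij}\otimes\mathbf{I}_b$, $\mathbf{I}_a\otimes\mathbf{V}\otimes\mathbf{I}_b$ become adjacent.

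Next I would collapse those three adjacent factors via the mixed-product property of the Kronecker product into the single factor $\mathbf{I}_a\otimes(\mathbf{V}^T\mathbf{J}^{ij}\mathbf{V})\otimes\mathbf{I}_b$. Then expand $\mathbf{V}^T\mathbf{J}^{ij}\mathbf{V}$ in the basis $\{\mathbf{J}^{kl}\}$: since $\mathbf{J}^{ij}=\mathbf{e}_i\mathbf{e}_j^T$ one gets $(\mathbf{V}^T\mathbf{J}^{ij}\mathbf{V})_{kl}=V_{ik}V_{jl}$, hence $\mathbf{V}^T\mathbf{J}^{ij}\mathbf{V}=\sum_{k,l}V_{ik}V_{jl}\mathbf{J}^{kl}$. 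By linearity of the map $\mathbf{N}\mapsto\mathbf{I}_a\otimes\mathbf{N}\otimes\mathbf{I}_b$ and of the trace, the $(i,j)$ entry of the left-hand side becomes $\sum_{k,l}V_{ik}V_{jl}\,\mathrm{tr}\big[\mathbf{M}(\mathbf{I}_a\otimes\mathbf{J}^{kl}\otimes\mathbf{I}_b)\big]=\sum_{k,l}V_{ik}V_{jl}\,[\mathrm{tr}^a_b[\mathbf{M}]]_{kl}$, and this double sum is precisely $[\mathbf{V}\,\mathrm{tr}^a_b[\mathbf{M}]\,\mathbf{V}^T]_{ij}$, finishing the proof.

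There is no serious obstacle here: the statement is essentially a bilinearity / change-of-basis bookkeeping fact. The only points needing a little care are keeping the transpose conventions straight (once the outer transpose is peeled off, the right structured factor genuinely carries $\mathbf{V}^T$, not $\mathbf{V}$) and checking $(\mathbf{V}^T\mathbf{J}^{ij}\mathbf{V})_{kl}=V_{ik}V_{jl}$, which drops out of $\mathbf{J}^{ij}=\mathbf{e}_i\mathbf{e}_j^T$. I note the cyclic step could instead be routed through Lemma \ref{lem:cyc}, but invoking the cyclic property of the ordinary trace directly inside the definition of $\mathrm{tr}^a_b$ is cleaner; the argument also goes through for rectangular $\mathbf{V}$, even though only the square (orthogonal) case is needed in the applications to the eigenvector MLE.
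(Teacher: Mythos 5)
Your proof is correct, but it takes a genuinely different route from the paper's. The paper's proof first uses the Rearrangement Lemma to conjugate every factor by a permutation matrix, converting the stridewise-blockwise trace $\mathrm{tr}^a_b$ into the ordinary blockwise trace $\mathrm{tr}_{ab}$ applied to $\mathbf{P}\mathbf{M}\mathbf{P}^T$, then invokes Lemma 2 of \textcite{dahl_network_2013} to pull $\mathbf{V}$ out of the blockwise trace, and finishes with the (asserted as analogous) identity $\mathrm{tr}_{ab}\left[\mathbf{P}\mathbf{M}\mathbf{P}^T\right]=\mathrm{tr}^a_b\left[\mathbf{M}\right]$. You instead verify the claim entrywise straight from the definition: cycling the ordinary trace, collapsing the three structured factors via the mixed-product property into $\mathbf{I}_a\otimes\left(\mathbf{V}^T\mathbf{J}^{ij}\mathbf{V}\right)\otimes\mathbf{I}_b$, and expanding $\mathbf{V}^T\mathbf{J}^{ij}\mathbf{V}=\sum_{k,l}V_{ik}V_{jl}\mathbf{J}^{kl}$ so that bilinearity yields $\sum_{k,l}V_{ik}V_{jl}\left[\mathrm{tr}^a_b\left[\mathbf{M}\right]\right]_{kl}=\left[\mathbf{V}\,\mathrm{tr}^a_b\left[\mathbf{M}\right]\mathbf{V}^T\right]_{ij}$; each of these steps checks out, including the index identity $\left(\mathbf{V}^T\mathbf{J}^{ij}\mathbf{V}\right)_{kl}=V_{ik}V_{jl}$. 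Your version is more elementary and self-contained --- it needs no external lemma and none of the permutation bookkeeping the paper admits is awkward --- and, as you observe, it extends without change to rectangular $\mathbf{V}$. What the paper's route buys in exchange is a reduction to an already-published result about the standard blockwise trace, so the two approaches are complementary rather than one strictly dominating the other.
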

\begin{proof}
    \begin{align}
        \mathrm{tr}^a_b\left[\left(\mathbf{I}_a \otimes \mathbf{V} \otimes \mathbf{I}_b\right)\mathbf{M}\left(\mathbf{I}_a \otimes \mathbf{V} \otimes \mathbf{I}_b \right)^T\right] &= \left[\mathrm{tr}\left[\left(\mathbf{I}_a \otimes \mathbf{V} \otimes \mathbf{I}_b\right)\mathbf{M}\left(\mathbf{I}_a \otimes \mathbf{V} \otimes \mathbf{I}_b \right)^T \left(\mathbf{I}_a \otimes \mathbf{J}^{ij} \otimes \mathbf{I}_b \right)\right]\right]_{ij} \tag{Definition of $\mathrm{tr}^a_b$}
    \end{align}

    Thanks to the Rearrangement Lemma, we can get this just in terms of the standard blockwise trace, for which there exists a convenient lemma from \textcite{dahl_network_2013} that does the heavy lifting for us.  Unfortunately, this requires inserting permutation matrices into every nook and cranny.
    
    \begin{align}
        &= \left[\mathrm{tr}\left[\mathbf{P}\left(\mathbf{I}_a \otimes \mathbf{V} \otimes \mathbf{I}_b\right)\mathbf{P}^T\mathbf{P}\mathbf{M}\mathbf{P}^T\mathbf{P}\left(\mathbf{I}_a \otimes \mathbf{V} \otimes \mathbf{I}_b \right)^T\mathbf{P}^T\mathbf{P}\left(\mathbf{I}_a \otimes \mathbf{J}^{ij} \otimes \mathbf{I}_b \right)\mathbf{P}^T\right]\right]_{ij} \\
        &= \left[\mathrm{tr}\left[ \left(\mathbf{V} \otimes \mathbf{I}_{ab}\right)^T \mathbf{P}\mathbf{M}\mathbf{P}^T \left(\mathbf{V} \otimes \mathbf{I}_{ab}\right)\left(\mathbf{J}^{ij}\otimes\mathbf{I}_{ab}\right)\right]\right]_{ij} \\
        &= \mathrm{tr}_{ab}\left[\left(\mathbf{V} \otimes \mathbf{I}_{ab}\right)\mathbf{P}\mathbf{M}\mathbf{P}^T \left(\mathbf{V} \otimes \mathbf{I}_{ab}\right)^T\right] \tag{Definition of $\mathrm{tr}_{ab}$} \\
        &= \mathbf{V}\mathrm{tr}_{ab}\left[\mathbf{P}\mathbf{M}\mathbf{P}^T\right]\mathbf{V}^T \tag{Lemma 2 of \textcite{dahl_network_2013}}
    \end{align}

    We then can see analogously that $\mathrm{tr}_{ab}\left[\mathbf{P}\mathbf{M}\mathbf{P}^T\right] = \mathrm{tr}_{b}^a\left[\mathbf{M}\right]$, completing the proof.
    
\end{proof}

\begin{theorem}
    Let $\mathbf{V}_\ell\mathbf{e}_\ell\mathbf{V}_\ell^T$ be the eigendecomposition of $\mathbf{S}_\ell$.  Then $\mathbf{V}_\ell$ are the eigenvectors of the maximum likelihood estimate of $\mathbf{\Psi}_\ell$.
\label{thm:evec}
\end{theorem}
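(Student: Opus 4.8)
The plan is to start from the stationarity condition~\eqref{eq:grad} and show that any maximum likelihood estimate $\hat{\mathbf{\Psi}}_\ell$ is simultaneously diagonalizable with $\mathbf{S}_\ell$. Write an eigendecomposition $\hat{\mathbf{\Psi}}_{\ell'} = \mathbf{U}_{\ell'}\mathbf{\Lambda}_{\ell'}\mathbf{U}_{\ell'}^T$ for every axis $\ell'$. The first step is the standard fact that the Kronecker sum of conjugated matrices is the conjugated Kronecker sum: $\bigoplus_{\ell'\in\gamma}\hat{\mathbf{\Psi}}_{\ell'} = \left(\bigotimes_{\ell'\in\gamma}\mathbf{U}_{\ell'}\right)\left(\bigoplus_{\ell'\in\gamma}\mathbf{\Lambda}_{\ell'}\right)\left(\bigotimes_{\ell'\in\gamma}\mathbf{U}_{\ell'}\right)^T$, which follows termwise from the mixed-product property together with orthogonality of each $\mathbf{U}_k$, since $\left(\bigotimes_k\mathbf{U}_k\right)\left(\mathbf{I}_{d_{<\ell'}}\otimes\mathbf{\Lambda}_{\ell'}\otimes\mathbf{I}_{d_{>\ell'}}\right)\left(\bigotimes_k\mathbf{U}_k\right)^T = \mathbf{I}_{d_{<\ell'}}\otimes\mathbf{U}_{\ell'}\mathbf{\Lambda}_{\ell'}\mathbf{U}_{\ell'}^T\otimes\mathbf{I}_{d_{>\ell'}}$. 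Inverting, the covariance block $\left(\bigoplus_{\ell'\in\gamma}\hat{\mathbf{\Psi}}_{\ell'}\right)^{-1}$ has the same factored form with the \emph{diagonal} matrix $\left(\bigoplus_{\ell'\in\gamma}\mathbf{\Lambda}_{\ell'}\right)^{-1}$ in the middle.

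The next step is to peel off $\mathbf{U}_\ell$ from the stridewise-blockwise trace. Factor $\bigotimes_{\ell'\in\gamma}\mathbf{U}_{\ell'} = \left(\mathbf{I}_{d_{<\ell}}\otimes\mathbf{U}_\ell\otimes\mathbf{I}_{d_{>\ell}}\right)\left(\mathbf{W}_{<\ell}\otimes\mathbf{I}_{d_\ell}\otimes\mathbf{W}_{>\ell}\right)$, where $\mathbf{W}_{<\ell}=\bigotimes_{\ell'<\ell}\mathbf{U}_{\ell'}$ and $\mathbf{W}_{>\ell}=\bigotimes_{\ell'>\ell}\mathbf{U}_{\ell'}$ (again just the mixed-product property; here we use the standing assumption that $\ell$ occurs exactly once in $\gamma$). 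Substituting into $\mathrm{tr}^{d_{<\ell}^\gamma}_{d_{>\ell}^\gamma}\left[\left(\bigoplus_{\ell'\in\gamma}\hat{\mathbf{\Psi}}_{\ell'}\right)^{-1}\right]$ and applying Lemma~\ref{lem:conj} with $\mathbf{V}=\mathbf{U}_\ell$ pulls $\mathbf{U}_\ell$ out on the left and $\mathbf{U}_\ell^T$ on the right. What remains inside is $\mathrm{tr}^{d_{<\ell}^\gamma}_{d_{>\ell}^\gamma}\left[\left(\mathbf{W}_{<\ell}\otimes\mathbf{I}_{d_\ell}\otimes\mathbf{W}_{>\ell}\right)\left(\bigoplus_{\ell'\in\gamma}\mathbf{\Lambda}_{\ell'}\right)^{-1}\left(\mathbf{W}_{<\ell}\otimes\mathbf{I}_{d_\ell}\otimes\mathbf{W}_{>\ell}\right)^T\right]$; Lemma~\ref{lem:cyc} lets me cycle the trailing orthogonal factor around to meet the leading one, and $\left(\mathbf{W}_{<\ell}^T\mathbf{W}_{<\ell}\right)\otimes\mathbf{I}_{d_\ell}\otimes\left(\mathbf{W}_{>\ell}^T\mathbf{W}_{>\ell}\right)=\mathbf{I}$, so the expression collapses to $\mathrm{tr}^{d_{<\ell}^\gamma}_{d_{>\ell}^\gamma}\left[\left(\bigoplus_{\ell'\in\gamma}\mathbf{\Lambda}_{\ell'}\right)^{-1}\right]$. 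Finally, the stridewise-blockwise trace of a diagonal matrix is diagonal: its $(i,j)$ entry is $\mathrm{tr}\left[\mathbf{M}\left(\mathbf{I}_a\otimes\mathbf{J}^{ij}\otimes\mathbf{I}_b\right)\right]$, which vanishes for $i\neq j$ when $\mathbf{M}$ is diagonal.

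Assembling the pieces, Eq.~\eqref{eq:grad} becomes $\mathbf{S}_\ell = \mathbf{U}_\ell\left(\sum_{\gamma\mid\ell\in\gamma}\mathbf{E}^\gamma_\ell\right)\mathbf{U}_\ell^T$ with each $\mathbf{E}^\gamma_\ell$ diagonal, so $\mathbf{U}_\ell$ diagonalizes $\mathbf{S}_\ell$; equivalently, the eigenvectors $\mathbf{V}_\ell$ of $\mathbf{S}_\ell$ may be taken as the eigenvectors of $\hat{\mathbf{\Psi}}_\ell$. The one subtlety I would flag is the non-uniqueness of eigenvectors when $\mathbf{S}_\ell$ has repeated eigenvalues: the displayed identity still exhibits a common eigenbasis, and conversely any eigenbasis of $\mathbf{S}_\ell$ within each eigenspace gives an equivalent reparametrisation of $\hat{\mathbf{\Psi}}_\ell$, so the statement holds under that reading. (If desired, one may also note that the NLL is convex in the $\mathbf{\Psi}_{\ell'}$, so every solution of~\eqref{eq:grad} in the positive-definite cone is a genuine MLE.) I expect the one genuinely fiddly step to be the bookkeeping that rewrites $\bigotimes_{\ell'\in\gamma}\mathbf{U}_{\ell'}$ in the $\mathbf{I}\otimes\mathbf{U}_\ell\otimes\mathbf{I}$ form that Lemmas~\ref{lem:cyc} and~\ref{lem:conj} require — once that is in place, the rest is a direct application of results already proved.
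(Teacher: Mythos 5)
Your proposal is correct and follows essentially the same route as the paper's own proof: substitute the eigendecompositions into the stationarity condition (Equation \ref{eq:grad}), conjugate the Kronecker sum into $\left(\bigotimes_{\ell'}\mathbf{V}_{\ell'}\right)\left(\bigoplus_{\ell'}\mathbf{\Lambda}_{\ell'}\right)^{-1}\left(\bigotimes_{\ell'}\mathbf{V}_{\ell'}\right)^T$, reduce to an $\mathbf{I}\otimes\mathbf{V}_\ell\otimes\mathbf{I}$ conjugation via Lemmas \ref{lem:cyc} and \ref{lem:conj}, and observe that the remaining central matrix is diagonal. You merely apply the two lemmas in the opposite order and spell out the mixed-product bookkeeping (and the repeated-eigenvalue caveat) more explicitly than the paper does; no substantive difference.
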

\begin{proof}
    \begin{align}
        \mathbf{S}_\ell &= \sum_{\gamma|\ell\in\gamma} \mathrm{tr}^{d_{<\ell}^\gamma}_{d_{>\ell}^\gamma}\left[\left(\bigoplus_{\ell'\in\gamma}\mathbf{\Psi}_{\ell'}\right)^{-1}\right] \\
        &= \sum_{\gamma|\ell\in\gamma} \mathrm{tr}^{d_{<\ell}^\gamma}_{d_{>\ell}^\gamma}\left[\left(\bigoplus_{\ell'\in\gamma}\mathbf{V}_{\ell'}\mathbf{\Lambda}_{\ell'}\mathbf{V}^T_{\ell'}\right)^{-1}\right] \\
        &= \sum_{\gamma|\ell\in\gamma} \mathrm{tr}^{d_{<\ell}^\gamma}_{d_{>\ell}^\gamma}\left[\left(\bigotimes_{\ell'}\mathbf{V}_{\ell'}\right)\left(\bigoplus_{\ell'\in\gamma}\mathbf{\Lambda}_{\ell'}\right)^{-1}\left(\bigotimes_{ell'}\mathbf{V}_{\ell'}\right)^T\right] \\
        &= \sum_{\gamma|\ell\in\gamma} \mathrm{tr}^{d_{<\ell}^\gamma}_{d_{>\ell}^\gamma}\left[\left(\mathbf{I}_{d_{<\ell}}\otimes\mathbf{V}_{\ell}\otimes\mathbf{I}_{d_{>\ell}}\right)\left(\bigoplus_{\ell'\in\gamma}\mathbf{\Lambda}_{\ell'}\right)^{-1}\left(\mathbf{I}_{d_{<\ell}}\otimes\mathbf{V}_{\ell}\otimes\mathbf{I}_{d_{>\ell}}\right)^T\right] \tag{Cyclic Property} \\
        &= \sum_{\gamma|\ell\in\gamma} \mathbf{V}_\ell\mathrm{tr}^{d_{<\ell}^\gamma}_{d_{>\ell}^\gamma}\left[\left(\bigoplus_{\ell'\in\gamma}\mathbf{\Lambda}_{\ell'}\right)^{-1}\right]\mathbf{V}^T_\ell \tag{Conjugacy Extraction} \\
        &= \mathbf{V}_\ell\left(\sum_{\gamma|\ell\in\gamma}\mathrm{tr}^{d_{<\ell}^\gamma}_{d_{>\ell}^\gamma}\left[\left(\bigoplus_{\ell'\in\gamma}\mathbf{\Lambda}_{\ell'}\right)^{-1}\right]\right)\mathbf{V}_\ell^T
    \end{align}

    We conclude the proof by observing that the central matrix is diagonal, and thus the right hand side constitutes an eigendecomposition of $\mathbf{S}_\ell$.  Thus $\mathbf{S}_\ell$ and $\mathbf{\Psi}_\ell$ share eigenvectors.
\end{proof}

\subsection{Maximum Likelihood Estimate for the Eigenvalues}
\label{sec:mle-eigenvalues}

In the previous section, we derived the eigenvectors of the maximum likelihood estimate.  While interesting (they correspond to the principal components of our data), we need the eigenvalues to reconstruct $\mathbf{\Psi}_\ell$.  In the main paper, we derived the gradient with respect to the Negative Log-Likelihood.  Then, for a learning rate $\mu_t$, gradient descent can be performed with the update equation $\mathbf{\Lambda}^{t+1}_\ell = \mathbf{\Lambda}^t_\ell - \mu_t\left[\mathbf{e}_\ell - \sum_{\gamma|\ell\in\gamma}\mathrm{tr}^{d_{<\ell}^\gamma}_{d_{>\ell}^\gamma}\left[\left(\bigoplus_{\ell'\in\gamma}\mathbf{\Lambda}_{\ell'}\right)^{-1}\right]\right]$.  A reasonable restriction is to make $\mathbf{\Psi}_\ell$ is positive definite, in which case $\mu_t$ must be chosen to prevent $\mathbf{\Lambda}_\ell^t$ from becoming negative.  Iterating over the eigenvalues reduces our optimization task from one with $\sum_\ell d_\ell^2$ parameters to one with $\sum_\ell d_\ell$ parameters.















\subsection{Incorporation of Priors}
\label{sec:incorp-priors}

\begin{theorem}[GmGM Eigenvector Estimator with Priors]
    \label{thm:evec-prior}
    Suppose we have the same setup as in Theorem \ref{thm:evec}, and that we have a prior of the form:
    
    \begin{align}
        \prod_\ell
        g_\ell(\mathbf{\Theta}_\ell)
        e^{\mathrm{tr}\left[\eta_\ell(\mathbf{\Theta}_\ell)^T
        \mathbf{\Psi}_\ell\right]
        + h_\ell(\mathbf{\Psi_\ell})}
    \end{align}

    In other words, our prior is an exponential distribution for $\mathbf{\Psi}_\ell$, in which $\mathbf{\Psi}_\ell$ is the sufficient statistic.

    Then, if $h_\ell$ depends only on the eigenvalues (i.e. it is `unitarily invariant'), the eigenvectors of $\frac{1}{2}\mathbf{S}_\ell - \eta_\ell(\mathbf{\Theta}_\ell)$ are the eigenvectors $\mathbf{V}_\ell$ of the MAP estimate for $\mathbf{\Psi}_\ell$.
\end{theorem}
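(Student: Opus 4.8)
The plan is to imitate the proof of Theorem \ref{thm:evec}, but starting from the stationarity condition of the \emph{posterior} instead of the likelihood. First I would write the MAP objective, over $\{\mathbf{\Psi}_\ell\}$ with the hyperparameters $\mathbf{\Theta}_\ell$ held fixed, as
\begin{align}
\mathrm{NLL}\left(\left\{\mathcal{D}^\gamma\right\}\right) - \sum_\ell\left[\log g_\ell(\mathbf{\Theta}_\ell) + \mathrm{tr}\left[\eta_\ell(\mathbf{\Theta}_\ell)^T\mathbf{\Psi}_\ell\right] + h_\ell(\mathbf{\Psi}_\ell)\right],
\end{align}
so the $\log g_\ell$ terms (and the $\log(2\pi)$ constant) drop out. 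Differentiating with respect to $\mathbf{\Psi}_\ell$, reusing the gradient computed in Section \ref{sec:gradient} and adding $-\mathrm{sym}\left[\eta_\ell(\mathbf{\Theta}_\ell)\right] - \mathrm{sym}\left[\nabla h_\ell(\mathbf{\Psi}_\ell)\right]$ — only the symmetric part of $\eta_\ell(\mathbf{\Theta}_\ell)$ survives the trace against the symmetric $\mathbf{\Psi}_\ell$, so without loss of generality it is symmetric — and then cancelling the common factor $\tfrac12$ and the $\mathrm{sym}$ operator (which is invertible and applied to every term, all of which are symmetric matrices), the MAP stationarity condition reads
\begin{align}
\tfrac12\mathbf{S}_\ell - \eta_\ell(\mathbf{\Theta}_\ell) = \nabla h_\ell(\mathbf{\Psi}_\ell) + \tfrac12\sum_{\gamma|\ell\in\gamma}\mathrm{tr}^{d_{<\ell}^\gamma}_{d_{>\ell}^\gamma}\left[\left(\bigoplus_{\ell'\in\gamma}\mathbf{\Psi}_{\ell'}\right)^{-1}\right].
\end{align}

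Next I would fix the eigendecompositions $\mathbf{\Psi}_{\ell'} = \mathbf{V}_{\ell'}\mathbf{\Lambda}_{\ell'}\mathbf{V}_{\ell'}^T$ of the MAP estimate and push both terms on the right-hand side through the machinery already developed. The sum of stridewise-blockwise traces becomes $\mathbf{V}_\ell\left(\tfrac12\sum_{\gamma|\ell\in\gamma}\mathrm{tr}^{d_{<\ell}^\gamma}_{d_{>\ell}^\gamma}\left[\left(\bigoplus_{\ell'\in\gamma}\mathbf{\Lambda}_{\ell'}\right)^{-1}\right]\right)\mathbf{V}_\ell^T$ verbatim from the computation in Theorem \ref{thm:evec}, using the Cyclic Property (Lemma \ref{lem:cyc}) and Conjugacy Extraction (Lemma \ref{lem:conj}), with a diagonal central factor. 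For the $\nabla h_\ell(\mathbf{\Psi}_\ell)$ term I would invoke the classical fact that the gradient of an orthogonally invariant (``unitarily invariant'') function of a symmetric matrix shares eigenvectors with its argument: if $h_\ell(\mathbf{\Psi}_\ell) = \phi_\ell\!\left(\lambda(\mathbf{\Psi}_\ell)\right)$ for a symmetric, differentiable $\phi_\ell$, then $\nabla h_\ell(\mathbf{\Psi}_\ell) = \mathbf{V}_\ell\mathbf{D}_\ell\mathbf{V}_\ell^T$ with $\mathbf{D}_\ell$ diagonal. Summing, the right-hand side equals $\mathbf{V}_\ell(\text{diagonal})\mathbf{V}_\ell^T$, so $\tfrac12\mathbf{S}_\ell - \eta_\ell(\mathbf{\Theta}_\ell)$ is diagonalized by $\mathbf{V}_\ell$, which is exactly the claim.

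The main obstacle is making the step ``unitarily invariant $\Rightarrow$ gradient co-diagonalizable'' precise; this is the theorem on derivatives of spectral functions (e.g.\ Lewis), and I would either cite it or give the short argument via the chain rule on $h_\ell = \phi_\ell\circ\lambda$ together with permutation invariance of the eigenvalue map. Two lesser points need care, both inherited from Theorem \ref{thm:evec}: the bookkeeping with $\mathrm{sym}$ (harmless here since every term of the stationarity equation is symmetric, so $\mathrm{sym}$ factors out and is invertible), and the non-uniqueness of eigenvectors when the diagonal central factor has repeated entries — as before, the conclusion is that $\mathbf{V}_\ell$ is \emph{a} valid eigenbasis, not the unique one. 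I would also note that, as in the unpenalized case, this characterizes stationary points of the posterior and does not by itself certify a global maximum.
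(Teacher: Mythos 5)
Your proposal is correct and follows essentially the same route as the paper: form the log-posterior gradient, invoke Lewis's theorem on derivatives of spectral functions to show $\nabla h_\ell(\mathbf{\Psi}_\ell)$ shares the eigenvectors $\mathbf{V}_\ell$, reuse the Cyclic Property and Conjugacy Extraction lemmas for the likelihood term, and conclude that the known matrix $\tfrac12\mathbf{S}_\ell - \eta_\ell(\mathbf{\Theta}_\ell)$ is diagonalized by $\mathbf{V}_\ell$. Your added care with the $\mathrm{sym}$ bookkeeping, the symmetry of $\eta_\ell$, and the stationary-point caveat goes slightly beyond what the paper writes down, but the underlying argument is the same.
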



\begin{proof}
    Observe that our log-prior is:

    \begin{align}
        \sum_\ell \log g_\ell(\mathbf{\Theta}_\ell) + \mathrm{tr}\left[\eta_\ell(\mathbf{\Theta}_\ell)^T\mathbf{\Psi}_\ell\right] + h_\ell(\mathbf{\Psi_\ell})
    \end{align}

    With gradient:

    \begin{align}
         \eta_\ell(\mathbf{\Theta}_\ell) + \frac{\partial}{\partial \mathbf{\Psi}_\ell} h_\ell(\mathbf{\Psi_\ell}) \label{eq:prior-grad}
    \end{align}

    Theorem 1.1 from \textcite{lewis_derivatives_1996} states that the eigenvectors of the derivative of any spectral function are the same as the eigenvectors of its input.  In our case, this means that the eigenvectors of $\frac{\partial}{\partial \mathbf{\Psi}_\ell} h_\ell(\mathbf{\Psi}_\ell)$ are $\mathbf{V}_\ell$, the eigenvectors of $\mathbf{\Psi}_\ell$.

    Recall the gradient of the log-likelihood of the Kronecker-sum-structured normal distribution was:

    \begin{align}
         \frac{1}{2}\sum_{\gamma|\ell\in\gamma} \mathrm{tr}^{d_{<\ell}^\gamma}_{d_{>\ell}^\gamma}\left[\left(\bigoplus_{\ell'\in\gamma}\mathbf{\Psi}_{\ell'}\right)^{-1}  \right]
         - \frac{1}{2}\mathbf{S}_\ell
    \end{align}

    The gradient of the log-likelihood after including the prior is the sum of these two gradients:

    \begin{align}
         \left(\frac{1}{2}\sum_{\gamma|\ell\in\gamma} \mathrm{tr}^{d_{<\ell}^\gamma}_{d_{>\ell}^\gamma}\left[\left(\bigoplus_{\ell'\in\gamma}\mathbf{\Psi}_{\ell'}\right)^{-1}  \right] + \frac{\partial}{\partial \mathbf{\Psi}_\ell} h_\ell(\mathbf{\Psi}_\ell) \right)
         - \frac{1}{2}\mathbf{S}_\ell + \eta_\ell(\mathbf{\Theta}_\ell)
    \end{align}

    And the optimum is achieved at:

    \begin{align}
         \left(\frac{1}{2}\sum_{\gamma|\ell\in\gamma} \mathrm{tr}^{d_{<\ell}^\gamma}_{d_{>\ell}^\gamma}\left[\left(\bigoplus_{\ell'\in\gamma}\mathbf{\Psi}_{\ell'}\right)^{-1}  \right] + \frac{\partial}{\partial \mathbf{\Psi}_\ell} h_\ell(\mathbf{\Psi}_\ell) \right)
         &= \frac{1}{2}\mathbf{S}_\ell - \eta_\ell(\mathbf{\Theta}_\ell)
    \end{align}

    The right hand side is known, and hence has known eigenvectors.  The left hand side is unknown, but has eigenvectors $\mathbf{V}_\ell$.

\end{proof}

The change to the iterative portion of the GmGM algorithm is minimal; by virtue of being unitarily invariant and convex, \textcite{lewis_derivatives_1996} guarantees that $\frac{\partial}{\partial \mathbf{\Psi}_\ell} h_\ell(\mathbf{\Psi}_\ell)$ is some function of just the eigenvalues $\mathbf{\Lambda}_\ell$ of $\mathbf{\Psi}_\ell$.  This function is added to our gradient each step (Line \ref{eq:prior-grad}).  This fact is significant, as it means Lemma \ref{lem:eig-decomp-of-distribution} still applies - he only need to consider $\frac{\partial}{\partial \mathbf{\Lambda}_\ell} h_\ell(\mathbf{\Lambda}_\ell)$. $\frac{1}{2}\mathbf{S}_\ell - \eta_\ell(\mathbf{\Theta}_\ell)$ can be thought of as our `priorized' effective Gram matrix, its eigenvalues fill the role exactly of $\mathbf{e}_\ell$ in the original algorithm.  Thus, the only change needed to incorporate priors in the algorithm, other than use of the priorized Gram matrices, is inclusion of the $\frac{\partial}{\partial \mathbf{\Lambda}_\ell} h_\ell(\mathbf{\Lambda}_\ell)$ term.

\begin{corollary}
    The Wishart and matrix gamma distributions all satisfy Theorem \ref{thm:evec-prior}.
\end{corollary}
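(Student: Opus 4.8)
The plan is to exhibit, for each distribution in turn, an explicit choice of the functions $g_\ell$, $\eta_\ell$, $h_\ell$ that puts its density in the exponential-family form required by Theorem \ref{thm:evec-prior}, and then to observe that the resulting $h_\ell$ depends on $\mathbf{\Psi}_\ell$ only through its eigenvalues.

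First I would recall the Wishart density $W_p(\mathbf{V}, n)$, written as a prior on a positive definite $\mathbf{\Psi}_\ell$:
\begin{align}
p(\mathbf{\Psi}_\ell) = \frac{\left|\mathbf{\Psi}_\ell\right|^{(n - p - 1)/2}}{2^{np/2}\left|\mathbf{V}\right|^{n/2}\Gamma_p(n/2)}\, e^{-\frac{1}{2}\mathrm{tr}\left[\mathbf{V}^{-1}\mathbf{\Psi}_\ell\right]}.
\end{align}
Collecting the hyperparameters into $\mathbf{\Theta}_\ell = (\mathbf{V}, n)$, I would set $\eta_\ell(\mathbf{\Theta}_\ell) = -\tfrac{1}{2}\mathbf{V}^{-1}$, $h_\ell(\mathbf{\Psi}_\ell) = \tfrac{n - p - 1}{2}\log\left|\mathbf{\Psi}_\ell\right|$, and $g_\ell(\mathbf{\Theta}_\ell) = \left(2^{np/2}\left|\mathbf{V}\right|^{n/2}\Gamma_p(n/2)\right)^{-1}$. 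Since $\mathbf{V}^{-1}$ is symmetric, $\eta_\ell(\mathbf{\Theta}_\ell)^T = \eta_\ell(\mathbf{\Theta}_\ell)$, so $\mathrm{tr}\left[\eta_\ell(\mathbf{\Theta}_\ell)^T\mathbf{\Psi}_\ell\right] = -\tfrac{1}{2}\mathrm{tr}\left[\mathbf{V}^{-1}\mathbf{\Psi}_\ell\right]$, and the density matches the prescribed form term for term.

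Next I would do the same for the matrix gamma $\mathrm{MG}_p(a, \mathbf{V})$, whose density is
\begin{align}
p(\mathbf{\Psi}_\ell) = \frac{\left|\mathbf{\Psi}_\ell\right|^{a - (p+1)/2}}{\left|\mathbf{V}\right|^{a}\Gamma_p(a)}\, e^{-\mathrm{tr}\left[\mathbf{V}^{-1}\mathbf{\Psi}_\ell\right]},
\end{align}
again supported on the positive definite cone. This is handled identically by taking $\eta_\ell(\mathbf{\Theta}_\ell) = -\mathbf{V}^{-1}$, $h_\ell(\mathbf{\Psi}_\ell) = \left(a - \tfrac{p+1}{2}\right)\log\left|\mathbf{\Psi}_\ell\right|$, and $g_\ell(\mathbf{\Theta}_\ell) = \left(\left|\mathbf{V}\right|^{a}\Gamma_p(a)\right)^{-1}$; the Wishart is simply the special case $a = n/2$ with scale $2\mathbf{V}$, so a single argument really covers both.

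Finally I would verify the unitary-invariance hypothesis of Theorem \ref{thm:evec-prior}. In both cases $h_\ell$ is a scalar multiple of $\log\left|\mathbf{\Psi}_\ell\right| = \sum_i \log \lambda_i(\mathbf{\Psi}_\ell)$, a symmetric function of the eigenvalues, hence invariant under $\mathbf{\Psi}_\ell \mapsto \mathbf{U}\mathbf{\Psi}_\ell\mathbf{U}^T$ for orthogonal $\mathbf{U}$ — i.e. a spectral function in the sense of \textcite{lewis_derivatives_1996}. The hypotheses of Theorem \ref{thm:evec-prior} are therefore satisfied, and the conclusion is immediate. There is no real obstacle here; the only care needed is bookkeeping around parametrization conventions (scale versus rate, the exact exponent on $\left|\mathbf{\Psi}_\ell\right|$) and noting that the support of these priors coincides with the positive-definiteness restriction already placed on $\mathbf{\Psi}_\ell$.
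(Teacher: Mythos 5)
Your verification is correct and is exactly the routine check the paper treats as immediate (it states this corollary without proof): both densities are exponential families with $\mathbf{\Psi}_\ell$ as the sufficient statistic, and the residual dependence $h_\ell(\mathbf{\Psi}_\ell) = c\log\left|\mathbf{\Psi}_\ell\right| = c\sum_i \log\lambda_i$ is a spectral function, so the hypothesis of Theorem \ref{thm:evec-prior} holds. Your identification of $g_\ell$, $\eta_\ell$, $h_\ell$ and the observation that the Wishart is a reparametrized matrix gamma are all sound.
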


\subsection{Covariance Thresholding}
\label{sec:covariance-thresholding}

\begin{figure}
    \centering
    \includegraphics{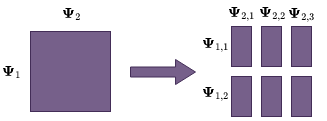}
    \caption{Graphical representation of the benefits of Theorem \ref{thm:cov-thresh}.  Provided the thresholded covariance matrix can be partitioned, so can the precision matrix.  This allows us to split the dataset into distinct parts, and estimate the covariance matrices separately.  Note that, in the diagram, we have split the dataset into 6 chunks but there are only 5 precision matrices to estimate - this is because there are shared axes, necessitating the use of our shared axis framework to take advantage of this partitioning.}
    \label{fig:partition-theorem}
\end{figure}

In this section, we restate and prove Theorem 4 from the main paper in more general terms; see Figure \ref{fig:partition-theorem}.  When we say that two matrices $\mathbf{A}$ and $\mathbf{B}$ have the same block diagonal structure, we mean that both have the following form:

\begin{align}
    \mathbf{B} &= \begin{bmatrix}
        \mathbf{B}_1 & 0 & \ldots \\
        0 & \mathbf{B}_2 &  \\
        \vdots &  & \ddots
    \end{bmatrix} \\
    \mathbf{A} &= \begin{bmatrix}
        \mathbf{A}_1 & 0 & \ldots \\
        0 & \mathbf{A}_2 &  \\
        \vdots &  & \ddots
    \end{bmatrix}
\end{align}

Where there is no restrictions placed on the submatrices except that they are square and that the shape of $\mathbf{A}_i$ is the same as the shape of $\mathbf{B}_i$ for all $i$.  The structure of our proof of Lemma 6 follows the same structure as that of Theorem 1 of \textcite{mazumder_exact_2012}; we give it in a more general form in such that both their result and our result arise as corollaries in conjunction with Lemma \ref{lem:sb-trace-block-diag-preserve}.

\begin{lemma}
    \label{lem:cov-thresh-generalized}
    Let $\mathbf{F}\left[\mathbf{\Psi}_\ell; \left\{\mathbf{\Psi}_{\ell'}\right\}_{\ell'\neq\ell}\right]$ be a function that preserves the block diagonal structure of $\mathbf{\Psi}_\ell$, which as a shorthand we will denote $\mathbf{F}_\ell$.  Suppose we had an objective whose gradient is $\mathbf{S}_\ell - \mathbf{F}_\ell$.  If we incorporate an L1 penalty on $\mathbf{\Psi}_\ell$ with strength $\rho$ into the objective, then the subgradient of the function contains zero for an estimate $\hat{\mathbf{\Psi}}_\ell$ with the following property:

    If we replace every element of $\mathbf{S}_\ell$ whose magnitude is less than $\rho$ with 0, we can partition the encoded graph into disconnected components.  This partition is the same as the partition into disconnected components for $\hat{\mathbf{\Psi}}_\ell$.
\end{lemma}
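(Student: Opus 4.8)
The plan is to follow the architecture of the proof of Theorem 1 of \textcite{mazumder_exact_2012}, but with the specific graphical-lasso gradient $\mathbf{S}_\ell-\mathbf{\Psi}_\ell^{-1}$ replaced by the abstract gradient $\mathbf{S}_\ell-\mathbf{F}_\ell$. First I would write down the optimality condition: a matrix $\hat{\mathbf{\Psi}}_\ell$ is a point at which the subgradient of the $L1$-penalized objective contains zero if and only if there is a matrix $\mathbf{Z}$ with $\mathbf{Z}_{ij}=\mathrm{sign}(\hat\psi_\ell^{ij})$ on the support of $\hat{\mathbf{\Psi}}_\ell$ and $|\mathbf{Z}_{ij}|\le 1$ off it, satisfying $\mathbf{S}_\ell-\mathbf{F}_\ell+\rho\mathbf{Z}=\mathbf{0}$ entrywise, where $\mathbf{F}_\ell=\mathbf{F}[\hat{\mathbf{\Psi}}_\ell;\{\mathbf{\Psi}_{\ell'}\}_{\ell'\neq\ell}]$. (A penalty on the diagonal, if included, only shifts the diagonal equations and is irrelevant to the block structure.)

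Next I would get the ``constructive'' inclusion. Let $\mathcal{P}$ be the partition of the index set into connected components of the graph obtained by zeroing every entry of $\mathbf{S}_\ell$ of magnitude below $\rho$. Restricting to matrices that are block diagonal with respect to $\mathcal{P}$, the hypothesis on $\mathbf{F}$ makes $\mathbf{F}_\ell$ block diagonal with respect to $\mathcal{P}$ as well, so the stationarity system decouples into one subsystem per $\mathcal{P}$-block (each an instance of the same problem on the principal submatrices $\mathbf{S}_\ell^{(a)}$) together with the across-block equations $\mathbf{S}_\ell^{ij}+\rho\mathbf{Z}_{ij}=0$. The latter are solvable with $|\mathbf{Z}_{ij}|<1$ precisely because $|\mathbf{S}_\ell^{ij}|<\rho$ whenever $i,j$ lie in different $\mathcal{P}$-blocks --- that is exactly what the thresholding removed. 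Solving each per-block subsystem (which can be done by the same considerations that furnish a solution to the original problem) and assembling the pieces block-diagonally therefore yields a stationary point $\hat{\mathbf{\Psi}}_\ell$ of the full penalized problem whose support lies inside the $\mathcal{P}$-blocks; equivalently, the connected-component partition $\mathcal{Q}$ of $\hat{\mathbf{\Psi}}_\ell$ refines $\mathcal{P}$.

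It then remains to show $\mathcal{Q}=\mathcal{P}$, i.e. that $\hat{\mathbf{\Psi}}_\ell$ is not ``more disconnected'' than the thresholded covariance graph, for which I would run the converse of the optimality argument. Since $\hat{\mathbf{\Psi}}_\ell$ is block diagonal with respect to its own component partition $\mathcal{Q}$, the hypothesis makes $\mathbf{F}_\ell$ block diagonal with respect to $\mathcal{Q}$; hence for any $i,j$ in different $\mathcal{Q}$-blocks the stationarity equation collapses to $\mathbf{S}_\ell^{ij}=-\rho\mathbf{Z}_{ij}$ with $|\mathbf{Z}_{ij}|\le 1$, so $|\mathbf{S}_\ell^{ij}|\le\rho$. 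Thresholding at $\rho$ therefore kills every across-$\mathcal{Q}$-block entry of $\mathbf{S}_\ell$, so no connected component of the thresholded graph straddles two $\mathcal{Q}$-blocks; that is, $\mathcal{P}$ refines $\mathcal{Q}$, and combined with the previous paragraph this forces $\mathcal{P}=\mathcal{Q}$. I would close by noting that Lemma \ref{lem:sb-trace-block-diag-preserve} discharges the block-diagonal-preservation hypothesis for $\mathbf{F}_\ell=\sum_{\gamma|\ell\in\gamma}\mathrm{tr}^{d_{<\ell}^\gamma}_{d_{>\ell}^\gamma}\big[(\bigoplus_{\ell'\in\gamma}\mathbf{\Psi}_{\ell'})^{-1}\big]$, so the present lemma specializes to the GmGM covariance-thresholding result, while a single dataset with a two-axis model recovers \textcite{mazumder_exact_2012}.

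The main obstacle I anticipate is the bookkeeping around the decoupling in the constructive step: ``preserves the block diagonal structure of $\mathbf{\Psi}_\ell$'' has to be used in the strong sense that the $a$-th diagonal block of $\mathbf{F}[\mathbf{\Psi}_\ell;\cdot]$ depends only on the $a$-th diagonal block of $\mathbf{\Psi}_\ell$ (and on the correspondingly partitioned $\mathbf{\Psi}_{\ell'}$), so that the per-block subproblems are genuinely independent and their stationary points glue to a stationary point of the whole. This strengthening, together with the absence of convexity and uniqueness --- so the statement must be read as existence of a stationary point with the stated partition property, not a claim about every minimizer --- and the measure-zero boundary case $|\mathbf{S}_\ell^{ij}|=\rho$, are the points that will need the most care.
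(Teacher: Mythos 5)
Your proposal is correct and follows essentially the same route as the paper's proof: write the entrywise KKT conditions for the L1-penalized stationarity equation, construct a candidate $\hat{\mathbf{\Psi}}_\ell$ that is block diagonal with respect to the components of the thresholded $\mathbf{S}_\ell$ by solving the decoupled per-block subproblems, verify the off-block conditions using $|\mathbf{S}_\ell^{ij}|<\rho$ together with the block-structure-preservation of $\mathbf{F}$, and then run the converse KKT argument to show the two partitions refine each other. The subtleties you flag (the strong, blockwise-local reading of ``preserves block diagonal structure,'' and the boundary case $|\mathbf{S}_\ell^{ij}|=\rho$) are real and are in fact glossed over in the paper's own write-up.
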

\begin{proof}
    The incorporation of an L1 penalty leads to a subgradient of:

    \begin{align}
        \mathbf{S}_\ell - \mathbf{F}_\ell + \rho \mathrm{sign}\mathbf{\Psi}_\ell
    \end{align}

    Where $\mathrm{sign}$ is the element-wise sign function:

    \begin{align}
        \mathrm{sign} &\triangleq \left[\mathbf{M}\right]_{ij} \left\{\begin{matrix}
\in \left[-\rho, \rho \right ] & \text{if }\mathbf{M}_{ij}=0\\ 
= -\rho & \text{if } \mathbf{M}_{ij} > 0\\ 
= \rho & \text{if } \mathbf{M}_{ij} < 0
\end{matrix}\right.
    \end{align}

    We have critical points when the subgradient contains zero; this leads to the KKT conditions, which must be satisfied for all $ij$.

    \begin{align}
        \left|\mathbf{S}_\ell^{ij} - \mathbf{F}_\ell^{ij}\right| &\leq \rho &\text{ if } \mathbf{\Psi}_\ell^{ij} = 0 \\
        \mathbf{F}_\ell^{ij} &= \mathbf{S}_\ell^{ij} + \rho &\text{ if } \mathbf{\Psi}_\ell^{ij} > 0 \\
        \mathbf{F}_\ell^{ij} &= \mathbf{S}_\ell^{ij} - \rho &\text{ if } \mathbf{\Psi}_\ell^{ij} < 0
    \end{align}

    Define $\mathrm{thresh}_\rho\left[\mathbf{S}_\ell\right]$ to be a matrix with 0s everywhere that $\left|\mathbf{S}_\ell^{ij}\right| < \rho$ and otherwise is $\mathbf{S}_\ell^{ij}$.  Without loss of generality, we can assume our data has been ordered such that $\mathrm{thresh}_\rho\left[\mathbf{S}_\ell\right]$ is block diagonal.  We will construct a $\hat{\mathbf{\Psi}}_\ell$ that satisfies the KKT conditions while also having a block-diagonal structure.  The construction of such a $\hat{\mathbf{\Psi}}_\ell$ would show that the graph partition in our L1-penalized solution is at least as fine as the graph partition arising from thresholding $\mathbf{S}_\ell$.

    Let us construct $\hat{\mathbf{\Psi}}_\ell$ to be block-diagonal with the same block structure as $\mathrm{thresh}_\rho\left[\mathbf{S}_\ell\right]$.  Denote the blocks to be indexed by $a$ ($\hat{\mathbf{\Psi}}_{\ell,a}$ for $\mathbf{\Psi}$, $\mathbf{S}_{\ell,a}$ for $\mathbf{S}$), and define them to be solutions to the following problem:

    \begin{align}
        0 &\in \mathbf{S}_{\ell,a} - \mathbf{F}\left[\mathbf{\Psi}_{\ell, a}; \left\{\mathbf{\Psi}_{\ell'}\right\}_{\ell'\neq\ell}\right] + \rho \mathrm{sign}\mathbf{\Psi}_{\ell, a} \label{eq:subgradient-submatrix}
    \end{align}

    To see why $\hat{\mathbf{\Psi}}_\ell$ satisfies the KKT conditions, note that for any index $ij$ lying off the block diagonal, $\mathbf{S}_\ell^{ij} < \rho$ by construction. As $\mathbf{F}_{ij}$ preserves the sparsity structure of $\hat{\mathbf{\Psi}}_\ell$, and $\hat{\mathbf{\Psi}}_\ell^{ij} = 0$, $|\mathbf{S}_\ell^{ij}-\mathbf{F}_{ij}| = 0 < \rho$ is satisfied.  For the indices $ij$ lying on the block diagonal, note that the KKT conditions are satisfied via each block's constriction in Equation \ref{eq:subgradient-submatrix}.

    To conclude the proof, it is necessary to show that if our solution $\hat{\mathbf{\Psi}}_\ell$ has a block diagonal structure, then $\mathbf{S}_\ell$ has the same block diagonal structure.  In other words, the L1-penalized solution is not a more fine-grained partition than that which can be attained through covariance thresholding.  Suppose that we have some $\mathbf{\Psi}_\ell$ with block-diagonal structure, and thus that $\mathbf{F}_\ell$ has that same structure.  Then, $\mathbf{F}_\ell^{ij} = 0$ implies $\left|\mathbf{S}_\ell^{ij}\right| \leq \rho$, as by the KKT we know that $\left|\mathbf{S}_\ell^{ij} - \mathbf{F}_\ell^{ij}\right| \leq \rho$.  Thus, $\mathrm{thresh}_\rho\left[\mathbf{S}_\ell\right]$ has the same block diagonal structure as $\mathbf{\Psi}_\ell$.

    We have now shown that both graph partitions are equally fine-grained, and are thus equivalent.  This completes the proof.
\end{proof}

\begin{corollary}
    As the matrix inverse preserves block diagonal structure, Theorem 1 of \textcite{mazumder_exact_2012} follows immediately.
\end{corollary}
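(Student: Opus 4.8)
The plan is to obtain Theorem 1 of \textcite{mazumder_exact_2012} as the single-tensor, single-axis instantiation of Lemma \ref{lem:cov-thresh-generalized} with $\mathbf{F}\left[\mathbf{\Psi}_\ell\right] = \mathbf{\Psi}_\ell^{-1}$. First I would recall the graphical lasso objective, $-\log\left|\mathbf{\Psi}\right| + \mathrm{tr}\left[\mathbf{S}\mathbf{\Psi}\right] + \rho\left\|\mathbf{\Psi}\right\|_1$ over positive-definite $\mathbf{\Psi}$, and note that the gradient of its smooth part is $\mathbf{S} - \mathbf{\Psi}^{-1}$. This is exactly of the form $\mathbf{S}_\ell - \mathbf{F}_\ell$ required by Lemma \ref{lem:cov-thresh-generalized}, with $\mathbf{F}_\ell = \mathbf{\Psi}_\ell^{-1}$; note that this choice of $\mathbf{F}$ does not depend on any other axes, so it suffices to take a single dataset with $\gamma = \{\ell\}$, which collapses the setup to an ordinary matrix problem.

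Next I would verify the sole hypothesis of the Lemma: that $\mathbf{F}_\ell$ preserves the block-diagonal structure of $\mathbf{\Psi}_\ell$. If $\mathbf{\Psi}$ is block diagonal with positive-definite blocks $\mathbf{\Psi}_1, \ldots, \mathbf{\Psi}_k$, then each block is invertible and $\mathbf{\Psi}^{-1}$ is block diagonal with blocks $\mathbf{\Psi}_1^{-1}, \ldots, \mathbf{\Psi}_k^{-1}$ of the same shapes — this is immediate from expanding $\mathbf{\Psi}\mathbf{\Psi}^{-1} = \mathbf{I}$ blockwise. Hence the matrix inverse meets the requirement, which is precisely the phrase ``the matrix inverse preserves block diagonal structure'' in the corollary statement.

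With the hypothesis checked, Lemma \ref{lem:cov-thresh-generalized} applies verbatim: it asserts that replacing every entry of $\mathbf{S}$ of magnitude below $\rho$ by zero yields a thresholded matrix whose partition into connected components coincides with the partition into connected components of the $L1$-penalized estimate $\hat{\mathbf{\Psi}}$. Specializing to $\mathbf{F} = (\cdot)^{-1}$, the $L1$-penalized estimate is exactly the graphical lasso solution, so this is the content of Theorem 1 of \textcite{mazumder_exact_2012}.

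I expect no genuine obstacle here — the corollary really is a direct specialization. The only points that would get a line of care are (i) confirming that the diagonal blocks of a positive-definite block-diagonal matrix are themselves positive definite, hence invertible, so that the blockwise-inverse identity is legitimate, and (ii) matching notation, i.e. identifying their sample covariance $S$ with our $\mathbf{S}_\ell$, their precision matrix with $\mathbf{\Psi}_\ell$, and observing that both the tensor-mode index $\ell$ and the dataset index $\gamma$ become vacuous in the classical graphical lasso setting.
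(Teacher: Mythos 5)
Your proposal is correct and matches the paper's intended argument exactly: the corollary is obtained by instantiating Lemma \ref{lem:cov-thresh-generalized} with $\mathbf{F}_\ell = \mathbf{\Psi}_\ell^{-1}$ (the gradient of the smooth part of the graphical lasso objective) and checking that inversion preserves block-diagonal structure. The paper leaves these routine verifications implicit, so your write-up is simply a slightly more explicit version of the same one-line specialization.
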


\begin{lemma}
    \label{lem:sb-trace-block-diag-preserve}
    $\sum_{\gamma|\ell\in\gamma} \mathrm{tr}^{d_{<\ell}^\gamma}_{d_{>\ell}^\gamma}\left[\left(\bigoplus_{\ell'\in\gamma}\mathbf{\Psi}_{\ell'}\right)^{-1}\right]$ preserves the block diagonal structure of $\mathbf{\Psi}_\ell$.
\end{lemma}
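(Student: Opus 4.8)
The plan is to reduce to a single dataset $\gamma$, push the block-diagonal structure of $\mathbf{\Psi}_\ell$ through the permutation of the Rearrangement Lemma and the matrix inverse, and then observe that the stridewise-blockwise trace collapses it back to the original partition. Since the partition of the index set of $\mathbf{\Psi}_\ell$ into connected components is fixed, and since matrices that are block diagonal with respect to a common partition are closed under addition, it suffices to show that each individual summand $\mathrm{tr}^{d_{<\ell}^\gamma}_{d_{>\ell}^\gamma}\left[\left(\bigoplus_{\ell'\in\gamma}\mathbf{\Psi}_{\ell'}\right)^{-1}\right]$ preserves the block diagonal structure of $\mathbf{\Psi}_\ell$. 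Fix such a $\gamma$ with $\ell\in\gamma$; reordering if necessary, we may assume $\mathbf{\Psi}_\ell = \mathrm{diag}\left(\mathbf{\Psi}_{\ell,1},\ldots,\mathbf{\Psi}_{\ell,r}\right)$ with contiguous blocks, the $a$-th block having size $s_a$.

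First I would apply the Rearrangement Lemma (and the identical index-permutation argument applied to the terms with $\ell'\neq\ell$, which merely move an identity factor to the front) to obtain
\[
\mathbf{P}_{\ell\rightarrow 1}\left(\bigoplus_{\ell'\in\gamma}\mathbf{\Psi}_{\ell'}\right)\mathbf{P}_{\ell\rightarrow 1}^T = \mathbf{\Psi}_\ell\otimes\mathbf{I}_{d_{\backslash\ell}^\gamma} + \mathbf{I}_{d_\ell^\gamma}\otimes\mathbf{A},
\]
where $\mathbf{A} = \bigoplus_{\ell'\in\gamma,\ \ell'\neq\ell}\mathbf{\Psi}_{\ell'}$ acts on the remaining axes. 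Because $\mathbf{\Psi}_\ell$ is block diagonal, $\mathbf{\Psi}_\ell\otimes\mathbf{I}_{d_{\backslash\ell}^\gamma}$ is block diagonal with blocks $\mathbf{\Psi}_{\ell,a}\otimes\mathbf{I}_{d_{\backslash\ell}^\gamma}$, while $\mathbf{I}_{d_\ell^\gamma}\otimes\mathbf{A}$ is block diagonal with $d_\ell^\gamma$ identical blocks $\mathbf{A}$; the second partition refines the first, so the sum is block diagonal with blocks $\mathbf{B}_a := \mathbf{\Psi}_{\ell,a}\otimes\mathbf{I}_{d_{\backslash\ell}^\gamma} + \mathbf{I}_{s_a}\otimes\mathbf{A}$, each of size $s_a\, d_{\backslash\ell}^\gamma$. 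Since the matrix inverse preserves block diagonal structure, $\mathbf{P}_{\ell\rightarrow 1}\left(\bigoplus_{\ell'\in\gamma}\mathbf{\Psi}_{\ell'}\right)^{-1}\mathbf{P}_{\ell\rightarrow 1}^T$ is block diagonal with blocks $\mathbf{B}_a^{-1}$.

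Finally, recall from the proof of Lemma \ref{lem:conj} that $\mathrm{tr}^{d_{<\ell}^\gamma}_{d_{>\ell}^\gamma}\left[\mathbf{M}\right] = \mathrm{tr}_{d_{\backslash\ell}^\gamma}\left[\mathbf{P}_{\ell\rightarrow 1}\mathbf{M}\mathbf{P}_{\ell\rightarrow 1}^T\right]$ (using $d_{<\ell}^\gamma d_{>\ell}^\gamma = d_{\backslash\ell}^\gamma$). View the block-diagonal matrix $\mathbf{P}_{\ell\rightarrow 1}\left(\bigoplus_{\ell'\in\gamma}\mathbf{\Psi}_{\ell'}\right)^{-1}\mathbf{P}_{\ell\rightarrow 1}^T$ as a $d_\ell^\gamma\times d_\ell^\gamma$ array of $d_{\backslash\ell}^\gamma\times d_{\backslash\ell}^\gamma$ sub-blocks; its coarse block $a$ occupies $s_a$ consecutive array rows and columns, so the $(p,q)$ sub-block vanishes whenever $p$ and $q$ belong to different coarse blocks. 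As the $(i,j)$ entry of $\mathrm{tr}_{d_{\backslash\ell}^\gamma}\left[\cdot\right]$ is the trace of the $(j,i)$ sub-block, the result is zero off the coarse block diagonal, i.e. it is block diagonal with exactly the partition of $\mathbf{\Psi}_\ell$. Summing over all $\gamma$ with $\ell\in\gamma$ then yields the claim.

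The main obstacle is bookkeeping rather than any deep idea: one must verify that the partition of $\mathbf{\Psi}_\ell$, after being ``inflated'' by the factor $d_{\backslash\ell}^\gamma$ through the Kronecker product and the permutation $\mathbf{P}_{\ell\rightarrow 1}$, is coarse enough to be compatible with the $d_{\backslash\ell}^\gamma$-sized sub-blocks that the blockwise trace operates on, so that taking block traces returns precisely the original partition and nothing finer. This compatibility is exactly what the reordering-to-the-front step (the Rearrangement Lemma, together with the reduction of $\mathrm{tr}^a_b$ to an ordinary blockwise trace) is there to guarantee.
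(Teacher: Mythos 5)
Your proposal is correct and follows essentially the same route as the paper's proof: reduce to a single modality, write the (permuted) Kronecker sum as $\mathbf{\Psi}_\ell\otimes\mathbf{I}_{d_{\backslash\ell}}+\mathbf{I}_{d_\ell}\otimes\bigoplus_{\ell'\neq\ell}\mathbf{\Psi}_{\ell'}$, note that the second term's block partition refines the inflated partition of the first, invert, and let the blockwise trace collapse the blocks back to their original sizes. The only difference is presentational --- you carry the permutation $\mathbf{P}_{\ell\rightarrow 1}$ explicitly where the paper simply assumes without loss of generality that $\ell$ is the first axis.
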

\begin{proof}
    For notational convenience, and without loss of generality, assume that $\ell$ is the first axis in each of the modalities it appears in.  Furthermore, observe that if for all $\gamma$ we have that  $\mathrm{tr}^{d_{<\ell}^\gamma}_{d_{>\ell}^\gamma}\left[\left(\bigoplus_{\ell'\in\gamma}\mathbf{\Psi}_{\ell'}\right)^{-1}\right]$ preserves the block diagonal structure, then the sum $\sum_{\gamma|\ell\in\gamma} \mathrm{tr}^{d_{<\ell}^\gamma}_{d_{>\ell}^\gamma}\left[\left(\bigoplus_{\ell'\in\gamma}\mathbf{\Psi}_{\ell'}\right)^{-1}\right]$ also preserves this structure.  Thus, we consider only one term in the sum and omit reference to $\gamma$.  Doing this, our problem reduces to showing that $\mathrm{tr}_{d_{\backslash\ell}}\left[\left(\mathbf{\Psi}_\ell \otimes \mathbf{I}_{d_{\backslash\ell}} + \mathbf{I}_{d_\ell} \otimes \bigoplus_{\ell'\neq\ell}\mathbf{\Psi}_{\ell'}\right)^{-1}\right]$ preserves the block diagonal structure.

    While $\mathbf{\Psi}_\ell \otimes \mathbf{I}_{d_{\backslash\ell}}$ does not preserve the block diagonal structure in the strict sense (as its blocks have different sizes than those of $\mathbf{\Psi}_\ell$), it is still block diagonal and there is an obvious mapping from the blocks of one to the blocks of the other:

    \begin{align}
        \begin{bmatrix}
            \mathbf{\Psi}_\ell^1 & 0 & \ldots \\
            0 & \mathbf{\Psi}_\ell^2 &  \\
            \vdots &  & \ddots
        \end{bmatrix} \otimes \mathbf{I}_{d_{\backslash\ell}} &= \begin{bmatrix}
            \mathbf{\Psi}_\ell^1 \otimes \mathbf{I}_{d_{\backslash\ell}} & 0 & \ldots \\
            0 & \mathbf{\Psi}_\ell^2 \otimes \mathbf{I}_{d_{\backslash\ell}} &  \\
            \vdots &  & \ddots
        \end{bmatrix}
    \end{align}

    Suppose block $\mathbf{\Psi}_i$ has size $s_i$, then the size of the blocks in $\mathbf{\Psi}_\ell \otimes \mathbf{I}_{d_{\backslash\ell}}$ is $s_i \times d_{\backslash\ell}$.  The second term in the sum is also block diagonal:

    \begin{align}
        \begin{bmatrix}
            1 & 0 & \ldots \\
            0 & 1 &  \\
            \vdots &  & \ddots
        \end{bmatrix} \otimes \bigoplus_{\ell'\neq\ell}\mathbf{\Psi}_{\ell'} &= \begin{bmatrix}
            \bigoplus_{\ell'\neq\ell}\mathbf{\Psi}_{\ell'} & 0 & \ldots \\
            0 & \bigoplus_{\ell'\neq\ell}\mathbf{\Psi}_{\ell'} &  \\
            \vdots &  & \ddots
        \end{bmatrix}
    \end{align}

    But these blocks are of size $1 \times d_{\backslash\ell}$; no larger than the blocks of $\mathbf{\Psi}_\ell \otimes \mathbf{I}_{d_{\backslash\ell}}$.  Thus, overall, $\mathbf{\Psi}_\ell \otimes \mathbf{I}_{d_{\backslash\ell}} + \mathbf{I}_{d_\ell} \otimes \bigoplus_{\ell'\neq\ell}\mathbf{\Psi}_{\ell'}$ has block diagonal structure with block sizes $s_i \times d_{\backslash\ell}$.  It follows that its inverse does as well.  The blockwise trace maps blocks of size $s_i \times d_{\backslash\ell}$ back to size $s_i$, completing the proof.
\end{proof}

\begin{theorem}
    \label{thm:cov-thresh}
    Set all elements of $\mathbf{S}_\ell$ whose absolute value is less than $\rho$ to 0.  This encodes a potentially disconnected graph.  Likewise, consider $\mathbf{\Psi}_\ell$ to be the estimated precision matrix for our model equipped with an L1 penalty of strength $\rho$.  This also encodes a potentially disconnected graph.  If we label the vertices by which disconnected component they are part of, then this labeling is the same in both procedures (the procedure with $\mathbf{S}_\ell$ and the procedure with $\mathbf{\Psi}_\ell$).
\end{theorem}
\begin{proof}
    This follows directly from the convexity of our objective as well as Lemmas \ref{lem:cov-thresh-generalized} and \ref{lem:sb-trace-block-diag-preserve}.
\end{proof}

\section{DEPENDENCIES}
\label{sec:dependencies}

\begin{algorithm}
\begin{algorithmic}[1]
\Require $\{\mathcal{D}^\gamma_i\}$, $\mathrm{tolerance}$
\Ensure $\{\mathbf{\Psi}_\ell\}$
\For $1 \leq \ell \leq K$
\State $\mathbf{S}_\ell \gets \sum_{\gamma | \ell \in \gamma}\frac{1}{n^\gamma}\sum_i^{n^\gamma}\mathrm{mat}_\ell\left[\mathcal{D}^\gamma_i\right]\mathrm{mat}_\ell\left[\mathcal{D}^\gamma_i\right]^T$
\State $\mathbf{V}_\ell \gets \mathrm{eigenvectors}[\mathbf{S}_\ell ]$
\State $\mathbf{e}_\ell \gets \mathrm{eigenvalues}[\mathbf{S}_\ell ]$
\EndFor
\State $\mathbf{\Lambda} \gets \begin{bmatrix}
    1 & ... & 1
\end{bmatrix}^T$
\State $\mu \gets 1$
\While \text{not converged}
    \For $1 \leq \ell \leq K$
        \State $\mathbf{G}^\gamma_\ell \gets \mathrm{proj}_{KS}\left[\left(\bigoplus_{\ell'\in\gamma}\mathbf{\Lambda}_\ell\right)^{-1}\right]$
        \State $\mathbf{\Lambda}'_\ell \gets \mathbf{\Lambda}_\ell - \mu\left[\mathbf{e}_\ell - \sum_{\gamma | \ell\in\gamma}\mathbf{G}^\gamma_\ell\right]$
    \EndFor
    \For $1 \leq \ell \leq K$
        \State $\mathbf{\Lambda}_\ell \gets \mathbf{\Lambda}'_\ell$
    \EndFor
    \For $\gamma \in \mathrm{modalities}$
        \If {$\sum_{\ell\in\gamma}\mathrm{min}\mathbf{\Lambda}_\ell < \mathrm{tolerance}$}
            \State \text{decrease $\mu$ to make sum far from zero}
        \EndIf
    \EndFor
\EndWhile
\For $1 \leq \ell \leq K$
    \State $\mathbf{\Psi}_\ell \gets \mathbf{V}_\ell \mathbf{\Lambda}_\ell \mathbf{V}_\ell^T$
\EndFor
\end{algorithmic}
\caption*{\textbf{The GmGM algorithm}}
\label{alg:GmGM}
\end{algorithm}

All tests and figures were generated on a Mac with an M1 chip and 8GB of RAM.  Along with our code, we provide an environment file ($\mathrm{environment.yml}$) that contains full details of all the dependencies used.  In our GitHub repository \href{https://github.com/AIStats-GmGM/GmGM/}{https://github.com/AIStats-GmGM/GmGM/}), we give precise and simple instructions on how to create a conda environment with the same setup as ours.  Most of the packages used were specific to the experiments we ran.  The dependencies necessary for our algorithm were Python 3.9 and NumPy 1.23.5.

\section{EXPERIMENTS}
\label{sec:experiments}

\subsection{Synthetic data}
\label{sec:synthetic-data}

For various precision-recall performances, see Figures \ref{fig:synthetic-matrix}, \ref{fig:synthetic-shared}, and \ref{fig:synthetic-tensor}.  For runtimes, see Figure \ref{fig:all-runtimes}.  We can see that our algorithm performs comparably to prior work; it does slightly worse without the restricted L1 penalty.  We focused on experimenting with the 1-sample case, as that is typically the case with real-world data (all of the five real-world experiments in Section \ref{sec:experiments} are an example of this).

For the runtimes, we cut off our tests once GmGM reached the one-minute runtime mark, or if the size of the synthetic data exceeded a gigabyte; see Table \ref{tab:runtime-bounds} for an overview, and Section \ref{sec:complexity} for the computational complexity.  Note that the runtime is almost linear in the total size of the dataset $d_\forall$ ($O(Kd_\forall^{1+\frac{1}{K}}$).  As $K$ increases, $\frac{1}{K}$ vanishes.  The cost of Gram matrix computation and eigendecomposition compared to the iterations also levels out; this is likely what explains the higher-order tensors all taking roughly the same amount of time to compute 1GB-sized datasets (~10 seconds).  These results clearly show that for high-axis-data ($K>3$), our algorithm can handle all smaller-than-RAM datasets in reasonable time on a personal computer.  For low-axis data ($K<4$), it may take more than a minute but, as we can see from our algorithm's performance on large real world datasets such as the 10x Genomics dataset, its runtime will still be measured in minutes, not hours, before reaching the limits of RAM.  When we consider the covariance-thresholding trick (Figure 7b in the main paper), we can push these limits even further.

\begin{table}[h]
    \centering
    \begin{tabular}{c|ccc}
         Amount of Axes ($K$) &  $d_\mathrm{GB}$ & $d_{\mathrm{min}}$ & Runtime at $d_\mathrm{GB}$\\
         \hline
         2 & 11,000 & 4,000 & ??? \\
         3 & 500 & 400 & ??? \\
         4 & 100 & ??? & 7.1 seconds \\
         5 & 40 & ??? & 9.5 seconds \\
         6 & 20 & ??? & 7.4 seconds
    \end{tabular}
    \caption{Comparing the point at which a $K$-axis tensor of double precision floats surpasses a gigabyte of data ($\sqrt[K]{\frac{1000000000}{64}} = d_\mathrm{GB}$) to the point at which GmGM's runtime surpasses a minute ($d_\mathrm{min}$).  Values are rounded.}
    \label{tab:runtime-bounds}
\end{table}

In all tests, our regularized algorithm took about the same time as the unregularized algorithm.  This is not unexpected, as our restricted regularization only affects the iterations, not the eigendecomposition.  In the real world, where the data may be `harder' than our synthetic data (i.e. take more iterations to converge), we would expect our regularized algorithm to be slower.

\begin{figure}[h]
    \centering
    \begin{subfigure}{0.45\textwidth}
        \includegraphics[width=\textwidth]{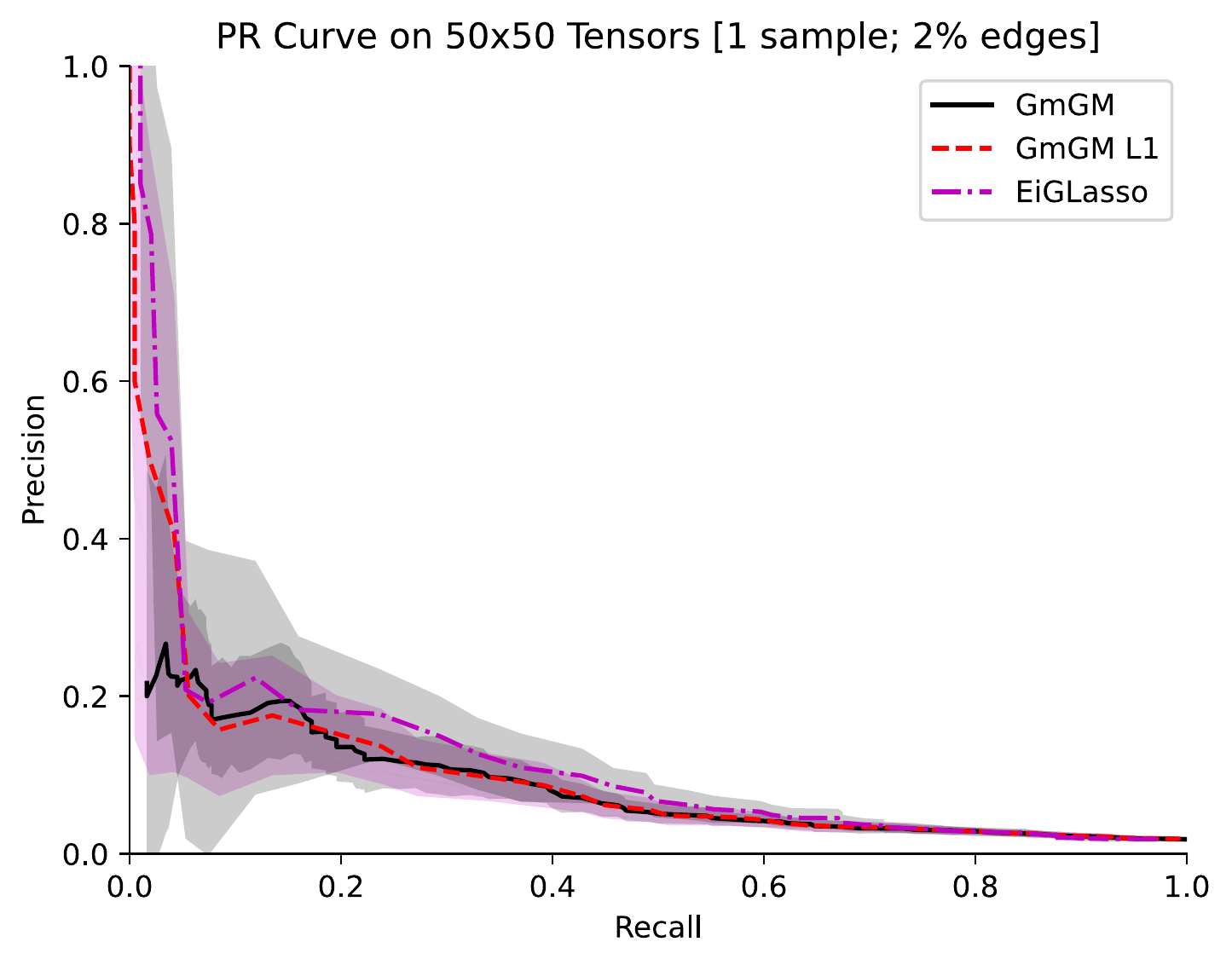}
        \caption{}
    \end{subfigure}
    \begin{subfigure}{0.45\textwidth}
        \includegraphics[width=\textwidth]{final-experiments/pr-curve-50x50-samples_1-ar_1.pdf}
        \caption{}
    \end{subfigure}
    \begin{subfigure}{0.45\textwidth}
        \includegraphics[width=\textwidth]{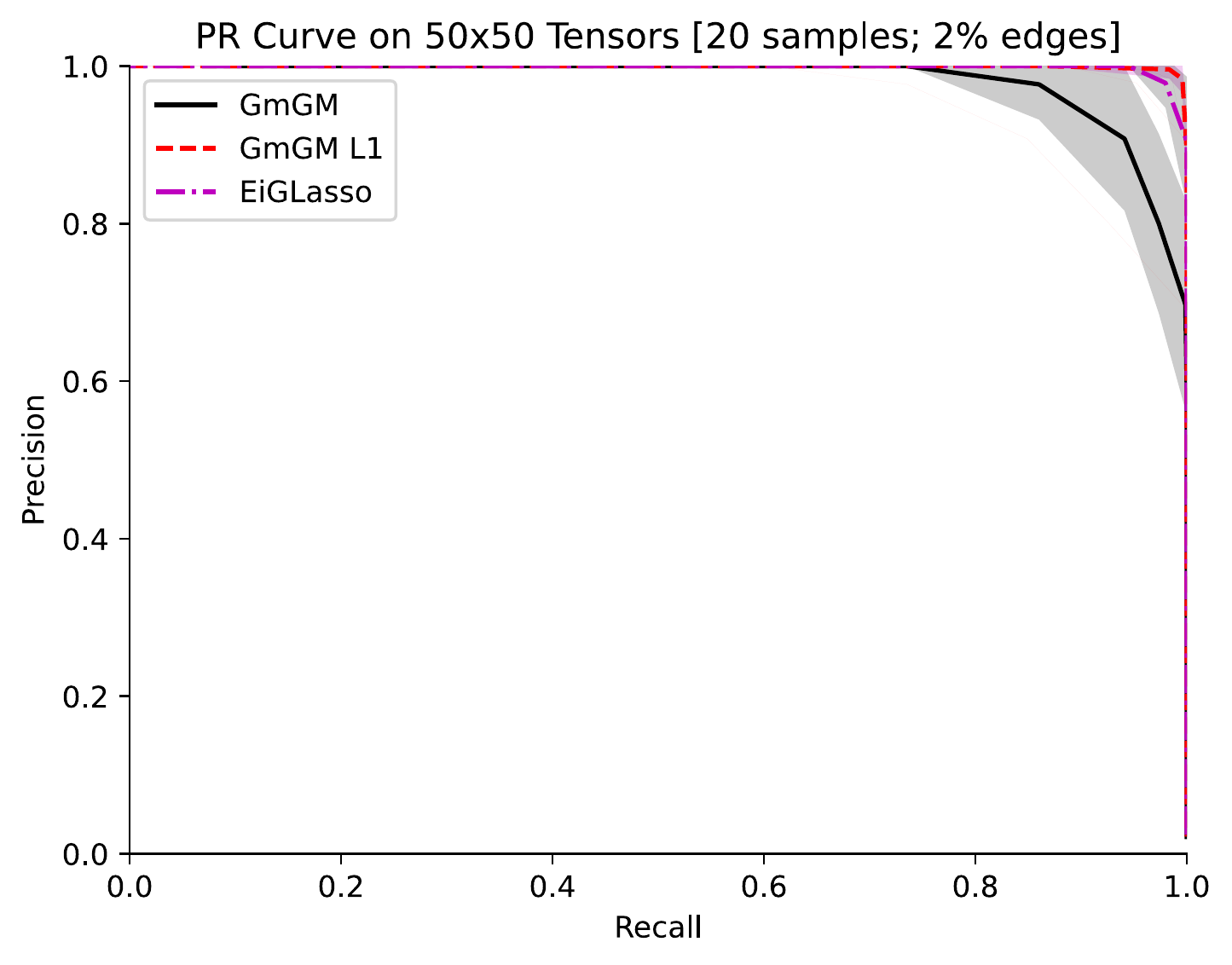}
        \caption{}
    \end{subfigure}
    \begin{subfigure}{0.45\textwidth}
        \includegraphics[width=\textwidth]{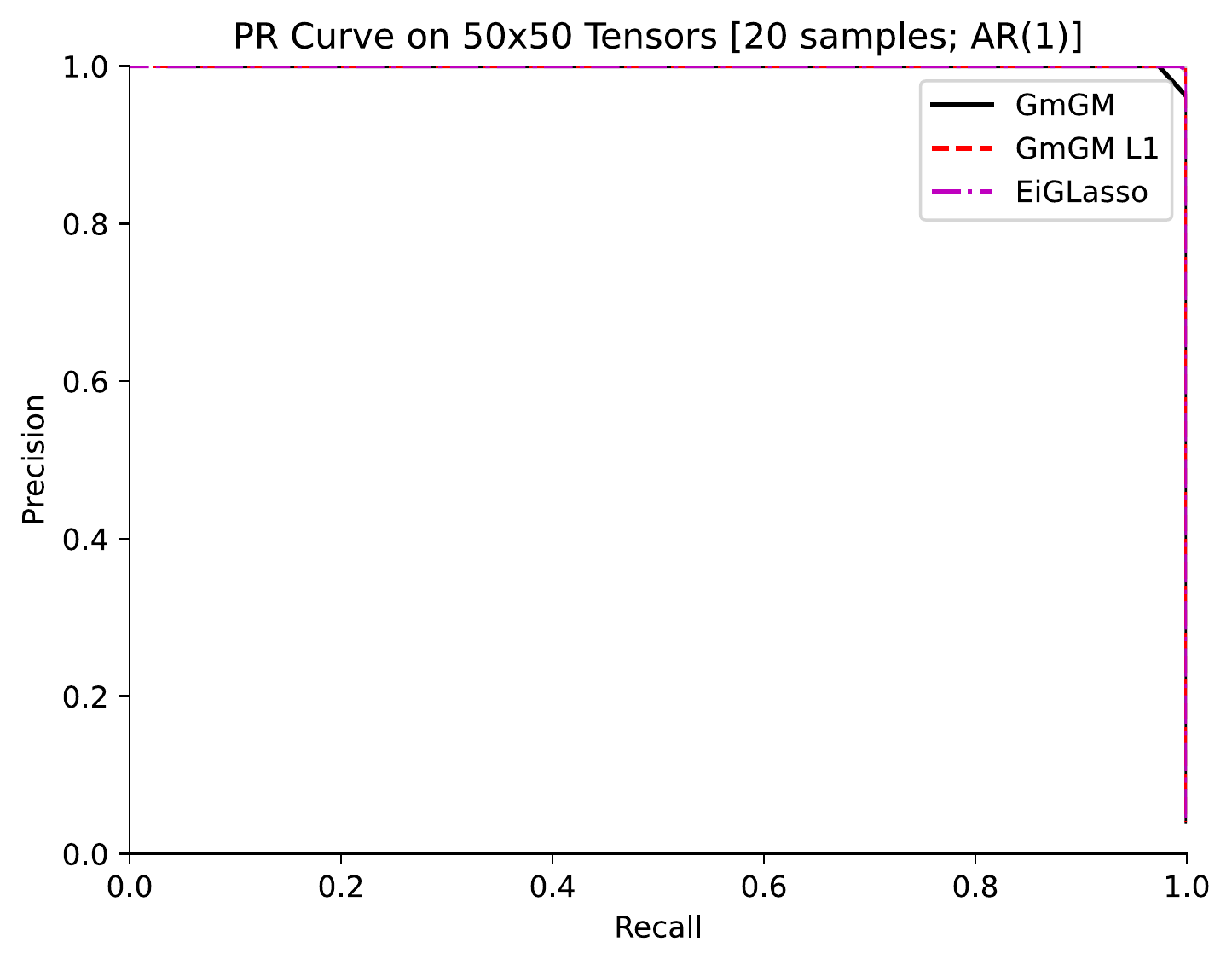}
        \caption{}
    \end{subfigure}
    \caption{Various precision-recall curves on synthetic matrix-variate data, for both an Erdos-Renyi with a 2\% per-edge probability and AR(1) distribution.}
    \label{fig:synthetic-matrix}
\end{figure}

\begin{figure}[h]
    \centering
    \includegraphics[width=0.5\textwidth]{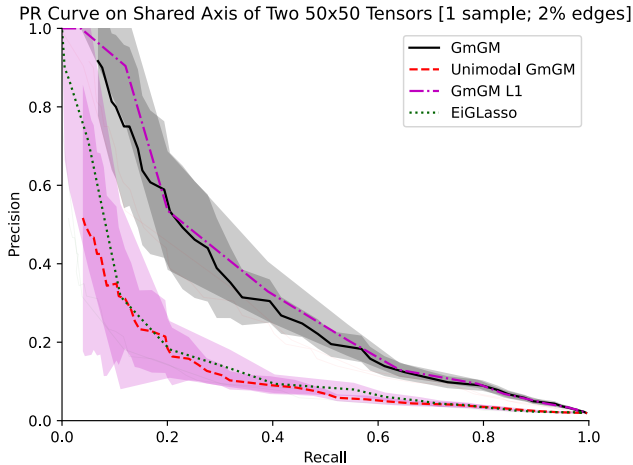}
    \caption{Performance on two 50x50 datasets with one shared axis, whose true graphs were drawn from an Erdos-Renyi distribution (2\% probability for each edge to exist, independently).  Unimodal GmGM and EiGLasso only consider one of the two datasets.}
    \label{fig:synthetic-shared}
\end{figure}

\begin{figure}[h]
    \centering
    \begin{subfigure}{0.45\textwidth}
        \includegraphics[width=\textwidth]{final-experiments/pr-curve-50x50x50-samples_1-percent_2.pdf}
        \caption{}
    \end{subfigure}
    \begin{subfigure}{0.45\textwidth}
        \includegraphics[width=\textwidth]{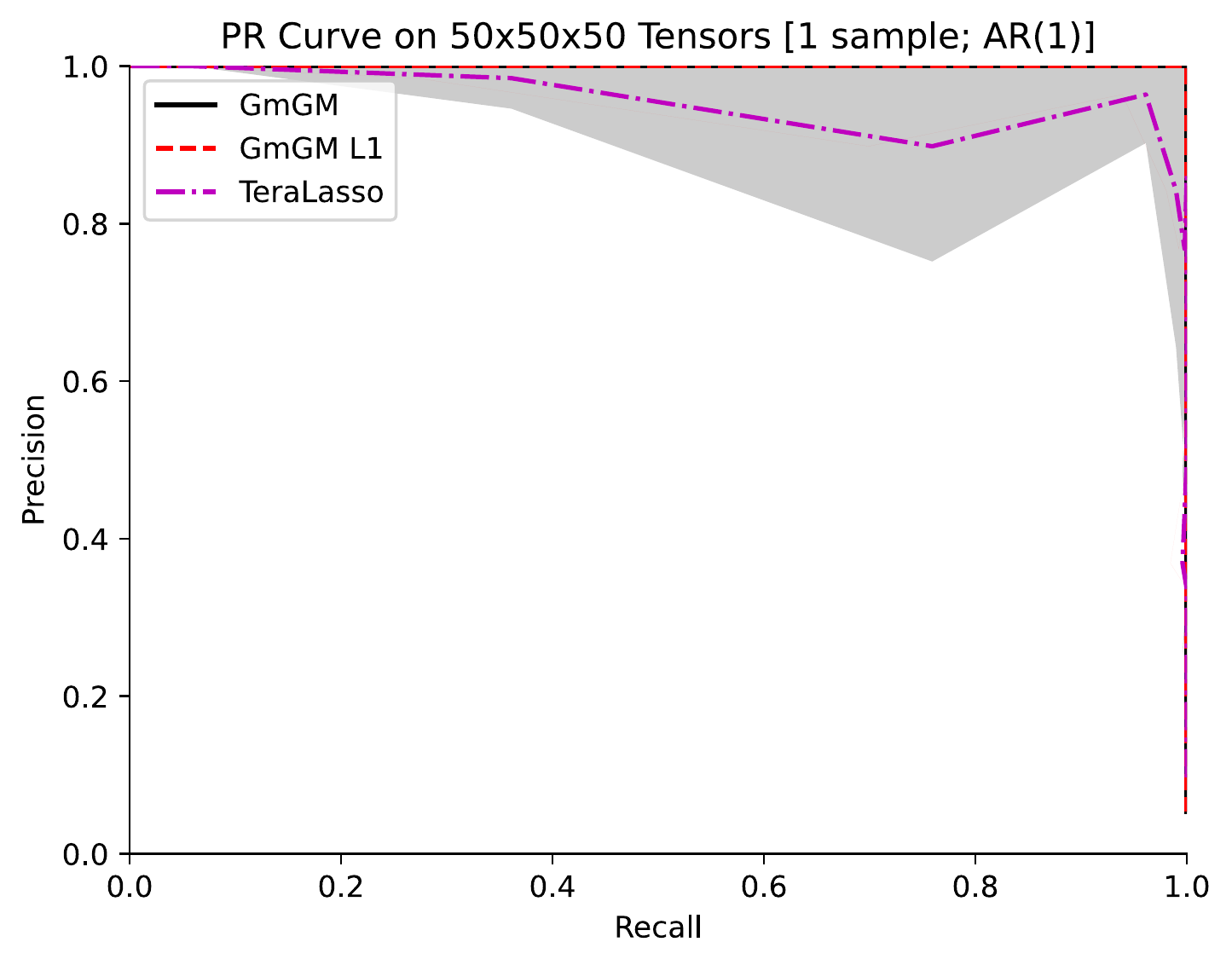}
        \caption{}
    \end{subfigure}
    \caption{Performance on tensor-variate (50x50x50) data whose true graphs were drawn from an Erdos-Renyi distribution (2\% edge probability) and an AR(1) distribution.}
    \label{fig:synthetic-tensor}
\end{figure}

\begin{figure}[h]
    \centering
    \begin{subfigure}{0.45\textwidth}
        \includegraphics[width=\textwidth]{final-experiments/matrix-runtime.pdf}
        \caption{}
    \end{subfigure}
    \begin{subfigure}{0.45\textwidth}
        \includegraphics[width=\textwidth]{final-experiments/tensor-runtime.pdf}
        \caption{}
    \end{subfigure}
    \begin{subfigure}{0.32\textwidth}
        \includegraphics[width=\textwidth]{final-experiments/4-axis-runtime.pdf}
        \caption{}
    \end{subfigure}
    \begin{subfigure}{0.32\textwidth}
        \includegraphics[width=\textwidth]{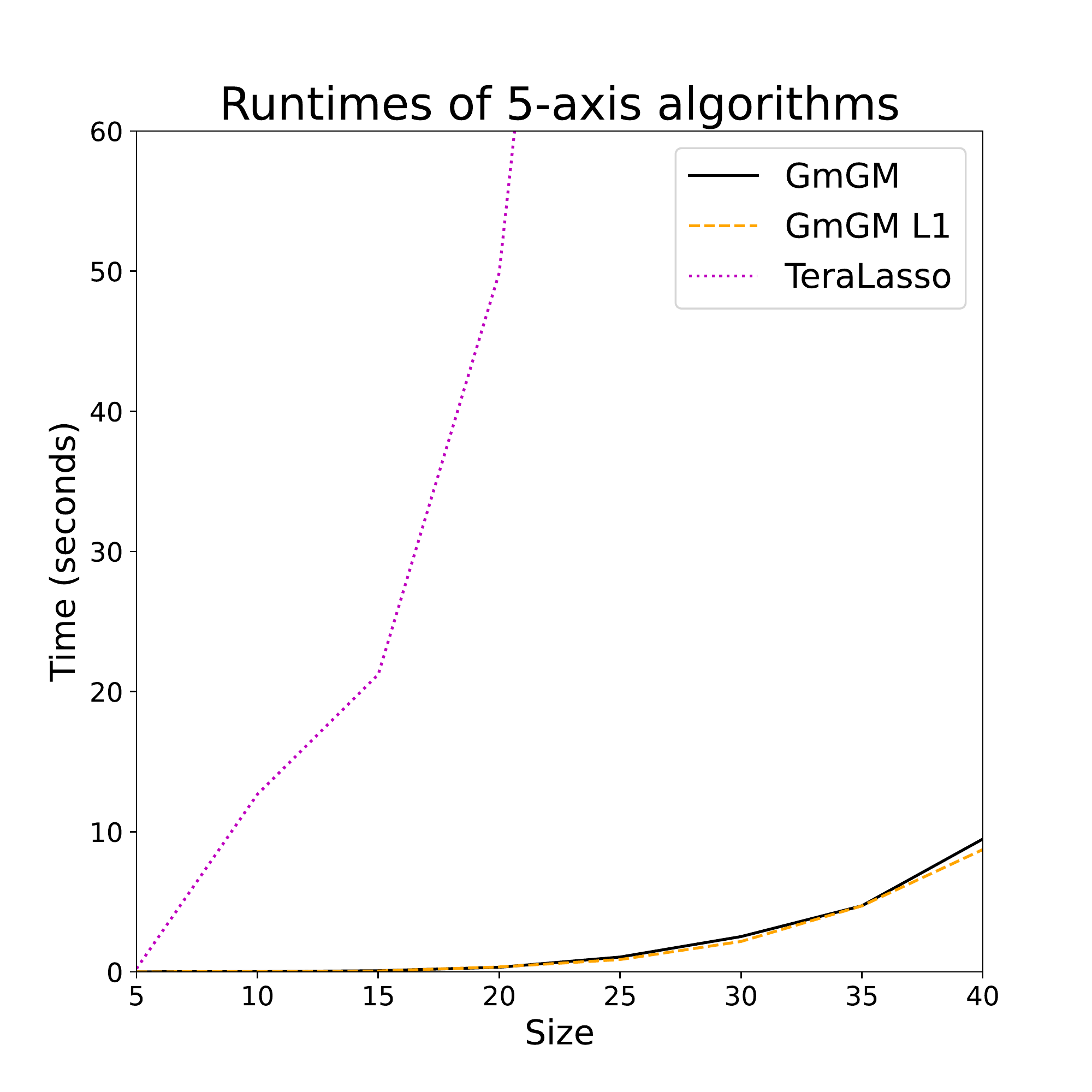}
        \caption{}
    \end{subfigure}
    \begin{subfigure}{0.32\textwidth}
        \includegraphics[width=\textwidth]{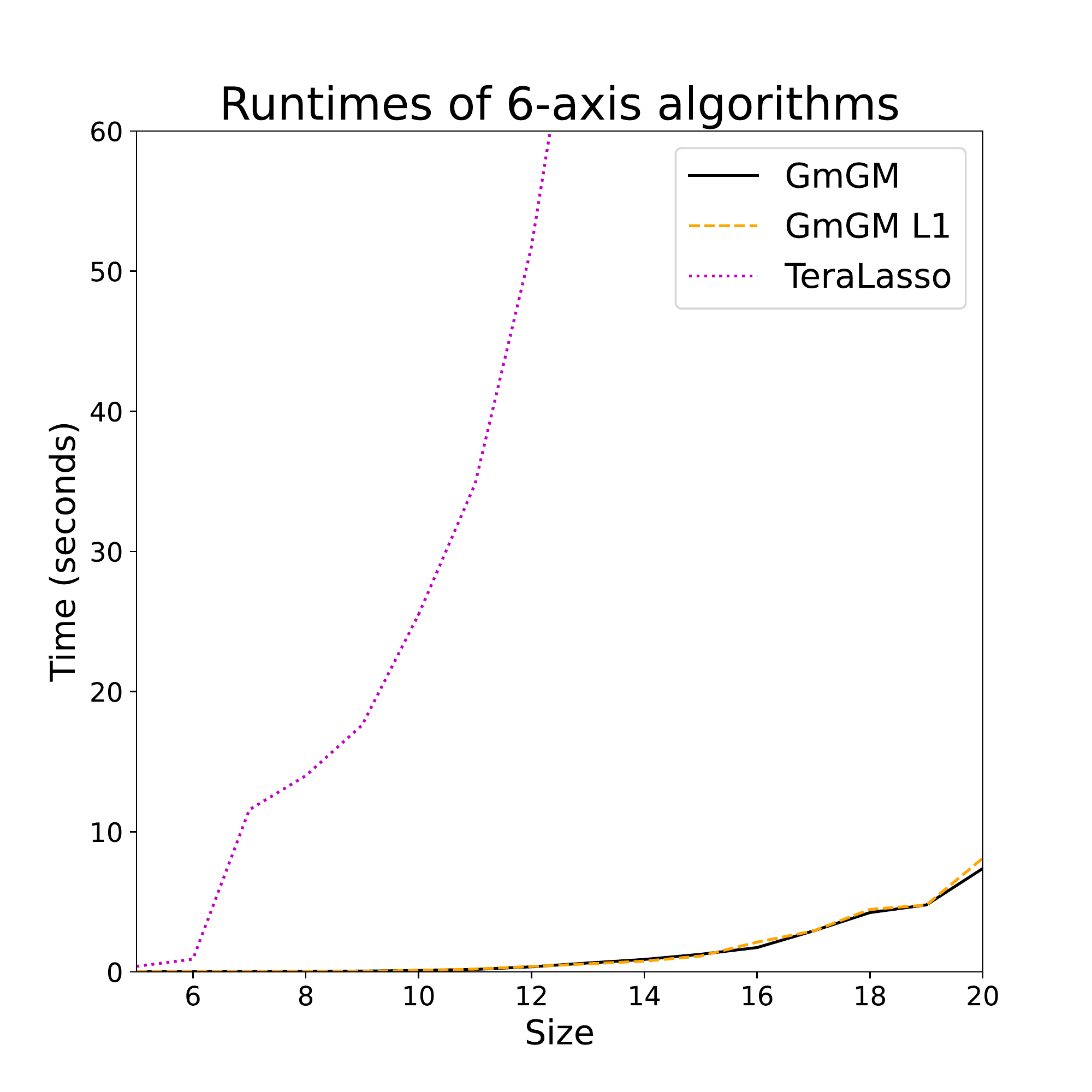}
        \caption{}
    \end{subfigure}
    \caption{Runtimes for 2-axis (a) 3-axis (b) 4-axis (c) 5-axis (d) and 6-axis (e) data, whose true graphs come from the Erdos-Renyi distribution with 2\% edge weights.}
    \label{fig:all-runtimes}
\end{figure}

\subsection{COIL video}
\label{sec:coil}

\subsubsection{The Dataset}

We downloaded the processed COIL-20 \parencite{nene_columbia_nodate} dataset, and tested our model on it.  It is available at \href{https://cave.cs.columbia.edu/repository/COIL-20}{https://cave.cs.columbia.edu/repository/COIL-20}; we used the `processed' data.  They do not formally license it, but state that it is available for academic purposes and provide the download links with no restrictions.

\subsubsection{Experiment Justification}

In the original paper introducing Kronecker-sum-structured normal distributions for graphical models, \textcite{kalaitzis_bigraphical_2013} showed that their BiGraphical Lasso (BiGLasso) could reconstruct the ordering of the frames of the video.  Thus, it seemed natural to choose this as an experiment, to show that our lack of L1 regularization was not a significant inhibitor for the model in the real world setting.

BiGLasso was a significant algorithm in that it was the first of this type of model and proved that it could work, but it was also incredibly slow (taking around 15 minutes on 100x100 matrix data) and limited to two axes.  Thus, they reduced the resolution of the image from 72 frames and 128x128 pixels to 36 frames and 9x9 pixels and flattened the rows and columns axis into one axis.  We wanted to show that our algorithm could handle the full resolution in negligible time while still reconstructing the video.

\subsubsection{Results}

We wanted to see if our model could understand the structure of a video, which we expected to consist of two linear graphs (for the rows and columns, i.e. each row is connected only to its neighbor rows) and a circular graph (for the frames, because the video is of a 360° rotation).  To generate these graphs, we ran our algorithm on the duck video from the dataset, and then greedily kept the largest edge from each vertex such that vertices in the final graph had at most two edges.  If we shuffled our data (shuffle rows, columns, and frames) and try to reconstruct it with these graphs, we get decent without the nonparanormal skeptic (Figure \ref{fig:duck-still}) and great results with it (Figure \ref{fig:duck-still-nonpara}).  While in both cases the algorithm has clearly reconstructed most of the duck, when using the nonparanormal skeptic our algorithm reconstructs it nearly perfectly.  The only human-noticeable issue is that it does not know which row/column the duck should start on - if we tessellate the image, the reconstruction would be indistinguishable from the original image.

\begin{figure}
    \begin{subfigure}[b]{0.5\textwidth}
        \centering
        \includegraphics[width=\textwidth]{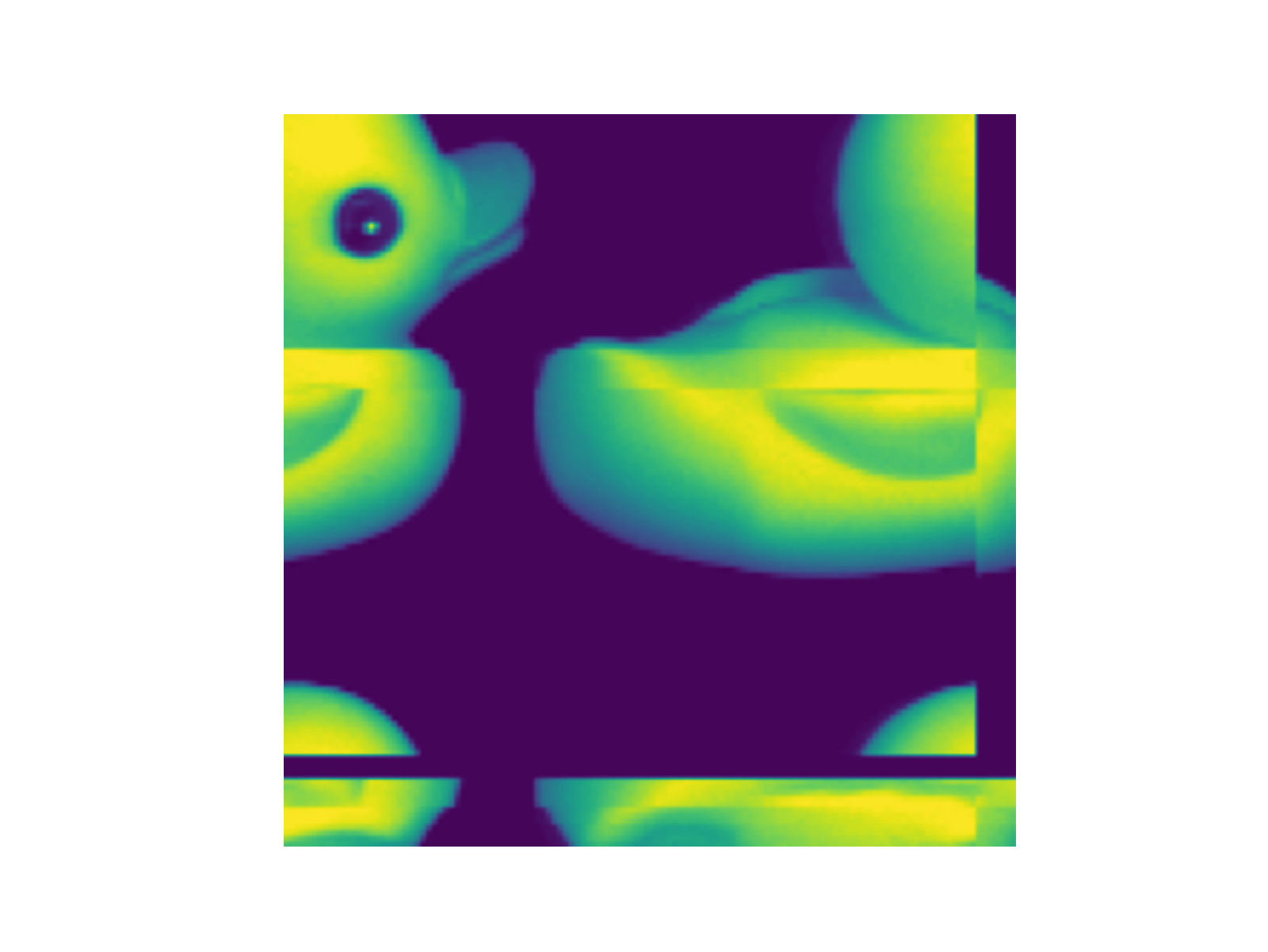}
        \caption{}
        \label{fig:duck-still}
    \end{subfigure}
    \begin{subfigure}[b]{0.5\textwidth}
        \centering
        \includegraphics[width=\textwidth]{final-experiments/gmgm-duck-nonpara.pdf}
        \caption{}
        \label{fig:duck-still-nonpara}
    \end{subfigure}
    \caption{A reconstruction of the COIL-20 duck video after shuffling the rows, columns, and frames, using GmGM.  (a) without the nonparanormal skeptic. (b) with the nonparanormal skeptic.}
\end{figure}

We can put a numeric value to the reconstruction, by measuring the percentage of the time that our reconstructed edges connect two adjacent rows/columns/frames.  Without the nonparanormal skeptic, we get an accuracy of 88\% for the rows, 94\% for the columns, and 93\% for the frames.  With it, the accuracy becomes 99\% for all three.  These values are without regularization; we did experiment with regularization, but it did not seem to help.  TeraLasso had the same results.

\begin{table}[t]
    \centering
    \begin{tabular}{c|c|c}
        Algorithm & Preprocessing & Runtime \\
        \hline
         GmGM & Center Data & 0.044 seconds \\
         GmGM & Nonparanormal Skeptic & 0.23 seconds \\
         TeraLasso & Center Data & 33 seconds \\
         TeraLasso & Nonparanormal Skeptic & 5.2 seconds \\
    \end{tabular}
    \caption{A comparison of the runtimes our algorithm and TeraLasso.  Runtimes were averaged over 10 runs, and include the preprocessing.}
    \label{tab:duck-runtimes}
\end{table}

On this dataset, our algorithm runs in 0.044 seconds whereas TeraLasso takes 33 seconds, showing that our algorithm is highly performant on real world data.  For data of this size, the nonparanormal skeptic is expensive compared to our algorithm - but our algorithm is still an order of magnitude faster than TeraLasso.  If we do not include the nonparanormal skeptic in our runtime calculation, the speed of GmGM is comparable regardless of the input. 
 However, convergence is much quicker for TeraLasso when it is fed the nonparanormal skeptic (Table \ref{tab:duck-runtimes}).

\subsection{EchoNet-Dynamic ECGs}
\label{sec:echonet}

\subsubsection{The Dataset}

We downloaded all of the EchoNet-Dynamic \parencite{ouyang_video-based_2020} data.  The dataset is available at \href{https://echonet.github.io/dynamic/index.html}{https://echonet.github.io/dynamic/index.html}; one has to sign a Research Use Agreement to access it.  It is available for personal, non-commercial research purposes only.  It prohibits distribution of portions of the data, which is why we do not show any frames from the echocardiograms in this paper.  The dataset has labels (for example, end systolic volume), but these labels are not relevant to the task of predicting the frames the heartbeats occur on.  For this, we picked the opening of the mitral valve as an arbitrary but easily visible occurrence, and manually recorded the frames of these openings for 5 videos in the dataset.  These videos were picked at random.

\subsubsection{Experiment Justification}

We chose the COIL duck video because of its use in prior work, but these methods were already known to perform quite well on it.  Thus, we wanted to try a similar but harder task.  Echocardiograms have a periodic structure (the heartbeat), and hence we would expect the learned graph of an echocardiogram to be similar to that of the duck video, with the addition of extra connections corresponding to the same phase of the heartbeat across different heartbeats.  We saw that our algorithm was able to find this more complicated structure, and prove it by using that structure to detect future heartbeats.

\subsubsection{Results}

We downloaded all of the EchoNet-Dynamic \parencite{ouyang_video-based_2020} data.  This dataset did not have heartbeats labeled, so we picked a few videos at random and labeled them ourselves as a proof of concept.  Specifically, we labeled every frame in which the mitral valve opened.  Our goal was to see if the graphs produced by our algorithm could predict this opening.  Table \ref{tab:echonet} contains the videos we picked, the labels we gave, and the labels we predicted.  When we used the nonparanormal skeptic as a preprocessing step, we got broadly similar empirical results (Table \ref{tab:echonet-nonpara}), although the precision matrices arguably more clearly show the periodic structure.

\begin{figure}
    \centering
    \includegraphics[width=0.5\textwidth]{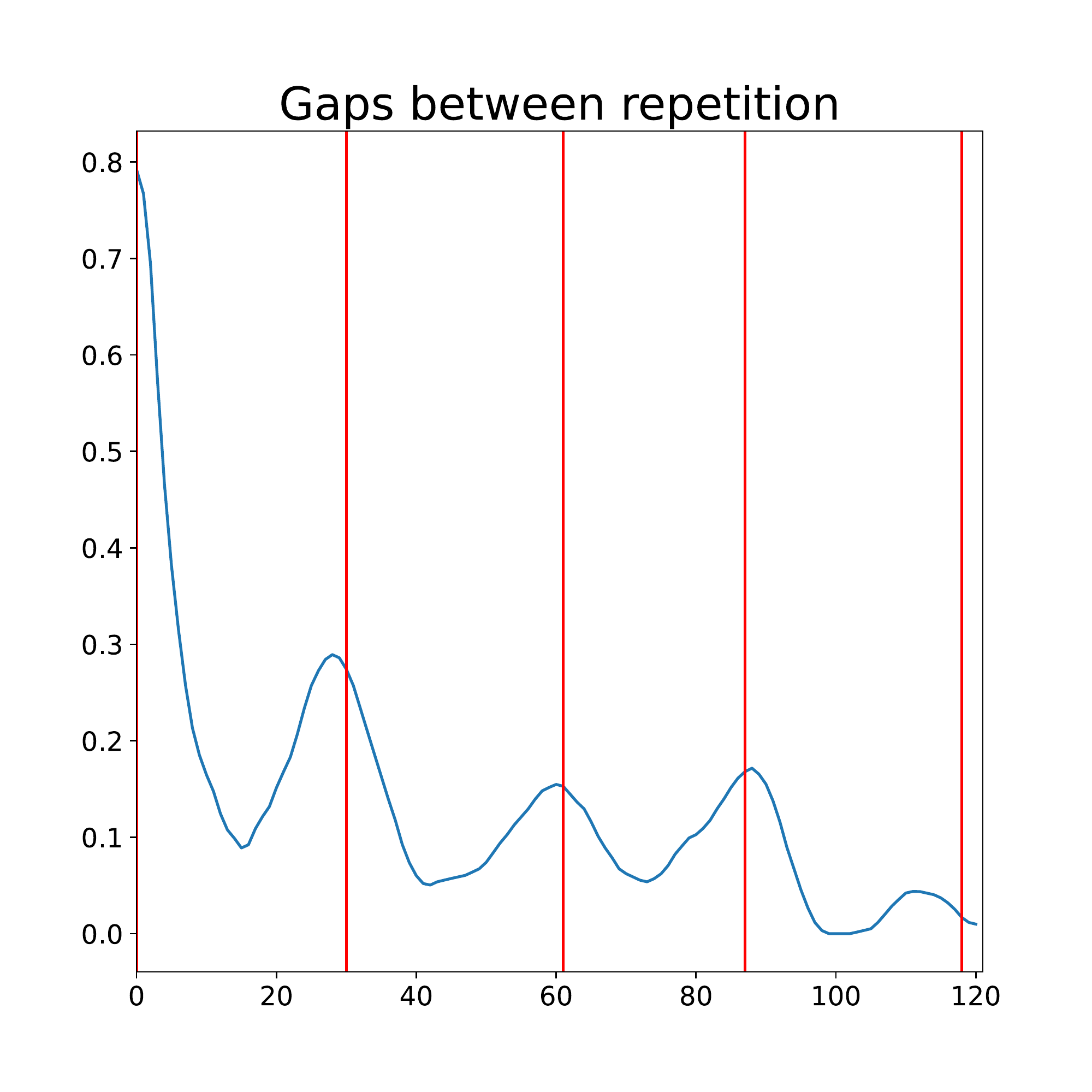}
    \caption{An example heartbeat offset detection, from EchoNet-Dynamic video 0XFE6E32991136338.  The blue curve represents our Gaussian-blurred diagonal mass (if x=10, it represents the blurred mass of the 10th diagonal to the right of the main diagonal).  The red lines represent the predicted peaks via a continuous wavelet transform peak detection algorithm.  These represent offsets from the first mitral valve opening.  For this video, the mitral valve opened on frame 17 and our first offset was on the 30th diagonal.  Hence, we would predict the second mitral valve opening to occur at frame 47 (which, in this case, was correct).}
    \label{fig:heartbeat_series}
\end{figure}

Mitral valve predictions were done by taking GmGM's output frames graph in precision matrix form, and measuring the mass along the diagonals.  We treated this as a time series (since each diagonal corresponds to an increasing time offset from all frames simultaneously).  We applied gaussian blur and then a continuous wavelet transform peak detection algorithm \parencite{du_improved_2006} to find which diagonals had the most mass (Figure \ref{fig:heartbeat_series}).  These represent the offsets corresponding with a heartbeat.  Given the first mitral valve opening and these offsets, we predict the remaining openings.  From Tables \ref{tab:echonet} and \ref{tab:echonet-nonpara}, we can see that our algorithm performs well.  Our algorithm is much quicker than TeraLasso; see Table \ref{tab:heart-runtimes}.

\begin{table}[t]
    \centering
    \begin{tabular}{c|c|c}
         Algorithm & Preprocessing & Runtime \\
         \hline
         GmGM & None & 0.067 seconds \\
         GmGM & Nonparanormal Skeptic & 0.33 seconds \\ 
         TeraLasso & None & 40 seconds \\
         TeraLasso & Nonparanormal Skeptic & 12 seconds \\ 
    \end{tabular}
    \caption{Runtimes of our algorithm and TeraLasso, with various preprocessing methods, on the EchoNet-Dynamic dataset.  Runtimes given are the average runtime over the five videos considered.}
    \label{tab:heart-runtimes}
\end{table}

\begin{table}[]
    \centering
    \begin{tabular}{cccc}
         Video ID & Label & Predicted & Precision Matrix\\ \hline
         0XFE6E32991136338 & [17, 47, 77, 106] & [17, 47, 78, 104] & \includegraphics[width=0.25\textwidth]{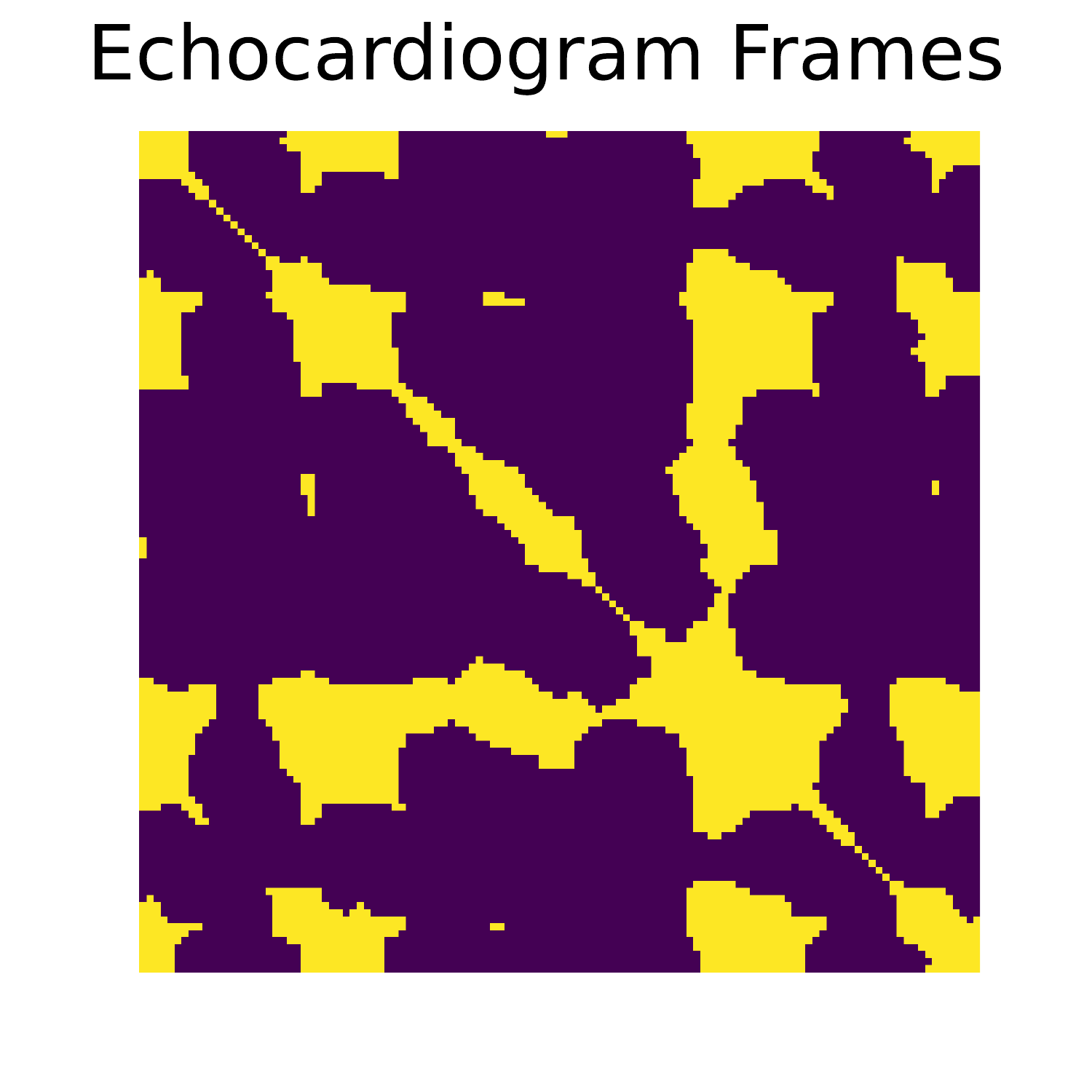}\\
         0XF072F7A9791B060 & [24, 56, 100] & [24, 59, 90] & \includegraphics[width=0.25\textwidth]{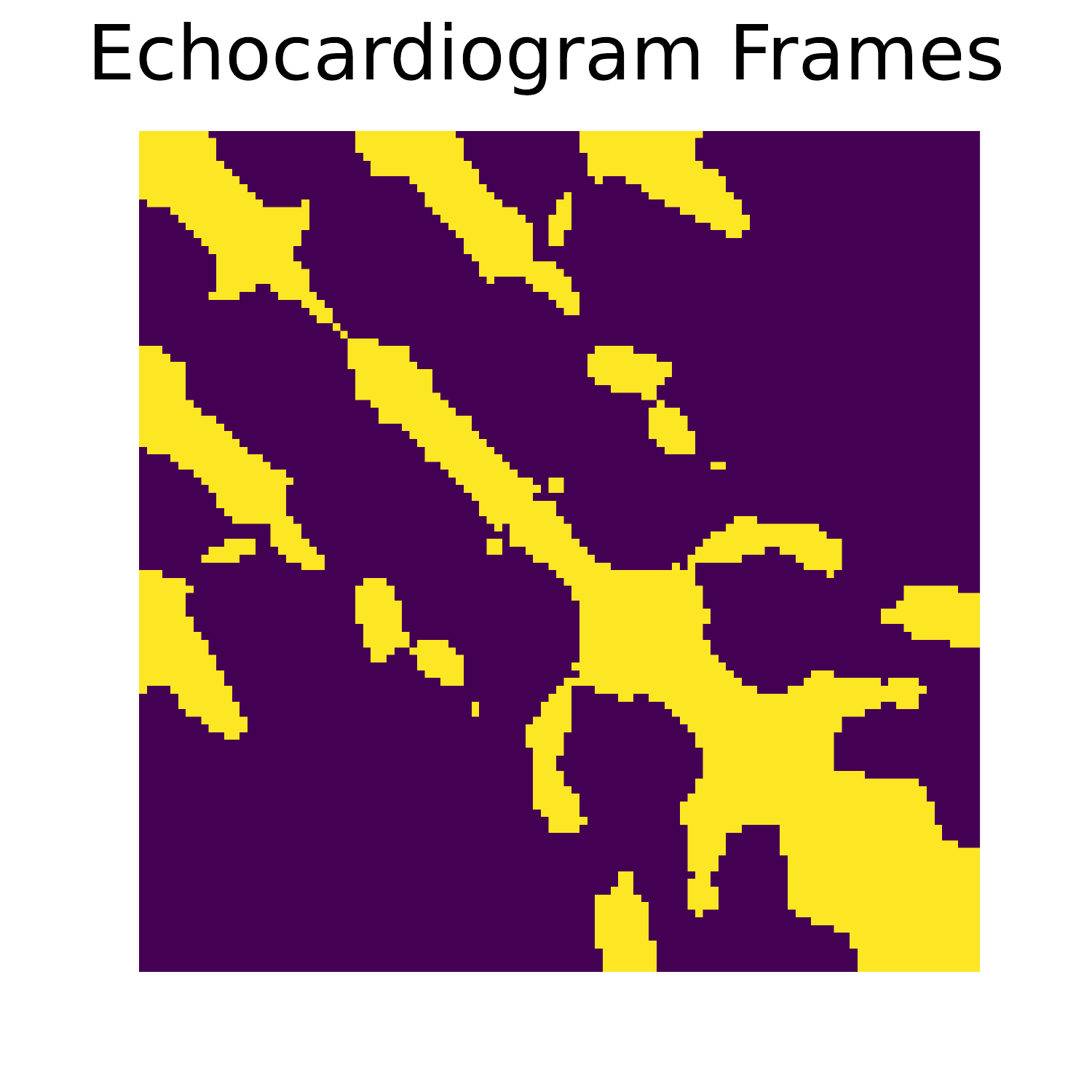}\\
         0XF70A3F712E03D87 & [22, 66, 110] & [22, 67, 111] & \includegraphics[width=0.25\textwidth]{echonet-plots/EchoNet-precision-0XF70A3F712E03D87.pdf}\\
         0XF60BBEC9C303C98 & [19, 67, 114, 162] & [19, 66, 115, 162] & \includegraphics[width=0.25\textwidth]{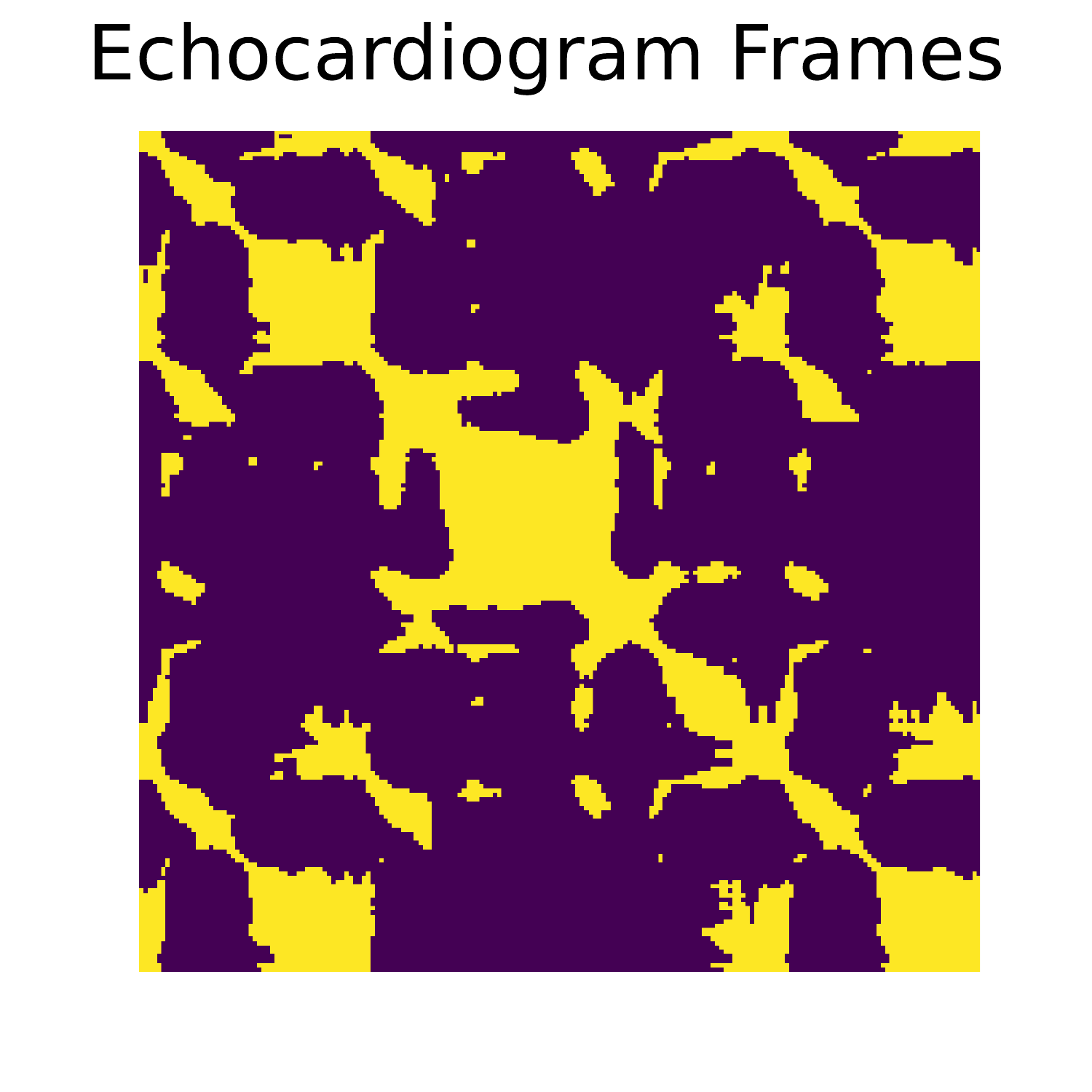}\\
         0XF46CF63A2A1FA90 & [25, 79, 134, 188] & [25, 80, 133, 184] & \includegraphics[width=0.25\textwidth]{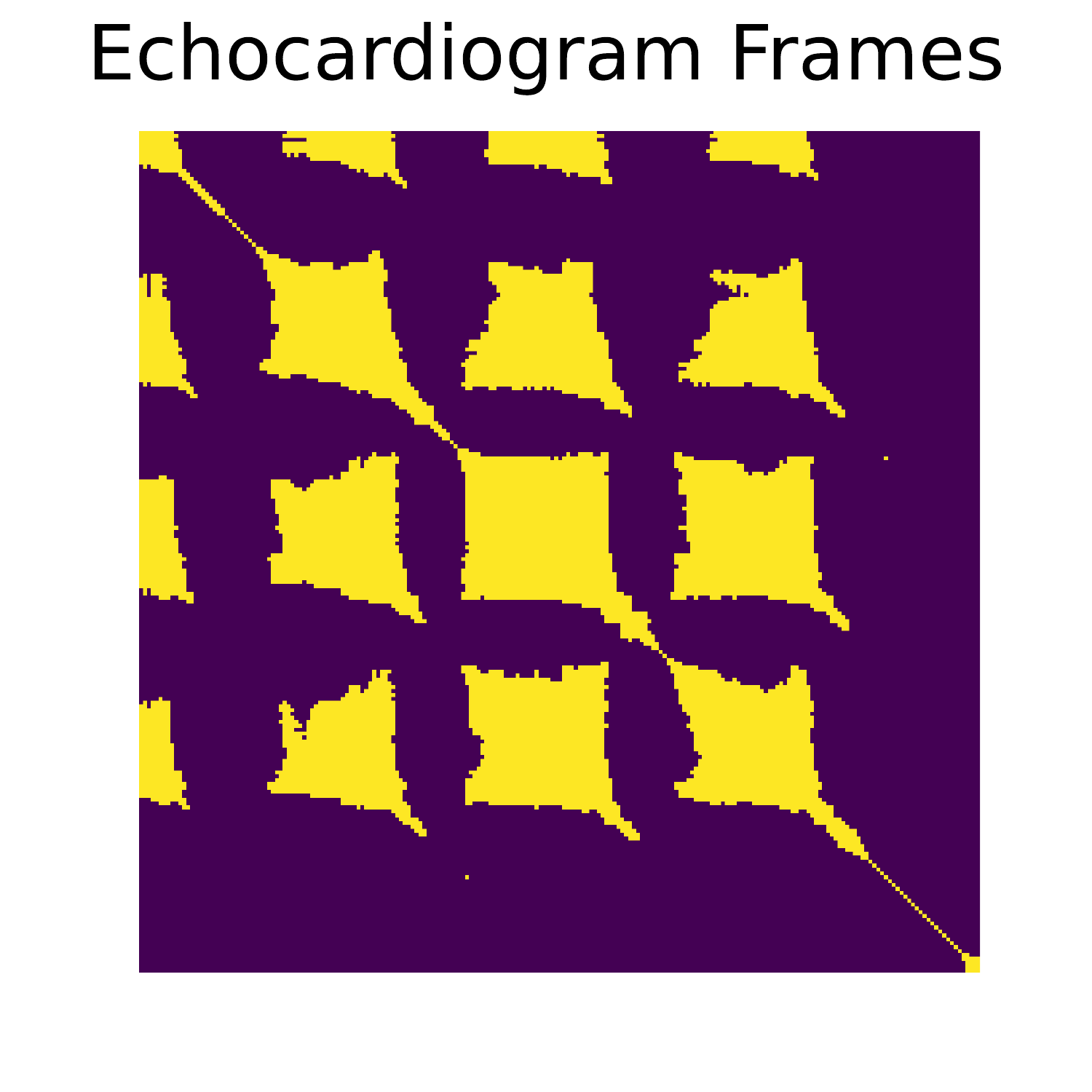}
    \end{tabular}
    \newline
    \caption{Mitral valve labelings and precision matrices for the EchoNet-Dynamic dataset.  The precision matrices, for the most part, seem to have clear off-diagonal structures, as expected, and the mitral valve prediction is generally quite good; it is only significantly off for the last opening in 0XF072F7A9791B060.  The precision matrices are shown with the top 20\% of the edges kept.}
    \label{tab:echonet}
\end{table}

\begin{table}[]
    \centering
    \begin{tabular}{cccc}
         Video ID & Label & Predicted & Precision Matrix\\ \hline
         0XFE6E32991136338 & [17, 47, 77, 106] & [17, 48, 77, 106] & \includegraphics[width=0.2\textwidth]{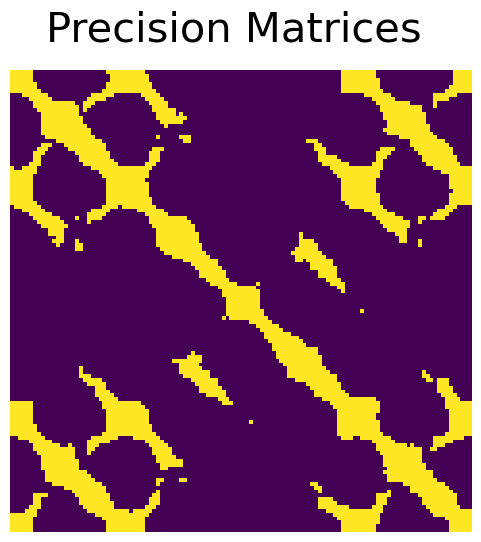}\\
         0XF072F7A9791B060 & [24, 56, 100] & [24, 58, 91] & \includegraphics[width=0.2\textwidth]{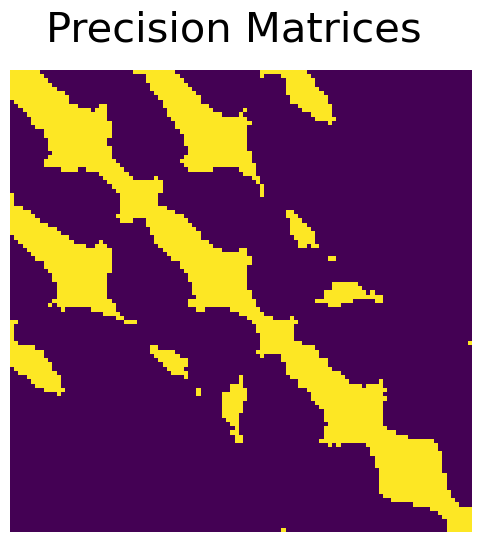}\\
         0XF70A3F712E03D87 & [22, 66, 110] & [22, 67, 112] & \includegraphics[width=0.2\textwidth]{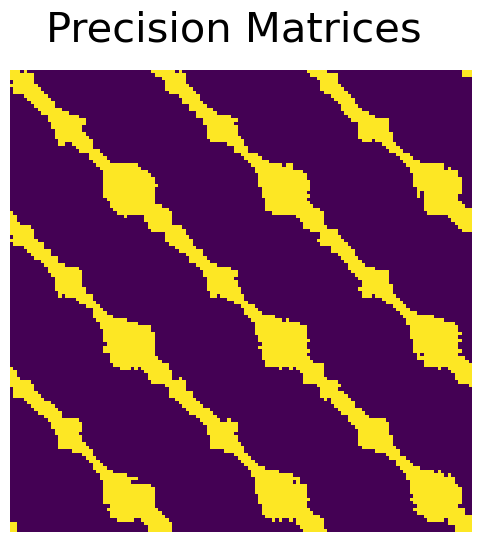}\\
         0XF60BBEC9C303C98 & [19, 67, 114, 162] & [19, 66, 116, 162] & \includegraphics[width=0.2\textwidth]{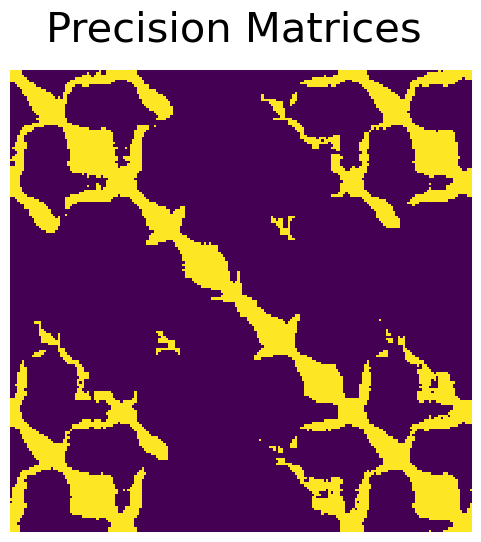}\\
         0XF46CF63A2A1FA90 & [25, 79, 134, 188] & [25, 80, 134, 188] & \includegraphics[width=0.2\textwidth]{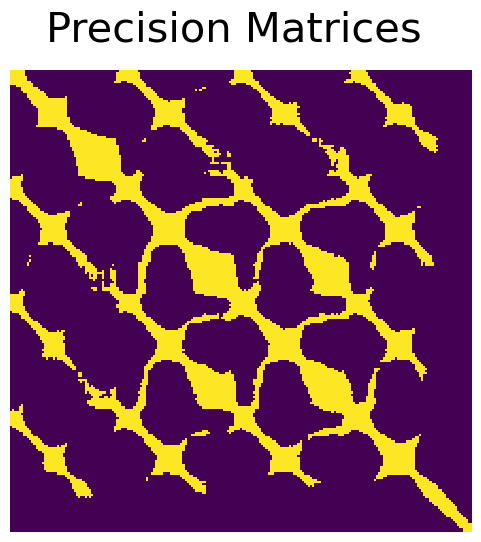}
    \end{tabular}
    \newline
    \caption{Mitral valve labelings and precision matrices for the EchoNet-Dynamic dataset, using the nonparanormal skeptic to preprocess.  The precision matrices, for the most part, seem to have clear off-diagonal structures, as expected, and the mitral valve prediction is generally quite good; it is only significantly off for the last opening in 0XF072F7A9791B060.  The precision matrices are shown with the top 20\% of the edges kept.}
    \label{tab:echonet-nonpara}
\end{table}

\subsection{Mouse embryo stem cell transcriptomics}
\label{sec:mouse}

\subsubsection{The Dataset}

We used the mouse embryo stem cell dataset E-MTAB-2805 \parencite{buettner_computational_2015}, avaliable at \href{https://www.ebi.ac.uk/biostudies/arrayexpress/studies/E-MTAB-2805}{https://www.ebi.ac.uk/biostudies/arrayexpress/studies/E-MTAB-2805}, under a \href{https://creativecommons.org/publicdomain/zero/1.0/legalcode}{Creative Commons Zero license}.  This dataset is by what stage of the cell cycle each cell was in (G1, S, and G2M).  We consider a subset of the genes available in this dataset, as that was the case in the scBiGLasso paper \parencite{li_scalable_2022}; this subset is given in the text file \href{https://github.com/luisacutillo78/Scalable_Bigraphical_Lasso/blob/main/CCdata/Nmythosis.txt}{on their github page for the algorithm}.

\subsubsection{Experiment Justification}

We chose this dataset as it had been considered by multi-axis methods before (albeit in a non-quantitative way) and had a relatively clear task associated with it (cell cycle stage clustering).

\subsubsection{Results - Thresholding Methods}

\begin{figure}
    \begin{subfigure}[b]{0.5\textwidth}
        \centering
        \includegraphics[width=\textwidth]{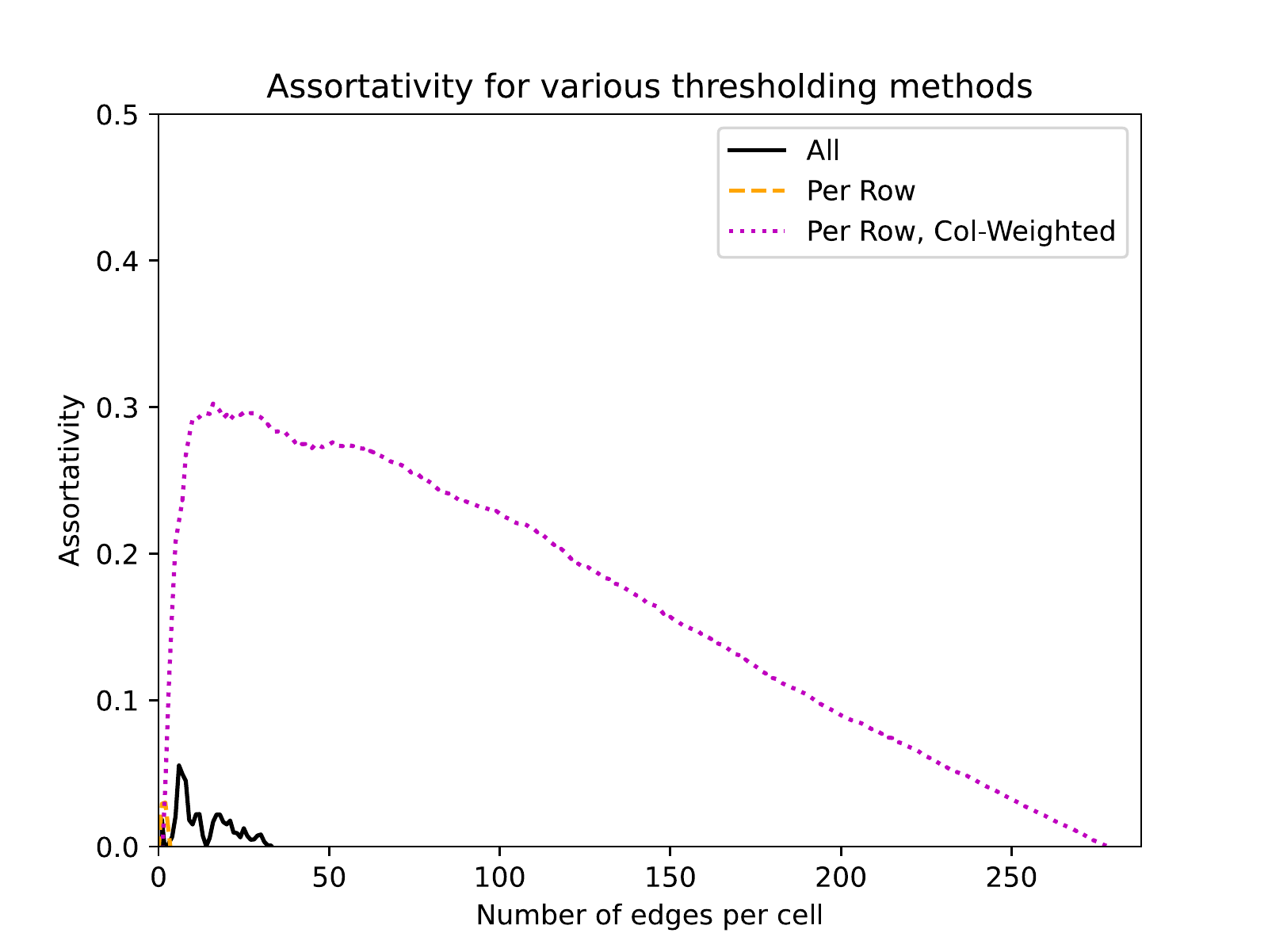}
        \caption{}
    \end{subfigure}
    \begin{subfigure}[b]{0.5\textwidth}
        \centering
        \includegraphics[width=\textwidth]{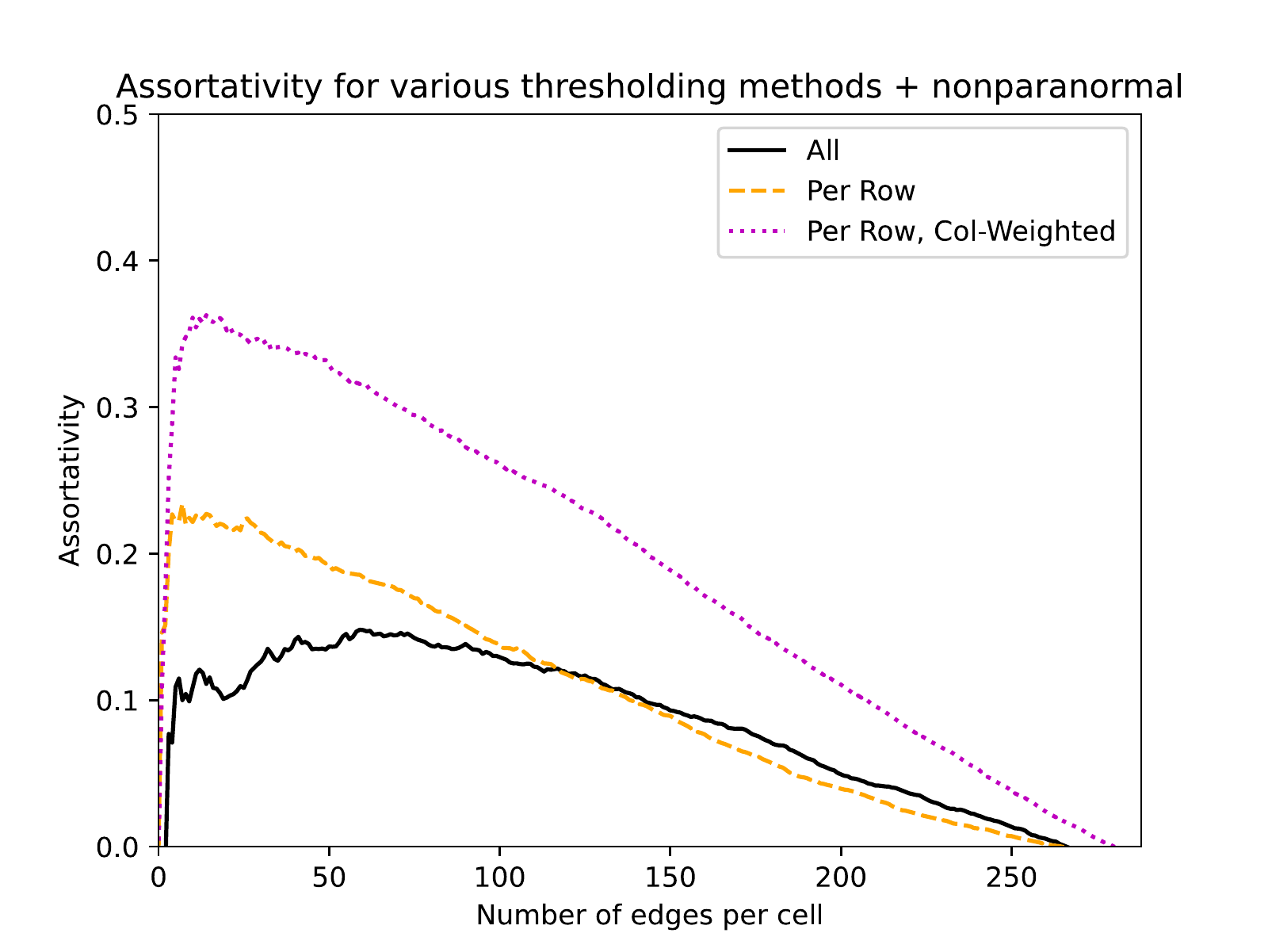}
        \caption{}
    \end{subfigure}
    \caption{(a) Various thresholding methods with log-transformed data.  (b) Various thresholding methods with nonparanormal-skeptic-transformed data.  In both cases, we did not center the data.}
    \label{fig:mouse-assortativity-prelim}
\end{figure}

We experimented with various thresholding methods on this dataset.  One type of thresholding is global, in which one knocks out all edges whose magnitude is below a certain value.  We found that this meant that clusters that tended to have high edge values were preserved relative to clusters who, while distinct, had a lower average edge value.  One attempt to fix this was by thresholding per row; this has the interpretation of keeping the top $n$ edges per vertex.  Per-row thresholding mitigated this problem for some datasets (such as the 10x Genomics one), but not this dataset.  The main problem was that some vertices tended to have higher edges than others - this meant that low-value vertices that were connected to a high-value vertex would always keep their connections to it, even if those connections were not very important from the perspective of the high-value vertex.  To address this, we used a strategy of normalizing each column to sum to one before thresholding per row.

\begin{figure}
    \centering
    \begin{subfigure}[b]{0.45\textwidth}
        \centering
        \includegraphics[width=\textwidth]{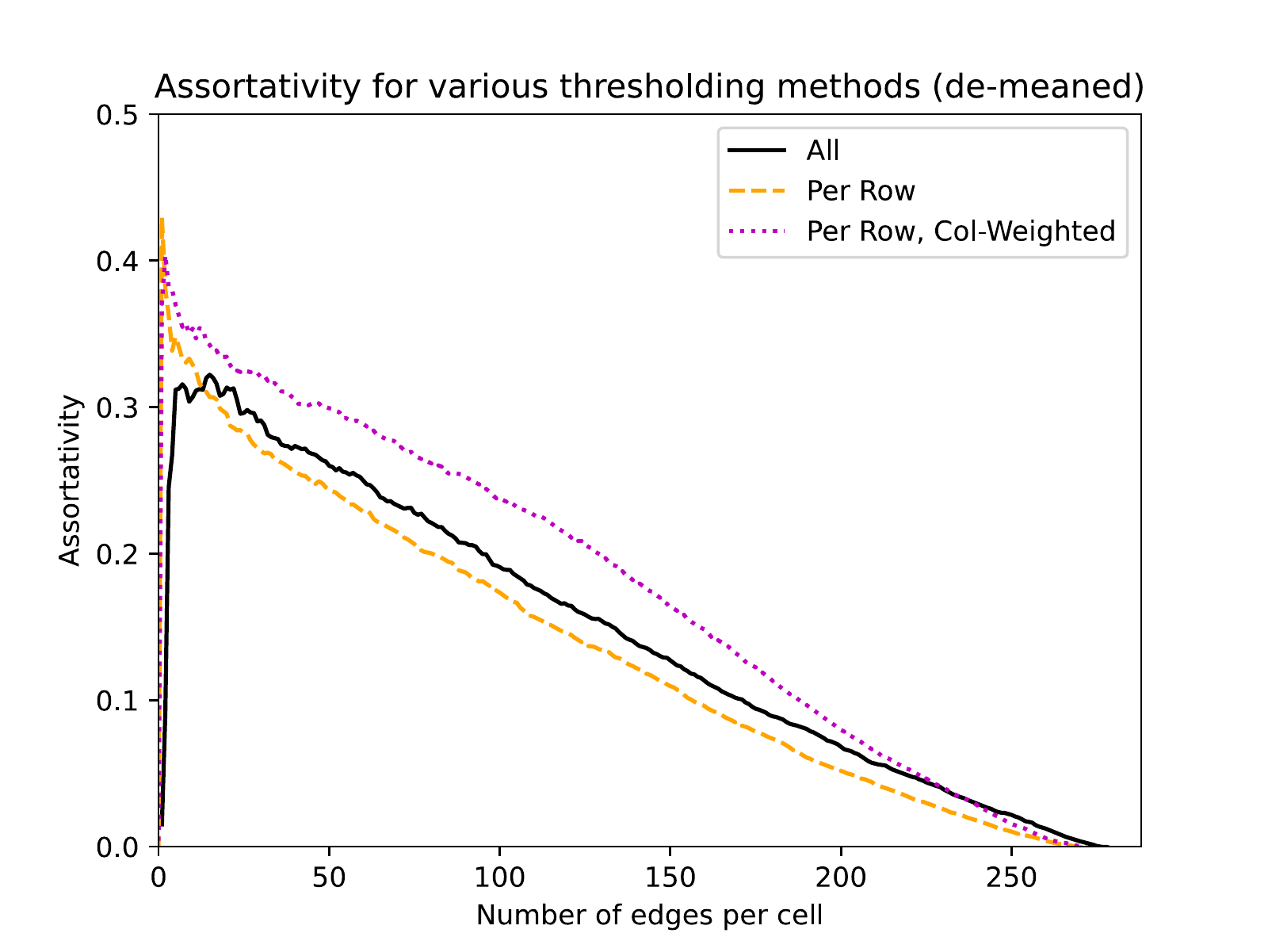}
        \caption{}
    \end{subfigure}
    \begin{subfigure}[b]{0.45\textwidth}
        \centering
        \includegraphics[width=\textwidth]{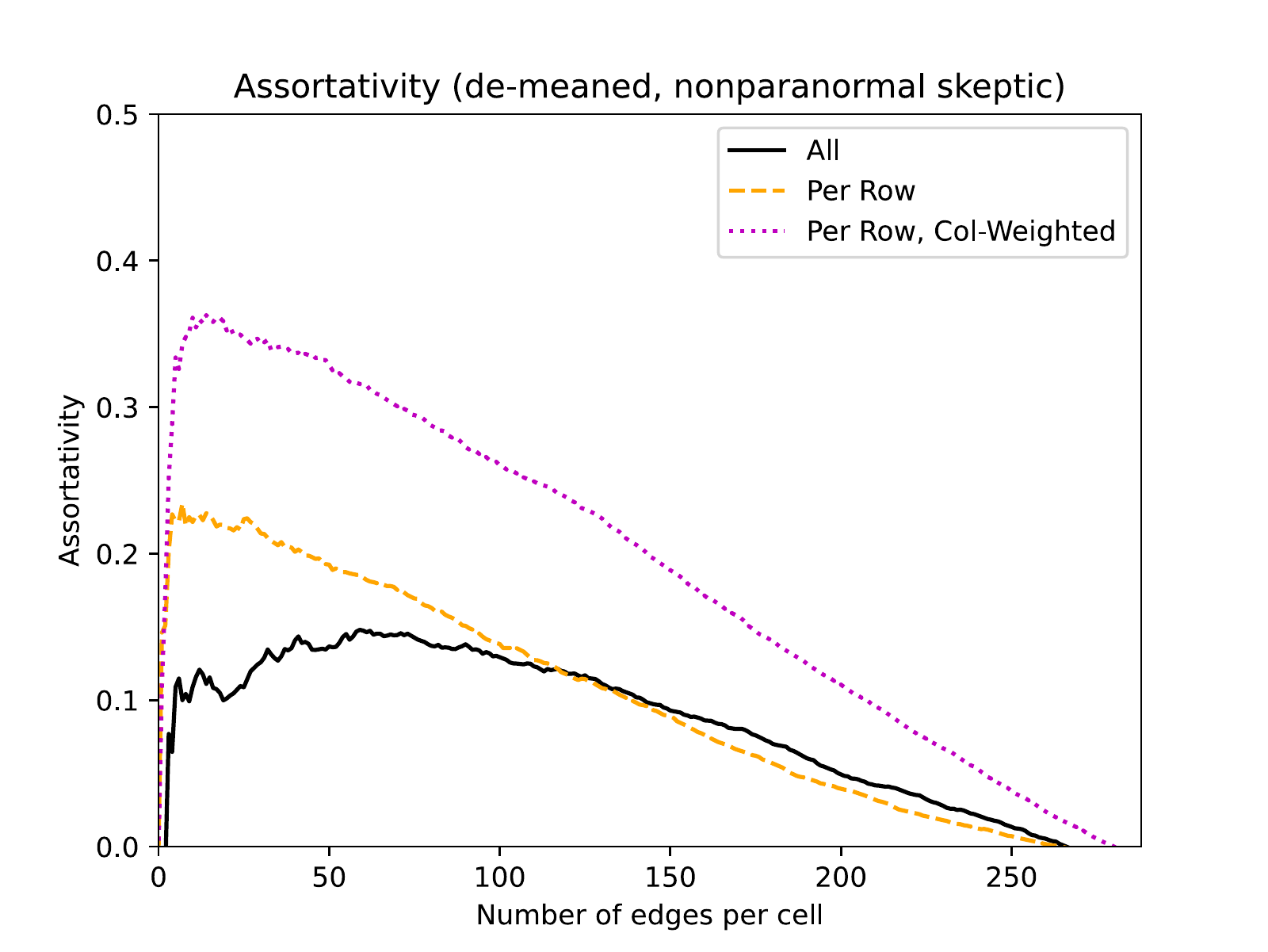}
        \caption{}
    \end{subfigure}
    \caption{Assortativity of the various methods once we center the data. (a) log-transformed (b) nonparanormal skeptic.}
    \label{fig:demeaned-mouse}
\end{figure}

From Figure \ref{fig:mouse-assortativity-prelim}, we can see that, with uncentered data, the third thresholding method is clearly better.  In fact, without the nonparanormal skeptic, the other methods are practically worthless.  However, if we centered the data as described in Section \ref{sec:centering}, all methods perform similarly; the third is still best.  This makes sense, as the model assumes zero-mean data.  Performance doesn't change under the nonparanormal transformation if we center the data, as centering is a monotonic transformation.  Notably, this means that the nonparanormal skeptic does not always yield the best performance.  We can see this in Figure \ref{fig:demeaned-mouse}.

\subsubsection{Results - Regularization}

We wanted to check whether our restricted L1 regularizer would improve performance.  To choose the best regularization parameter, we devised the following test:

\begin{enumerate}
    \item Run GmGM with regularization parameter $\rho$.
    \item Threshold according to the third method (normalize the columns, then pick the top $n$ cells from each row).  We let $n=1$ as we wanted to optimize the parameter for the very sparse case.
    \item Repeat for a range of values of $\rho$.
    \item Measure the percentage of within-group connections.  I.e, if 30\% of the connections from cells in S stage were to other cells in S stage, we would report 30\% for this metric.
    \item Pick the $\rho$ that optimizes the average percentage of within-group connections over all groups.  Run GmGM with that $\rho$.
    \item Calculate the assortativities over all thresholding methods.
\end{enumerate}

We chose this setup as it is a distinct but related task to assortativity.  Its results can be seen in Figure \ref{fig:mouse-reg-vary}.  The best parameters were around 0.0038 for the log-transformed data, and 0.0048 when using the nonparanormal skeptic.  Figure \ref{fig:mouse-reg-assort} demonstrates the advantage of regularization.

\begin{figure}
    \centering
    \begin{subfigure}[b]{0.45\textwidth}
        \includegraphics[width=\textwidth]{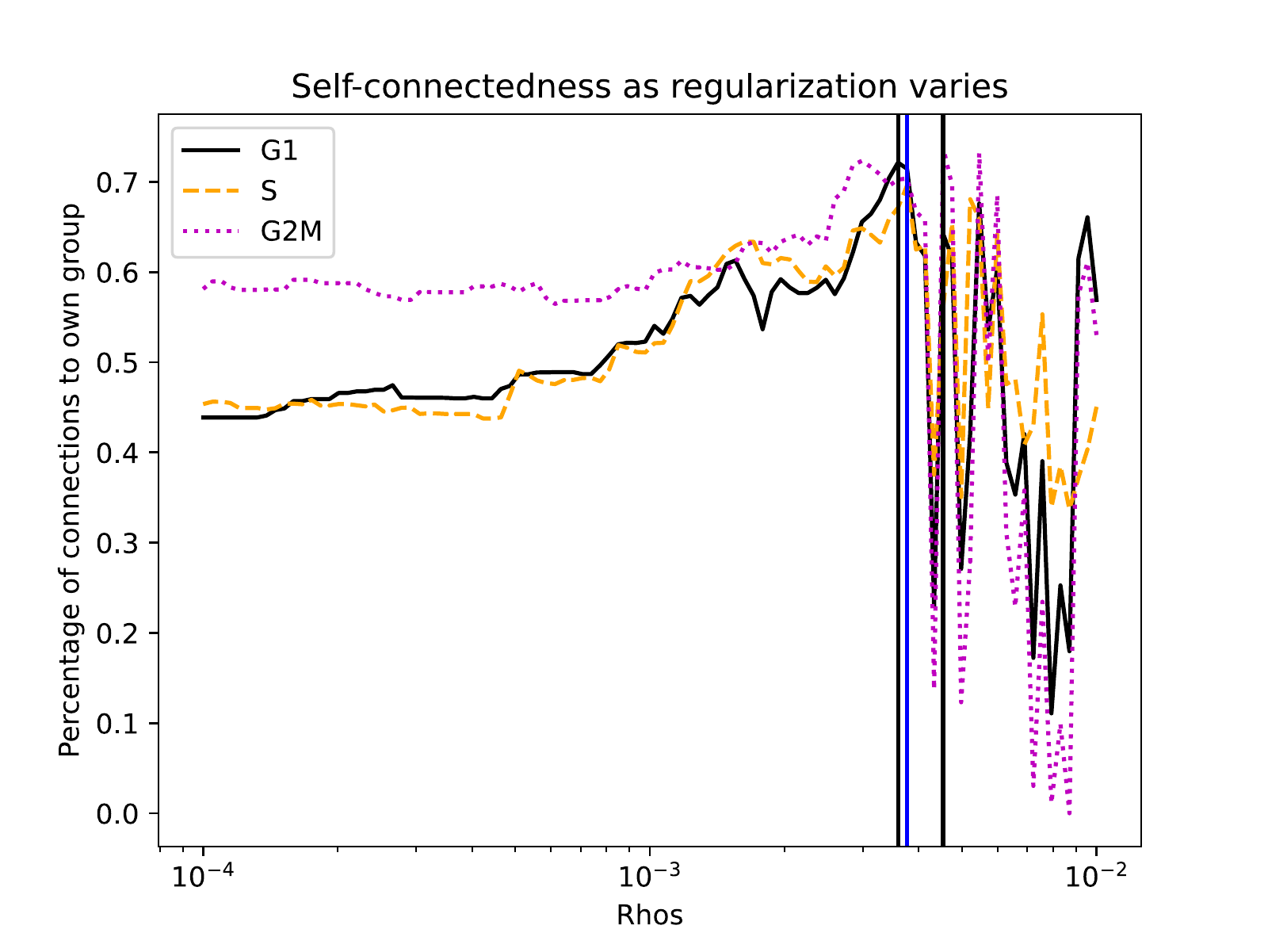}
        \caption{}
    \end{subfigure}
    \begin{subfigure}[b]{0.45\textwidth}
        \includegraphics[width=\textwidth]{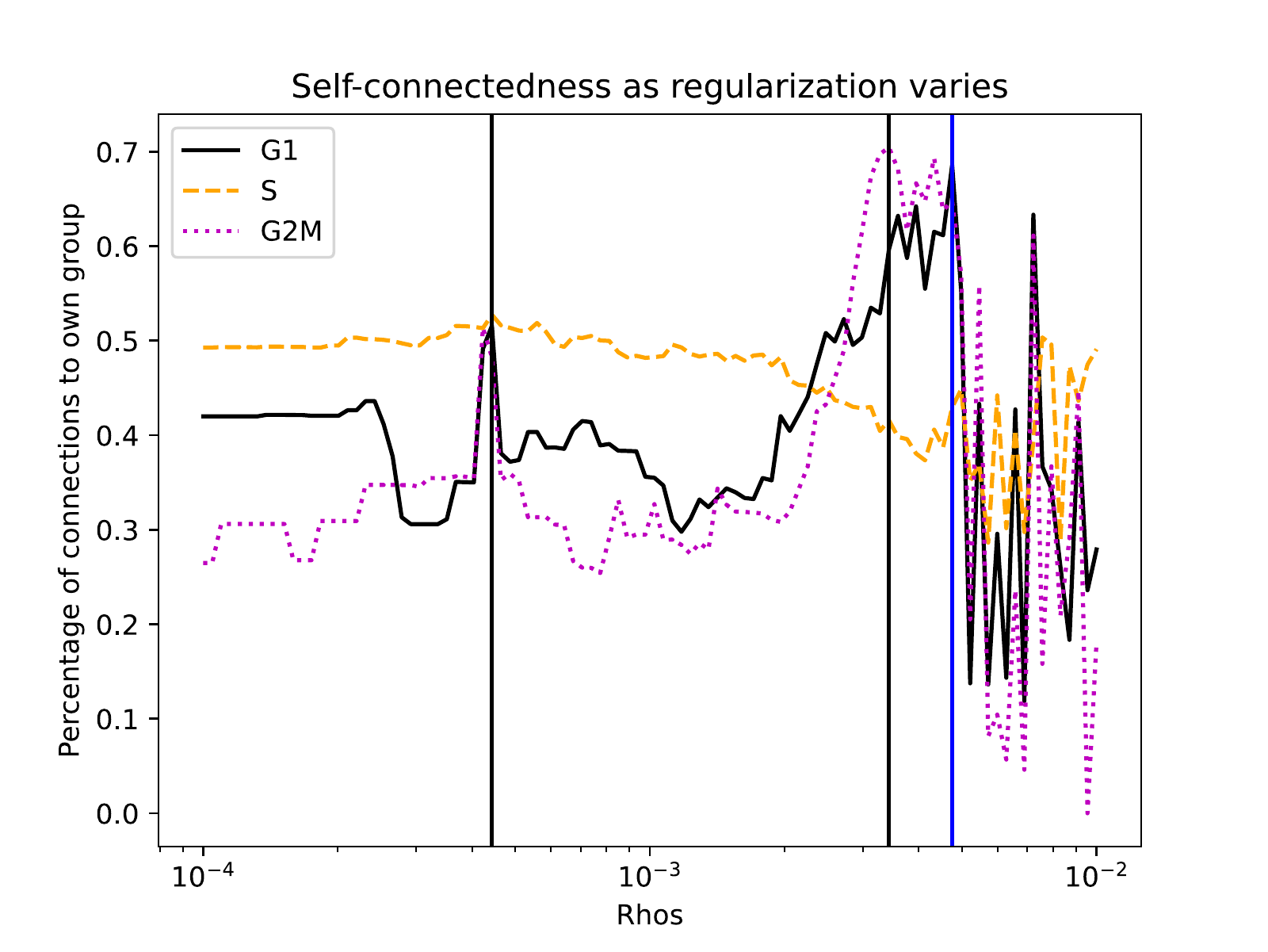}
        \caption{}
    \end{subfigure}
    \caption{The percentage of within-group connections, varied as the regularization parameter increases.  (a) with a log transform (b) with the nonparanormal skeptic.  Vertical lines represent the best values for each class; the blue vertical line represents the best value overall.}
    \label{fig:mouse-reg-vary}
\end{figure}

\begin{figure}
    \centering
    \begin{subfigure}[b]{0.45\textwidth}
        \includegraphics[width=\textwidth]{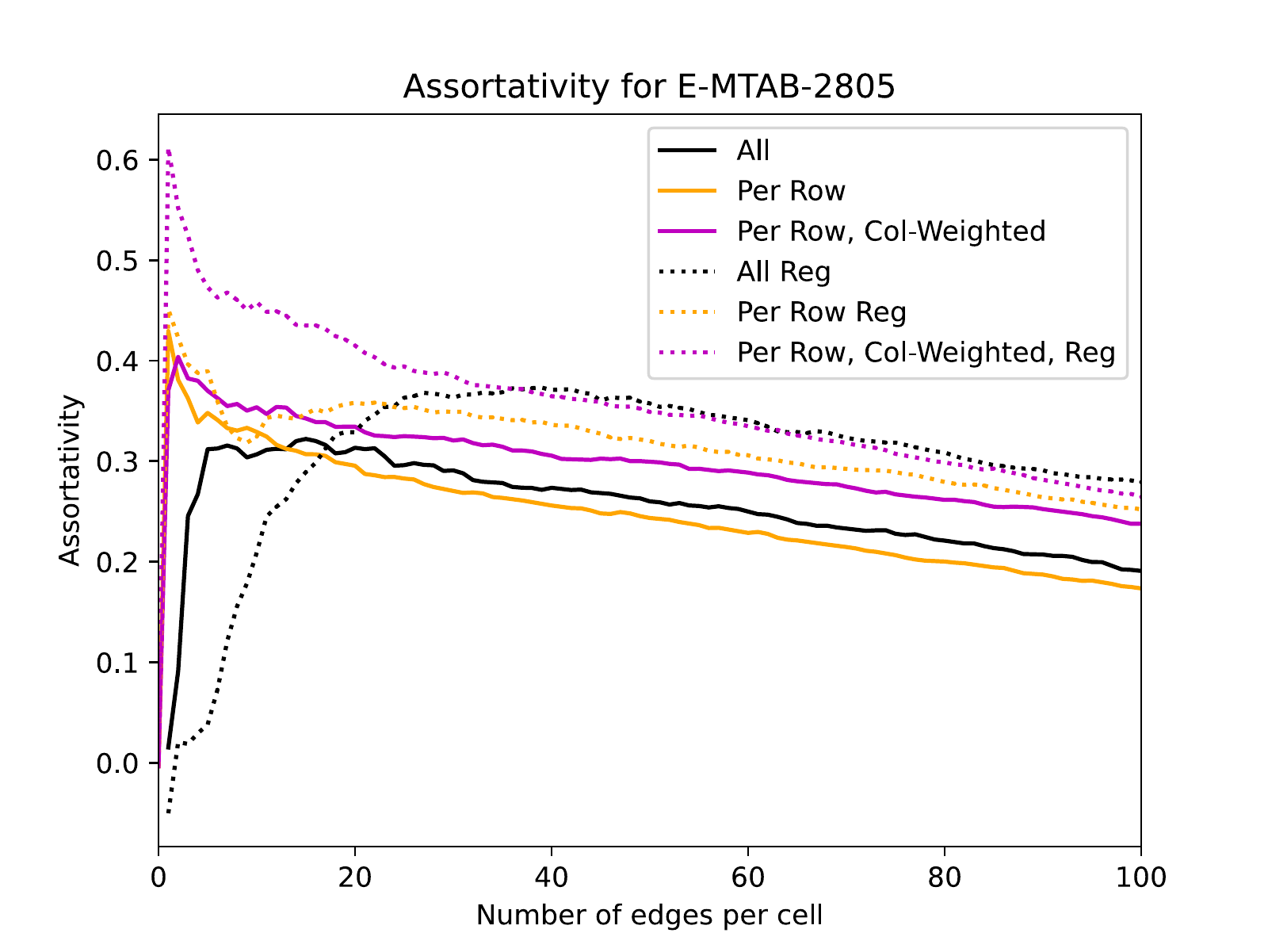}
        \caption{}
    \end{subfigure}
    \begin{subfigure}[b]{0.45\textwidth}
        \includegraphics[width=\textwidth]{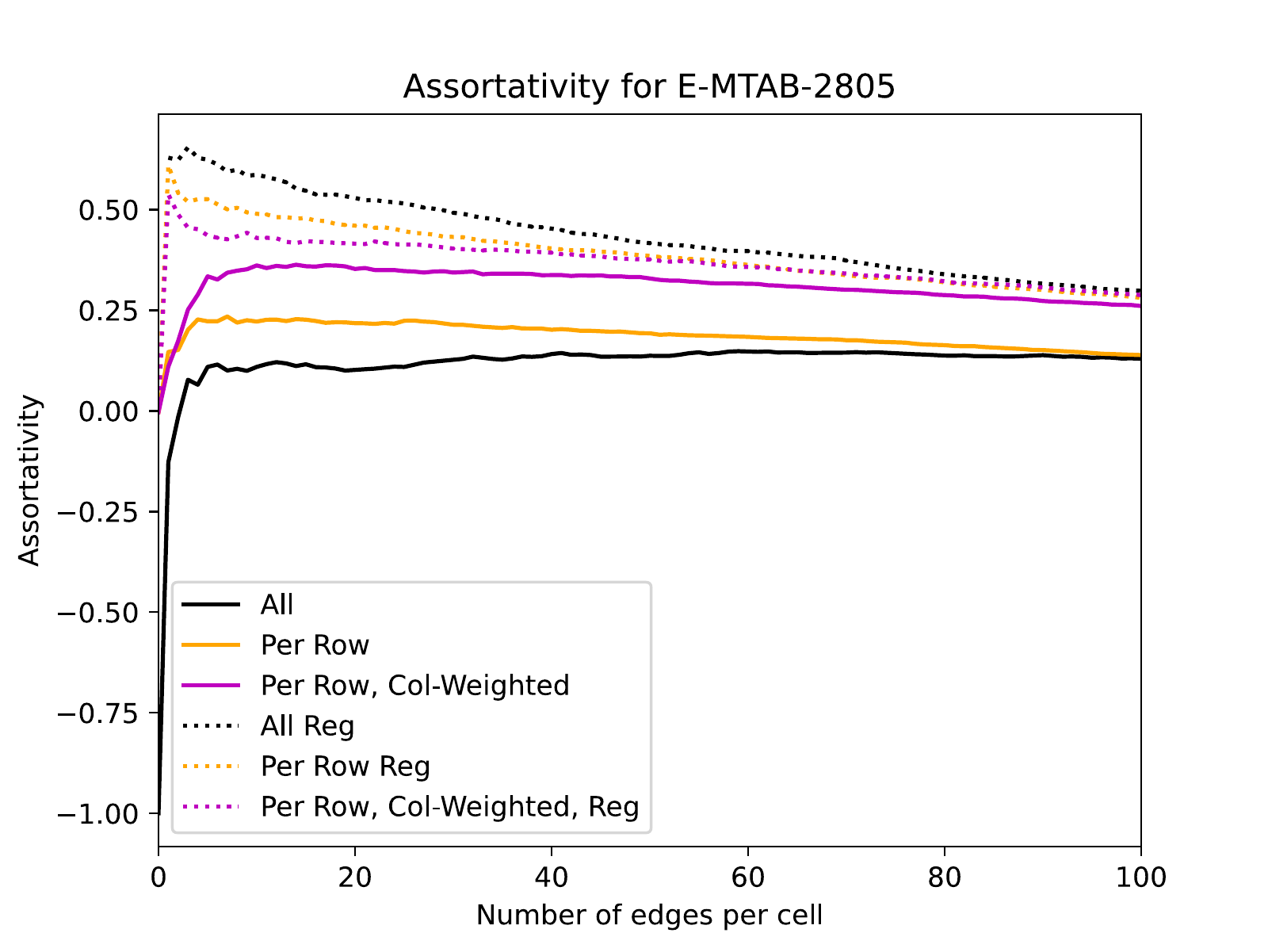}
        \caption{}
    \end{subfigure}
    \caption{Assortativity for GmGM L1, (a) with a log transform and (b) with the nonparanormal skeptic.  Note that the scales between the two plots are different.}
    \label{fig:mouse-reg-assort}
\end{figure}

\subsubsection{Results - Comparison with EiGLasso}

\begin{figure}
    \centering
    \includegraphics{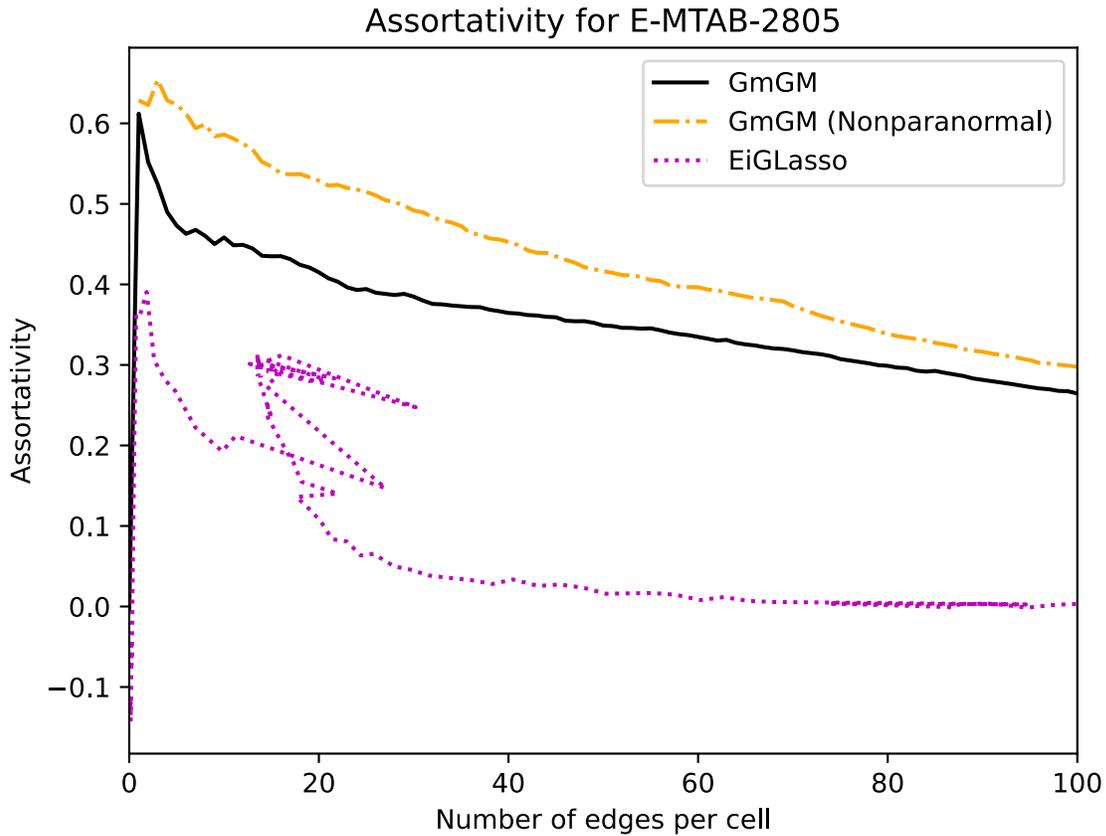}
    \caption{A comparison of GmGM with and without the nonparanormal skeptic, and EiGLasso.  As EiGLasso was much slower than GmGM, we did not rerun the calculation for it with the nonparanormal skeptic.}
    \label{fig:mouse-eiglasso}
\end{figure}

We compare the assortativities of the best GmGM L1 parameters we discovered in the last section with the assortativities of EiGLasso as we varied its regularization parameter.  We found a surprisingly clear advantage when using our method.  This was unexpected, as our method and EiGLasso are quite similar, differing only in how we enforce sparsity.  Judging by the `chink' in EiGLasso's curve (Figure \ref{fig:mouse-eiglasso}, it could be that EiGLasso simply needed more time to converge to a good result.  However, generating this curve for EiGLasso already took a long time, so we did not investigate further.  To compare runtimes, see Table \ref{tab:mouse-runtimes}.  We did not perform a UMAP consistency plot, as cells in UMAP-space formed a homogenous blob.

\begin{table}[t]
    \centering
    \begin{tabular}{c|c|c}
         Algorithm & Preprocessing & Runtime \\
         \hline
         GmGM & Log Transform, Centered & 0.0079 seconds \\
         GmGM & Nonparanormal Skeptic & 0.043 seconds \\
         EiGLasso & Log Transform, Centered & 30 seconds \\
         EiGLasso & Nonparanormal Skeptic & 108 seconds
    \end{tabular}
    \caption{Runtimes of our algorithm and EiGLasso, with various preprocessing techniques, on the E-MTAB-2805 dataset.  Runtimes given are an average over 10 runs.}
    \label{tab:mouse-runtimes}
\end{table}

\subsection{LifeLines-DEEP metagenomics + metabolomics}
\label{sec:lifelines}

\subsubsection{The Dataset}

We used the LifeLines-DEEP metagenomics and metabolomics datasets \parencite{tigchelaar_cohort_2015}.  We did not do any pre-processing to the metabolomics, and we used the already pre-processed version of the metagenomics data from \textcite{prost_zero_2021}.  This dataset is available on the European Genome-Phenome Archive (EGA) under Study ID \href{https://ega-archive.org/studies/EGAS00001001704}{EGAS00001001704}; to access it, one has to agree to an LL-DEEP-specific data access agreement.

\subsubsection{Experiment Justification}

We chose this dataset as it had been considered by prior single-axis work (ZiLN; \textcite{prost_zero_2021}), as well as being multi-omic.  It had a more complicated class structure (taxonomy) than the E-MTAB-2805 dataset, so it was a natural next step in our investigations of the performance of this algorithm.

\subsubsection{Results}

We kept only patients that appeared in both datasets.  We compared our model's results to the model given by \textcite{prost_zero_2021} in the main paper.  We report runtimes in Table \ref{tab:lifelines-runtimes}, and give a UMAP consistency plot as in Figure \ref{fig:lifelines-umap-consistency}.  We can see that our algorithm is the fastest on the metagenomics dataset, even outperforming a single-axis method.  Furthermore, our results seem quite sensible in UMAP-space.

\begin{table}[b]
    \centering
    \begin{tabular}{c|ccc}
        Algorithm &  Axes & Speed & Per-Axis Speed\\
        \hline
        ZiLN & Species & 3.2 & 3.2 \\
        GmGM & Species, People & 2.59 & 1.30 \\
        GmGM & Species, People, Metabolites & 22.18 & 7.39 \\
        TeraLasso & Species, People &  1299.33 & 649.67
    \end{tabular}
    \caption{Runtimes of various algorithms on the LifeLines-DEEP dataset.  Speed is measured in seconds}
    \label{tab:lifelines-runtimes}
\end{table}

\begin{figure}[h]
    \centering
    \includegraphics[width=0.8\textwidth]{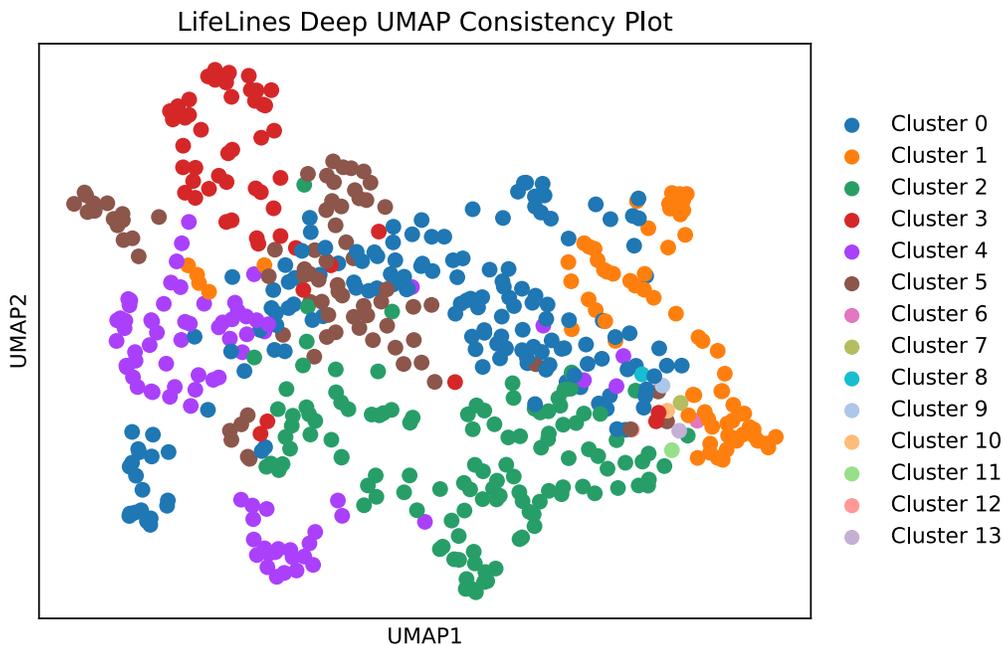}
    \caption{UMAP consistency plot for our algorithm applied to the LifeLines-DEEP dataset.  The UMAP was constructed from log-transformed metagenomics data; the coloring represents a Louvain clustering of the output of GmGM performed on the centered log-transformed metagenomics+metabolomics data, thresholded by keeping the top 7\% of edges overall.}
    \label{fig:lifelines-umap-consistency}
\end{figure}

Finally, we compared our assortativity to that of the Zero-Inflated Log Normal model by \textcite{prost_zero_2021}.  We found that our method performed similarly to theirs; see Figure \ref{fig:lifelines-assort}.

\begin{figure}[h]
    \centering
    \begin{subfigure}[b]{0.45\textwidth}
        \includegraphics[width=\textwidth]{final-experiments/gmgm-metagenomics-assortativity.pdf}
        \caption{}
    \end{subfigure}
    \begin{subfigure}[b]{0.45\textwidth}
        \includegraphics[width=\textwidth]{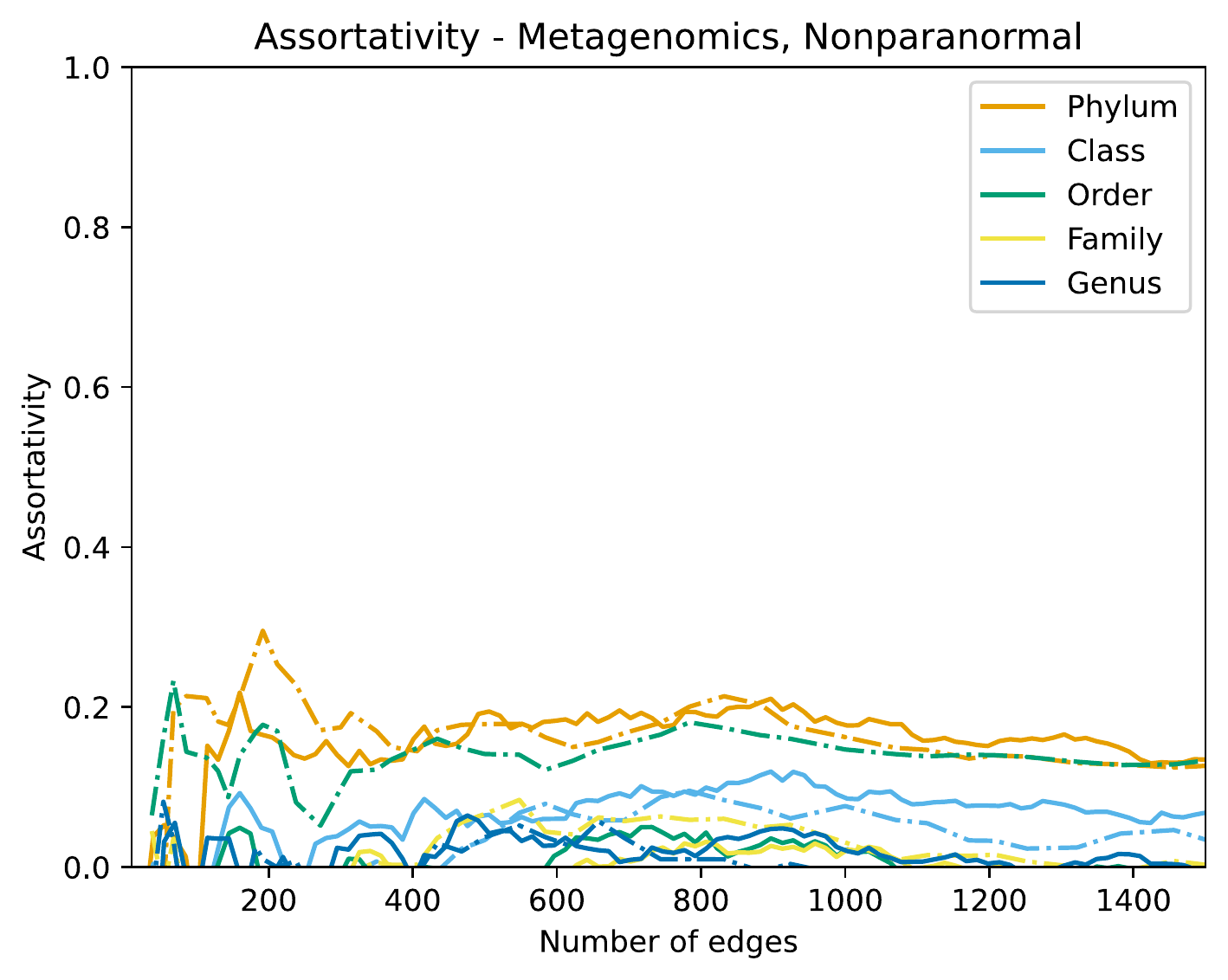}
        \caption{}
    \end{subfigure}
    \begin{subfigure}[b]{0.45\textwidth}
        \includegraphics[width=\textwidth]{final-experiments/gmgm-both-assortativity.pdf}
        \caption{}
    \end{subfigure}
    \begin{subfigure}[b]{0.45\textwidth}
        \includegraphics[width=\textwidth]{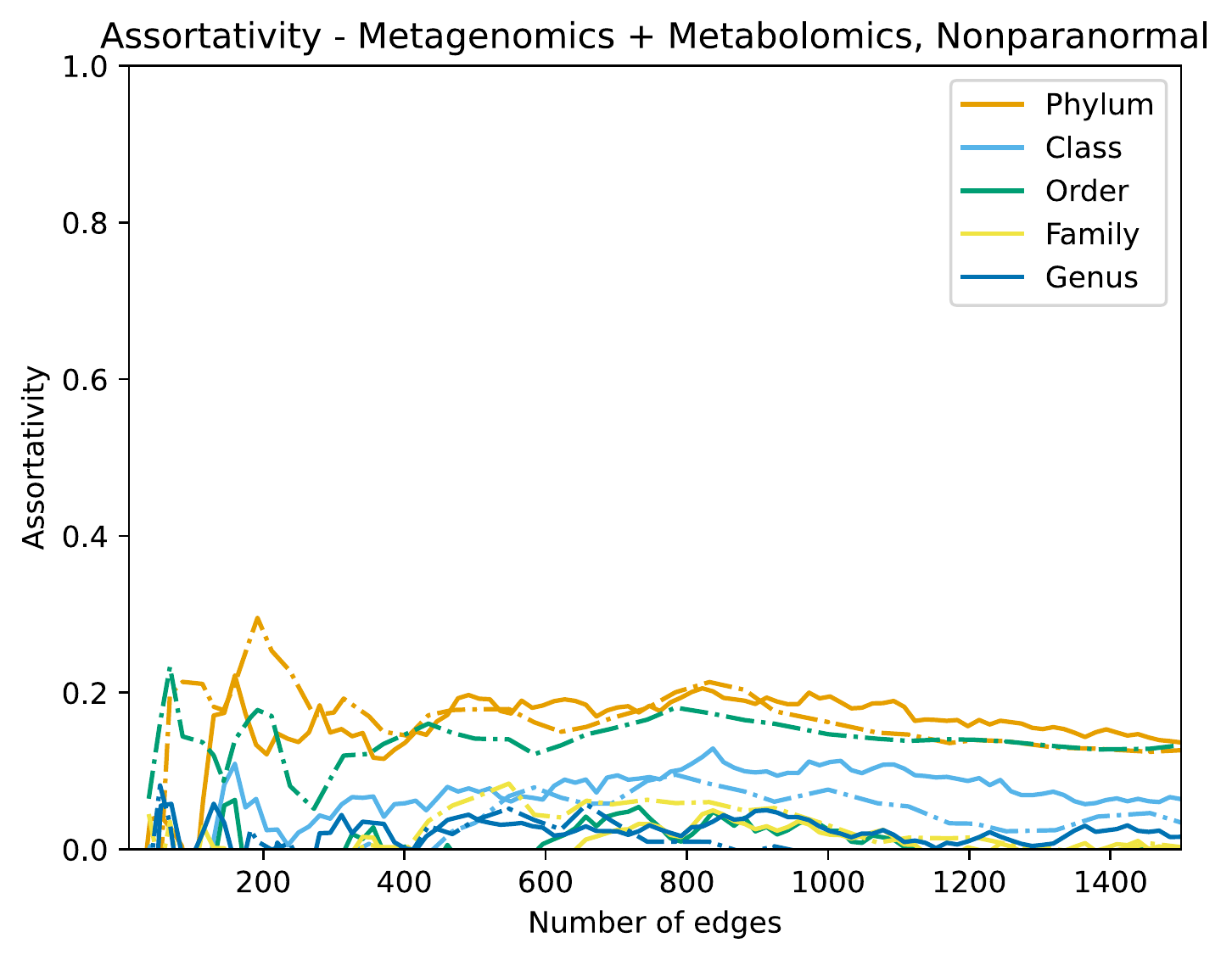}
        \caption{}
    \end{subfigure}
    \caption{Comparison of our method (solid lines) to that of \textcite{prost_zero_2021} (dashed lines).  (a) Just metagenomics, log-transformed. (b) Metagenomics using the nonparanormal skeptic.  (c) The full multi-omic dataset, log-transformed. (d) The full multi-omic dataset using the nonparanormal skeptic.}
    \label{fig:lifelines-assort}
\end{figure}

Finally, we experimented with incorporating priors in our model.  It is not unreasonable to find oneself in a situation in which one knows the highest-level taxonomic categorization of an organism (its phylum), but not know its lower-level categorizations.  Thus, we chose a Wishart prior whose covariance matrix encoded the adjacency matrix of the following graph:

\begin{enumerate}
    \item If two species are in the same phylum, they are connected
    \item If two species are in a different phylum, they are not connected.
\end{enumerate}

This \textit{vastly} improves performance (Figure \ref{fig:lifelines-prior}).  One might worry that the algorithm is just `memorizing' the prior, and that the lower taxonomic ranks are improving because no genuses are cross-phylum, for example.  To show that this is not the case, we conduct a quick experiment with synthetic data, in which we feed the true graph in as the prior.  If our algorithm were memorizing priors, then we would expect performance with the prior to be perfect.  However, what we instead see is a huge but imperfect gain in performance (Figure \ref{fig:synthetic-prior}).

\begin{figure}[h]
    \centering
    \begin{subfigure}{0.45\textwidth}
        \includegraphics[width=\textwidth]{final-experiments/gmgm-both-assortativity-prior.pdf}
        \caption{}
        \label{fig:lifelines-prior}
    \end{subfigure}
    \begin{subfigure}{0.45\textwidth}
        \includegraphics[width=\textwidth]{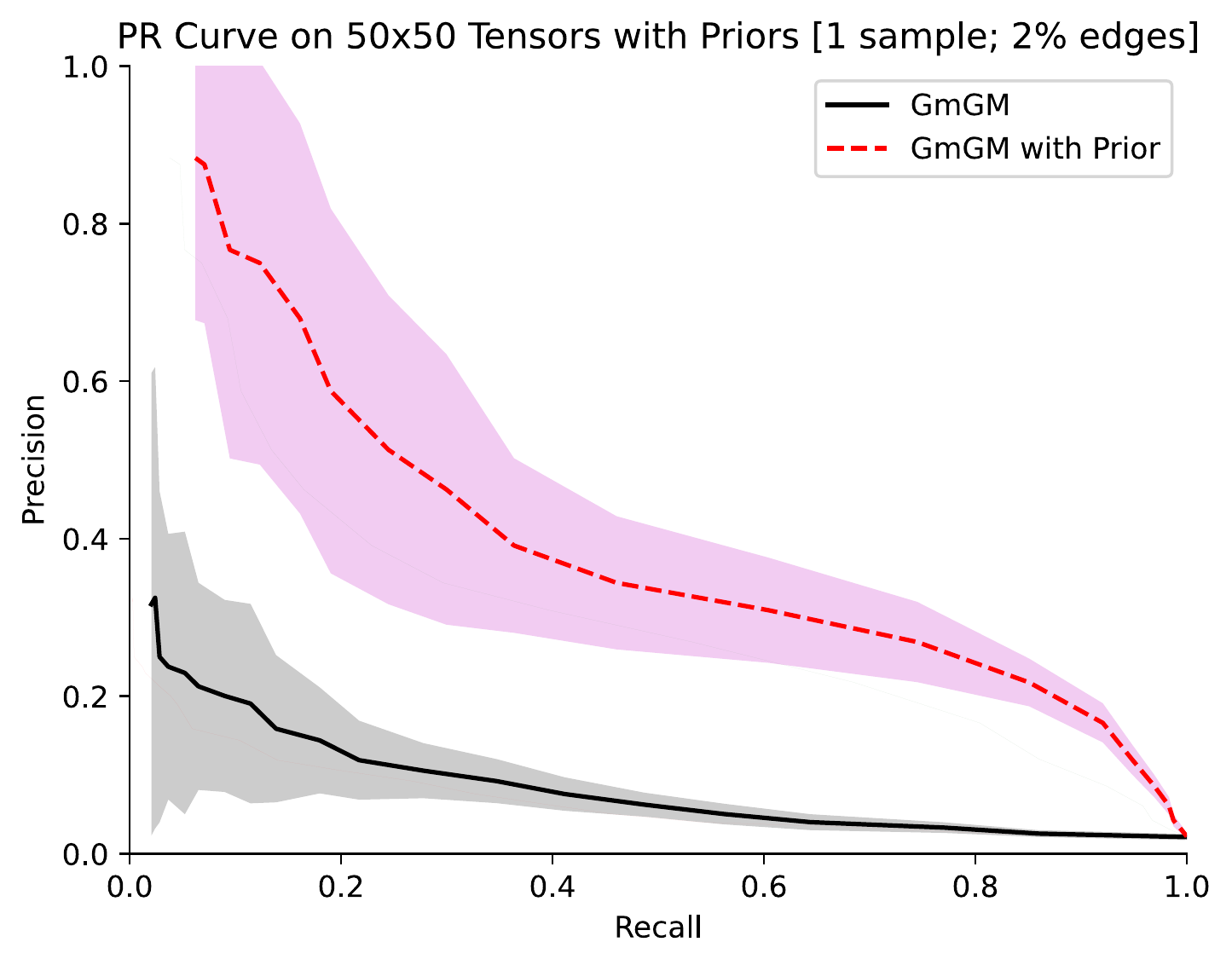}
        \caption{}
        \label{fig:synthetic-prior}
    \end{subfigure}
    \caption{(a) The assortativity when incorporating prior knowledge. 
 (b) A test on 50x50 1-sample synthetic data with 2\% edges connected, with the true graphs fed in as the parameter to a Wishart prior to our algorithm.}
    \label{fig:priors}
\end{figure}

\begin{figure}
    \begin{subfigure}[b]{0.45\textwidth}
        \centering
        \includegraphics[width=\textwidth]{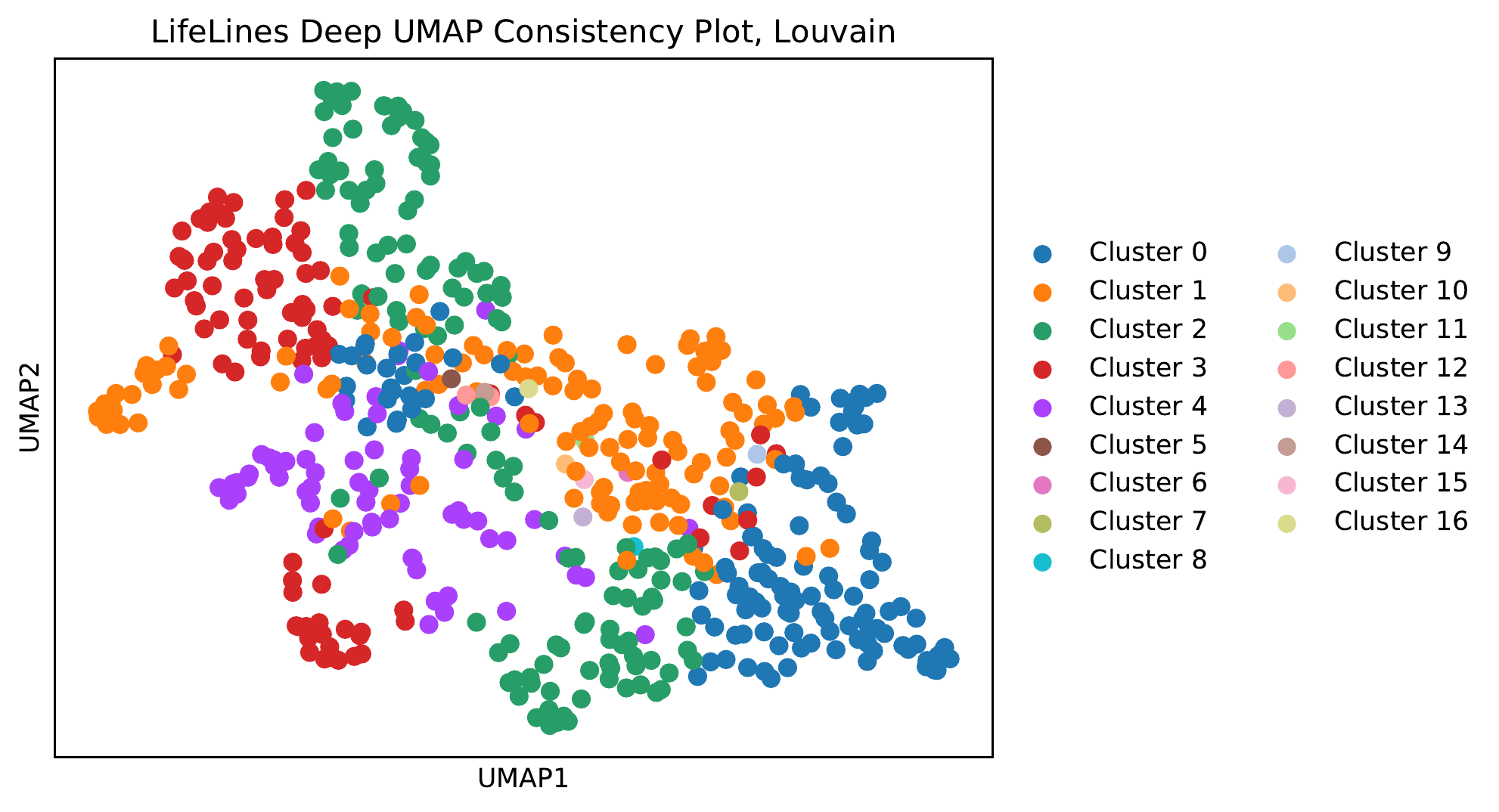}
        \caption{}
    \end{subfigure}
    \begin{subfigure}[b]{0.45\textwidth}
        \centering
        \includegraphics[width=\textwidth]{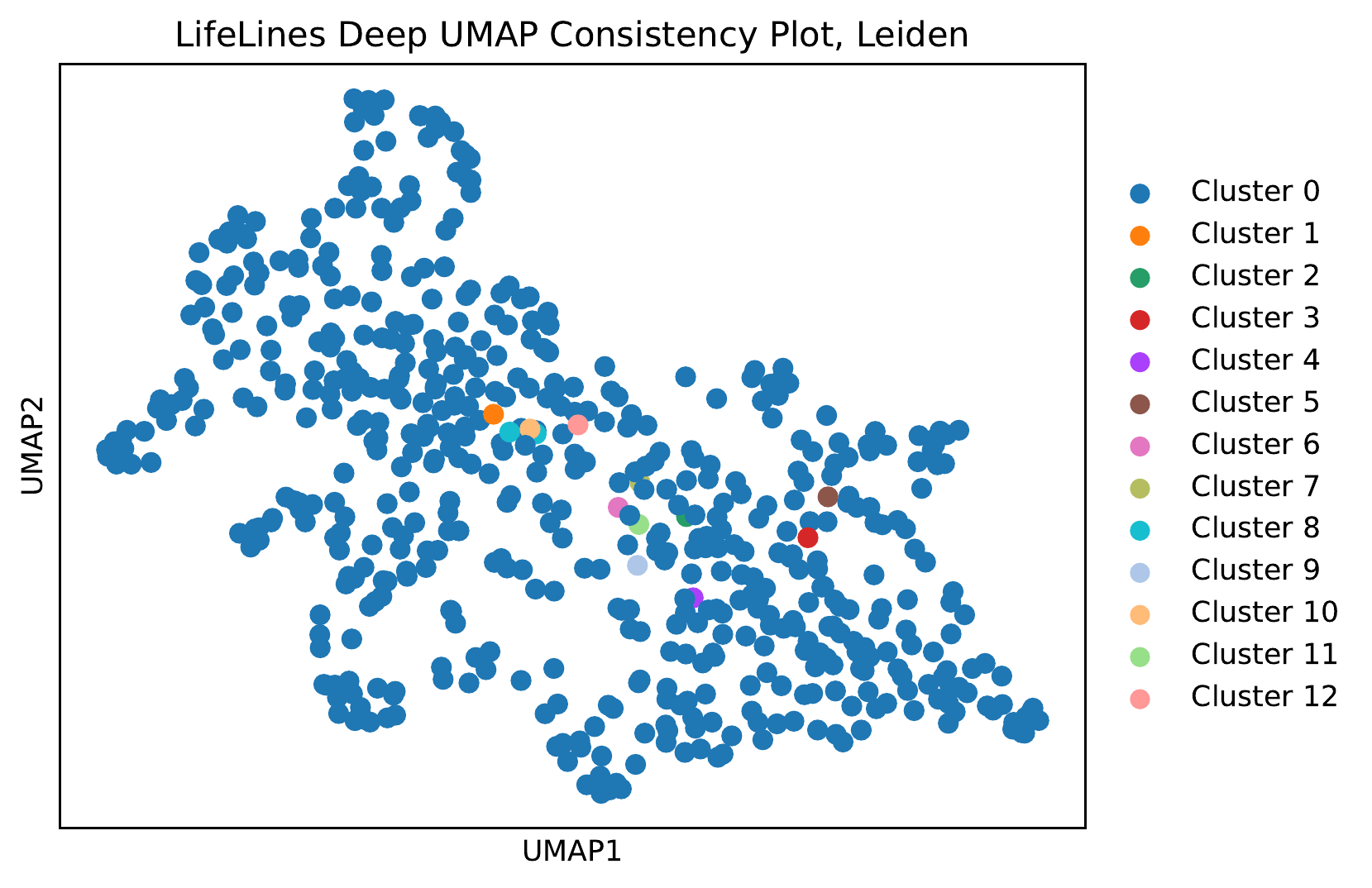}
        \caption{}
    \end{subfigure}
    \begin{subfigure}[b]{0.45\textwidth}
        \centering
        \includegraphics[width=\textwidth]{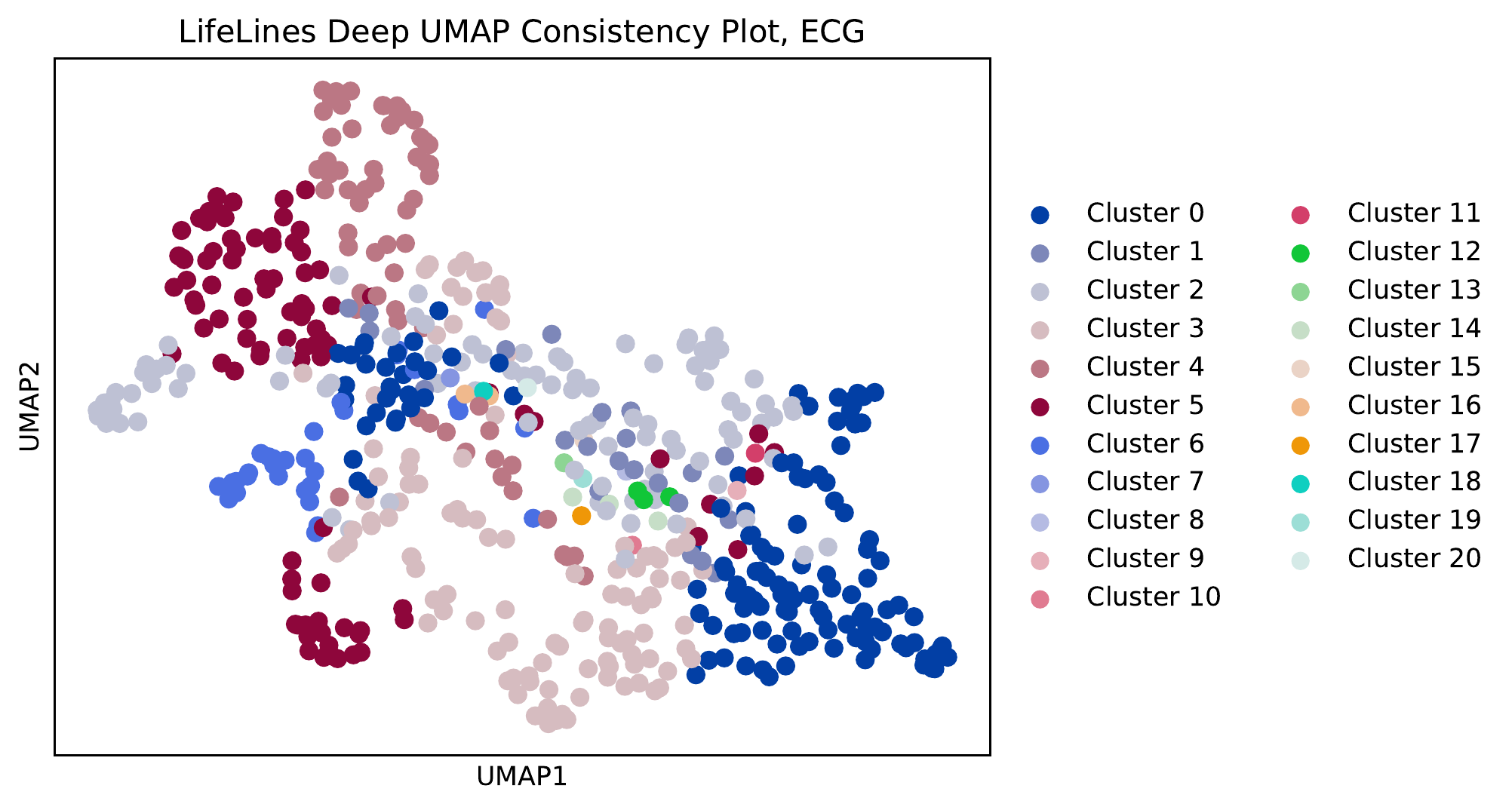}
        \caption{}
    \end{subfigure}
    \begin{subfigure}[b]{0.45\textwidth}
        \centering
        \includegraphics[width=\textwidth]{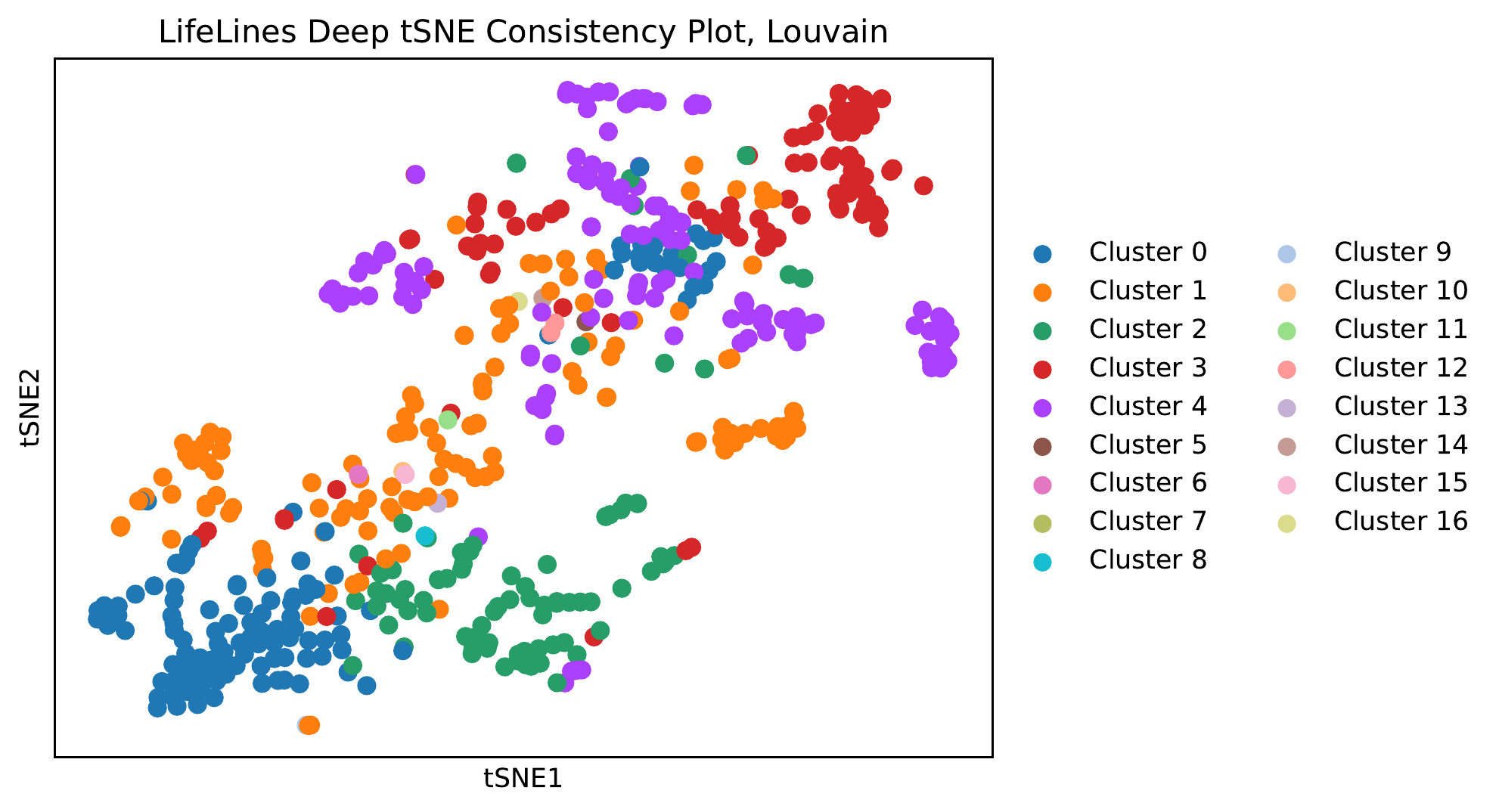}
        \caption{}
    \end{subfigure}
    \begin{subfigure}[b]{0.45\textwidth}
        \centering
        \includegraphics[width=\textwidth]{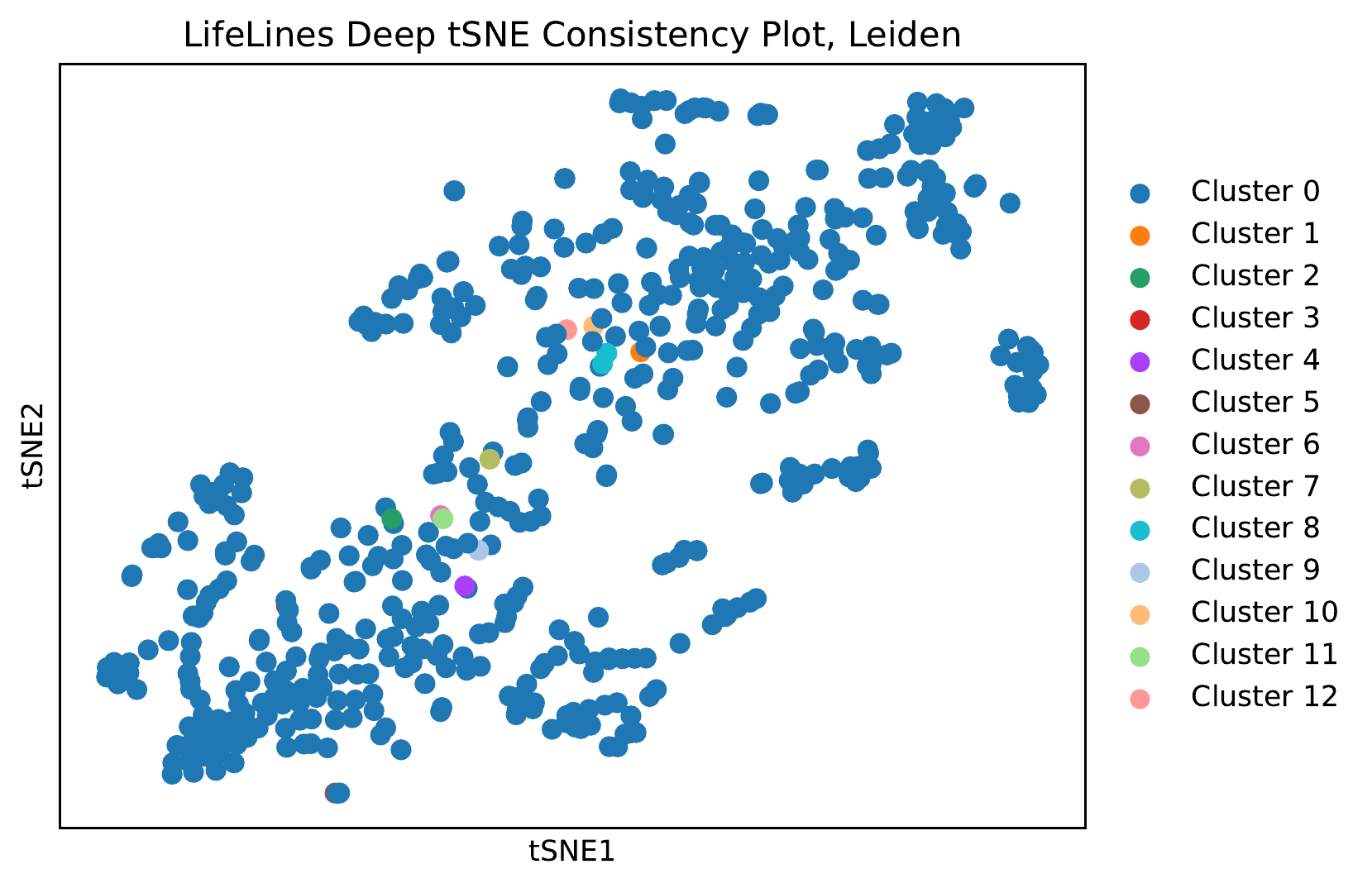}
        \caption{}
    \end{subfigure}
    \begin{subfigure}[b]{0.45\textwidth}
        \centering
        \includegraphics[width=\textwidth]{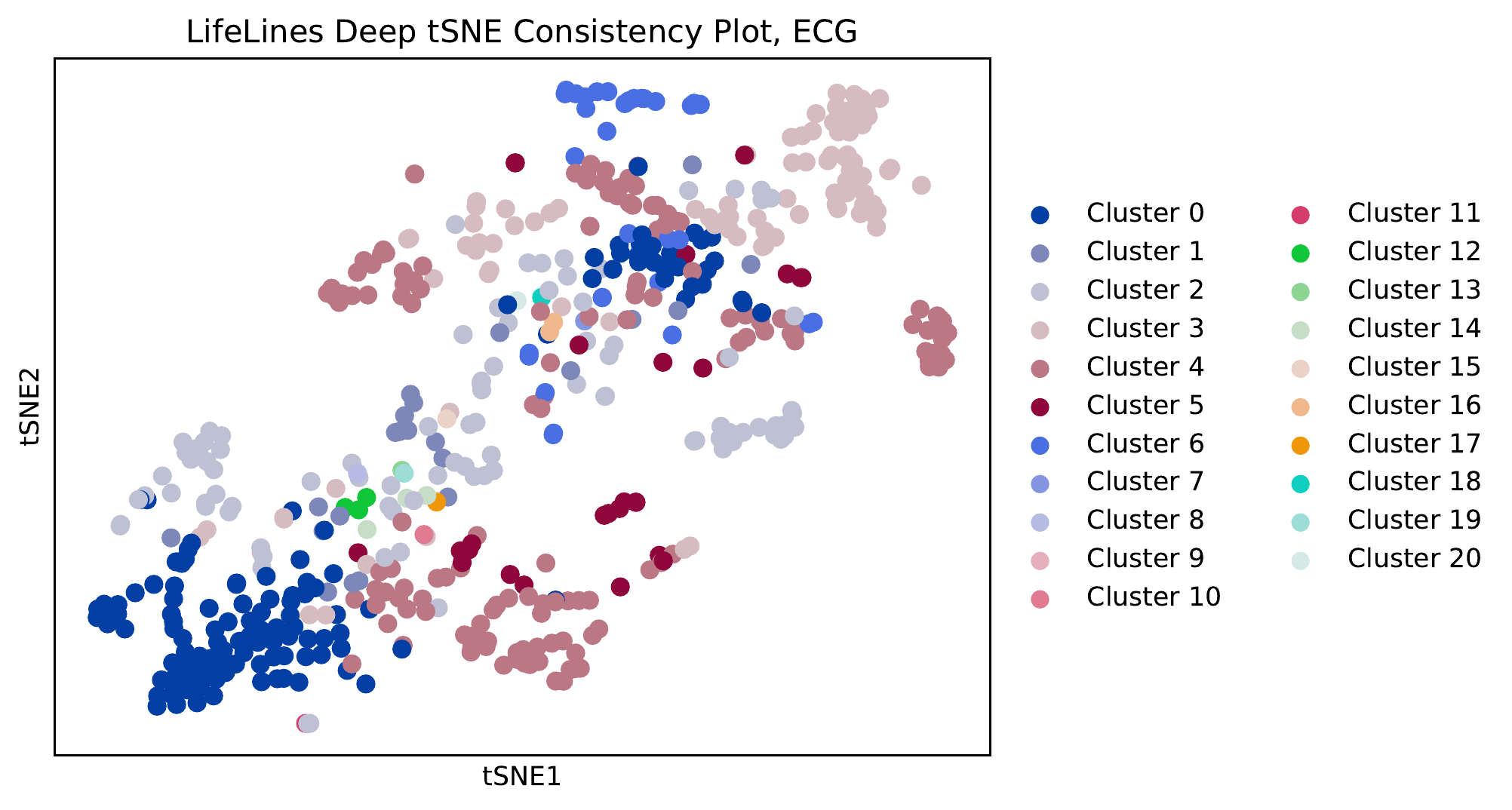}
        \caption{}
    \end{subfigure}
    \caption{UMAP and tSNE consistency plots with Louvain, Leiden, and ECG clustering algorithms on the Lifelines-Deep data.}
    \label{fig:lifelines-deep-all}
\end{figure}

\subsection{10x Genomics flash frozen lymph nodes}
\label{sec:10x}

\subsubsection{The Dataset}
For this experiment, we looked at a single-cell RNA-sequencing+ATAC-sequencing dataset from 10x Genomics \parencite{10x_genomics_flash-frozen_2021}.  It is available \href{https://www.10xgenomics.com/resources/datasets/fresh-frozen-lymph-node-with-b-cell-lymphoma-14-k-sorted-nuclei-1-standard-2-0-0}{here}.  The dataset is publicly available, but one does need to fill in a brief form for data collection purposes.  They do not specify a license.

\subsubsection{Experiment Justification}

It is a realistically-sized multi-omics dataset, and by far the largest considered in this paper.  It is so large that the use of prior work becomes infeasible, so demonstrating that our algorithm works on it would represent a significant leap forward.  The dataset is unlabeled, so we evaluated our algorithm by means of a UMAP consistency plot.

\subsubsection{Results}
Before performing the experiment, we removed cells whose library size was three median absolute deviations from the median, and similarly removed genes and peaks if the the total amount of times they were expressed was three median absolute deviations from the median.  In our output graphs, we kept the top 5 edges per vertex.  We can see the UMAP consistency plot in Figure \ref{fig:10x-umap} and the runtimes in Table \ref{tab:10x-runtimes}.

\begin{figure}
    \centering
    \includegraphics[width=0.8\textwidth]{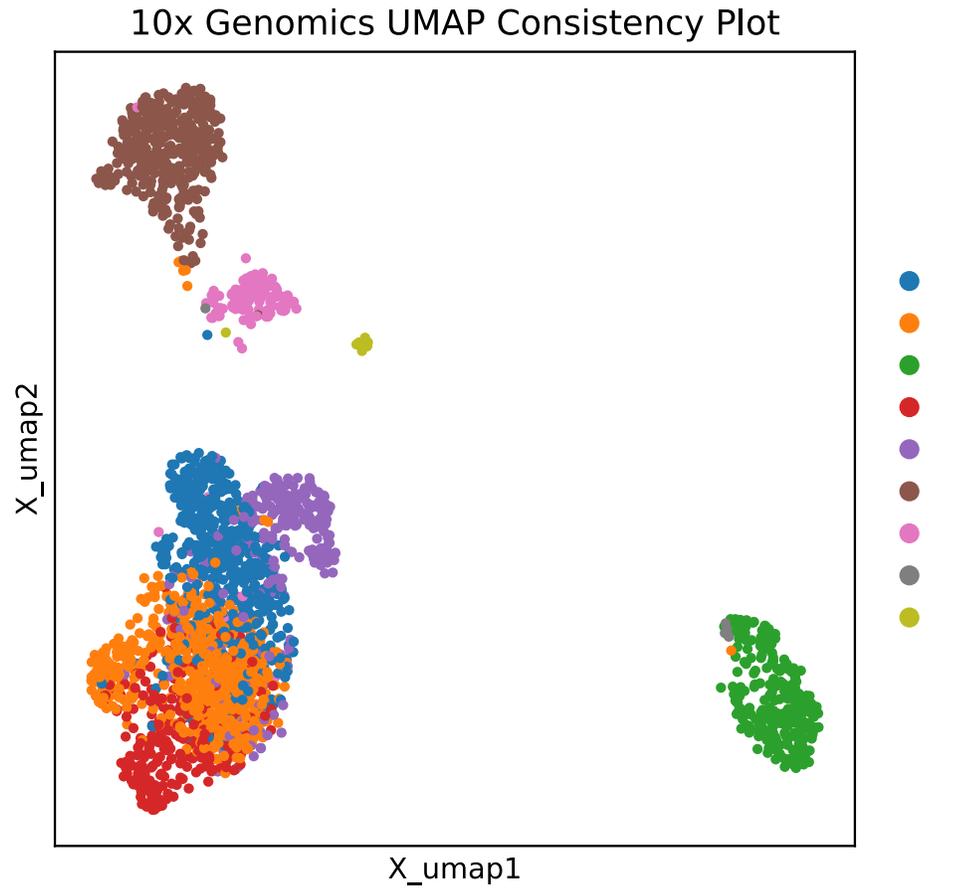}
    \caption{A UMAP consistency plot for the 10x Genomics dataset.  Both UMAP and GmGM were used on the full multi-omic log-transformed dataset.  The coloring represents a Louvain clustering on the output graph of GmGM, with the top 5 edges per vertex being kept after column normalization (the third thresholding method from Section \ref{sec:mouse}).}
    \label{fig:10x-umap}
\end{figure}

\begin{figure}
    \begin{subfigure}[b]{0.3\textwidth}
        \centering
        \includegraphics[width=\textwidth]{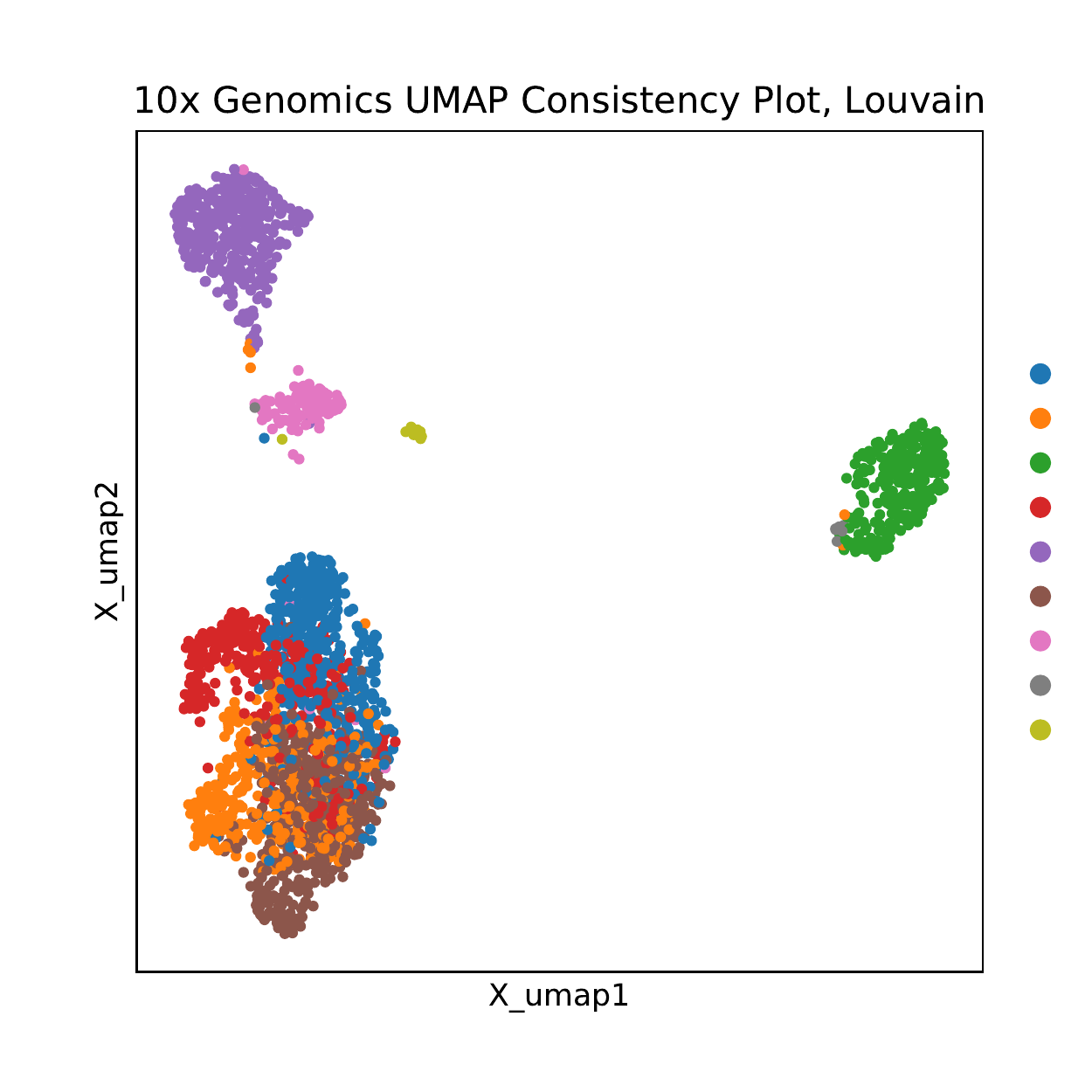}
        \caption{}
    \end{subfigure}
    \begin{subfigure}[b]{0.3\textwidth}
        \centering
        \includegraphics[width=\textwidth]{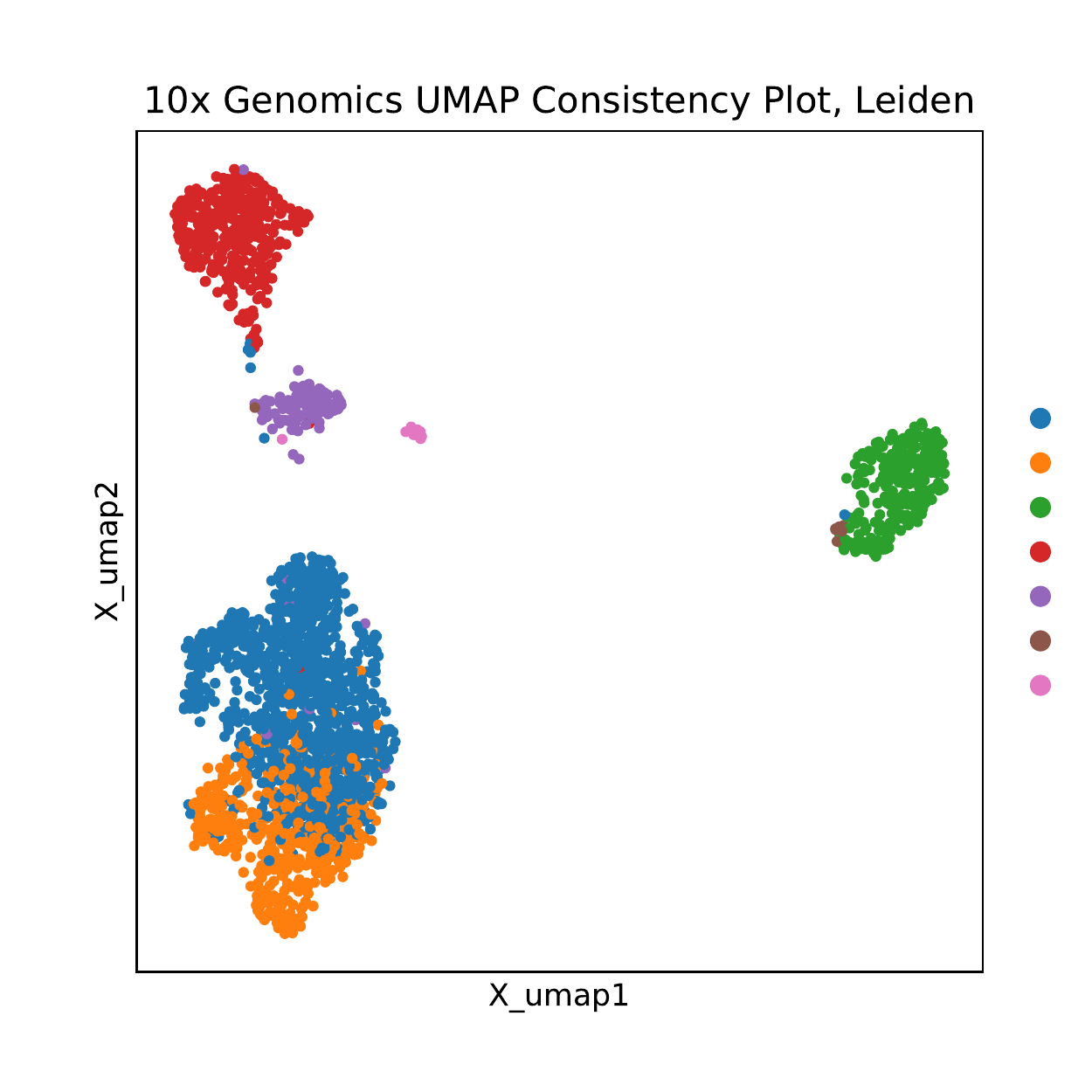}
        \caption{}
    \end{subfigure}
    \begin{subfigure}[b]{0.3\textwidth}
        \centering
        \includegraphics[width=\textwidth]{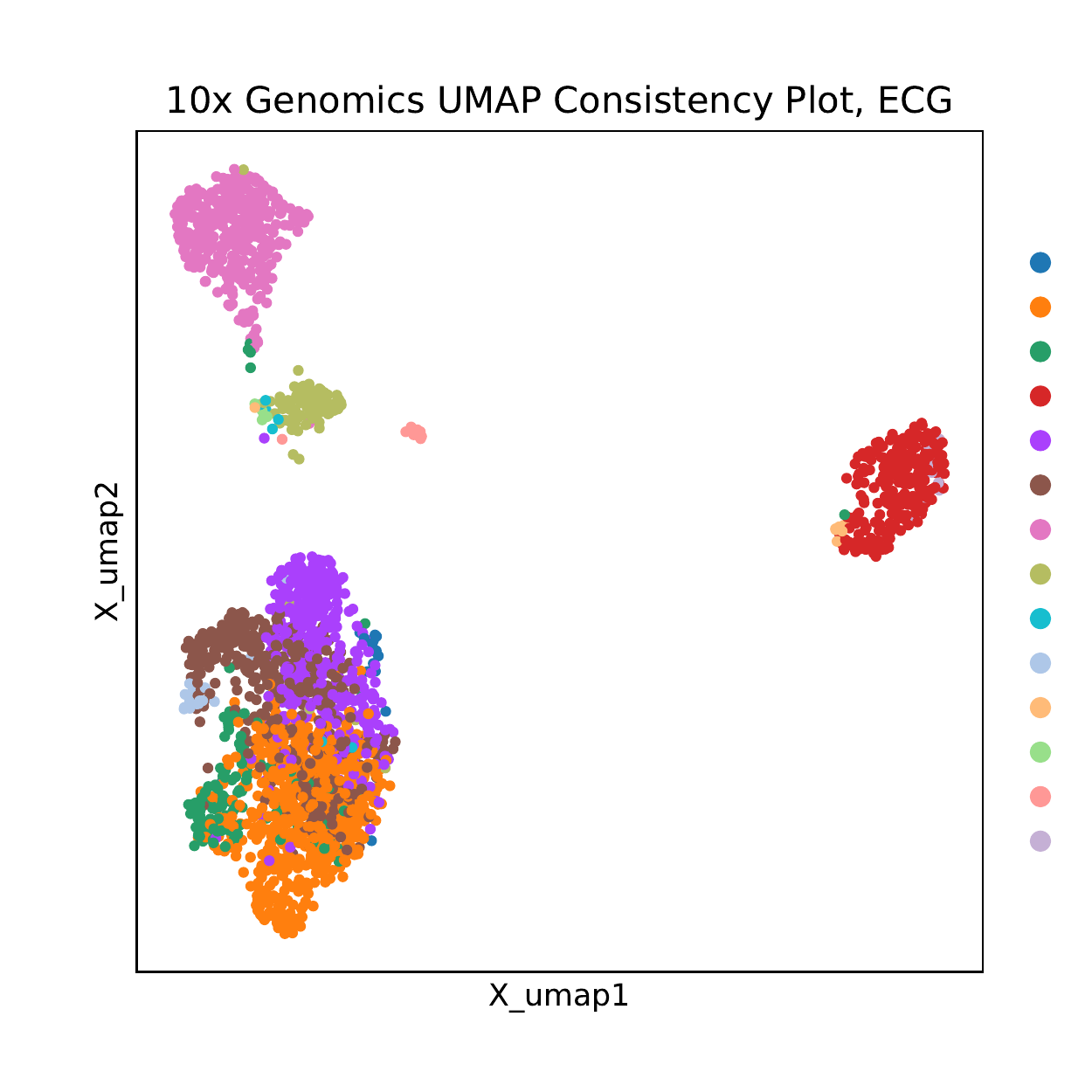}
        \caption{}
    \end{subfigure}
    \caption{UMAP consistency plots with Louvain, Leiden, and ECG clustering algorithms on the 10x Genomics data.}
    \label{fig:10x-all-umap}
\end{figure}

\begin{figure}
    \begin{subfigure}[b]{0.3\textwidth}
        \centering
        \includegraphics[width=\textwidth]{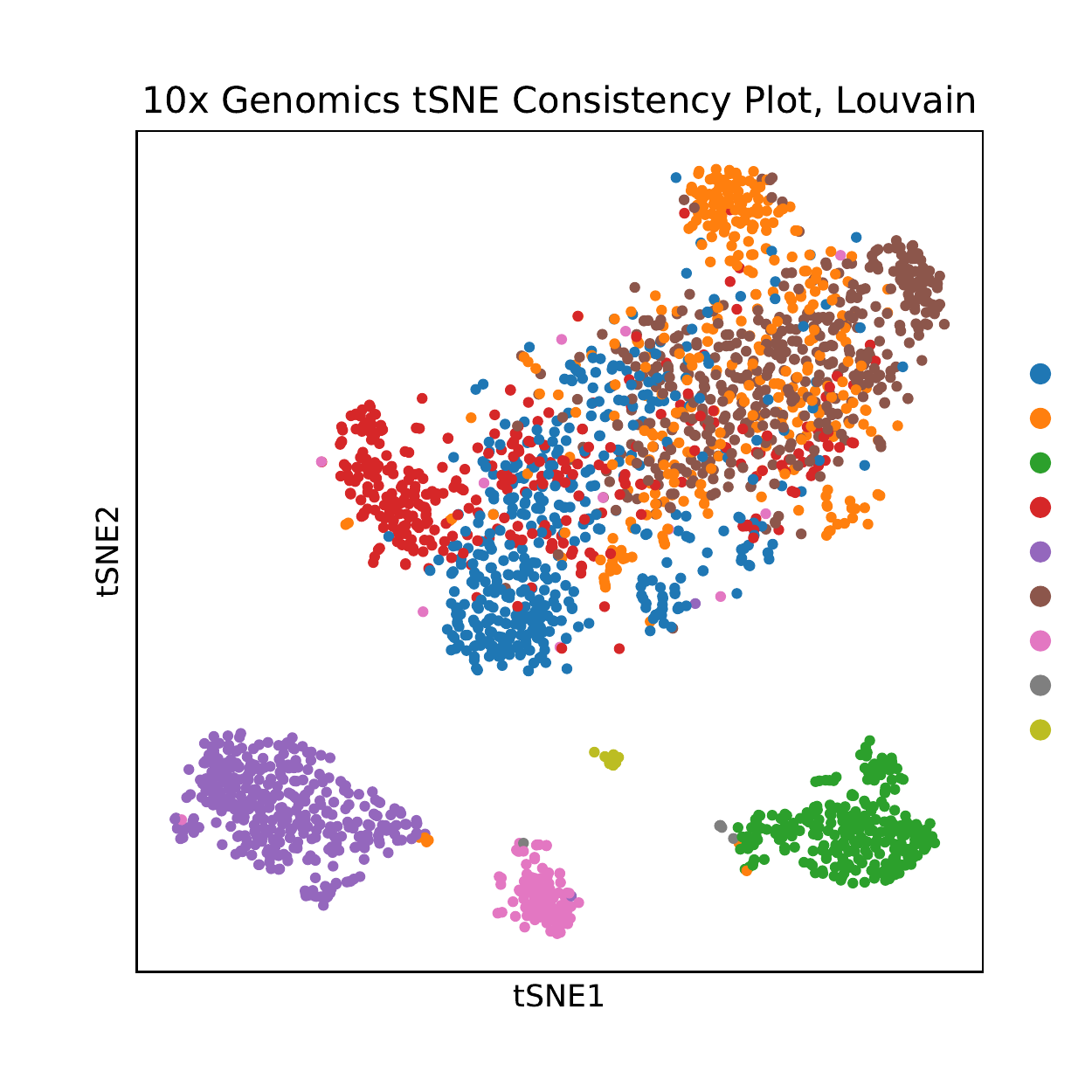}
        \caption{}
    \end{subfigure}
    \begin{subfigure}[b]{0.3\textwidth}
        \centering
        \includegraphics[width=\textwidth]{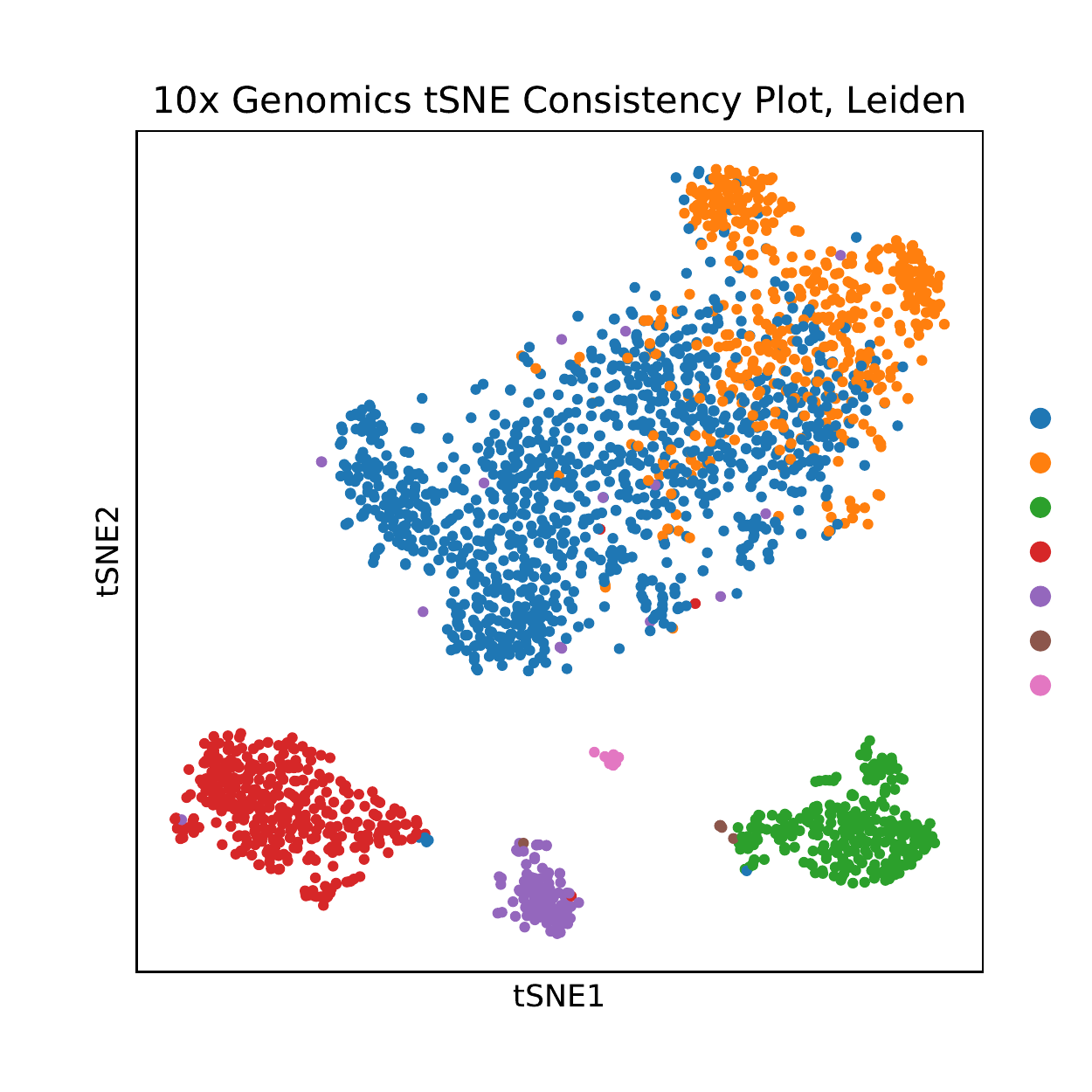}
        \caption{}
    \end{subfigure}
    \begin{subfigure}[b]{0.3\textwidth}
        \centering
        \includegraphics[width=\textwidth]{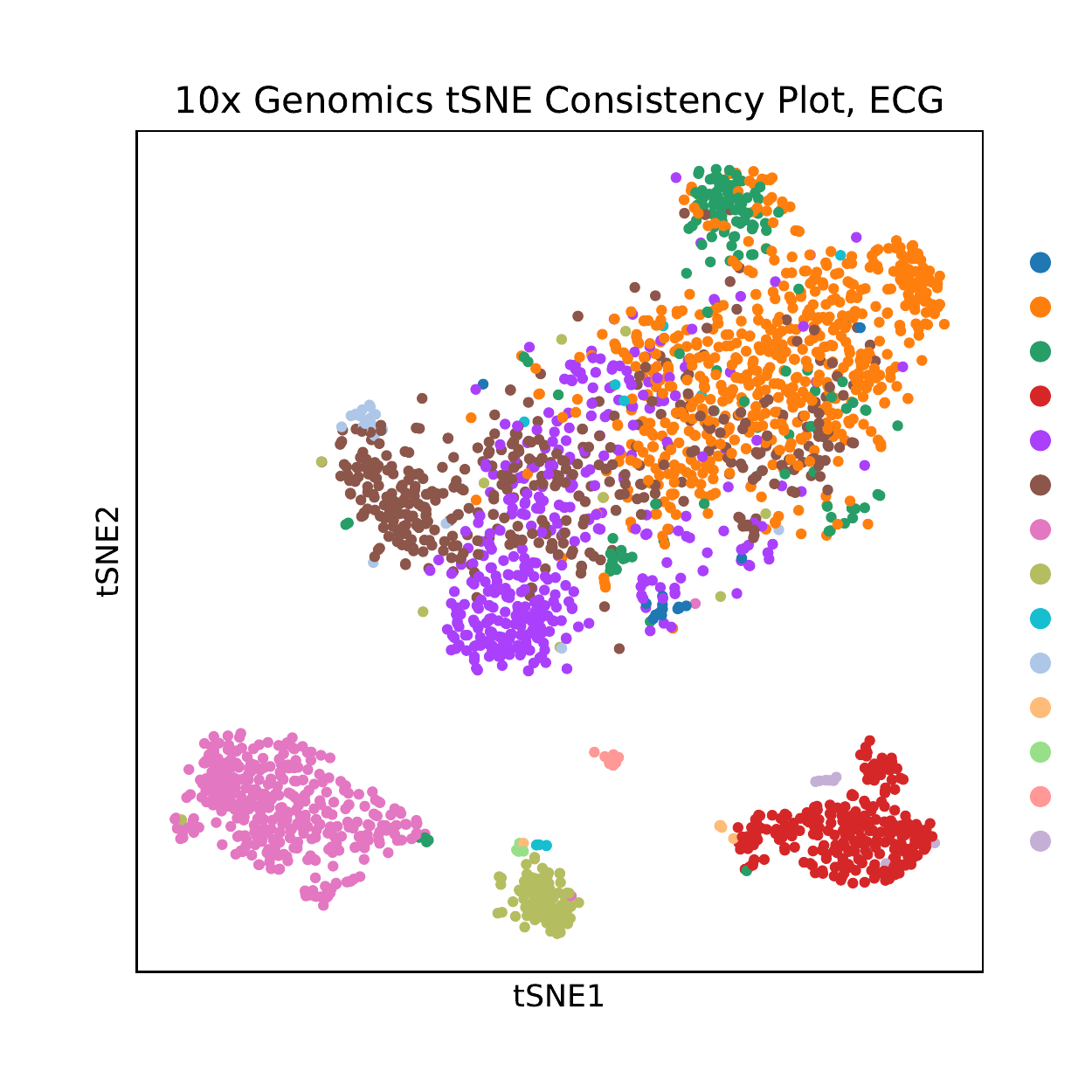}
        \caption{}
    \end{subfigure}
    \caption{tSNE consistency plots with Louvain, Leiden, and ECG clustering algorithms on the 10x Genomics data.}
    \label{fig:10x-all-tsne}
\end{figure}

\begin{table}[t]
    \centering
    \begin{tabular}{c|c|c}
         Algorithm & Preprocessing & Runtime \\
         \hline
         GmGM & Log transform, just RNA & 52 seconds \\
         GmGM & Log transform, just ATAC & 607 seconds \\
         GmGM & Log transform, both modalities & 590 seconds \\
         EiGLasso & Log transform, just RNA & >60,000 seconds
    \end{tabular}
    \caption{Runtimes of our algorithm and EiGLasso, with various preprocessing techniques, on the 10x Genomics data.}
    \label{tab:10x-runtimes}
\end{table}

In Figure \ref{fig:10x-umap}, we can see that clusters 0, 1, 3, and 4 all inhabit the large contiguous region in the lower left.  Clusters 2 and 7 inhabit the contiguous region in the lower right.  In the top right, there are three regions; 5, 6, and 8 (in order of left to right).  Figure \ref{fig:10x-dotplot} implies that clusters 5 and 6 have some similarities, whereas cluster 8 is largely different from all the other clusters.  A GO term analysis shows cluster 8 is associated with blood coagulation and the integrin signalling pathway, whereas none of the other clusters have such an association.  All other clusters, except 2 and 7, are highly associated with B cell activation-related genes.  Cluster 2 is distinct in that it is related to the CCKR signalling map and the apoptosis signalling pathways.  Cluster 7 was too small to perform the analysis.  Full details of the significant go terms are available on the github repository.

\begin{figure}
    \centering
    \includegraphics[width=0.5\textwidth]{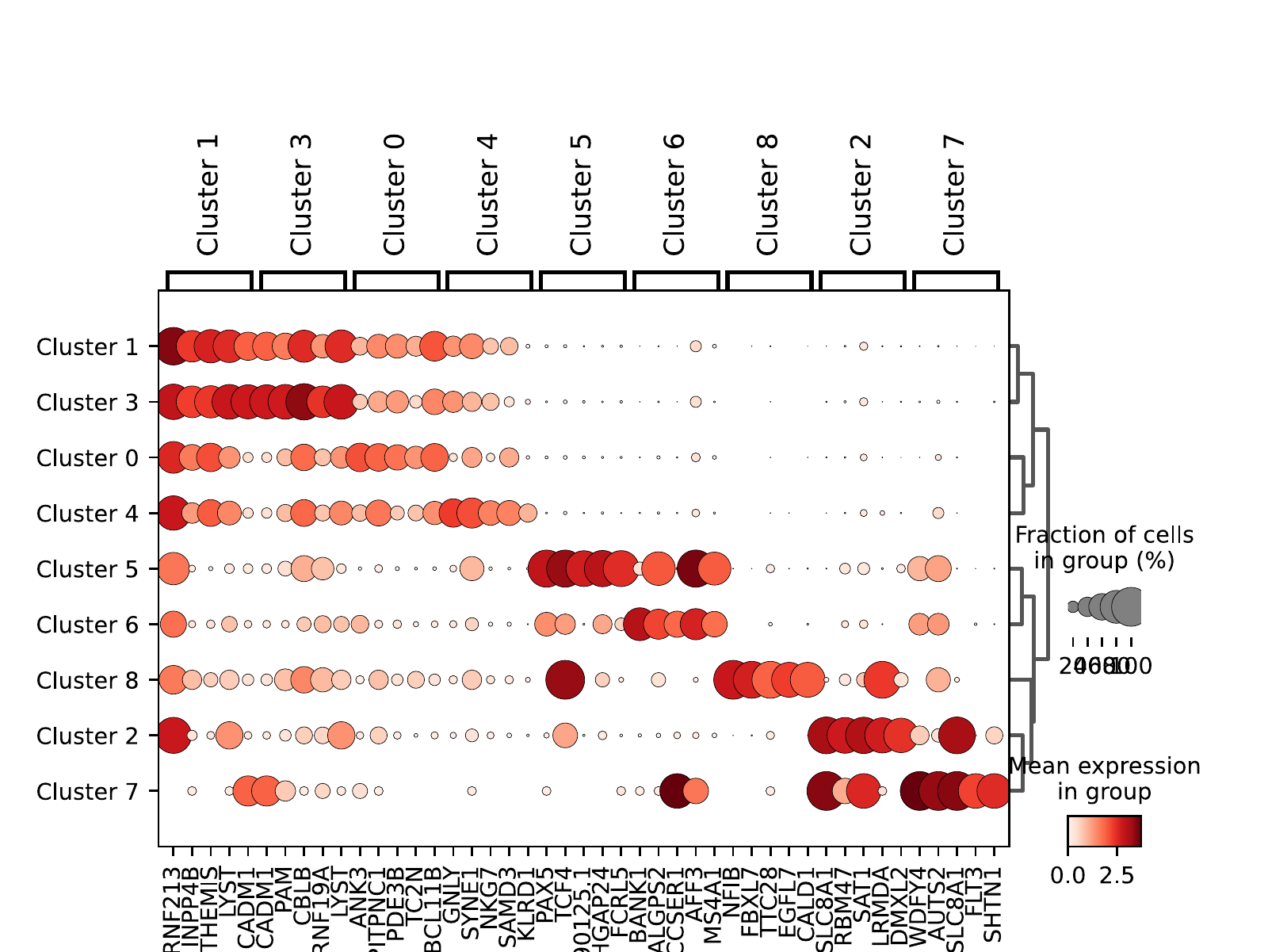}
    \caption{Expression of top differentially-expressed genes by cluster.}
    \label{fig:10x-dotplot}
\end{figure}

\section{REGULARIZATION}
\label{sec:regularization}

As remarked in the main paper, our algorithm by default includes no regularization.  This is because our algorithm leverages the fact that we have a closed-form expression for the eigenvectors of the maximum likelihood estimate to avoid costly eigendecompositions every iteration.  We do not have a closed-form expression for the eigenvectors in the regularized case.

\begin{figure}
    \begin{subfigure}[b]{0.45\textwidth}
        \centering
        \includegraphics[width=\textwidth]{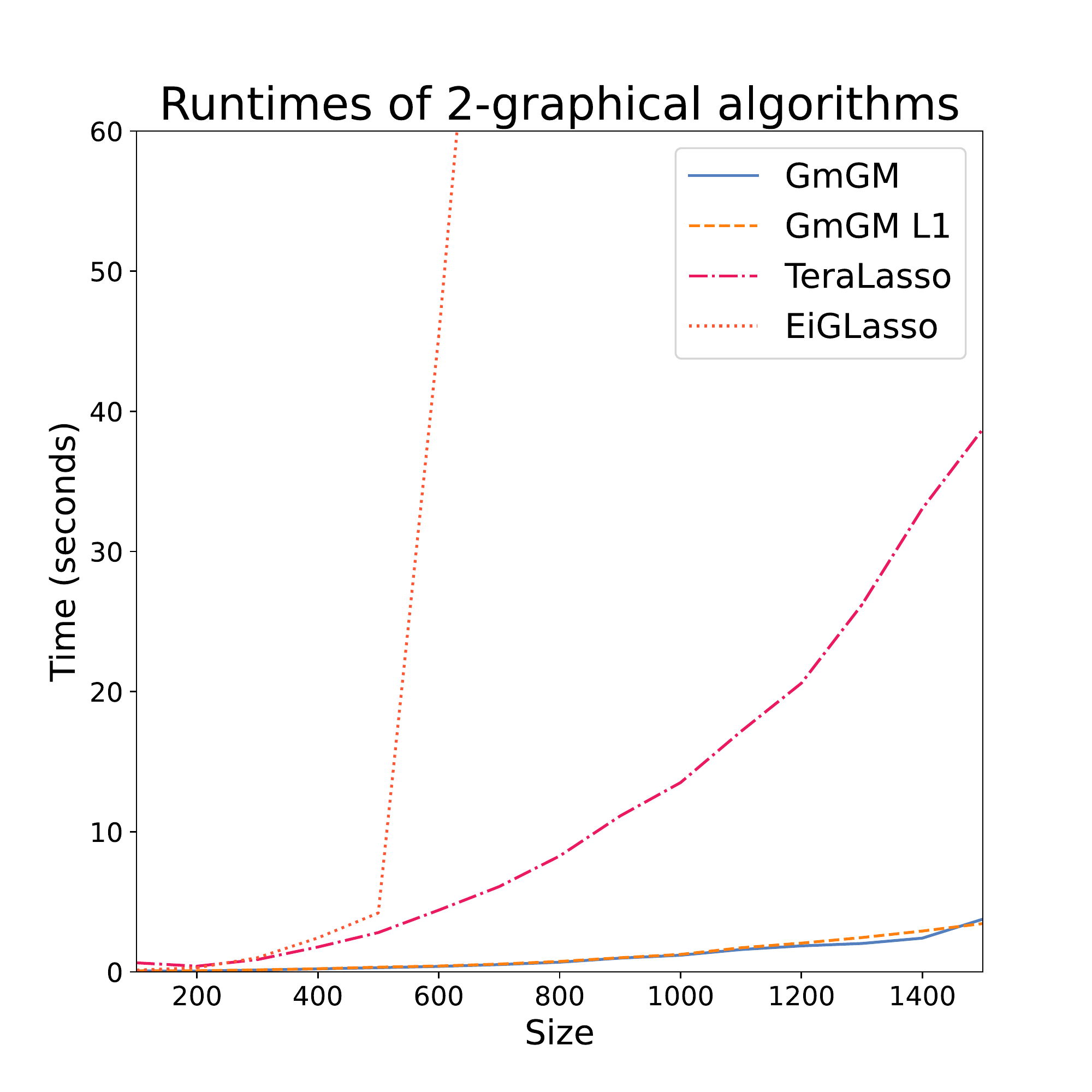}
        \caption{}
        \label{fig:regularization_runtimes}
    \end{subfigure}
    \begin{subfigure}[b]{0.45\textwidth}
        \centering
        \includegraphics[width=\textwidth]{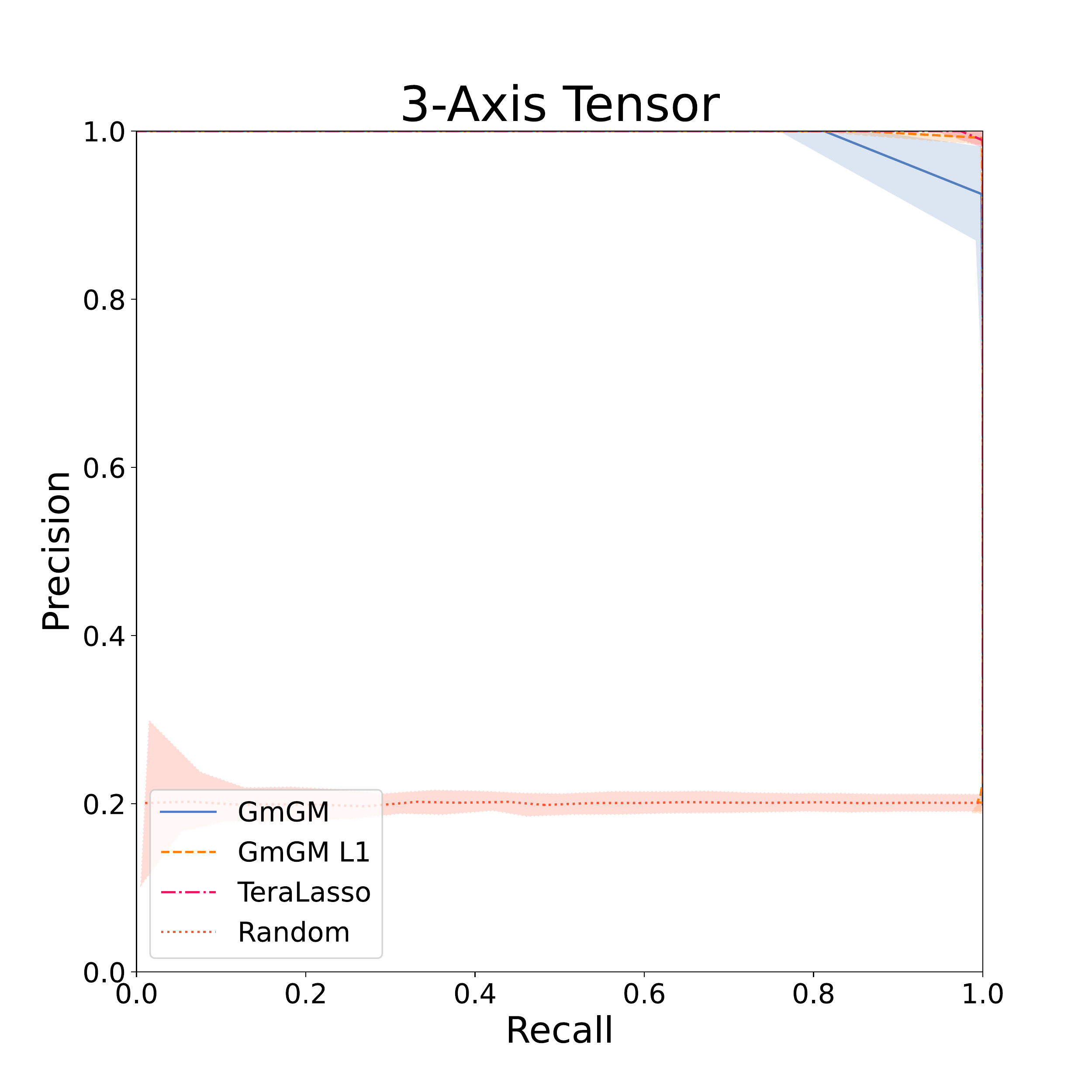}
        \caption{}
        \label{fig:regularization_tensor}
    \end{subfigure}
    \caption{(a) Runtimes of our algorithm and prior work on matrix-variate data.  Our regularized algorithm is denoted ``GmGM L1'', and takes about the same time as the unregularized ``GmGM''.  (b) Precision-recall curves for tensor-variate data.  TeraLasso and our regularized ``GmGM L1'' perform almost perfectly.}
\end{figure}

Nevertheless, we can add regularization to the eigenvalue estimation by performing an eigenrecomposition and regularizing that.  Eigenrecomposition requires a matrix multiplication, which is quite costly compared to the cost of an unregularized iteration - both in practice, and asymptotically in the matrix-variate case (matrix multiplication is $O(\sum_\ell d_\ell^3)$ whereas an unregularized iteration is $O(\prod_\ell d_\ell$)).  Thus, to regularize we first let our algorithm converge to the MLE before considering the penalty term.  This allows us to avoid a major increase in runtime; our regularized algorithm runs in roughly the same time as the unregularized one (Figure \ref{fig:regularization_runtimes}).

It is important to note that this estimator is slightly different than the standard Lasso estimator, as the standard estimator would minimize $\lVert \mathbf{\Psi}_\ell \rVert_1$ and our estimator minimizes $\lVert \hat{\mathbf{V}}_\ell \mathbf{\Lambda}_\ell \hat{\mathbf{V}}_\ell^T \rVert_1$ (where only the eigenvalues $\mathbf{\Lambda}_\ell$ are free to vary).  This restriction prevents it from being able to drive elements exactly to zero, so thresholding is still needed afterwards.  It can be derived as follows:

\begin{align}
    \frac{\partial}{\partial\lambda_i}\lVert\mathbf{V}\mathbf{\Lambda}\mathbf{V}^T\rVert_1 &= \frac{\partial}{\partial\lambda_i}\lVert\sum_{j}\lambda_j v_{ja} v_{bj}\rVert_1 \\
    &= \left[\frac{\partial}{\partial\lambda_i}\left|\sum_{j}\lambda_j v_{ja} v_{bj}\right|\right]_{ab} \\
    &= \left[\frac{\partial}{\partial\lambda_i}\mathrm{sign}\left[\sum_{j}\lambda_j v_{ja} v_{bj}\right]v_{ia}v_{bi}\right]_{ab} \\
    &= \left[ \mathrm{sign}\left[\mathbf{V} \mathbf{\Lambda} \mathbf{V}^T\right]_{ab} v_{ia} v_{bi} \right]_{ab} \\
    &= \mathbf{v}_i^T \mathrm{sign}\left[\mathbf{V} \mathbf{\Lambda} \mathbf{V}^T\right] \mathbf{v}_i
\end{align}

Despite this difference, it performs comparably to prior work.  We show in Figure \ref{fig:regularization_tensor} the precision-recall curves for the 3-axis case, and observe that it performs almost perfectly.  This is notable as it was the case that the unregularized algorithm performed worse than prior work.

To test whether regularization improved the algorithm in the real world, we performed another test on the E-MTAB-2805 dataset.  We first looked at the metric of `percentage of connections to own group'.  We varied the regularization parameter, and for each estimated graph we thresholded the graph to keep only the largest edge per cell.  We plotted the results in Figure \ref{fig:mouse-self-connected}.

The optimal parameter arises right near the end of the range of considered regularization parameters; when we considered larger parameters, we entered a region of instability in which convergence took far longer and results became sporadic (Figure \ref{fig:mouse-self-connected-chaos}).  We are not sure what causes this, but hypothesize that our restricted regularizer no longer has the degrees of freedom to give useful information to the problem, rather making it harder to find a good solution.  The fact that the optimal parameters occur specifically at the end of the stable region implies that the algorithm would likely have benefited from a non-restricted regularizer - although our restricted regularizer is also definitively better than no regularizer at all.

Our second test, shown in Figure \ref{fig:mouse-reg}, was to take the optimal regularing parameter found in the first test, and perform the same assortativity test as was performed in Section \ref{sec:mouse}.  We can see that the regularized methods tend to outperform the unregularized methods.

\begin{figure}
    \begin{subfigure}[b]{0.45\textwidth}
        \centering
        \includegraphics[width=\textwidth]{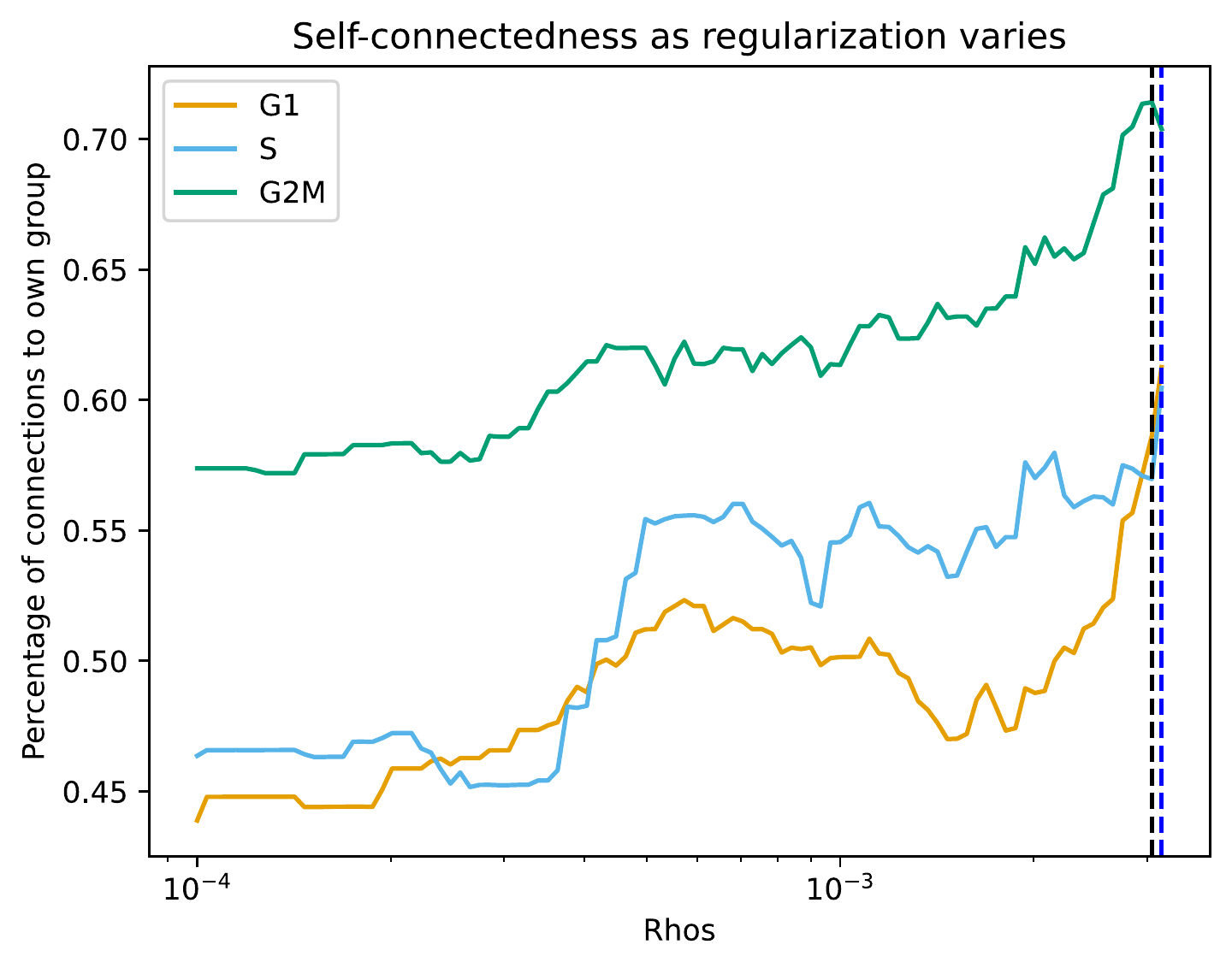}
        \caption{}
        \label{fig:mouse-self-connected}
    \end{subfigure}
    \begin{subfigure}[b]{0.45\textwidth}
        \centering
        \includegraphics[width=\textwidth]{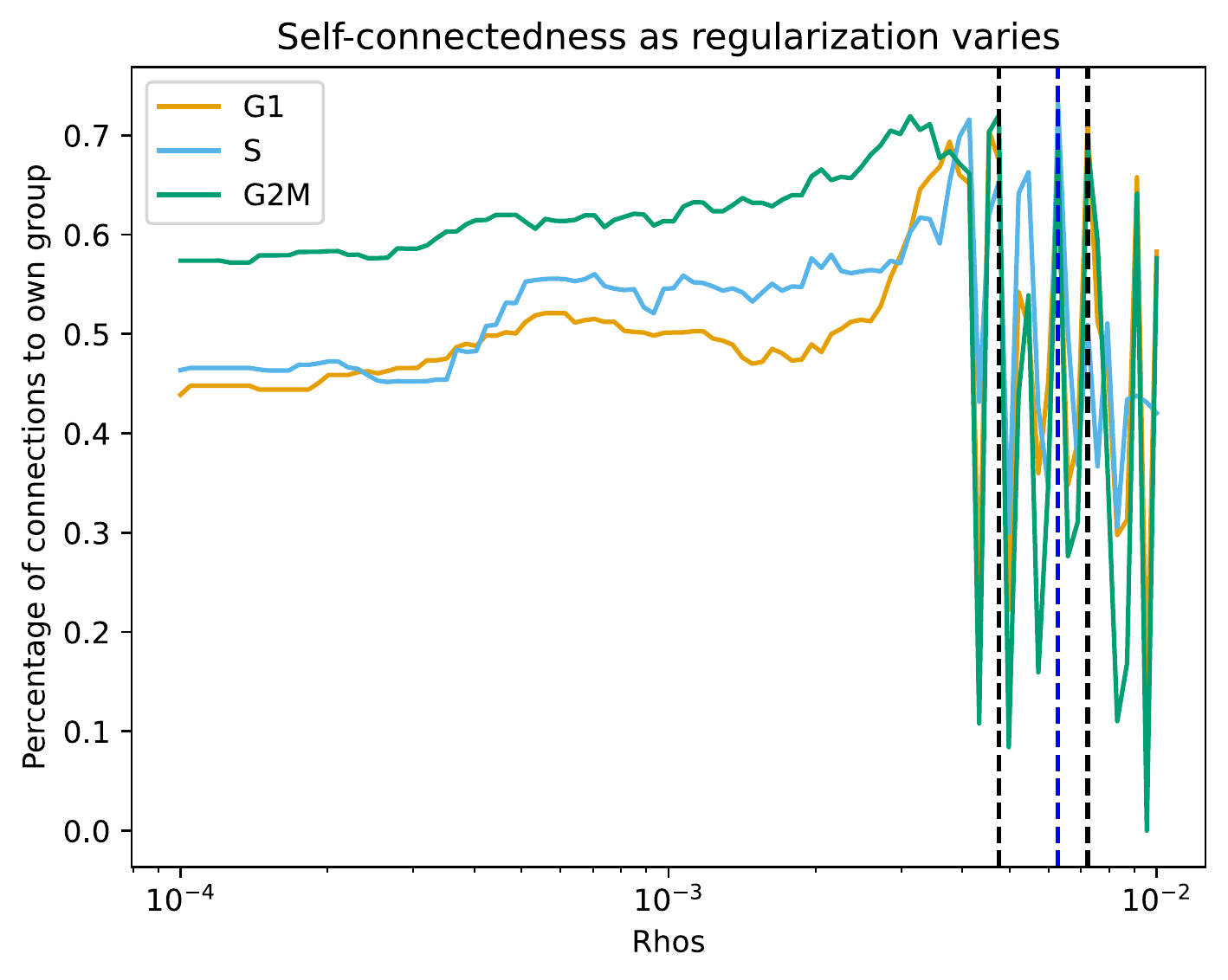}
        \caption{}
        \label{fig:mouse-self-connected-chaos}
    \end{subfigure}
    \caption{(a) Self-connectedness of each of the three cell cycle stages as regularization parameter varies.  Vertical bars represent the optimal parameters; one for each of the parameters (there is overlap).  (b) The same plot as on the left, with the range extended to show the region of instability.}
\end{figure}

\begin{figure}
    \centering
    \includegraphics[width=0.5\textwidth]{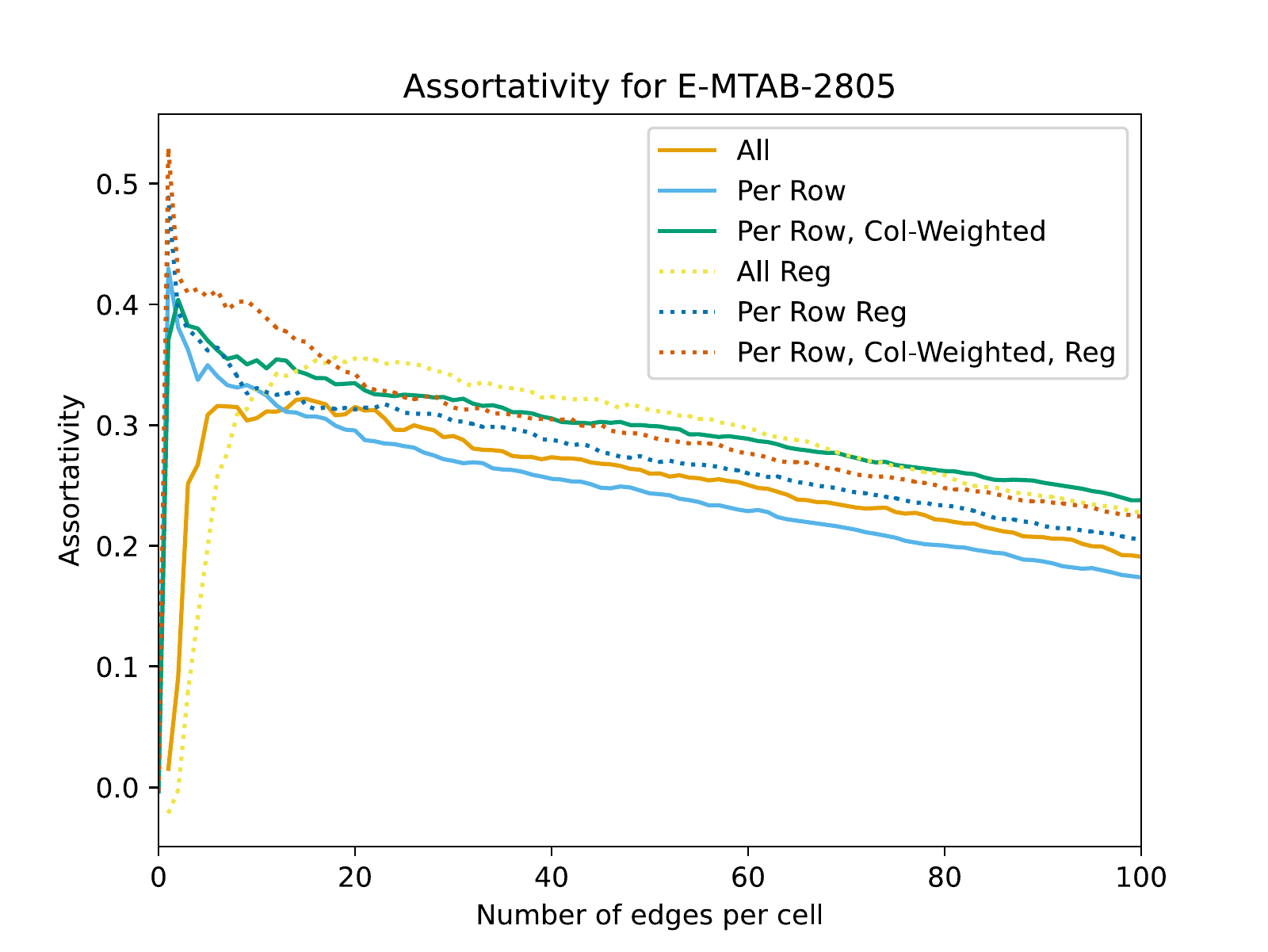}
    \caption{Assortativity as the number of kept edges per cell varies.}
    \label{fig:mouse-reg}
\end{figure}

\section{ASYMPTOTIC COMPLEXITY}
\label{sec:complexity}

Our space complexity is the optimal $O(\sum_\ell d_\ell^2)$, as was all prior work except the original BiGLasso.  For computational complexity, refer to Table \ref{tab:complexity}.  In the reported values, we treat matrix multiplication as having cubic complexity.  Note that Gram matrix computation is an $O(Kd^{K+1})$ operation, and that our algorithm has the asymptotically fastest iterations in the matrix-variate case ($O(d^2)$).  If we ignore the per-axis sizes $d$ and instead consider the size of the whole dataset $d_\forall = d^K$, we can note that the runtime of the tensor-variate algorithms is almost linear in the size of the dataset.

\begin{table}[t]
    \centering
    \begin{tabular}{c|ccc}
         Algorithm & Range of K & Overall Complexity & Per-Iteration Complexity \\
         \hline
         BiGLasso & $K=2$ & $O(Kd^4)$ & $O(Kd^4)$ \\
         EiGLasso & $K=2$ & $O(Kd^3)$ & $O(Kd^3)$ \\
         TeraLasso & Any & $O(Kd^{K+1})$ & $O(Kd^3 + d^K)$ \\
         GmGM & Any & $O(Kd^{K+1})$ & $O(d^K)$ \\
         GmGM L1 & Any & $O(Kd^{K+1})$ & $O(Kd^3 + d^K)$ \\
    \end{tabular}
    \caption{Computational complexities of selected algorithms.  `GmGM L1' refers to GmGM equipped with our restricted L1 penalty; note that GmGM L1 first converges to the unregularized solution (using the cheap iterations of GmGM) before it adds in the regularizer.  For simplicity, all dimensions $d_\ell$ are considered to be the same size $d$.}
    \label{tab:complexity}
\end{table}

\section{CENTERING THE DATA}
\label{sec:centering}

Multi-axis models such as ours typically assume a mean of 0 for the data.  When one has multiple samples, this is not a problem - one can just center the data.  However, with multi-axis methods we typically only have one sample.  Because the nonparanormal skeptic is invariant to monotone transformations, this is a viable method to circumvent this problem.  When not using that, we would take the average value of the tensor (averaged across all elements in the tensor) and subtract this value from ever element in the tensor.

\newpage

\printbibliography